\newtheorem{theorem}{Theorem}
\newtheorem{lemma}{Lemma}
\newtheorem{corollary}{Corollary}
\newtheorem{definition}{Definition}
\newtheorem{example}{Example}
\definecolor{codegreen}{rgb}{0,0.6,0}
\definecolor{codegray}{rgb}{0.5,0.5,0.5}
\definecolor{codepurple}{rgb}{0.58,0,0.82}
\definecolor{backcolour}{rgb}{0.95,0.95,0.92}
\tiny\color{codegray},
\newcommand{\pythoninline}[1]{\lstinline[language=Python, basicstyle=\ttfamily]|#1|}
\newcommand{\python}{$\mathrm{python}$}
\newcommand{\tnreason}{$\mathrm{tnreason}$}
\newcommand{\spengine}{$\mathrm{engine}$}
\newcommand{\sprepresentation}{$\mathrm{representation}$}
\newcommand{\spreasoning}{$\mathrm{reasoning}$}
\newcommand{\spapplication}{$\mathrm{application}$}
\newcommand{\layeronespec}{\textbf{Layer 1}: Storage and manipulations}
\newcommand{\layertwospec}{\textbf{Layer 2}: Specification of workload}
\newcommand{\layerthreespec}{\textbf{Layer 3}: Applications in reasoning}
\newcommand{\curvertnreason}{2.0.0}
\newcommand{\ComputationActivationNetwork}{Computation-Activation Network}
\newcommand{\ComputationActivationNetworks}{Computation-Activation Networks}
\newcommand{\CompActNet}{CompActNet}
\newcommand{\CompActNets}{CompActNets}
\newcommand{\HybridLogicNetwork}{Hybrid Logic Network}
\newcommand{\HybridLogicNetworks}{Hybrid Logic Networks}
\newcommand{\firstOrderLogic}{first-order logic}
\newcommand{\defref}[1]{Def.~\ref{#1}}
\newcommand{\theref}[1]{Thm.~\ref{#1}}
\newcommand{\lemref}[1]{Lem.~\ref{#1}}
\newcommand{\algoref}[1]{Algorithm~\ref{#1}}
\newcommand{\exaref}[1]{Example~\ref{#1}}
\newcommand{\secref}[1]{Sect.~\ref{#1}}
\newcommand{\figref}[1]{Figure~\ref{#1}}
\newcommand{\ozset}{\{0,1\}}
\newcommand{\setwithout}[2]{#1\backslash#2}
\newcommand{\setexcept}[1]{\backslash#1}
\newcommand{\whileSymbol}{$\mathrm{while}$}
\newcommand{\modspace}{\,\,\,\mathrm{mod}\,\,}
\newcommand{\defcols}{\,:\,} 
\newcommand{\wcols}{\,:\,} 
\newcommand{\ncond}{,\,} 
\newcommand{\andspace}{\quad\text{and}\quad}
\newcommand{\ifspace}{\text{if}\quad}
\newcommand{\iosepline}{\vspace{0.5em} \hrule \vspace{0.5em}}
\newcommand{\mutinfof}[2]{I\left(#1;#2\right)}
\newcommand{\condmutinfof}[3]{\mutinfof{#1}{#2|#3}}
\newcommand{\rr}{\mathbb{R}}
\newcommand{\nn}{\mathbb{N}}
\newcommand{\difofwrt}[2]{\frac{\partial #1}{\partial #2}}
\newcommand{\cardof}[1]{\left|#1\right|}
\newcommand{\imageof}[1]{\mathrm{im}\left(#1\right)}
\newcommand{\chainingfunction}{h}
\newcommand{\nonzerofunction}{\ones_{\neq0}}
\newcommand{\nonzeroof}[1]{\nonzerofunction\left(#1\right)}
\newcommand{\valof}[1]{\mathrm{val}\left(#1\right)}
\newcommand{\expof}[1]{\mathrm{exp}\left[#1\right]}
\newcommand{\probtensor}{\mathbb{P}}
\newcommand{\probtensorof}[1]{\probtensor^{#1}}
\newcommand{\probtensorofat}[2]{\probtensor^{#1}\left[#2\right]}
\newcommand{\probfamilyofat}[3]{\probofat{#1}{#2|#3}}
\newcommand{\independent}[2]{\left(#1\perp#2\right)}
\newcommand{\condindependent}[3]{\left(#1\perp#2\middle)\,\right|\,#3}
\newcommand{\probat}[1]{\probtensor\left[#1\right]}
\newcommand{\probof}[1]{\probtensor^{#1}}
\newcommand{\probofat}[2]{\probof{#1}\left[#2\right]}
\newcommand{\probwith}{\probat{\shortcatvariables}}
\newcommand{\probwithnodes}{\probat{\nodevariables}}
\newcommand{\condprobat}[2]{\mathbb{P}\big[#1\big|#2\big]}
\newcommand{\condprobof}[2]{\condprobat{#1}{#2}}
\newcommand{\margprobat}[1]{\probat{#1}}
\newcommand{\lnof}[1]{\ln \left[ #1 \right] }
\newcommand{\ones}{\mathbb{I}}
\newcommand{\onesat}[1]{\ones\left[#1\right]}
\newcommand{\oneswith}{\onesat{\shortcatvariables}}
\newcommand{\zeros}{0}
\newcommand{\zerosat}[1]{\zeros\left[#1\right]}
\newcommand{\identity}{\dirdelta}
\newcommand{\identityat}[1]{\identity\left[#1\right]}
\newcommand{\restrictionofto}[2]{{#1}|_{#2}}
\newcommand{\hardlegset}{A}
\newcommand{\hardparam}{(\hardlegset,\headindexof{\hardlegset})}
\newcommand{\hybridparam}{(\hardlegset,\headindexof{\hardlegset},\canparam)}
\newcommand{\hybridparamsetofdim}[1]{\mathcal{P}_{#1}}
\newcommand{\hybridparamset}{\hybridparamsetofdim{\seldim}}
\newcommand{\hlnparameters}{\hlnstat,\hybridparam}
\newcommand{\hlnat}[1]{\probofat{\hlnparameters}{#1}}
\newcommand{\hlnwith}{\hlnat{\shortcatvariables}}
\newcommand{\formula}{f}
\newcommand{\formulaof}[1]{\formula_{#1}}
\newcommand{\formulaat}[1]{\formula\left[#1\right]}
\newcommand{\formulaofat}[2]{\formulaof{#1}\left[#2\right]}
\newcommand{\formulawith}{\formulaat{\shortcatvariables}}
\newcommand{\hlnformulaof}[1]{\formula^{\hlnstat,(#1)}}
\newcommand{\hlnformulaparams}{\hardlegset,\headindexof{\hardlegset}}
\newcommand{\hlnformula}{\hlnformulaof{\hlnformulaparams}} 
\newcommand{\hlnformulaat}[1]{\hlnformula\left[#1\right]}
\newcommand{\hlnformulawith}{\hlnformulaat{\shortcatvariables}}
\newcommand{\formulavar}{\headvariableof{\formula}}
\newcommand{\enumformula}{\formulaof{\selindex}}
\newcommand{\enumformulaat}[1]{\enumformula\left[#1\right]}
\newcommand{\exformula}{\formula}
\newcommand{\exformulaat}[1]{\exformula\left[#1\right]}
\newcommand{\secexformula}{h} 
\newcommand{\secexformulaat}[1]{\secexformula\left[#1\right]}
\newcommand{\variableset}{A} 
\newcommand{\parspace}{\rr^{\seldim}}
\newcommand{\atomorder}{d}
\newcommand{\atomenumerator}{k}
\newcommand{\atomenumeratorin}{\atomenumerator\in[\atomorder]}
\newcommand{\atomlegindex}{\catindex}
\newcommand{\atomlegindexof}[1]{\atomlegindex_{#1}}
\newcommand{\loss}{\mathcal{L}_{\datamap}}
\newcommand{\lossof}[1]{\loss\left(#1\right)}
\newcommand{\partitionfunction}{\mathcal{Z}}
\newcommand{\partitionfunctionof}[1]{\partitionfunction\left(#1\right)}
\newcommand{\kb}{\mathcal{KB}}
\newcommand{\kbat}[1]{\kb\left[#1\right]}
\newcommand{\kbwith}{\kbat{\shortcatvariables}}
\newcommand{\seckb}{\tilde{\kb}}
\newcommand{\elformat}{\mathrm{EL}}
\newcommand{\cpformat}{\mathrm{CP}}
\newcommand{\ttformat}{\mathrm{TT}}
\newcommand{\maxformat}{\mathrm{MAX}}
\newcommand{\elgraph}{\elformat}
\newcommand{\maxgraph}{\maxformat}
\newcommand{\tnset}{\mathcal{T}}
\newcommand{\tnsetof}[1]{\tnset^{#1}} 
\newcommand{\nodevariables}{\catvariableof{\nodes}}
\newcommand{\secnodevariables}{\catvariableof{\secnodes}}
\newcommand{\extnetasset}{\left\{\hypercoreofat{\edge}{\catvariableof{\edge}}\wcols\edgein\right\}}
\newcommand{\exrandom}{\catvariableof{0}}
\newcommand{\secexrandom}{\catvariableof{1}}
\newcommand{\thirdexrandom}{\catvariableof{2}}
\newcommand{\exranddim}{\catdimof{0}}
\newcommand{\secexranddim}{\catdimof{1}}
\newcommand{\expdistof}[1]{\probtensorof{#1}}
\newcommand{\expdistofat}[2]{\expdistof{#1}[#2]}
\newcommand{\expdist}{\probtensorof{(\sstat,\canparam,\basemeasure)}}
\newcommand{\expdistwith}{\probtensorofat{(\sstat,\canparam,\basemeasure)}{\shortcatvariables}}
\newcommand{\expdistat}[1]{\expdist\left[#1\right]}
\newcommand{\genstatshortcatencoding}[1]{\gamma^{\sstat}\left[\indexedshortcatvariables,\selvariable\right]}
\newcommand{\trivbm}{\ones}
\newcommand{\sstat}{t}
\newcommand{\sstatat}[1]{\sstat\left(#1\right)}
\newcommand{\sstatof}[1]{\sstat^{#1}}
\newcommand{\hlnstat}{\sstat}
\newcommand{\sstatcoordinateof}[1]{\sstat_{#1}}
\newcommand{\sstatcoordinateofat}[2]{\sstat_{#1}\left[#2\right]}
\newcommand{\hlnstatccwith}{\bencodingofat{\hlnstat}{\headvariables,\shortcatvariables}}
\newcommand{\bencsstatat}[1]{\bencodingof{\sstat}\left[#1\right]}
\newcommand{\bencsstatwith}{\bencsstatat{\headvariables,\shortcatvariables}}
\newcommand{\datamean}{\meanparamof{\datamap}}
\newcommand{\datameanat}[1]{\datamean\left[#1\right]}
\newcommand{\datameanwith}{\datameanat{\selvariable}}
\newcommand{\graph}{\mathcal{G}}
\newcommand{\graphof}[1]{\graph^{#1}}
\newcommand{\nodes}{\mathcal{V}}
\newcommand{\nodesof}[1]{\nodes^{#1}}
\newcommand{\innodes}{\nodesof{\mathrm{in}}}
\newcommand{\outnodes}{\nodesof{\mathrm{out}}}
\newcommand{\incomingnodes}{\edge^{\mathrm{in}}}
\newcommand{\outgoingnodes}{\edge^{\mathrm{out}}}
\newcommand{\nodesa}{A}
\newcommand{\nodesb}{B}
\newcommand{\nodesc}{C}
\newcommand{\nodea}{a}
\newcommand{\nodeb}{b}
\newcommand{\secnodes}{\mathcal{U}}
\newcommand{\thirdnodes}{\mathcal{W}}
\newcommand{\node}{v}
\newcommand{\nodein}{\node\in\nodes}
\newcommand{\secnode}{\tilde{\node}}
\newcommand{\edges}{\mathcal{E}}
\newcommand{\edge}{e}
\newcommand{\edgeof}[1]{\edge_{#1}}
\newcommand{\secedge}{\tilde{\edge}}
\newcommand{\edgein}{\edge\in\edges}
\newcommand{\hypercore}{\tau}
\newcommand{\hypercoreat}[1]{\hypercore\left[#1\right]}
\newcommand{\hypercorewith}{\hypercoreat{\shortcatvariables}}
\newcommand{\hypercorewithnodes}{\hypercoreat{\nodevariables}}
\newcommand{\edgehypercorewith}{\hypercoreofat{\edge}{\catvariableof{\edge}}}
\newcommand{\hypercoreof}[1]{\hypercore^{#1}}
\newcommand{\hypercoreofat}[2]{\hypercoreof{#1}\left[#2\right]}
\newcommand{\sechypercore}{\tilde{\hypercore}}
\newcommand{\sechypercoreof}[1]{\sechypercore^{#1}}
\newcommand{\sechypercoreofat}[2]{\sechypercore^{#1}\left[#2\right]}
\newcommand{\sechypercoreat}[1]{\sechypercore\left[#1\right]}
\newcommand{\exfunction}{q}
\newcommand{\exfunctionof}[1]{\exfunction_{#1}}
\newcommand{\secexfunction}{g}
\newcommand{\secexfunctionof}[1]{\secexfunction^{#1}}
\newcommand{\statesetfunction}{\exfunction}
\newcommand{\statesetfunctionev}[1]{\statesetfunction\left(#1\right)}
\newcommand{\scheduler}{S}
\newcommand{\dirovedges}{\edges^{\rightarrow}}
\newcommand{\sedge}{\edgeof{0}} 
\newcommand{\redge}{\edgeof{1}} 
\newcommand{\secsedge}{\edgeof{2}}
\newcommand{\thirdsedge}{\edgeof{3}}
\newcommand{\preedgesetwrt}[2]{\edges^{\rightarrow(#1,#2)}}
\newcommand{\preedgeset}{\preedgesetwrt{\sedge}{\redge}}
\newcommand{\notucon}{\lnot}
\newcommand{\woneoplus}{\bigoplus^{(1)}}
\newcommand{\modsumsymbol}{\tilde{+}}
\newcommand{\marysumsymbol}{+^{\catdim}}
\newcommand{\indexinterpretation}{I}
\newcommand{\indexinterpretationof}[1]{\indexinterpretation_{#1}}
\newcommand{\indexinterpretationat}[1]{\indexinterpretation(#1)}
\newcommand{\indexinterpretationofat}[2]{\indexinterpretationof{#1}(#2)}
\newcommand{\invindexinterpretation}{\indexinterpretation^{-1}}
\newcommand{\invindexinterpretationof}[1]{\indexinterpretation_{#1}^{-1}}
\newcommand{\invindexinterpretationat}[1]{\invindexinterpretation(#1)}
\newcommand{\invindexinterpretationofat}[2]{\invindexinterpretationof{#1}(#2)}
\newcommand{\arbset}{\mathcal{U}}
\newcommand{\arbsetof}[1]{\arbset^{#1}}
\newcommand{\arbelement}{u}
\newcommand{\insymbol}{\mathrm{in}}
\newcommand{\outsymbol}{\mathrm{out}}
\newcommand{\inset}{\arbsetof{\insymbol}}
\newcommand{\outset}{\arbsetof{\outsymbol}}
\newcommand{\sudokunum}{n}
\newcommand{\sudokustartevidence}{E^{\mathrm{start}}}
\newcommand{\sudokukbof}[1]{\kb^{#1}}
\newcommand{\contractionof}[2]{\left\langle #1\right\rangle_{\left[ #2 \right]}}
\newcommand{\breakablecontractionof}[2]{\big\langle #1 \big\rangle_{\big[ #2 \big]}}
\newcommand{\contraction}[1]{\contractionof{#1}{\varnothing}}
\newcommand{\normalizationofwrt}[3]{\left\langle #1\right\rangle_{\left[ #2 | #3 \right]}}
\newcommand{\normalizationof}[2]{\normalizationofwrt{#1}{#2}{\varnothing}}
\newcommand{\bencodingof}[1]{\beta^{#1}}
\newcommand{\bencodingofat}[2]{\bencodingof{#1}\left[#2\right]}
\newcommand{\softactsymbol}{\alpha}
\newcommand{\softactsymbolof}[1]{\softactsymbol^{#1}}
\newcommand{\softactsymbolofat}[2]{\softactsymbol^{#1}\left[#2\right]}
\newcommand{\softacttensor}{\softactsymbol^\canparam}
\newcommand{\softacttensorat}[1]{\softacttensor\left[#1\right]}
\newcommand{\softacttensorwith}{\softacttensorat{\headvariables}}
\newcommand{\softactleg}{\softactsymbolof{\selindex,\canparam}}
\newcommand{\softactlegat}[1]{\softactleg\left[#1\right]}
\newcommand{\softactlegwith}{\softactsymbolofat{\selindex,\canparam}{\headvariableof{\selindex}}}
\newcommand{\dirdelta}{\delta}
\newcommand{\dirdeltaof}[1]{\dirdelta^{#1}}
\newcommand{\dirdeltaofat}[2]{\dirdeltaof{#1}\left[#2\right]}
\newcommand{\dirdeltawith}{\dirdeltaofat{[\catorder],\catdim}{\shortcatvariables}}
\newcommand{\onehotmap}{\epsilon}
\newcommand{\onehotmapof}[1]{\onehotmap_{#1}}
\newcommand{\onehotmapofat}[2]{\onehotmap_{#1}\left[#2\right]}
\newcommand{\tbasis}{\onehotmapof{1}}
\newcommand{\tbasisat}[1]{\tbasis\left[#1\right]}
\newcommand{\fbasis}{\onehotmapof{0}}
\newcommand{\fbasisat}[1]{\fbasis\left[#1\right]}
\newcommand{\canparam}{\theta}
\newcommand{\canparamat}[1]{\canparam\left[#1\right]}
\newcommand{\canparamwith}{\canparamat{\selvariable}}
\newcommand{\canparamwithin}{\canparamwith\in\parspace}
\newcommand{\hardactsymbol}{\kappa}
\newcommand{\hardactsymbolof}[1]{\hardactsymbol^{#1}}
\newcommand{\hardacttensor}{\hardactsymbolof{\hardparam}}
\newcommand{\hardacttensorwith}{\hardacttensor\left[\headvariables\right]}
\newcommand{\meanparam}{\mu}
\newcommand{\meanparamof}[1]{\meanparam_{#1}}
\newcommand{\meanparamat}[1]{\meanparam\left[#1\right]}
\newcommand{\basemeasure}{\nu}
\newcommand{\basemeasureat}[1]{\basemeasure\left[#1\right]}
\newcommand{\basemeasurewith}{\basemeasureat{\shortcatvariables}}
\newcommand{\acttensor}{\xi} 
\newcommand{\acttensorof}[1]{\acttensor^{#1}}
\newcommand{\acttensorofat}[2]{\acttensor^{#1}\left[#2\right]}
\newcommand{\acttensorat}[1]{\acttensor\left[#1\right]}
\newcommand{\acttensorwith}{\acttensorat{\headvariables}}
\newcommand{\acttensorleg}{\acttensorof{\selindex}}
\newcommand{\acttensorlegat}[1]{\acttensorleg\left[#1\right]}
\newcommand{\acttensorlegwith}{\acttensorleg\left[\headvariableof{\selindex}\right]}
\newcommand{\paracttensor}{\acttensorof{\hybridparam}}
\newcommand{\paracttensorwith}{\acttensorofat{\hybridparam}{\headvariables}}
\newcommand{\tnet}{\tau}
\newcommand{\tnetof}[1]{\tnet^{#1}}
\newcommand{\tnetofat}[2]{\tnetof{#1}\left[#2\right]}
\newcommand{\extnet}{\tnetof{\graph}}
\newcommand{\messagesymbol}{\chi}
\newcommand{\mesfromto}[2]{\messagesymbol_{#1 \rightarrow #2}}
\newcommand{\mesfromtowith}[2]{\messagesymbol_{#1 \rightarrow #2}\left[\catvariableof{#1 \cap #2}\right]}
\newcommand{\mesfromtoat}[3]{\mesfromto{#1}{#2}\left[#3\right]}
\newcommand{\messagewith}{\mesfromtowith{\sedge}{\redge}}
\newcommand{\realizabledistsof}[1]{\Lambda^{#1}}
\newcommand{\elrealizabledistsof}[1]{\realizabledistsof{#1,\elformat}}
\newcommand{\cansof}[1]{\Lambda^{#1}}
\newcommand{\selvariable}{L} 
\newcommand{\seldim}{p}
\newcommand{\selindex}{\ell}
\newcommand{\catvariable}{X} 
\newcommand{\catdim}{m}
\newcommand{\catindex}{x} 
\newcommand{\catenumerator}{\atomenumerator}
\newcommand{\catorder}{\atomorder}
\newcommand{\headvariable}{Y} 
\newcommand{\headdim}{n}
\newcommand{\headindex}{y}
\newcommand{\datdim}{m}
\newcommand{\datindex}{j}
\newcommand{\selvariableof}[1]{\selvariable_{#1}}
\newcommand{\catvariableof}[1]{\catvariable_{#1}}
\newcommand{\headvariableof}[1]{\headvariable_{#1}}
\newcommand{\catvariablelist}{\catvariableof{0},\ldots,\catvariableof{\atomorder-1}}
\newcommand{\shortcatvariablelist}{\catvariableof{[\atomorder]}}
\newcommand{\shortcatindices}{\catindexof{[\catorder]}}
\newcommand{\shortheadindices}{\headindexof{[\seldim]}}
\newcommand{\catdimof}[1]{\catdim_{#1}}
\newcommand{\headdimof}[1]{\headdim_{#1}}
\newcommand{\selindexof}[1]{\selindex_{#1}}
\newcommand{\catindexof}[1]{\catindex_{#1}} 
\newcommand{\headindexof}[1]{\headindex_{#1}}
\newcommand{\selindexin}{\selindex\in[\seldim]}
\newcommand{\datindexin}{\datindex\in[\datdim]}
\newcommand{\catenumeratorin}{\catenumerator\in[\catorder]}
\newcommand{\indexedcatvariableof}[1]{\catvariableof{#1}=\catindexof{#1}}
\newcommand{\indexedselvariableof}[1]{\selvariableof{#1}=\selindexof{#1}}
\newcommand{\indexedheadvariableof}[1]{\headvariableof{#1}=\headindexof{#1}}
\newcommand{\indexedcatvariable}{\indexedcatvariableof{}}
\newcommand{\indexedselvariable}{\indexedselvariableof{}}
\newcommand{\catstatesof}[1]{[\catdimof{#1}]}
\newcommand{\catspace}{\bigotimes_{\atomenumeratorin} \rr^{\catdimof{\atomenumerator}}}
\newcommand{\datanum}{\datdim}
\newcommand{\catvariables}{\catvariablelist}
\newcommand{\shortcatvariables}{\shortcatvariablelist}
\newcommand{\indexedshortcatvariables}{\shortcatvariables=\shortcatindices}
\newcommand{\shortcatindicesin}{\shortcatindices\in\facstates}
\newcommand{\headvariables}{\headvariableof{[\seldim]}}
\newcommand{\secselindex}{\tilde{\selindex}}
\newcommand{\nodestatesof}[1]{\bigtimes_{\node\in#1}\catstatesof{\node}}
\newcommand{\atomstates}{\bigtimes_{\atomenumeratorin}[2]}
\newcommand{\facstates}{\bigtimes_{\atomenumeratorin}\catstatesof{\atomenumerator}}
\newcommand{\facspace}{\catspace}
\newcommand{\seccatenumerator}{\tilde{\catenumerator}}
\newcommand{\seccatvariable}{Y} 
\newcommand{\seccatindex}{y}
\newcommand{\seccatorder}{p} 
\newcommand{\thirdcatvariable}{Z}
\newcommand{\thirdcatvariableof}[1]{\thirdcatvariable_{#1}}
\newcommand{\thirdcatindex}{z}
\newcommand{\thirdcatindexof}[1]{\thirdcatindex_{#1}}
\newcommand{\indexedthirdcatvariable}{\thirdcatvariable=\thirdcatindex}
\newcommand{\thirdcatdim}{n}
\newcommand{\seccatvariableof}[1]{\seccatvariable_{#1}}
\newcommand{\secshortcatvariables}{\seccatvariableof{[\seccatorder]}}
\newcommand{\seccatindexof}[1]{\seccatindex_{#1}}
\newcommand{\tildecatvariable}{\tilde{\catvariable}}
\newcommand{\tildecatvariableof}[1]{\tildecatvariable_{#1}}
\newcommand{\tildecatindex}{\tilde{\catindex}}
\newcommand{\tildecatindexof}[1]{\tildecatindex_{#1}}
\newcommand{\catindicesof}[1]{{\catindexof{0}^{#1},\ldots,\catindexof{\atomorder-1}^{#1}}}
\newcommand{\secdecvariable}{\tilde{\decvariable}}
\newcommand{\secdecvariableof}[1]{\secdecvariable_{#1}}
\newcommand{\datamap}{D}
\newcommand{\dataset}{\left((\catindicesof{\datindex})\,:\,\datindexin\right)}
\newcommand{\decvariable}{I}
\newcommand{\decvariableof}[1]{\decvariable_{#1}}
\newcommand{\decindex}{i} 
\newcommand{\decdim}{n}
\newcommand{\decdimof}[1]{\decdim_{#1}}
\newcommand{\indexeddecvariable}{\decvariable=\decindex}
\newcommand{\dotsize}{0.15cm}
\newcommand{\nodeminsize}{0.8cm}
\newcommand{\nodegrayscale}{gray!50}
\newcommand{\colorlabelsize}{\tiny}
\newcommand{\corelabelsize}{\small}
\newcommand{\probcolor}{purple!60}
\newcommand{\concolor}{cyan}
\newcommand{\newmessagecolor}{blue}
\tikzset{
    midarrow/.style={
        postaction={decorate},
        decoration={markings, mark=at position 0.5 with {\arrow{>}}}
    },
    midbackarrow/.style={
        postaction={decorate},
        decoration={markings, mark=at position 0.5 with {\arrow{<}}}
    },
    ->-/.style={midarrow},
    -<-/.style={midbackarrow}
}
\newcommand{\shortminus}{\scalebox{0.4}[1.0]{$-$}}
\newcommand{\drawvariabledot}[2]{
    \draw[fill] (#1,#2) circle (\dotsize);
}
\newcommand{\drawatomindices}[2]{
    \begin{scope}
        [shift={(#1,#2)}]
        \draw[-<-] (0,1)--(0,-1) node[midway,left] {\colorlabelsize $\catvariableof{0}$};
        \draw[-<-] (1.5,1)--(1.5,-1) node[midway,left] {\colorlabelsize $\catvariableof{1}$};
        \node[anchor=center] (text) at (3,0) {$\cdots$};
        \draw[-<-] (4,1)--(4,-1) node[midway,right] {\colorlabelsize $\catvariableof{\atomorder\shortminus1}$};
    \end{scope}
}
\newcommand{\drawatomcore}[3]{
    \begin{scope}
        [shift={(#1,#2)}]
        \draw (-1,-1) rectangle (5,-3);
        \node[anchor=center] (text) at (2,-2) {#3};
    \end{scope}
}
\newenvironment{sudoku4x4}{%
    \begin{tikzpicture}[
        sudokucell/.style={
            minimum size=0.6cm,
            draw=gray!50,
            anchor=center
        },
        sudokumatrix/.style={
            matrix of nodes,
            nodes=sudokucell,
            inner sep=0pt,
            row sep=-\pgflinewidth,
            column sep=-\pgflinewidth,
            draw=black,
            very thick,
        }
    ]
}{%
    \end{tikzpicture}
}
\begin{document}

    \title{A tensor network formalism for neuro-symbolic AI}
    \author{Alex Goessmann\footnote{alex.goessmann@wias-berlin.de}}
    \author{Janina Schütte}
    \author{Maximilian Fröhlich}
    \author{Martin Eigel}
    \affil{Weierstrass Institute of Applied Analysis and Stochastics \\
    Anton-Wilhelm-Amo-Straße 39\\
    Berlin, 10117, Germany\\ 
    \ \\
    *alex.goessmann@wias-berlin.de}

    \maketitle

    \begin{abstract}
        The unification of neural and symbolic approaches to artificial intelligence remains a central open challenge.
In this work, we introduce a tensor network formalism, which captures sparsity principles originating in the different approaches in tensor decompositions.
In particular, we describe a basis encoding scheme for functions and model neural decompositions as tensor decompositions.
The proposed formalism can be applied to represent logical formulas and probability distributions as structured tensor decompositions.
This unified treatment identifies tensor network contractions as a fundamental inference class and formulates efficiently scaling reasoning algorithms, originating from probability theory and propositional logic, as contraction message passing schemes.
The framework enables the definition and training of hybrid logical and probabilistic models, which we call \HybridLogicNetworks{}.
The theoretical concepts are accompanied by the \python{} library \tnreason{}, which enables the implementation and practical use of the proposed architectures.
    \end{abstract}

    \section{Introduction}


Modern artificial intelligence is dominated by large-scale neural models that excel at various tasks but mostly remain black-boxes. 
While these models offer adaptability, the two main concerns when integrating these architectures into safety-critical processes are reliability and explainability.
To match these demands, artificial intelligence has followed symbolic paradigms, including probabilistic and logical approaches.
However, these paradigms have been mostly neglected due to the success of black-box neural models.
The logical tradition of artificial intelligence, historically motivated by the resemblance of human thought to formal logics \cite{mccarthy_programs_1959}, offers explicit structures and human-readable inference.
However, the main problem hindering the success of this classical approach is the inability of classical \firstOrderLogic{} to handle uncertainty or scale to complex real-world data.
Probabilistic graphical models~\cite{pearl_probabilistic_1988,koller_probabilistic_2009} provide insights based on encoded variable independences and causality~\cite{pearl_causality_2009}. 
While probabilistic models and Statistical Relational AI~\cite{nickel_review_2016,getoor_introduction_2019} have improved uncertainty handling, bridging these paradigms remains the central goal of \emph{Neuro-Symbolic AI}~\cite{hochreiter_toward_2022, sarker_neuro-symbolic_2022, colelough_neuro-symbolic_2024}.
The field seeks a single, mathematically coherent framework combining structural clarity with neural adaptability.
Although progress has been made, for example with Markov Logic Networks~\cite{richardson_markov_2006}, a fully unified substrate that treats logical and probabilistic inference as instances of the same operation is still missing. 

In this work, we propose to fill the gap between the probabilistic, neural, and logical paradigms with \emph{tensor networks} in a framework called \tnreason{}. 
Tensor spaces capture both the semantics of logical formulas (by boolean tensors) and probability distributions (by normalized non-negative tensors).
This abstraction eliminates the traditional divide between symbolic and neural representations: logical inference, probabilistic computations, and neural inference become different instances of the same underlying operation.
As naive tensors are prone to the curse of dimensionality, we turn to distributed representation schemes by tensor networks.
We show that fundamental sparsity principles of neural and symbolic AI, such as conditional independence, the existence of sufficient statistics, and neural model decomposition, are equivalent to tensor network decompositions.
Moreover, we identify tensor network contraction as the fundamental operation underlying inference tasks, such as computing marginal distributions and deciding entailment.
While these contractions are in general computationally hard, efficient schemes to perform inference are known as message passing schemes.
These algorithms have appeared in different communities under names such as belief propagation \cite{pearl_probabilistic_1988,mezard_information_2009} and constraint propagation \cite{mackworth_consistency_1977}.
We review these schemes based on our tensor network formalism.

To capture both logical and probabilistic models, and exploit their neural decompositions, we introduce \emph{\ComputationActivationNetworks{} (\CompActNets{})}, an expressive tensor network architecture.
The architecture consists of two complementary subarchitectures that serve the purpose of computation and activation, respectively.
The \emph{computation network} prepares auxiliary hidden variables with deterministic dependence on the main variables.
We interpret the network as a distributed computation scheme of functions describing this dependence, whose decomposition is related to sparsity concepts of the corresponding functions. 
These auxiliary variables represent logical formulas or more generic statistics in different contexts.
The \emph{activation network} then assigns numerical values to the states of those auxiliary variables, and in this way activates the variables to represent factors of a model.
Logical models emerge when the activation network is a boolean tensor, probabilistic exponential families when they are elementary positive-valued tensors, and hybrid models in the most general cases.

\subsection{Related works}

Historically rooted in quantum many-body physics ~\cite{white_density-matrix_1993}, tensor networks found their first major success with Matrix Product States (MPS), originally developed to efficiently capture the quantum dynamics and ground states of one-dimensional spin chains ~\cite{affleck_rigorous_1987}.
This format remains a standard tool in the field, with recent contributions refining it for tasks such as large-scale stochastic simulations and variational circuit operations ~\cite{sander_large-scale_2025, sander_quantum_2025}.
To address the topological constraints of MPS, the landscape of architectures was subsequently expanded to include Projected Entangled Pair States (PEPS) for two-dimensional lattices and the Multi-scale Entanglement Renormalization Ansatz (MERA), which utilizes a hierarchical geometry to represent scale-invariant critical systems and has recently been adapted for simulating quantum systems ~\cite{orus_tensor_2019, berezutskii_simulating_2025}.

Beyond the quantum realm, these formats have been successfully adapted to applied mathematics, particularly for solving high-dimensional parametric PDEs~\cite{eigel_variational_2019,eigel_adaptive_2020,dolgov_hybrid_2019,dolgov_polynomial_2015,trunschke_weighted_2025}, sampling problems and approximation of the Hamilton-Jacobi-Belman equation~\cite{gruhlke_reverse_2026,eigel_dynamical_2023,dolgov_data-driven_2023,cui_deep_2022}, modeling complex continuous fields and learning dynamical laws~\cite{hagemann_sampling_2025, eigel_adaptive_2017, goessmann_tensor_2020, lubich_dynamical_2013}.
Furthermore, they exhibit properties helpful for handling these high-dimensional spaces, such as restricted isometry properties~\cite{goessmann_uniform_2021}.
Recent advancements have demonstrated the efficacy of these methods in capturing multiscale phenomena in fluid dynamics and turbulence, proving that the tensor network formalism offers a robust alternative to classical numerical schemes ~\cite{gourianov_tensor_2025}.

The unification of neural, symbolic, and probabilistic approaches to interpretable model architectures has been a long-standing aim of Neuro-Symbolic AI.
A central goal is to achieve \emph{intrinsic explainability}, which, unlike post-hoc interpretations analysing input influence after training~\cite{lipton_mythos_2018,barredo_arrieta_explainable_2020}, aims at explainability of the architecture itself.
Early connectionist approaches~\cite{towell_knowledge-based_1994,avila_garcez_connectionist_1999} towards Neuro-Symbolic AI focus on embedding logical rules into neural connectivity.
Further, fruitful relations with statistical relational learning have been identified~\cite{marra_statistical_2024}.

\emph{Tensor networks} have recently gained interest as a unifying language for AI, framed by Logical Tensor Networks \cite{badreddine_logic_2022} and Tensor Logic \cite{domingos_tensor_2025}.
Furthermore, the MeLoCoToN approach \cite{ali_explicit_2025} applies tensor network architectures similar to \CompActNets{} in combinatorial optimization problems.
Specifically, tensor networks have emerged as a highly efficient mathematical framework for handling data in high-dimensional spaces, effectively circumventing the "curse of dimensionality" that typically plagues grid-based methods ~\cite{hackbusch_tensor_2012}.
By decomposing high-order tensors into networks of low-rank components, these structures reduce the storage and computational complexity from exponential to polynomial with respect to the dimension ~\cite{oseledets_tensor-train_2011, hackbusch_new_2009, hitchcock_expression_1927}.

\subsection{Structure of the paper}

\begin{figure}
    \centering
    \newcommand{\sketchtextsize}{\fontsize{8pt}{8pt}\selectfont}

\begin{tikzpicture}[yscale=1, xscale=0.925]

    \node[anchor=center] at (0,0.1) {\bf Neural paradigm};
    \draw[dashed] (-5,0.5) rectangle (5,-2.75);
    \node[anchor=center] at (0,-0.5) {
        \sketchtextsize Decomposition of functions into neurons (\defref{def:decompositionHypergraph}, \theref{the:functionDecompositionRep})};

    \node[anchor=west] at (-7.5,-5.1) {\bf Probabilistic paradigm};
    \draw[dashed] (-8,-1) rectangle (2,-5.5);
    \node[anchor=west, align=left] at (-7.75,-3.25) {
        \sketchtextsize Independence and related concepts \\[-3pt]
        \sketchtextsize (\defref{def:independence}, \defref{def:condIndependence}, \defref{def:sufStatistic})};
    \node[anchor=west, align=left] at (-7.75,-4.25) {
        \sketchtextsize Marginal and conditional distributions \\[-3pt]
        \sketchtextsize (\defref{def:marginalConditionalDistribution})};
    
    \node[anchor=east] at (7.5,-4.85) {\bf Logical paradigm};
    \draw[dashed] (8,-1.25) rectangle (-2,-5.25);
    \node[anchor=east, align=right] at (7.75,-3.5) {
        \sketchtextsize Semantics by boolean tensors \\[-3pt]
        \sketchtextsize (\defref{def:formulas})};
    \node[anchor=east, align=right] at (7.75,-4.25) {
        \sketchtextsize Entailment (\defref{def:logicalEntailment})};

    \node[anchor=center, align=center] at (-3.5,-2) {
        \sketchtextsize Graphical models \\[-3pt]
        \sketchtextsize (\defref{def:markovNetwork})};

    \node[anchor=center, align=center] at (3.5,-2) {
        \sketchtextsize Syntactical \\[-3pt]
        \sketchtextsize decompositions \\[-3pt]
        \sketchtextsize (\defref{def:syntacticalDecomposition})};

    \node[anchor=center, align=center] at (0,-2) {
        \sketchtextsize \CompActNets{} \\[-3pt]
        \sketchtextsize (\defref{def:compActNets})};

    \node[anchor=center, align=center] at (0,-4) {
        \sketchtextsize \HybridLogicNetworks{} \\[-3pt]
        \sketchtextsize (\defref{def:hybridLogicNetwork})};

\end{tikzpicture}
    \caption{Sketch of the concepts in the neural, probabilistic and logical paradigms, which we define based on tensor network decompositions and contractions.}\label{fig:paradigmsSketch}
\end{figure}
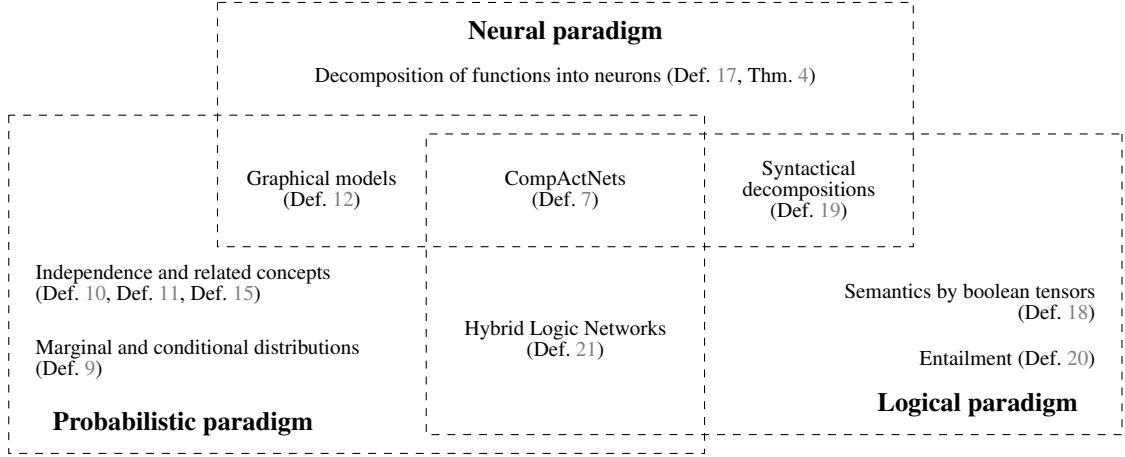

The paper is organized as follows.
In Section~\ref{sec:notation} we introduce the basic concepts and notation for categorical variables, tensors, and tensor networks, which are applied in the following sections.
Sections~\ref{sec:probPar}, \ref{sec:neurPar}, and \ref{sec:logPar} anchor the tensor network formalism in the basic paradigms of artificial intelligence (see \figref{fig:paradigmsSketch}). 
The \emph{probabilistic paradigm} is discussed in Section~\ref{sec:probPar}, where we in particular show that concepts of independence, graphical models, and sufficient statistics correspond to specific tensor network decompositions.
Section~\ref{sec:neurPar} is dedicated to the \emph{neural paradigm}, where we show that generic function decompositions have analogous tensor network representations.
Section~\ref{sec:logPar} turns to the \emph{logical paradigm}, where we study tensor equivalents of propositional formulas, knowledge bases, and entailment.
In Section~\ref{sec:hln} we present \emph{\HybridLogicNetworks{}} as an application of the unified tensor network formalism, combining logical and probabilistic models.
We briefly discuss the implementation of these concepts in our open source \python{} package \tnreason{} in Section~\ref{sec:implementation}, before concluding the paper in Section~\ref{sec:conclusion}.

    \section{Foundations}\label{sec:notation}

In this section, we introduce the hypergraph-based tensor network formalism and define the most general tensor network architecture of \CompActNets{} based on this formalism.

\subsection{Tensors}
\label{sec:tensors}

Tensors are multiway arrays and a generalization of vectors and matrices to higher orders.
We first provide a formal definition as real maps from index sets enumerating the coordinates of vectors, matrices, and higher-order tensors.
To ease the notation, we abbreviate sets as $[\catorder]=\{0,\ldots,\catorder-1\}$, tuples of state indices by $\catindexof{[\catorder]}=(\catindexof{0},\ldots,\catindexof{\catorder-1})$ and tuples of variables by $\catvariableof{[\catorder]}=(\catvariableof{0},\ldots,\catvariableof{\catorder-1})$.

\begin{definition}[Tensor]
    \label{def:tensor}
    For $\atomenumeratorin$, let $\catdimof{\atomenumerator}\in\nn$ and let $\catvariableof{\atomenumerator}$ be categorical variables taking values in $[\catdimof{\catenumerator}]$.
    A tensor $\hypercoreat{\catvariables}$ of order $\catorder$ and leg dimensions $\catdimof{0},\dots,\catdimof{\atomorder-1}$ is defined through its coordinates
    \begin{align*}
        \hypercoreat{\indexedshortcatvariables} =
        \hypercoreat{\indexedcatvariableof{0},\ldots,\indexedcatvariableof{\catorder-1}} \in \rr
    \end{align*}
    for index tuples
    \begin{align*}
        \shortcatindices=(\catindexof{0},\ldots,\catindexof{\catorder-1}) \in \facstates \, .
    \end{align*}
    Tensors $\hypercoreat{\shortcatvariables}$ are elements of the tensor space
    \begin{align*}
        \bigotimes_{\atomenumeratorin} \rr^{\catdimof{\atomenumerator}} \,,
    \end{align*}
    which is a linear space, enriched with the operations of coordinate-wise summation and scalar multiplication.
    We call a tensor $\hypercoreat{\shortcatvariables}$ boolean, when all coordinates are in $\{0,1\}$, and positive, when all coordinates are greater than $0$.
\end{definition}

We introduced tensors here in a non-canonical way based on categorical variables assigned to their axes.
While this may look like syntactic sugar at this point, it allows us to define contractions without further specification of axes, based on comparisons of shared variables.
We occasionally also allow for variables $\catvariable$ taking values in infinite sets such as $\rr$, in which case we denote the set of values to a variable by $\valof{\catvariable}$.


\begin{example}[Delta tensor]\label{exa:diracDelta}
    Given a tuple of variables $\shortcatvariables=(\catvariables)$ with identical dimension $\catdim$, where $\catorder\geq 1$, the delta tensor is the element
    \begin{align*}
        \dirdeltawith \in \bigotimes_{\catenumeratorin} \rr^{\catdim}
    \end{align*}
    with coordinates
    \begin{align*}
        \dirdeltaofat{[\catorder],\catdim}{\indexedshortcatvariables} =
        \begin{cases}
            1 \quad & \ifspace \catindexof{0} = \ldots = \catindexof{\catorder-1} \\
            0 & \text{else}
        \end{cases} \, .
    \end{align*}
    We depict this tensor by black dots, which sometimes appears as auxiliary elements in tensor network diagrams (see e.g. \figref{fig:contraction}).
    For $\catorder=1$, the delta tensor is the trivial vector, whose coordinates are constantly $1$, which we denote by $\onesat{\catvariable}$.
\end{example}

\subsection{Tensor networks and contractions}

We use a standard visualization of tensors (dating back to \cite{penrose_spinors_1987}) by blocks with lines depicting the axes of the tensor.
Additionally, we assign to each axis of the tensor the corresponding variable $\catvariableof{\atomenumerator}$:
\begin{center}
    \begin{tikzpicture}[scale=0.35,thick] 

\begin{scope}[shift={(16,2)}]

\draw (-1,-1) rectangle (5,-3);
\node[anchor=center] (text) at (2,-2) {\corelabelsize $\hypercore$};
\draw (0,-3)--(0,-5) node[midway,left] {\colorlabelsize $\catvariableof{0}$};
\draw (1.5,-3)--(1.5,-5) node[midway,left] {\colorlabelsize $\catvariableof{1}$};
\node[anchor=center] (text) at (3,-4) {$\cdots$};
\draw (4,-3)--(4,-5) node[midway,right] {\colorlabelsize $\catvariableof{\atomorder\shortminus1}$};

\end{scope}


\end{tikzpicture}
\end{center}
We now associate categorical variables with nodes of a hypergraph and tensors with hyperedges, which are arbitrary subsets of nodes.
Based on this association we continue with the definition of tensor networks.

\begin{definition}[Tensor network]
    \label{def:tensorNetwork}
    Let $\graph=(\nodes,\edges)$ be a hypergraph, let $\catvariableof{\node}$ for $\node\in\nodes$ be categorical variables with dimensions $\catdimof{\node} \in \nn$, and let
    \begin{align*}
        \hypercoreofat{\edge}{\catvariableof{\edge}} \in \bigotimes_{\node\in\edge}\rr^{\catdimof{\node}}
    \end{align*}
    be tensors for $\edge\in\edges$, where we denote by $\catvariableof{\edge}$ the set of categorical variables $\catvariableof{\node}$ with $\node\in\edge$.
    Then, we call the set
    \begin{align*}
        \tnetofat{\graph}{\catvariableof{\nodes}} = \{\hypercoreofat{\edge}{\catvariableof{\edge}}  \wcols \edge\in\edges\}
    \end{align*}
    the tensor network of the decorated hypergraph $\graph$.
    The set of tensor networks on $\graph$ such that all tensors have non-negative coordinates is denoted by $\tnsetof{\graph}$.
\end{definition}

As examples we now present the $\cpformat$ and the $\ttformat$ formats in our hypergraph notation.

\begin{example}[The $\cpformat$ format]\label{exa:cpFormat}
    The Candecomp-Parafac ($\cpformat$) tensor format (see \cite{hitchcock_expression_1927}) corresponds in our notation to a hypergraph (see \figref{fig:CPHypergraph}) defined by
    \begin{itemize}
        \item nodes $\catvariableof{[\catorder]}$ and a single hidden variable $\decvariable$, decorated by dimensions $\catdimof{[\catorder]}$ and the CP-rank $\decdim$, respectively
        \item edges
            $\big\{\edgeof{\catenumerator}=\{\catvariableof{\catenumerator},\decvariable\} \wcols \catenumeratorin \big\}$
        each decorated by a matrix $\hypercoreofat{\edgeof{\catenumerator}}{\catvariableof{\catenumerator},\decvariable}\in\mathbb{R}^{m_k\times n}$.
    \end{itemize}

    \begin{figure}
        \begin{center}
            \begin{tikzpicture}[scale=0.35,thick]

                \begin{scope}[shift={(-19,-2)}]
                    \coordinate[label=left:$a)$] (A) at (-2,4);

                    \node[circle, draw, thick, fill=\nodegrayscale, minimum size = \nodeminsize] (A) at (0,0) {};
                    \node[anchor=center] (A) at (0,0) {\corelabelsize $\catvariableof{0}$};

                    \node[circle, draw, thick, fill=\nodegrayscale, minimum size = \nodeminsize] (A) at (4,0) {};
                    \node[anchor=center] (A) at (4,0) {\corelabelsize $\catvariableof{1}$};

                    \coordinate[label=below:$\hdots $] (A) at (9,0.5);

                    \node[circle, draw, thick, fill=\nodegrayscale, minimum size = \nodeminsize] (A) at (14,0) {};
                    \node[] (text) at (14,0) {\corelabelsize $\catvariableof{\catorder\shortminus1}$};

                    \node[circle, draw, thick, fill=\nodegrayscale, minimum size = \nodeminsize] (A) at (7,4) {\corelabelsize $\decvariable$};

                    \draw (6.25,3.2) -- (0.5,1) node[midway,above] {\colorlabelsize $\edgeof{0}$};
                    \draw (6.75,2.9) -- (4,1.15) node[midway,below] {\colorlabelsize $\edgeof{1}$};
                    \draw (7.75,3.2) -- (13.5,1) node[midway,above] {\colorlabelsize $\edgeof{\catorder\shortminus1}$};

                \end{scope}

                \coordinate[label=left:$b)$] (A) at (-2,2);

                \begin{scope}[shift={(0,-2)}]

                    \draw (-1,-1) rectangle (1,1);
                    \node[anchor=center] (A) at (0,0) {\corelabelsize $\hypercoreof{0}$};
                    \draw (0,-1)--(0,-2.5) node[midway,right] {\colorlabelsize $\catvariableof{0}$};

                    \draw (3,-1) rectangle (5,1);
                    \node[anchor=center] (A) at (4,0) {\corelabelsize $\hypercoreof{1}$};
                    \draw (4,-1)--(4,-2.5) node[midway,right] {\colorlabelsize $\catvariableof{1}$};

                    \node[anchor=center] (text) at (8,0) {$\hdots$};

                    \draw (11,-1) rectangle (13,1);
                    \node[anchor=center] (A) at (12,0) {\corelabelsize $\hypercoreof{\catorder\shortminus1}$};
                    \draw (12,-1)--(12,-2.5) node[midway,right] {\colorlabelsize $\catvariableof{\catorder\shortminus1}$};

                    \drawvariabledot{6}{4}
                    \node[anchor=south] (text) at (6,4) {\colorlabelsize $\decvariable$};

                    \draw (6,4) to[bend right= 20] (0,1);
                    \draw (6,4) to[bend right= 10] (4,1);
                    \draw (6,4) to[bend right= -20] (12,1);

                \end{scope}

            \end{tikzpicture}
        \end{center}
        \caption{Hypergraph to a $\cpformat$ format (see \exaref{exa:cpFormat}).
        a) Node-centric design.
        b) Corresponding tensor network on the edges of the hypergraph.}\label{fig:CPHypergraph}
    \end{figure}

\end{example}

\begin{example}[The $\ttformat$ format]\label{exa:ttFormat}
    The Tensor-Train ($\ttformat$) format (see \cite{oseledets_tensor-train_2011}) corresponds in our notation to a hypergraph (see \figref{fig:TTHypergraph}) defined by
    \begin{itemize}
        \item nodes $\catvariableof{[\catorder]}$ and hidden variables $\decvariableof{[\catorder-1]}$, each decorated by a dimension $\catdimof{[\catorder]}$ and $\decdimof{[\catorder-1]}$,
        \item edges
        \begin{align*}
            \big\{\edgeof{0}=\{\catvariableof{0},\decvariableof{0}\}\big\} \cup
            \big\{\edgeof{\catenumerator}=\{\decvariableof{\catenumerator-1},\catvariableof{\catenumerator},\decvariableof{\catenumerator}\} \wcols \catenumerator\in\{1,\ldots,\catorder-2\}\big\} \cup
            \big\{\edgeof{\catorder-1}=\{\decvariableof{\catorder-2},\catvariableof{\catorder-1}\}\big\}
        \end{align*}
        each decorated by a tensor of order 3 (respectively 2 for $\catenumerator\in\{0,\catorder-1\}$).
    \end{itemize}
    \begin{figure}
        \begin{center}
            \begin{tikzpicture}[scale=0.35,thick]

                \begin{scope}[shift={(-22,-2)}]
                    \coordinate[label=left:$a)$] (A) at (-2,4);

                    \node[circle, draw, thick, fill=\nodegrayscale, minimum size = \nodeminsize] (A) at (0,0) {};
                    \node[anchor=center] (A) at (0,0) {\corelabelsize $\catvariableof{0}$};
                    \node[circle, draw, thick, fill=\nodegrayscale, minimum size = \nodeminsize] (A) at (4,0) {};
                    \node[anchor=center] (A) at (4,0) {\corelabelsize $\catvariableof{1}$};

                    \coordinate[label=below:$\hdots $] (A) at (10.5,4);
                    \coordinate[label=below:$\hdots $] (A) at (10.5,0);

                    \node[circle, draw, thick, fill=\nodegrayscale, minimum size = \nodeminsize] (A) at (2,4) {};
                    \node[anchor=center] (A) at (2,4) {\corelabelsize $\decvariableof{0}$};
                    \node[circle, draw, thick, fill=\nodegrayscale, minimum size = \nodeminsize] (A) at (6,4) {};
                    \node[anchor=center] (A) at (6,4) {\corelabelsize $\decvariableof{1}$};

                    \draw (0.5,1) -- (1.5,3) node[midway,left] {\colorlabelsize $e_0$};

                    \node[anchor=south] (A) at (4,2) {\colorlabelsize $e_1$};
                    \draw (4,2)  to[bend right=-30] (2.5,3);
                    \draw (4,2)  to[bend left=-30] (5.5,3);
                    \draw (4,2) -- (4,1.1);

                    \node[anchor=south] (A) at (8,2) {\colorlabelsize $e_2$};
                    \draw (8,2)  to[bend right=-30] (6.5,3);
                    \draw (8,2)  to[bend left=-30] (9.5,3);
                    \draw (8,2) -- (8,1.1);

                    \begin{scope}[shift={(3,0)}]

                        \node[anchor=south] (A) at (10,2) {\colorlabelsize $e_{\catorder-2}$};
                        \draw (10,2)  to[bend right=-30] (8.5,3);
                        \draw (10,2)  to[bend left=-30] (11.5,3);
                        \draw (10,2) -- (10,1.1);

                        \draw (13.5,1) -- (12.5,3) node[midway,right] {\colorlabelsize $e_{\catorder-1}$};

                        \node[circle, draw, thick, fill=\nodegrayscale, minimum size = \nodeminsize] (A) at (12,4) {};
                        \node[anchor=center] (text) at (12,4) {\corelabelsize $\decvariableof{\catorder\shortminus2}$};

                        \node[circle, draw, thick, fill=\nodegrayscale, minimum size = \nodeminsize] (A) at (14,0) {};
                        \node[] (text) at (14,0) {\corelabelsize $\catvariableof{\catorder\shortminus1}$};
                    \end{scope}
                \end{scope}

                \coordinate[label=left:$b)$] (A) at (-2,2);

                \draw (-1,-1) rectangle (1,1);
                \node[anchor=center] (A) at (0,0) {\corelabelsize $\hypercoreof{0}$};
                \draw (0,-1)--(0,-2.5) node[midway,right] {\colorlabelsize $\catvariableof{0}$};

                \draw (1,0) -- (3,0) node[midway,above] {\colorlabelsize $\decvariableof{0}$};

                \draw (3,-1) rectangle (5,1);
                \node[anchor=center] (A) at (4,0) {\corelabelsize $\hypercoreof{1}$};
                \draw (4,-1)--(4,-2.5) node[midway,right] {\colorlabelsize $\catvariableof{1}$};

                \draw (5,0) -- (6.5,0) node[midway,above] {\colorlabelsize $\decvariableof{1}$};

                \node[anchor=center] (text) at (8,0) {$\hdots$};

                \draw (9.5,0) -- (11,0) node[midway,above] {\colorlabelsize $\decvariableof{\catorder\shortminus2}$};;
                \draw (11,-1) rectangle (13,1);
                \node[anchor=center] (A) at (12,0) {\corelabelsize $\hypercoreof{\catorder\shortminus1}$};
                \draw (12,-1)--(12,-2.5) node[midway,right] {\colorlabelsize $\catvariableof{\catorder\shortminus1}$};

            \end{tikzpicture}
        \end{center}
        \caption{Hypergraph to a $\ttformat$ format (see \exaref{exa:ttFormat}).
        a) Node-centric design.
        b) Corresponding tensor network on the edges of the hypergraph.}\label{fig:TTHypergraph}
    \end{figure}

\end{example}

\subsection{Generic contractions}

Let us now exploit our graphical approach to tensor networks in the definition of contractions.

\begin{definition}
    \label{def:contraction}
    Let $\tnetof{\graph}$ be a tensor network on a decorated hypergraph $\graph=(\nodes,\edges)$.
    For any subset $\secnodes\subset\nodes$ we define the contraction of $\tnetof{\graph}$ with open variables $\catvariableof{\secnodes}$ to be the tensor (for an example see \figref{fig:contraction})
    \begin{align*}
        \contractionof{\tnetof{\graph}}{\secnodevariables} \in \bigotimes_{\node\in\secnodes} \rr^{\catdimof{\node}}
    \end{align*}
    with coordinates at indices $\catindexof{\secnodes}\in\bigtimes_{\node\in\secnodes}[\catdimof{\node}]$ by
    \begin{align*}
        \contractionof{\tnetof{\graph}}{\indexedcatvariableof{\secnodes}} =
        \sum_{\catindexof{\setwithout{\nodes}{\secnodes}} \in\,\nodestatesof{\setwithout{\nodes}{\secnodes}}}
        \left( \prod_{\edge\in\edges}\hypercoreofat{\edge}{\indexedcatvariableof{\edge}} \right) \, .
    \end{align*}
\end{definition}

When an open variable $\catvariable$ does not appear in any tensor in a contraction, we define the contraction as a tensor product with the trivial tensor $\onesat{\catvariable}$ (see \exaref{exa:diracDelta}).
To ease notation, we often omit the set notation by brackets $\{\cdot\}$.

\begin{figure}
    \begin{center}
        \begin{tikzpicture}[scale=0.35,thick]

    \draw (-5,-1) rectangle (9,-3);
    \node[anchor=center] (text) at (2,-2) {\corelabelsize $\contractionof{\hypercoreof{\edge_0},\hypercoreof{\edge_1},\hypercoreof{\edge_2}}{\catvariableof{1},\catvariableof{3}}$};
    \draw (0,-3)--(0,-5) node[midway,left] {\colorlabelsize $\catvariableof{1}$};
    \draw (4,-3)--(4,-5) node[midway,left] {\colorlabelsize $\catvariableof{3}$};

    \node[anchor=center] (text) at (11.5,-2) {${=}$};

    \begin{scope}
        [shift={(15,0)}]

        \draw (-1,-1) rectangle (5,-3);
        \node[anchor=center] (text) at (2,-2) {\corelabelsize $\hypercoreof{\edge_0}$};
        \draw (0,-3)--(0,-4) node[midway,right] {\colorlabelsize $\catvariableof{0}$};
        \drawvariabledot{0}{-4}

        \draw (2,-3)--(2,-5) node[midway,right] {\colorlabelsize $\catvariableof{1}$};
        \draw (4,-3)--(4,-5) node[midway,right] {\colorlabelsize $\catvariableof{2}$};

        \draw (6,-1) rectangle (10,-3);
        \node[anchor=center] (text) at (8,-2) {\corelabelsize $\hypercoreof{\edge_2}$};
        \draw (7,-3)--(7,-5) node[midway,right] {\colorlabelsize $\catvariableof{2}$};
        \draw (9,-3)--(9,-5) node[midway,right] {\colorlabelsize $\catvariableof{3}$};

        \draw (1,-7) rectangle (5,-9);
        \node[anchor=center] (text) at (3,-8) {\corelabelsize $\hypercoreof{\edge_1}$};
        \draw (2,-5)--(2,-7); 
        \draw (4,-5) to[bend right=20]  (7,-6); 
        \draw (4,-7) to[bend right=-20]  (7,-6);

        \draw[fill] (2,-6) circle (\dotsize);
        \draw (2,-6) to[bend right=20] (-1,-8); 
        \node[anchor=center] (text) at (-2,-8) {\colorlabelsize $\catvariableof{1}$};

        \draw[fill] (7,-6) circle (\dotsize);
        \draw (7,-5) -- (7,-6);

    \end{scope}

\end{tikzpicture}
    \end{center}
    \caption{
        Graphical depiction of a tensor network contraction with the open variables $\catvariableof{1},\catvariableof{3}$.
        Open variables are depicted by those without a dot at the end of the line.
    }\label{fig:contraction}
\end{figure}
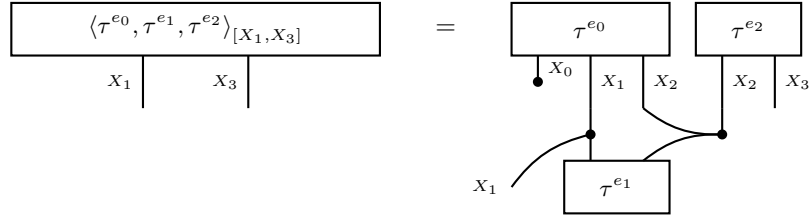

\begin{example}[Tensor product]
    \label{exa:tensorProduct}
    The simplest contraction is the tensor product, which maps a pair of two tensors with distinct variables onto a third tensor and has an interpretation by coordinate-wise products.
    Such a contraction corresponds with a tensor network of two tensors with disjoint variables.
    Let there be two tensors
    \begin{align*}
        \hypercoreat{\shortcatvariables} \in \facspace  \andspace \sechypercoreat{\headvariables} \in \bigotimes_{\selindexin}\rr^{\headdimof{\selindex}}
    \end{align*}
    with disjoint tuples of categorical variables assigned to their axes.
    Then their tensor product is the tensor
    \begin{align*}
        \contractionof{\hypercoreat{\shortcatvariables},\sechypercoreat{\secshortcatvariables}}{\shortcatvariables,\headvariables}
        \in \left(\facspace\right) \otimes \left(\bigotimes_{\selindexin}\rr^{\headdimof{\selindex}}\right)
    \end{align*}
    with coordinates to tuples of $\shortcatindices\in\facstates$ and $\shortheadindices\in\bigtimes_{\selindexin}[\headdimof{\selindex}]$ as
    \begin{align*}
        & \contractionof{\hypercoreat{\shortcatvariables},\sechypercoreat{\secshortcatvariables}}{\indexedshortcatvariables,\headvariables=\shortheadindices} \\
        &\quad\quad \coloneqq  \hypercoreat{\indexedshortcatvariables}\cdot \sechypercoreat{\headvariables=\shortheadindices} \, .
    \end{align*}
\end{example}

\subsection{Normalizations}

Based on generic contractions, we now introduce the normalization of tensors, which introduces certain contraints on tensors to be depicted by directed hyperedges.

\begin{definition}
    \label{def:normalization}
    The normalization of a tensor $\hypercorewithnodes$ on incoming nodes $\innodes\subset\nodes$ and outgoing nodes $\outnodes\subset\setwithout{\nodes}{\innodes}$ is the tensor $\normalizationofwrt{\hypercorewithnodes}{\catvariableof{\outnodes}}{\catvariableof{\innodes}}$ defined for $\catindexof{\innodes}$ as
    \begin{align*}
        \normalizationofwrt{\hypercorewithnodes}{\catvariableof{\outnodes}}{\indexedcatvariableof{\innodes}}
        = \begin{cases}
              \frac{\contractionof{\hypercore}{\catvariableof{\outnodes},\indexedcatvariableof{\innodes}}}{\contractionof{\hypercore}{\indexedcatvariableof{\innodes}}} & \ifspace \contractionof{\hypercore}{\indexedcatvariableof{\innodes}} \neq 0 \\
              \frac{1}{\prod_{\node\in\outnodes}\catdimof{\node}}\onesat{\catvariableof{\outnodes}} & \text{else}
        \end{cases} \, .
    \end{align*}
    We say that $\hypercorewithnodes$ is normalized with incoming nodes $\innodes\subset\nodes$, if
    \begin{align*}
        \hypercorewithnodes
        = \normalizationofwrt{\hypercorewithnodes}{\catvariableof{\setwithout{\nodes}{\innodes}}}{\catvariableof{\innodes}} \, .
    \end{align*}
\end{definition}

In our graphical tensor notation, we depict normalized tensors by directed hyperedges (a), which are decorated by directed tensors (b), for example when $\catvariableof{\innodes}=(\catvariableof{2},\catvariableof{3})$ and $\catvariableof{\setwithout{\nodes}{\innodes}}=(\catvariableof{0},\catvariableof{1})$:
\begin{center}
    \begin{tikzpicture}[scale=0.35,thick, xscale=1.1] 

    \node[anchor=center] (text) at (-2,0) {$a)$};

    \node [circle, draw, thick, fill=\nodegrayscale, minimum size = \nodeminsize] (P1) at (0,-3) {\colorlabelsize $\catvariableof{0}$};
    \node [circle, draw, thick, fill=\nodegrayscale, minimum size = \nodeminsize] (P2) at (3,-3) {\colorlabelsize $\catvariableof{1}$};

    \node [circle, draw, thick, fill=\nodegrayscale, minimum size = \nodeminsize] (P3) at (6,-3)  {\colorlabelsize $\catvariableof{2}$};

    \node [circle, draw, thick, fill=\nodegrayscale, minimum size = \nodeminsize] (P4) at (9,-3)  {\colorlabelsize $\catvariableof{3}$};

    \node[anchor=center] (text) at (9,-3) {\colorlabelsize $\catvariableof{3}$};

    \draw[midarrow]
    (4.5,0) to[bend right=25] (P1);
    \draw[midarrow]
    (4.5,0) to[bend right=10] (P2);
    \draw[midarrow]
    (P3) to[bend right=10] (4.5,0);
    \draw[midarrow]
    (P4) to[bend right=25] (4.5,0);

    \node[anchor=center] (text) at (4.5,0.5) {$\edge$};

    \begin{scope}[shift={(20,0)}]

        \node[anchor=center] (text) at (-2,0) {$b)$};

        \draw (-1,-1) rectangle (7,-3);
        \node[anchor=center] (text) at (3,-2) {\corelabelsize $\hypercoreofat{\edge}{\catvariableof{0},\catvariableof{1},\catvariableof{2},\catvariableof{3}}$};

        \draw[midarrow]  (0,-3) -- (0,-5) node[midway,left] {\colorlabelsize $\catvariableof{0}$};
        \draw[midarrow]
        (2,-3)--(2,-5) node[midway,left] {\colorlabelsize $\catvariableof{1}$};
        \draw[midarrow]
        (4,-5)--(4,-3) node[midway,left] {\colorlabelsize $\catvariableof{2}$};
        \draw[midarrow]
        (6,-5)--(6,-3) node[midway,right] {\colorlabelsize $\catvariableof{3}$};
    \end{scope}

\end{tikzpicture}
\end{center}

\subsection{Function encoding and \ComputationActivationNetworks{}}

Towards presenting the function encoding schemes, we define one-hot encodings mapping the states of variables to basis tensors.

\begin{definition}[One-hot encoding]
    \label{def:onehotenc}
    To any variable $\catvariable$ taking values in $[\catdim]$, the one-hot encoding of any state $\catindex\in[\catdim]$ is the vector with coordinates
    \begin{align*}
        \onehotmapofat{\catindex}{\catvariable=\tilde{\catindex}}
        \coloneqq \begin{cases}
                      1 & \ifspace \catindex=\tilde{\catindex} \\
                      0 & \text{else} \, .
        \end{cases}
    \end{align*}
    To any tuple $\shortcatvariables$ of variables taking values in $\facstates$, the one-hot encoding of a state tuple $\shortcatindices$ is the tensor product
    \begin{align*}
        \onehotmapofat{\shortcatindices}{\shortcatvariables}
        \coloneqq \bigotimes_{\catenumeratorin} \onehotmapofat{\catindexof{\atomenumerator}}{\catvariableof{\atomenumerator}} \, .
    \end{align*}
\end{definition}

We now use one-hot encodings to encode functions between state sets.

\begin{definition}[Basis encoding of maps between state sets]
    \label{def:functionRepresentation}
    Let there be two sets of variables $\shortcatvariables$ and $\headvariables$, and let there be a map
    \begin{align*}
        \statesetfunction\defcols\facstates \rightarrow  \bigtimes_{\selindexin}[\headdimof{\selindex}]
    \end{align*}
    between their state sets.
    Then, the basis encoding of $\statesetfunction$ is a tensor
    \begin{align*}
        \bencodingofat{\statesetfunction}{\headvariables,\shortcatvariables}
        \in \left(\bigotimes_{\selindexin}\rr^{\headdimof{\selindex}}\right) \otimes \left(\facspace\right)
    \end{align*}
    defined by
    \begin{align*}
        \bencodingofat{\statesetfunction}{\headvariables,\shortcatvariables}
        = \sum_{\shortcatindices\in\facstates}
        \onehotmapofat{\statesetfunctionev{\shortcatindices}}{\headvariables} \otimes  \onehotmapofat{\shortcatindices}{\shortcatvariables} \, .
    \end{align*}
\end{definition}
Basis encodings are normalized tensors and are thus depicted as decorations of directed edges in hypergraphs:
\begin{center}
    \begin{tikzpicture}[scale=0.4,thick] 

    \draw[->-] (0,-1)--(0,1) node[midway,left] {\colorlabelsize $\headvariableof{0}$};
    \draw[->-] (1.5,-1)--(1.5,1) node[midway,left] {\colorlabelsize $\headvariableof{1}$};
    \node[anchor=center] (text) at (3,0) {$\cdots$};
    \draw[->-] (4,-1)--(4,1) node[midway,right] {\colorlabelsize $\headvariableof{\seccatorder\shortminus1}$};

    \draw (-1,-1) rectangle (5,-3);
    \node[anchor=center] (text) at (2,-2) {\corelabelsize $\bencodingof{\statesetfunction}$};
    \draw[-<-] (0,-3)--(0,-5) node[midway,left] {\colorlabelsize $\catvariableof{0}$};
    \draw[-<-] (1.5,-3)--(1.5,-5) node[midway,left] {\colorlabelsize $\catvariableof{1}$};
    \node[anchor=center] (text) at (3,-4) {$\cdots$};
    \draw[-<-] (4,-3)--(4,-5) node[midway,right] {\colorlabelsize $\catvariableof{\atomorder\shortminus1}$};

\end{tikzpicture}
\end{center}

We further generalize basis encodings to arbitrary functions between finite sets by the use of bijective image enumeration maps.
Given an arbitrary set $\arbset$, we say a map
\begin{align*}
    \indexinterpretation \defcols
    \bigtimes_{\catenumeratorin}[\catdimof{\catenumerator}] \rightarrow \arbset
\end{align*}
is an enumeration map of $\arbset$ by $\catorder$ variables $\catvariableof{\catenumerator}$, taking values in $\catdimof{\catenumerator}$.
Given a function $\exfunction:\inset\rightarrow\outset$ between arbitrary sets and enumerating maps $\indexinterpretationof{\insymbol}$ and $\indexinterpretationof{\outsymbol}$ for both sets, we define the basis encoding of $\exfunction$ as
\begin{align*}
    \bencodingofat{\exfunction}{\headvariables,\shortcatvariables}
    = \sum_{\arbelement\in\inset} \onehotmapofat{\invindexinterpretationofat{\outsymbol}{\exfunction(\arbelement)}}{\headvariables}
    \otimes \onehotmapofat{\invindexinterpretationofat{\insymbol}{\arbelement}}{\shortcatvariables} \, ,
\end{align*}
where $\shortcatvariables,\headvariables$ are variables taking values in $[\cardof{\inset}]$ and $[\cardof{\outset}]$.
In \exaref{exa:madicRepresentation} we present index enumeration maps for summations in $\catdim$-adic integer representations.
Based on these concepts, we define the most general tensor network architecture to be applied in the rest of this work.

\begin{definition}[\ComputationActivationNetwork{} (\CompActNets{})]
    \label{def:compActNets}
    Let there be a function $\sstat : \facstates \allowbreak \rightarrow \parspace$ with basis encoding $\bencodingofat{\sstat}{\headvariables,\shortcatvariables}$, where $\headvariables$ is a tuple of variables to an enumeration map of the image of $\sstat$.
    Let there further be a hypergraph $\graph=(\nodes,\edges)$ with nodes $\nodes$ containing $[\seldim]$.
    We define the by $\sstat$ computable and by $\graph$ activated family of distributions by
    \begin{align*}
        \realizabledistsof{\sstat,\graph}
        = \left\{ \normalizationof{\bencodingofat{\sstat}{\headvariables,\shortcatvariables},\contractionof{\acttensor}{\headvariables} 
        }{\shortcatvariables}
              \wcols \acttensorat{\headvariableof{\nodes}} \in \tnsetof{\graph} \right\} \, .
    \end{align*}
    We refer to any member $\probat{\shortcatvariables}\in\realizabledistsof{\sstat,\graph}$ as a \emph{\ComputationActivationNetwork{}} (or shorter as a \emph{\CompActNet{}}).
    We call $\bencsstatwith$ (and any decomposition of it) the \emph{computation network} and $\acttensorat{\headvariableof{\nodes}}$ the \emph{activation network}.
\end{definition}

The elementary activated networks are representable by an elementary activation tensor with respect to the graph
\begin{align*}
    \elgraph = \big(\nodes,\{\{\node\}\wcols\nodein\}\big)
\end{align*}
and we denote such networks by $\realizabledistsof{\sstat,\elgraph}$.
Any \CompActNet{} is representable with respect to the maximal hypergraph
\begin{align*}
    \maxgraph  = \big(\nodes,\{\nodes\}\big) \, .
\end{align*}
We therefore have for any graph that $\realizabledistsof{\sstat,\graph}\subset\realizabledistsof{\sstat,\maxgraph}$.

\section{The probabilistic paradigm}\label{sec:probPar}

In the following we investigate tensor network decomposition mechanisms of probability distributions.
After introducing probability distributions as tensors and independencies as decomposition schemes, we derive tensor network decompositions based on conditional independencies (applying a classical theorem of Hammersley-Clifford, see \cite{clifford_markov_1971}) to motivate graphical models.
Furthermore, we present the Fisher-Neyman Factorization Theorem as providing decompositions in the presence of sufficient statistics.

\subsection{Basic concepts}

As defined next, distributions $\probtensor$ over a discrete state space can be represented by tensors, where each entry corresponds to the probability of a corresponding state.

\begin{definition}[Joint probability distribution]
    \label{def:probabilityDistribution} 
    Let there be for each $\catenumeratorin$ a categorical variable $\catvariableof{\catenumerator}$ taking values in $[\catdimof{\catenumerator}]$.
    A joint probability distribution of these categorical variables is a tensor
    \begin{align*}
        \probwith \in \facspace
    \end{align*}
    which coordinates are non-negative, that is for any $\shortcatindicesin$ it holds
    \begin{align*}
        \probat{\indexedshortcatvariables} \geq 0 \, ,
    \end{align*}
    and which is normalized with no incoming variables, that is
    \begin{align*}
        \contraction{\probat{\shortcatvariables}}
        = 1 \, .
    \end{align*}
    Let $\thirdcatvariable$ be another variable taking values in a possibly infinite set $\valof{\thirdcatvariable}$.
    Then, a tensor $\condprobat{\shortcatvariables}{\thirdcatvariable}$ is a family of joint probability distributions if, for any $\thirdcatindex\in\valof{\thirdcatvariable}$, the slice $\condprobat{\shortcatvariables}{\thirdcatvariable=\thirdcatindex}$ is a joint probability distribution.
\end{definition}

\begin{example}[Family of independent coin tosses]
    Consider tossing a coin with head probability $\thirdcatindex\in[0,1]$ and repeating the experiment independently $\catorder\in\nn$ times.
    We define a variable $\thirdcatvariable$ taking values in $\mathrm{val}(\thirdcatvariable)=[0,1]$ and denote by $\shortcatvariables$ $\catorder$ boolean variables.
    Then, the family of coin toss distributions is the tensor $\condprobat{\shortcatvariables}{\thirdcatvariable}$ with coordinates $\shortcatindices\in\atomstates$ and $\thirdcatindex\in[0,1]$ defined by
    \begin{align*}
        \condprobat{\indexedshortcatvariables}{\thirdcatvariable=\thirdcatindex}
        = \prod_{\catenumeratorin} \thirdcatindex^{\catindexof{\catenumerator}} (1-\thirdcatindex)^{1-\catindexof{\catenumerator}}
        = \thirdcatindex^{\sum_{\catenumeratorin} \catindexof{\catenumerator}} (1-\thirdcatindex)^{\catorder - \sum_{\catenumeratorin} \catindexof{\catenumerator}} \, .
    \end{align*}
    Note that by the binomial theorem we have $\contraction{\probat{\shortcatvariables,\thirdcatvariable=\thirdcatindex}} =1$ for each slice with respect to $\thirdcatindex\in[0,1]$. Therefore, $\probat{\shortcatvariables,\thirdcatvariable}$ is indeed a family of probability distributions.
    For $\catorder=2$ we have more explicitly for any $\thirdcatindex\in[0,1]$ that
    \begin{center}
        \begin{tikzpicture}[scale=1]
            \node[anchor=east] (A) at (-2,0) {$\condprobat{\catvariableof{[2]}}{\thirdcatvariable=\thirdcatindex}\,=$};

            \node (A) at (1,0) {
                $\begin{bmatrix}
                (1-\thirdcatindex)
                     ^2 & \thirdcatindex \cdot (1-\thirdcatindex) \\
                     \thirdcatindex \cdot (1-\thirdcatindex) & \thirdcatindex^2
                \end{bmatrix}$
            };
            \draw[<-,dashed] (-1.3,-0.275) node[right] {\tiny $1$} -- (-1.3,0.275) node [midway, left] {\tiny $\catvariableof{0}$} node[right] {\tiny $0$};
            \draw[->,dashed] (0,0.85) node[below] {\tiny $0$} -- (2,0.85) node [midway, above] {\tiny $\catvariableof{1}$} node[below] {\tiny $1$};
            \node[anchor=east] (A) at (3.5,-0.55) {$\cdot$};
        \end{tikzpicture}
    \end{center}
\end{example}

A basic inference operation on probability distributions is the computation of marginal and conditional distributions.

\begin{definition}
    \label{def:marginalConditionalDistribution}
    For any distribution $\probat{\catvariableof{0},\catvariableof{1}}$ the marginal distribution is the contraction (see \defref{def:contraction})
    \begin{align*}
        \probat{\catvariableof{0}}
        \coloneqq \contractionof{\probat{\catvariableof{0},\catvariableof{1}}}{\catvariableof{0}},
    \end{align*}
    which is depicted by the diagram
    \begin{center}
        \begin{tikzpicture}[scale=0.3,thick] 

    \draw (-19,-1) rectangle (-15,-3);
    \node[anchor=center] (text) at (-17,-2) {\corelabelsize $\margprobat{\exrandom}$};
    \draw[midarrow]  (-17,-3)--(-17,-5) node[midway,left] {\colorlabelsize $\exrandom$};

    \node[anchor=center] (text) at (-13,-2) {${=}$};

    \draw (-11,-1) rectangle (-5,-3);
    \node[anchor=center] (text) at (-8,-2) {\corelabelsize $\probat{\exrandom,\secexrandom}$};
    \draw[midarrow]  (-10,-3)--(-10,-5) node[midway,left] {\colorlabelsize $\exrandom$};
    \draw[midarrow]  (-6,-3)--(-6,-5) node[midway,left] {\colorlabelsize $\secexrandom$};
    \drawvariabledot{-6}{-5}

\end{tikzpicture}
    \end{center}
    The conditional distribution of $\catvariableof{0}$ on $\catvariableof{1}$ is the normalization (see \defref{def:normalization})
    \begin{align*}
        \condprobat{\catvariableof{0}}{\catvariableof{1}}
        \coloneqq \normalizationofwrt{\probat{\catvariableof{0},\catvariableof{1}}}{\catvariableof{0}}{\catvariableof{1}} \, .
    \end{align*}
\end{definition}

For $\catindexof{1}\in[\catdimof{1}]$ with $\contraction{\probat{\catvariableof{0},\indexedcatvariableof{1}}}$ we depict the normalization by
\begin{center}
    \begin{tikzpicture}[scale=0.3, thick] 

    \begin{scope}
        [shift={(-13,0)}]

        \draw (-22,-1) rectangle (-14,-3);
        \node[anchor=center] (text) at (-18,-2) {\corelabelsize $\condprobat{\catvariableof{0}}{\catvariableof{1}=\catindexof{1}}$};
        \draw[->-]  (-18,-3)--(-18,-5) node[midway,left] {\colorlabelsize $\catvariableof{0}$};

        \node[anchor=center] (text) at (-12,-2) {${\coloneqq}$};

    \end{scope}

    \begin{scope}
        [shift={(-11,6)}]

        \draw (-11,-1) rectangle (-5,-3);
        \node[anchor=center] (text) at (-8,-2) {\corelabelsize $\probat{\catvariableof{0},\catvariableof{1}}$};
        \draw[->-]  (-10,-3)--(-10,-5) node[midway,left] {\colorlabelsize $\catvariableof{0}$};
        \draw[->-]  (-6,-3)--(-6,-5) node[midway,left] {\colorlabelsize $\catvariableof{1}$};
        \draw[] (-7,-5) rectangle (-5,-7);
        \node[anchor=center] (text) at (-6,-6) {\corelabelsize $\onehotmapof{\catindexof{1}}$};

    \end{scope}

    \draw (-23,-2) -- (-15,-2);

    \begin{scope}
        [shift={(-11,-2)}]

        \draw (-11,-1) rectangle (-5,-3);
        \node[anchor=center] (text) at (-8,-2) {\corelabelsize $\probat{\catvariableof{0},\catvariableof{1}}$};
        \draw[->-]  (-10,-3)--(-10,-5) node[midway,left] {\colorlabelsize $\catvariableof{0}$};
        \drawvariabledot{-10}{-5}
        \draw[->-]  (-6,-3)--(-6,-5) node[midway,left] {\colorlabelsize $\catvariableof{1}$};
        \draw[] (-7,-5) rectangle (-5,-7);
        \node[anchor=center] (text) at (-6,-6) {\corelabelsize $\onehotmapof{\catindexof{1}}$};

    \end{scope}

    \begin{scope}
        [shift={(10,0)}]

        \node[anchor=center] (text) at (-23,-2) {${\eqqcolon}$};
        
        \draw (-21,-1) rectangle (-15,-3);
        \node[anchor=center] (text) at (-18,-2) {\corelabelsize $\condprobat{\catvariableof{0}|\catvariableof{1}}$};
        \draw[->-]  (-20,-3)--(-20,-5) node[midway,left] {\colorlabelsize $\catvariableof{0}$};

        \draw[-<-]  (-16,-3)--(-16,-5) node[midway,left] {\colorlabelsize $\catvariableof{1}$};
        \draw[] (-15,-5) rectangle (-17,-7);
        \node[anchor=center] (text) at (-16,-6) {\corelabelsize $\onehotmapof{\catindexof{1}}$};

    \end{scope}

\end{tikzpicture}
\end{center}

\subsection{Factorization into graphical models}

The number of coordinates in a tensor representation of probability distributions is the product
\begin{align*}
    \prod_{\catenumeratorin}\catdimof{\catenumerator} \, .
\end{align*}
It therefore scales exponentially in the number of coordinates.
To find efficient representation schemes of probability distributions by tensor networks, we need to exploit additional properties of the distribution.
Independence leads to severe sparsifications of conditional probabilities and is hence the key assumption to gain sparse decompositions of probability distributions.

\begin{definition}[Independence]
    \label{def:independence} 
    We say that $\catvariableof{0}$ is independent of $\catvariableof{1}$ with respect to a distribution $\probat{\catvariableof{0},\catvariableof{1}}$ if the distribution is the tensor product of the marginal distributions, that is
    \begin{align*}
        \probat{\catvariableof{0},\catvariableof{1}}
        = \probat{\catvariableof{0}} \otimes \probat{\catvariableof{1}} \, .
    \end{align*}
    In this case we write $\independent{\catvariableof{0}}{\catvariableof{1}}$.
\end{definition}

Thus, independence appears directly as a tensor–product decomposition of probability distribution.
Using tensor network diagrams, we depict this property by
\begin{center}
    \begin{tikzpicture}[scale=0.3,thick] 

    \draw (0,1) rectangle (7,-1);
    \node[anchor=center] (text) at (3.5,0) {\corelabelsize $\probat{\exrandom,\secexrandom}$};
    \draw[->-] (1,-1) -- (1,-3) node[midway, left] {\colorlabelsize $\exrandom$};
    \draw[->-] (6,-1) -- (6,-3) node[midway, left] {\colorlabelsize $\secexrandom$};

    \node[anchor=center] (text) at (9,0) {\corelabelsize ${=}$};

    \begin{scope}[shift={(11,0)}]

        \draw (0,1) rectangle (7,-1);
        \node[anchor=center] (text) at (3.5,0) {\corelabelsize $\probat{\exrandom,\secexrandom}$};
        \draw[->-] (1,-1) -- (1,-3) node[midway, left] {\colorlabelsize $\exrandom$};
        \draw[->-] (6,-1) -- (6,-3) node[midway, left] {\colorlabelsize $\secexrandom$};
        \drawvariabledot{6}{-3}

    \end{scope}

    \node[anchor=center] (text) at (20,0) {\corelabelsize $\otimes$};

    \begin{scope}[shift={(22,0)}]

        \draw (0,1) rectangle (7,-1);
        \node[anchor=center] (text) at (3.5,0) {\corelabelsize $\probat{\exrandom,\secexrandom}$};
        \draw[->-] (1,-1) -- (1,-3) node[midway, left] {\colorlabelsize $\exrandom$};
        \drawvariabledot{1}{-3}
        \draw[->-] (6,-1) -- (6,-3) node[midway, left] {\colorlabelsize $\secexrandom$};

    \end{scope}

    \node[anchor=center] (text) at (31,0) {\corelabelsize ${=}$};

    \begin{scope}[shift={(33,0)}]

        \draw (0,1) rectangle (4,-1);
        \node[anchor=center] (text) at (2,0) {\corelabelsize $\margprobat{\exrandom}$};
        \draw[->-] (2,-1) -- (2,-3) node[midway, left] {\colorlabelsize $\exrandom$};

        \node[anchor=center] (text) at (6,0) {\corelabelsize $\otimes$};

        \draw (8,1) rectangle (12,-1);
        \node[anchor=center] (text) at (10,0) {\corelabelsize $\margprobat{\secexrandom}$};
        \draw[->-] (10,-1) -- (10,-3) node[midway, left] {\colorlabelsize $\secexrandom$};

    \end{scope}


\end{tikzpicture} 
\end{center}
Note that the assumption of independence reduces the degrees of freedom from $(\exranddim\cdot\secexranddim)-1$ to $(\exranddim-1)+(\secexranddim-1)$.
The decomposition into marginal distributions furthermore exploits this reduced freedom and provides an efficient storage.
Having a joint distribution of multiple variables whose disjoint subsets are independent, we can iteratively apply the decomposition scheme.
As a result, we can reduce the scaling of the degrees of freedom from exponential to linear by the assumption of independence.

As we observed, independence is a strong assumption, which is often too restrictive.
Less demanding is conditional independence, which still implies efficient tensor network decomposition schemes.
We introduce conditional independence as independence of variables with respect to conditional distributions.

\begin{definition}[Conditional independence]
    \label{def:condIndependence} 
    Assume a joint distribution of variables $\catvariableof{0}$, $\catvariableof{1}$ and $\catvariableof{2}$.
    We say that $\catvariableof{0}$ is independent of $\catvariableof{1}$ conditioned on $\catvariableof{2}$ if
    \begin{align*}
        \condprobof{\catvariableof{0},\catvariableof{1}}{\catvariableof{2}}
        = \contractionof{
            \condprobof{\catvariableof{0}}{\catvariableof{2}},\condprobof{\catvariableof{1}}{\catvariableof{2}}
        }{\catvariableof{0},\catvariableof{1},\catvariableof{2}} \, .
    \end{align*}
    In this case we write $\condindependent{\catvariableof{0}}{\catvariableof{1}}{\catvariableof{2}}$.
\end{definition}

Conditional independence stated in \defref{def:condIndependence} has a close connection with independence stated in \defref{def:independence}.
To be more precise, $\catvariableof{0}$ is independent of $\catvariableof{1}$ conditioned on $\catvariableof{2}$ if and only if $\catvariableof{0}$ is independent of $\catvariableof{1}$ with respect to any slice $\condprobof{\catvariableof{0},\catvariableof{1}}{\indexedcatvariableof{2}}$ of the conditional distribution $\condprobof{\catvariableof{0},\catvariableof{1}}{\catvariableof{2}}$.

We can further exploit conditional independence to find tensor network decompositions of probabilities as we show in the next corollary.
\begin{corollary}
    \label{cor:secCriterionCondIndepencence}
    Let $\probat{\catvariableof{0},\catvariableof{1},\catvariableof{2}}$ be a joint distribution.
    If and only if $\catvariableof{0}$ is independent of $\catvariableof{1}$ conditioned on $\catvariableof{2}$, the distribution satisfies
    \begin{align*}
        \probat{\catvariableof{0},\catvariableof{1},\catvariableof{2}}
        = \contractionof{\condprobof{\catvariableof{0}}{\catvariableof{2}},\condprobof{\catvariableof{1}}{\catvariableof{2}},\margprobat{\catvariableof{2}}}{\catvariableof{0},\catvariableof{1},\catvariableof{2}} \, .
    \end{align*}
    In a diagrammatic notation, this is depicted by
    \begin{center}
        \begin{tikzpicture}[scale=0.3,thick] 

    \draw (-2,1) rectangle (7,-1);
    \node[anchor=center] (text) at (2.5,0) {\corelabelsize $\probat{\exrandom,\secexrandom,\thirdexrandom}$};
    \draw[->-] (-1,-1) -- (-1,-3) node[midway, left] {\colorlabelsize $\exrandom$};
    \draw[->-] (2.5,-1) -- (2.5,-3) node[midway, left] {\colorlabelsize $\secexrandom$};
    \draw[->-] (6,-1) -- (6,-3) node[midway, left] {\colorlabelsize $\thirdexrandom$};

    \node[anchor=center] (text) at (9,0) {\corelabelsize ${=}$};

    \draw (11,1) rectangle (18,-1);
    \node[anchor=center] (text) at (14.5,0) {\corelabelsize $\condprobof{\exrandom}{\thirdexrandom}$};
    \draw[->-] (12,-1) -- (12,-3) node[midway, left] {\colorlabelsize $\exrandom$};
    \draw[-<-] (17,-1) -- (17,-3) node[midway, left] {\colorlabelsize $\thirdexrandom$};

    \draw (21,1) rectangle (25,-1);
    \node[anchor=center] (text) at (23,0) {\corelabelsize $\probat{\thirdexrandom}$};
    \draw[->-] (23,-1) -- (23,-3) node[midway, left] {\colorlabelsize $\thirdexrandom$};

    \draw (23,-3) -- (23,-5);
    \draw[fill] (23,-5) circle (\dotsize);
    \draw (23,-5) -- (23,-7) node[midway, left] {\colorlabelsize $\thirdexrandom$};
    \draw (17,-3) to[bend right=40] (23,-5);
    \draw (29,-3) to[bend right=-40] (23,-5);

    \draw (28,1) rectangle (35,-1);
    \node[anchor=center] (text) at (31.5,0) {\corelabelsize $\condprobof{\secexrandom}{\thirdexrandom}$};
    \draw[-<-] (29,-1) -- (29,-3) node[midway, left] {\colorlabelsize $\thirdexrandom$};
    \draw[->-] (34,-1) -- (34,-3) node[midway, left] {\colorlabelsize $\secexrandom$};


\end{tikzpicture}
    \end{center}
\end{corollary}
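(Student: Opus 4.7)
The plan is to prove both implications by reducing to the definition of the conditional distribution as a normalization (\defref{def:normalization}) and using that $\probat{\catvariableof{0},\catvariableof{1},\catvariableof{2}} = \contractionof{\condprobof{\catvariableof{0},\catvariableof{1}}{\catvariableof{2}}, \margprobat{\catvariableof{2}}}{\catvariableof{0},\catvariableof{1},\catvariableof{2}}$, which follows directly by multiplying numerator and denominator in \defref{def:normalization} and handling the zero-marginal fiber separately.

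For the direction \proofrightsymbol{}, I assume $\condindependent{\catvariableof{0}}{\catvariableof{1}}{\catvariableof{2}}$ and substitute the factorization of $\condprobof{\catvariableof{0},\catvariableof{1}}{\catvariableof{2}}$ from \defref{def:condIndependence} into the chain-rule identity for $\probat{\catvariableof{0},\catvariableof{1},\catvariableof{2}}$, yielding
\begin{align*}
\probat{\catvariableof{0},\catvariableof{1},\catvariableof{2}}
= \contractionof{\condprobof{\catvariableof{0}}{\catvariableof{2}},\condprobof{\catvariableof{1}}{\catvariableof{2}},\margprobat{\catvariableof{2}}}{\catvariableof{0},\catvariableof{1},\catvariableof{2}}.
\end{align*}

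For the direction \proofleftsymbol{}, starting from the claimed tensor network factorization, I marginalize over $\catvariableof{1}$ (contract against $\onesat{\catvariableof{1}}$) and use $\contractionof{\condprobof{\catvariableof{1}}{\catvariableof{2}}}{\catvariableof{2}} = \onesat{\catvariableof{2}}$ (from the normalization property of a conditional distribution) to obtain $\probat{\catvariableof{0},\catvariableof{2}} = \contractionof{\condprobof{\catvariableof{0}}{\catvariableof{2}},\margprobat{\catvariableof{2}}}{\catvariableof{0},\catvariableof{2}}$, and analogously for $\catvariableof{1}$. Dividing the assumed identity slice-wise by $\margprobat{\indexedcatvariableof{2}}$ on the support $\{\catindexof{2} \wcols \margprobat{\indexedcatvariableof{2}} \neq 0\}$ then recovers $\condprobof{\catvariableof{0},\catvariableof{1}}{\catvariableof{2}} = \contractionof{\condprobof{\catvariableof{0}}{\catvariableof{2}},\condprobof{\catvariableof{1}}{\catvariableof{2}}}{\catvariableof{0},\catvariableof{1},\catvariableof{2}}$, which is the conditional independence condition of \defref{def:condIndependence}.

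The only delicate step is the treatment of the fibers where $\margprobat{\indexedcatvariableof{2}} = 0$, since the normalization convention in \defref{def:normalization} replaces the quotient by the uniform vector $\frac{1}{\catdimof{0}\catdimof{1}}\onesat{\catvariableof{0},\catvariableof{1}}$. I would handle this by noting that on such fibers both sides of the target identity vanish (since $\probat{\catvariableof{0},\catvariableof{1},\indexedcatvariableof{2}} = 0$ when $\margprobat{\indexedcatvariableof{2}} = 0$, and the right-hand side contains the factor $\margprobat{\indexedcatvariableof{2}} = 0$), so both the forward and backward arguments hold trivially on these slices. The remaining verification on the support of $\margprobat{\catvariableof{2}}$ is a routine algebraic manipulation of entries that directly translates the tensor network diagram into scalar equalities.
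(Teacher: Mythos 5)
Your proof is correct in substance and is essentially the argument the paper intends: the paper states this result as a corollary with no written proof, treating it as immediate from \defref{def:condIndependence} together with the chain-rule identity $\probat{\catvariableof{0},\catvariableof{1},\catvariableof{2}} = \contractionof{\condprobat{\catvariableof{0},\catvariableof{1}}{\catvariableof{2}},\margprobat{\catvariableof{2}}}{\catvariableof{0},\catvariableof{1},\catvariableof{2}}$ that follows from \defref{def:normalization}, which is exactly the skeleton you use. The one step that does not work as written is your handling of the fibers with $\margprobat{\indexedcatvariableof{2}}=0$ in the direction \proofleftsymbol{}. There, the observation that both sides of the \emph{target} factorization vanish is not what is needed: on such fibers the assumed identity reads $0=0$ and carries no information, while the statement you must still verify is the conditional-independence identity itself. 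That identity does hold on degenerate fibers, but for a different reason, namely the uniform fallback in \defref{def:normalization}: one has
\begin{align*}
\condprobat{\catvariableof{0},\catvariableof{1}}{\indexedcatvariableof{2}}
= \tfrac{1}{\catdimof{0}\catdimof{1}}\onesat{\catvariableof{0},\catvariableof{1}}
= \tfrac{1}{\catdimof{0}}\onesat{\catvariableof{0}} \otimes \tfrac{1}{\catdimof{1}}\onesat{\catvariableof{1}}
= \condprobat{\catvariableof{0}}{\indexedcatvariableof{2}} \otimes \condprobat{\catvariableof{1}}{\indexedcatvariableof{2}} \, ,
\end{align*}
so conditional independence is automatic wherever $\margprobat{\indexedcatvariableof{2}}=0$. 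With that one-line substitution your argument is complete. As a side remark, the marginalization over $\catvariableof{1}$ in your backward direction is harmless but unnecessary: dividing the assumed identity by $\margprobat{\indexedcatvariableof{2}}$ on the support already yields the conditional-independence identity of \defref{def:condIndependence} directly.
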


This conditional independence pattern is the basic local building block that is generalized in Markov networks, which we define in the following.

\begin{definition}[Markov network]
    \label{def:markovNetwork}
    Let $\tnetof{\graph}$ be a tensor network of non-negative tensors decorating a hypergraph $\graph$.
    Then the Markov network $\probof{\graph}$ to $\tnetof{\graph}$ is the probability distribution of $\nodevariables$ defined by the tensor
    \begin{align*}
        \probofat{\graph}{\nodevariables} = \frac{
            \contractionof{\{\hypercoreof{\edge} \wcols \edgein\}}{\nodevariables}
        }{
            \contraction{\{\hypercoreof{\edge} \wcols \edgein\}}
        } = \normalizationof{\tnetof{\graph}}{\nodevariables} \, .
    \end{align*}
    We call the denominator
    \begin{align*}
        \partitionfunctionof{\tnetof{\graph}} = \contraction{\{\hypercoreof{\edge} \wcols \edgein\}}
    \end{align*}
    the partition function of the tensor network $\tnetof{\graph}$.
\end{definition}

We define graphical models based on hypergraphs to establish a direct connection with tensor networks decorating the hypergraph.
In a more canonical way, Markov networks are instead defined by graphs, where instead of the edges the cliques are decorated by factor tensors (see for example \cite{koller_probabilistic_2009}).
Following this alternative description, the graphs of the tensor networks are dual to the graphs of the graphical models \cite{robeva_duality_2019,glasser_expressive_2019}.

We can interpret the factors $\hypercorewith$ as activation cores placed on the hyperedges $\edge$ of the graph.
The global activation tensor (and hence the joint distribution) is obtained by contracting this activation network and normalizing by its partition function.

While so far we have defined Markov networks as decomposed probability distributions, we now want to derive assumptions on a distribution, assuring that such decompositions exist.
The sets of conditional independencies encoded by a hypergraph are captured by its separation properties, as we define next.

\begin{definition}[Separation of hypergraph]
    A path in a hypergraph is a sequence of nodes $\node_{\catenumerator}$ for $\catenumeratorin$, such that for any $\catenumerator\in[\atomorder-1]$ we find a hyperedge $\edgein$ such that $(\node_{\catenumerator}, \node_{\catenumerator+1})\subset \edge$.
    Given disjoint subsets $\nodesa$, $\nodesb$, $\nodesc$ of nodes in a hypergraph $\graph$, we say that $\nodesc$ separates $\nodesa$ and $\nodesb$ with respect to $\graph$ when any path starting at a node in $\nodesa$ and ending in a node in $\nodesb$ contains a node in $\nodesc$.
\end{definition}

To characterize Markov networks in terms of conditional independencies, we need to further define the property of clique-capturing.
This property establishes a correspondence of hyperedges with maximal cliques in the more canonical graph-based definition of Markov networks \cite{koller_probabilistic_2009}.

\begin{definition}[Clique-capturing hypergraph]
    \label{def:ccHypergraph}
    We call a hypergraph $\graph$ \emph{clique-capturing}, if the following holds:
    Each subset $\secnodes\subset\nodes$, which fulfills that for any $\nodea,\nodeb\in\secnodes$ with $\nodea \neq \nodeb$, there is a hyperedge $\edgein$ with $\nodea,\nodeb\in\edge$, is contained in a hyperedge.
\end{definition}

We are now ready to state the Hammersley-Clifford theorem characterizing the sets of Markov networks on a hypergraph by conditional independence.

\begin{theorem}[Hammersley-Clifford factorization theorem]
    \label{the:factorizationHammersleyClifford}
    Let there be a positive probability distribution $\probwithnodes$ and a clique-capturing hypergraph $\graph=(\nodes,\edges)$.
    Then the following are equivalent:
    \begin{itemize}
        \item[i] The distribution $\probwithnodes$ is representable by a Markov network on $\graph$, that is for each edge $\edgein$ there is a tensor $\hypercoreofat{\edge}{\catvariableof{\edge}}$ such that
        \begin{align*}
            \probwithnodes = \normalizationof{\{\hypercoreofat{\edge}{\catvariableof{\edge}}\wcols\edgein\}}{\nodevariables} \, .
        \end{align*}
        \item[ii] For any subsets $\nodesa,\nodesb,\nodesc\subset\nodes$ such that $\nodesc$ separates $\nodesa$ from $\nodesb$, we have
        \begin{align*}
            \condindependent{\catvariableof{\nodesa}}{\catvariableof{\nodesb}}{\catvariableof{\nodesc}} \, .
        \end{align*}
    \end{itemize}
\end{theorem}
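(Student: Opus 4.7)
The plan is to prove the two implications separately. The direction (i)$\Rightarrow$(ii) follows from the geometric structure of the tensor network together with the separation condition; the reverse direction (ii)$\Rightarrow$(i) is the classical Möbius-inversion argument of Hammersley and Clifford, where positivity of $\probwithnodes$ is essential in order to take logarithms.

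For (i)$\Rightarrow$(ii), suppose $\nodesc$ separates $\nodesa$ from $\nodesb$. The first observation is that no edge $\edgein$ can contain nodes from both $\nodesa$ and $\nodesb$: such an edge would provide a length-one path from $\nodesa$ to $\nodesb$ avoiding $\nodesc$. Next, I partition the remaining nodes $\nodes\setminus(\nodesa\cup\nodesb\cup\nodesc)$ according to which side they connect to in the hypergraph obtained by removing $\nodesc$; by separation, no such connected component touches both sides. This yields a disjoint decomposition $\nodes=\tilde\nodesa\sqcup\nodesc\sqcup\tilde\nodesb$ with $\nodesa\subseteq\tilde\nodesa$, $\nodesb\subseteq\tilde\nodesb$, and no edge crossing between $\tilde\nodesa$ and $\tilde\nodesb$. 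The Markov factorization on $\graph$ therefore splits into two tensor subnetworks sharing only the legs $\catvariableof{\nodesc}$. Contracting out the extra variables in $\tilde\nodesa\setminus\nodesa$ and $\tilde\nodesb\setminus\nodesb$ preserves this product structure, and after normalizing on $\catvariableof{\nodesc}$ the conditional $\condprobat{\catvariableof{\nodesa},\catvariableof{\nodesb}}{\catvariableof{\nodesc}}$ decomposes as in \defref{def:condIndependence}.

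For (ii)$\Rightarrow$(i), I fix an arbitrary reference configuration and write, via Möbius inversion on subsets,
$$\ln\probwithnodes=\sum_{\secnodes\subseteq\nodes}H_\secnodes,\qquad H_\secnodes(\catvariableof{\nodes})=\sum_{\tilde\secnodes\subseteq\secnodes}(-1)^{|\secnodes\setminus\tilde\secnodes|}\ln\probat{\catvariableof{\tilde\secnodes},\catvariableof{\nodes\setminus\tilde\secnodes}=0},$$
so that each $H_\secnodes$ depends only on $\catvariableof{\secnodes}$. The crux is to show $H_\secnodes\equiv 0$ whenever $\secnodes$ is not contained in any edge. In that case $\secnodes$ contains two nodes $\nodea,\nodeb$ not sharing a hyperedge; taking $\nodesa=\{\nodea\}$, $\nodesb=\{\nodeb\}$, $\nodesc=\nodes\setminus\{\nodea,\nodeb\}$, assumption (ii) yields $\condindependent{\catvariableof{\nodea}}{\catvariableof{\nodeb}}{\catvariableof{\nodesc}}$, which (using positivity) is equivalent to $\ln\probwithnodes$ being a sum of a term not depending on $\catvariableof{\nodea}$ and one not depending on $\catvariableof{\nodeb}$. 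Substituting into the alternating sum defining $H_\secnodes$ makes it telescope to zero. By the clique-capturing property (\defref{def:ccHypergraph}), every surviving index set $\secnodes$ is contained in at least one hyperedge; assigning each such $\secnodes$ to one chosen containing edge $\edge(\secnodes)$ and defining $\ln\hypercoreof{\edge}=\sum_{\secnodes\wcols\edge(\secnodes)=\edge}H_\secnodes$ produces the desired hyperedge factors. Exponentiating and absorbing the partition function gives the Markov network representation.

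The main obstacles are twofold. In (i)$\Rightarrow$(ii), care is needed in enlarging $\nodesa,\nodesb$ to $\tilde\nodesa,\tilde\nodesb$ so that the contraction genuinely preserves the split; this is essentially a combinatorial argument on connected components of $\graph$ restricted to $\nodes\setminus\nodesc$. In (ii)$\Rightarrow$(i), the vanishing of $H_\secnodes$ for non-clique $\secnodes$ is the delicate step: it requires observing that conditional independence yields an additive log-decomposition $\ln\probwithnodes = \varphi(\catvariableof{\nodes\setminus\{\nodeb\}})+\psi(\catvariableof{\nodes\setminus\{\nodea\}})$, and then verifying that pairing up the $2^{|\secnodes|}$ Möbius summands into complementary pairs with opposite signs causes perfect cancellation. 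This combinatorial cancellation, together with the positivity hypothesis that legitimizes the logarithm throughout, is where the proof's weight lies.
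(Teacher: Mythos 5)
Your proposal is correct and follows essentially the same route as the paper's proof: the same connected-component splitting argument for (i)$\Rightarrow$(ii), and for (ii)$\Rightarrow$(i) the same M\"obius-inversion expansion over subsets, cancellation of non-clique terms via pairwise conditional independence given all remaining variables, and assignment of the surviving terms to containing hyperedges via the clique-capturing property. The only difference is presentational: you carry out the inversion additively on $\ln\probwithnodes$, whereas the paper states it multiplicatively with coordinatewise exponents $(-1)^{\cardof{\secnodes}-\cardof{\thirdnodes}}$ (Lemma~\ref{the:contractionFactorization}), which is equivalent under the positivity hypothesis.
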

\begin{proof}
    This is shown in Appendix~\secref{sec:proofFactorizationTheorems}.
\end{proof}

By \theref{the:factorizationHammersleyClifford} the conditional independence structure of $\probwithnodes$ determines a global tensor network decomposition of $\probwithnodes$.
Note that the assumption of a positive distribution is required (i.e. for all $\shortcatindices$ we have $\probat{\indexedshortcatvariables}>0$).
The assumption of positivity is however not required in our characterization of independencies and conditional independencies by the existence of corresponding tensor decompositions (see \defref{def:independence} and \defref{def:condIndependence}).

\begin{example}[Independent boolean variables]
    \label{exa:coinTossHC}
    Let there be $\catorder$ boolean variables $\shortcatvariables$, which are i.i.d. drawn from a positive distribution $\probat{\catvariable}$.
    From the pairwise independencies of $\catvariableof{\catenumerator}$ it follows with the Hammersley-Clifford Factorization \theref{the:factorizationHammersleyClifford} that the distribution is representable by an elementary tensor network, that is
    \begin{align*}
        \probwith = \bigotimes_{\catenumeratorin} \probofat{\catenumerator}{\catvariableof{\catenumerator}} \, .
    \end{align*}
    The corresponding hypergraph is the elementary graph, with respect to which any two disjoint subsets of nodes are separated (see \figref{fig:ELDecomposition}).
    \begin{figure}
        \begin{center}
            \begin{tikzpicture}[scale=0.35,thick]
                \begin{scope}[shift={(-19,-2)}]
                    \coordinate[label=left:$a)$] (A) at (-2,2);

                    \node[circle, draw, thick, fill=\nodegrayscale, minimum size = \nodeminsize] (A) at (0,0) {};
                    \node[anchor=center] (A) at (0,0) {\corelabelsize $\catvariableof{0}$};
                    \node[anchor=center] (A) at (0,1.75) {\corelabelsize $\edgeof{0}$};

                    \node[circle, draw, thick, fill=\nodegrayscale, minimum size = \nodeminsize] (A) at (4,0) {};
                    \node[anchor=center] (A) at (4,0) {\corelabelsize $\catvariableof{1}$};
                    \node[anchor=center] (A) at (4,1.75) {\corelabelsize $\edgeof{1}$};

                    \coordinate[label=below:$\hdots $] (A) at (7,0.5);

                    \node[circle, draw, thick, fill=\nodegrayscale, minimum size = \nodeminsize] (A) at (10,0) {};
                    \node[] (text) at (10,0) {\corelabelsize $\catvariableof{\catorder\shortminus1}$};
                    \node[anchor=center] (A) at (10,1.75) {\corelabelsize $\edgeof{\catorder\shortminus1}$};
                \end{scope}

                \coordinate[label=left:$b)$] (A) at (-4,0);

                \begin{scope}[shift={(5,-2)}]
                    \draw (-3,-1) rectangle (-9,1);
                    \node[anchor=center] at (-6,0) {\corelabelsize $\probtensor$};
                    \draw (-4,-1)--(-4,-2.5) node[midway,right] {\colorlabelsize $\catvariableof{\catorder\shortminus1}$};
                    \node[anchor=center] (text) at (-5,-2.25) {$\cdots$};
                    \draw (-7,-1)--(-7,-2.5) node[midway,right] {\colorlabelsize $\catvariableof{1}$};
                    \draw (-8,-1)--(-8,-2.5) node[midway,left] {\colorlabelsize $\catvariableof{0}$};

                    \node[anchor=center] (A) at (-1.5,0) {${=}$};

                    \draw (0,-1) rectangle (2,1);
                    \node[anchor=center] (A) at (1,0) {\corelabelsize $\probof{0}$};
                    \draw (1,-1)--(1,-2.5) node[midway,right] {\colorlabelsize $\catvariableof{0}$};

                    \draw (3,-1) rectangle (5,1);
                    \node[anchor=center] (A) at (4,0) {\corelabelsize $\probof{1}$};
                    \draw (4,-1)--(4,-2.5) node[midway,right] {\colorlabelsize $\catvariableof{1}$};

                    \node[anchor=center] (text) at (7,0) {$\hdots$};

                    \draw (9,-1) rectangle (11,1);
                    \node[anchor=center] (A) at (10,0) {\corelabelsize $\probof{\catorder\shortminus1}$};
                    \draw (10,-1)--(10,-2.5) node[midway,right] {\colorlabelsize $\catvariableof{\catorder\shortminus1}$};
                \end{scope}
            \end{tikzpicture}
        \end{center}
        \caption{Decomposition of a probability distribution with independent variables (see \exaref{exa:coinTossHC}).
        The independencies are captured by the elementary hypergraph a), whose edges contain single nodes.
        The corresponding tensor $\probwith$ is then represented by a Markov network on the elementary hypergraph, where each factor is the marginal distribution of the corresponding variable as visualized in b).}\label{fig:ELDecomposition}
    \end{figure}
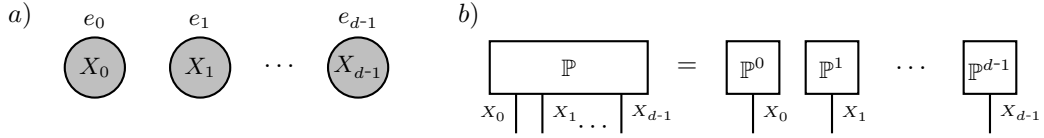
\end{example}

\begin{example}\label{exa:studentHC}
    We consider a classical example of a graphical model (see \cite[Example~4.3]{koller_probabilistic_2009}):
    A student of intelligence ($\catvariableof{I}$) and SAT score ($\catvariableof{S}$) is assigned a test of difficulty ($\catvariableof{D}$), for which he gets a grade ($\catvariableof{G}$) depending on which he gets a recommendation letter ($\catvariableof{L}$) by its teacher.
    We make the following modelling assumptions:
    \begin{itemize}
        \item "The SAT score depends only on the students intelligence": $\condindependent{\catvariableof{S}}{\catvariableof{\{D,G,L\}}}{\catvariableof{I}}$.
        \item "The recommendation letter depends only on the grade": $\condindependent{\catvariableof{L}}{\catvariableof{\{D,I,S\}}}{\catvariableof{G}}$.
    \end{itemize}
    The associated hypergraph capturing these conditional independencies is drawn in \figref{fig:studentHypergraph} a).
\end{example}
\begin{figure}[t]
    \begin{center}
        \begin{tikzpicture}

            \node[anchor=center] (A) at (-2,2.2) {$a)$};

            \node[anchor=center] (A) at (1,0.4) {$\edgeof{0}$};
            \draw[thick] (0,0) -- (1,2/3);
            \draw[thick] (2,0) -- (1,2/3);
            \draw[thick] (1,2) -- (1,2/3);

            \draw[thick] (2,0) -- (3,2) node[midway,right] {$\edgeof{1}$};
            \draw[thick] (1,2) -- (-1,2) node[midway,above] {$\edgeof{2}$};

            \node[circle, draw, thick, fill=\nodegrayscale, minimum size = \nodeminsize] (A) at (0,0) {};
            \node[anchor=center] (A) at (0,0) {\corelabelsize $\catvariableof{D}$};

            \node[circle, draw, thick, fill=\nodegrayscale, minimum size = \nodeminsize] (A) at (2,0) {};
            \node[anchor=center] (A) at (2,0) {\corelabelsize $\catvariableof{I}$};

            \node[circle, draw, thick, fill=\nodegrayscale, minimum size = \nodeminsize] (A) at (1,2) {};
            \node[anchor=center] (A) at (1,2) {\corelabelsize $\catvariableof{G}$};

            \node[circle, draw, thick, fill=\nodegrayscale, minimum size = \nodeminsize] (A) at (3,2) {};
            \node[anchor=center] (A) at (3,2) {\corelabelsize $\catvariableof{S}$};

            \node[circle, draw, thick, fill=\nodegrayscale, minimum size = \nodeminsize] (A) at (-1,2) {};
            \node[anchor=center] (A) at (-1,2) {\corelabelsize $\catvariableof{L}$};

            \begin{scope}[shift={(7,0)}]
                \node[anchor=center] (A) at (-2,2.2) {$b)$};

                \draw[thick] (0,0) -- (2,0);
                \draw[thick] (0,0) -- (1,2);
                \draw[thick] (2,0) -- (1,2);

                \draw[thick, dashed, rounded corners=15pt] (1,3) -- (-0.8,-0.5) -- (2.8,-0.5) -- cycle;

                \node (of) at (0.5,-0.25) {};
                \draw[thick, dashed, rounded corners=10pt]  ($(3.25,2.5)+(of)$) -- ($(3.25,2.5)-(of)$)  -- ($(1.75,-0.5)-(of)$) -- ($(1.75,-0.5)+(of)$) -- cycle;
                \draw[thick] (2,0) -- (3,2); 

                \node (of) at (0,0.5) {};
                \draw[thick, dashed, rounded corners=10pt]  ($(1.5,2)+(of)$) -- ($(1.5,2)-(of)$)  -- ($(-1.5,2)-(of)$) -- ($(-1.5,2)+(of)$) -- cycle;
                \draw[thick] (1,2) -- (-1,2); 

                \node[circle, draw, thick, fill=\nodegrayscale, minimum size = \nodeminsize] (A) at (0,0) {};
                \node[anchor=center] (D) at (0,0) {\corelabelsize $\catvariableof{D}$};

                \node[circle, draw, thick, fill=\nodegrayscale, minimum size = \nodeminsize] (A) at (2,0) {};
                \node[anchor=center] (I) at (2,0) {\corelabelsize $\catvariableof{I}$};

                \node[circle, draw, thick, fill=\nodegrayscale, minimum size = \nodeminsize] (A) at (1,2) {};
                \node[anchor=center] (G) at (1,2) {\corelabelsize $\catvariableof{G}$};

                \node[circle, draw, thick, fill=\nodegrayscale, minimum size = \nodeminsize] (A) at (3,2) {};
                \node[anchor=center] (S) at (3,2) {\corelabelsize $\catvariableof{S}$};

                \node[circle, draw, thick, fill=\nodegrayscale, minimum size = \nodeminsize] (A) at (-1,2) {};
                \node[anchor=center] (L) at (-1,2) {\corelabelsize $\catvariableof{L}$};

            \end{scope}
        \end{tikzpicture}
    \end{center}
    \caption{
        Hypergraph a) capturing the conditional independencies of the student example.
        The cliques of the node adjacency graph are highlighted in b) and coincide with hyperedges of the hypergraph.
        The hypergraph is therefore clique-capturing (see \defref{def:ccHypergraph}).
    }\label{fig:studentHypergraph}
\end{figure}
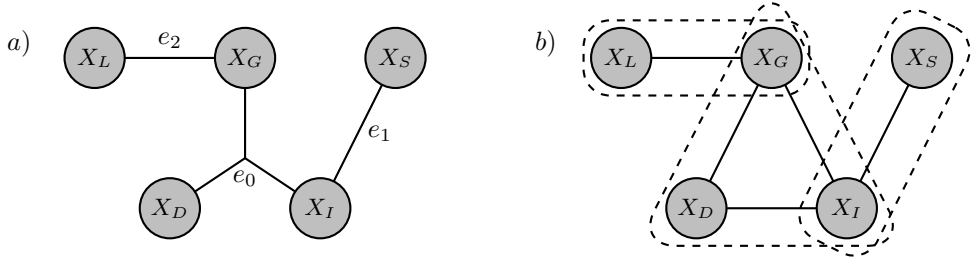

\subsection{Factorization based on sufficient statistics}

Let us now introduce sufficient statistics towards studying further tensor network decompositions of probability distributions.

\begin{definition}
    \label{def:sufStatistic}
    Let $\probat{\catvariable,\thirdcatvariable}$ be a joint distribution of the $\catdim$-dimensional variable $\catvariable$ and the $\thirdcatdim$-dimensional variable $\thirdcatvariable$ and let
    \begin{align*}
        \sstat \defcols [\catdim] \rightarrow [\headdim]
    \end{align*}
    be a statistic.
    We are interested in the distribution $\probat{\catvariable,\thirdcatvariable,\headvariableof{\sstat}}=\contractionof{\probat{\catvariable,\thirdcatvariable},\bencodingofat{\sstat}{\headvariableof{\sstat},\catvariable}}{\catvariable,\thirdcatvariable,\headvariableof{\sstat}}$.
    We say that $\sstat$ is a sufficient statistic for $\thirdcatvariable$ if $\catvariable$ is independent of $\thirdcatvariable$ conditioned on $\headvariableof{\sstat}$.
\end{definition}

\begin{example}[Sufficient statistics for the probability]\label{exa:sufStatProb}
    Let $\thirdcatvariable$ be the value $\probat{\indexedshortcatvariables}$ when drawing $\shortcatvariables$ from $\probwith$.
    Then $\sstat$ is a sufficient statistic for $\thirdcatvariable=\probwith$ if for all $\headindex$ in the image of $\sstat$ we have
    \begin{align*}
        \condprobof{\indexedshortcatvariables}{\headvariableof{\sstat}=\headindex} =
        \begin{cases}
            \frac{1}{\cardof{\{\shortcatindices \wcols \sstatat{\shortcatindices}=\headindex\}}} & \ifspace \sstat(\shortcatindices)=\headindex \\
            0 & \text{else}
        \end{cases} \, .
    \end{align*}
    When knowing the value $\sstatat{\shortcatindices}$ of the sufficient statistic at a given index $\shortcatindices$, we then also know the probability $\probat{\indexedshortcatvariables}$.
    The function $\sstat$ is thus a sufficient statistic for $\thirdcatvariable=\probwith$ if and only if there is a tensor $\acttensorwith$ with
    \begin{align*}
        \probat{\shortcatvariables}
        = \contractionof{\bencodingofat{\sstat}{\headvariables,\shortcatvariables},\acttensorwith}{\shortcatvariables} \, .
    \end{align*}
\end{example}

\exaref{exa:sufStatProb} hints at a connection between sufficient statistics and decompositions into \CompActNets{}.
More generally, such decompositions are provided by the Fisher-Neyman Factorization Theorem.

\begin{theorem}[Fisher-Neyman Factorization Theorem] 
    \label{the:factorizationFisherNeyman}
    Let $\probtensor$ be a joint distribution of variables $\catvariable,\thirdcatvariable$ with values $\valof{\catvariable},\,\valof{\thirdcatvariable}$.
    Let there further be a finite set $\valof{\headvariableof{\sstat}}$, which is enumerated with a variable $\headvariableof{\sstat}$.
    Then $\sstat\defcols \valof{\catvariable} \rightarrow \valof{\headvariableof{\sstat}}$ is a sufficient statistic for $\thirdcatvariable$ if and only if there are tensors $\basemeasureat{\catvariable}$ and $\acttensorat{\headvariableof{\sstat},\thirdcatvariable}$ such that
    \begin{align*}
        \probat{\catvariable,\thirdcatvariable}
        = \contractionof{
            \acttensorat{\headvariableof{\sstat},\thirdcatvariable},\bencodingofat{\sstat}{\headvariableof{\sstat},\catvariable},\basemeasureat{\catvariable}
        }{\catvariable,\thirdcatvariable} \, .
    \end{align*}
    We depict this equation diagrammatically by
    \begin{center}
        \begin{tikzpicture}[scale = 0.35, thick]

    \draw (1,-2) rectangle (8,-4);
    \node[anchor=center] (text) at (4.5,-3) {\corelabelsize $\probtensor$};
    \draw (2.5,-4) -- (2.5,-6) node[midway, left] {\colorlabelsize $\catvariable$};
    \draw (6.5,-4) -- (6.5,-6) node[midway, right] {\colorlabelsize $\thirdcatvariable$};

    \node[anchor=center] (text) at (10,-3) {${=}$};

    \begin{scope}[shift={(15,-1)}]

        \draw (-3,-1.5) rectangle (-1,0.5);
        \node[anchor=center] (text) at (-2,-0.5) {\corelabelsize $\basemeasure$};
        \draw (-2,-1.5) -- (-2,-4);
        \draw (-2,-4) -- (-2,-5) node[midway, left] {\colorlabelsize $\catvariable$};

        \drawvariabledot{-2}{-2.5}
        \draw[] (-2,-2.5) -- (0,-2.5);
        \draw[->-] (-1.5,-2.5) -- (0,-2.5);
        \draw (0,-1.5) rectangle (2,-3.5);
        \node[anchor=center] (text) at (1,-2.5) {\corelabelsize $\bencodingof{\sstat}$};

        \draw[->-] (2,-2.5) -- (4,-2.5);
        \drawvariabledot{3.5}{-2.5}
        \draw[] (2,-2.5) -- (5,-2.5) node[midway, below] {\colorlabelsize $\headvariableof{\sstat}$};

        \draw (5,-1.5) rectangle (7,-3.5);
        \node[anchor=center] (text) at (6,-2.5) {\corelabelsize $\acttensor$};
        \draw (6,-3.5) -- (6,-5) node[midway, right] {\colorlabelsize $\thirdcatvariable$};
    \end{scope}


\end{tikzpicture}
    \end{center}
\end{theorem}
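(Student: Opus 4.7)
The strategy is to reduce both implications to the conditional independence criterion of Corollary~\ref{cor:secCriterionCondIndepencence} applied to the augmented joint distribution $\probat{\catvariable,\thirdcatvariable,\headvariableof{\sstat}}$, obtained from $\probat{\catvariable,\thirdcatvariable}$ by adjoining the deterministic variable $\headvariableof{\sstat}=\sstatat{\catvariable}$. The key structural fact is that $\bencodingofat{\sstat}{\headvariableof{\sstat},\catvariable}$ is a normalized basis encoding with incoming leg $\catvariable$, so contracting it with any tensor carrying $\catvariable$ inserts the constraint $\headvariableof{\sstat}=\sstatat{\catvariable}$, while contracting the $\headvariableof{\sstat}$ leg of $\bencodingof{\sstat}$ against $\onesat{\headvariableof{\sstat}}$ recovers $\onesat{\catvariable}$.

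For the direction \proofrightsymbol, I would begin by defining $\probat{\catvariable,\thirdcatvariable,\headvariableof{\sstat}} \coloneqq \contractionof{\probat{\catvariable,\thirdcatvariable},\bencodingofat{\sstat}{\headvariableof{\sstat},\catvariable}}{\catvariable,\thirdcatvariable,\headvariableof{\sstat}}$, which is a joint distribution because marginalizing $\headvariableof{\sstat}$ returns $\probat{\catvariable,\thirdcatvariable}$. By the sufficiency hypothesis, $\condindependent{\catvariable}{\thirdcatvariable}{\headvariableof{\sstat}}$, so Corollary~\ref{cor:secCriterionCondIndepencence} gives
\begin{align*}
    \probat{\catvariable,\thirdcatvariable,\headvariableof{\sstat}} = \contractionof{\condprobat{\catvariable}{\headvariableof{\sstat}},\condprobat{\thirdcatvariable}{\headvariableof{\sstat}},\margprobat{\headvariableof{\sstat}}}{\catvariable,\thirdcatvariable,\headvariableof{\sstat}}.
\end{align*}
Marginalizing $\headvariableof{\sstat}$ and using the deterministic nature of $\sstat$ (which forces $\condprobat{\catvariable}{\headvariableof{\sstat}=\headindex}$ to be supported on the fiber $\sstat^{-1}(\headindex)$), one sets
\begin{align*}
    \basemeasureat{\catvariable} \coloneqq \contractionof{\condprobat{\catvariable}{\headvariableof{\sstat}},\onesat{\headvariableof{\sstat}}}{\catvariable}, \qquad \acttensorat{\headvariableof{\sstat},\thirdcatvariable} \coloneqq \contractionof{\condprobat{\thirdcatvariable}{\headvariableof{\sstat}},\margprobat{\headvariableof{\sstat}}}{\headvariableof{\sstat},\thirdcatvariable},
\end{align*}
and checks that inserting $\bencodingof{\sstat}$ between them reproduces $\probat{\catvariable,\thirdcatvariable}$. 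The verification rests on the identity $\condprobat{\catvariable}{\headvariableof{\sstat}=\headindex} \propto \basemeasureat{\catvariable}\cdot \bencodingofat{\sstat}{\headvariableof{\sstat}=\headindex,\catvariable}$, which follows from support considerations.

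For the direction \proofleftsymbol, I would assume the factorization and use the idempotence-like identity $\contractionof{\bencodingofat{\sstat}{\headvariableof{\sstat},\catvariable},\bencodingofat{\sstat}{\headvariableof{\sstat}',\catvariable}}{\headvariableof{\sstat},\headvariableof{\sstat}',\catvariable}$ equals $\contractionof{\bencodingofat{\sstat}{\headvariableof{\sstat},\catvariable},\dirdeltaofat{}{\headvariableof{\sstat},\headvariableof{\sstat}'}}{\cdots}$ to write the augmented distribution as
\begin{align*}
    \probat{\catvariable,\thirdcatvariable,\headvariableof{\sstat}} = \contractionof{\acttensorat{\headvariableof{\sstat},\thirdcatvariable},\bencodingofat{\sstat}{\headvariableof{\sstat},\catvariable},\basemeasureat{\catvariable}}{\catvariable,\thirdcatvariable,\headvariableof{\sstat}}.
\end{align*}
This presents $\probat{\catvariable,\thirdcatvariable,\headvariableof{\sstat}}$ as a Markov network on the hypergraph with edges $\{\headvariableof{\sstat},\thirdcatvariable\}$, $\{\headvariableof{\sstat},\catvariable\}$ and $\{\catvariable\}$, in which $\headvariableof{\sstat}$ separates $\catvariable$ from $\thirdcatvariable$. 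Invoking Theorem~\ref{the:factorizationHammersleyClifford} yields $\condindependent{\catvariable}{\thirdcatvariable}{\headvariableof{\sstat}}$, which is exactly the sufficiency property.

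The main obstacle I anticipate is in the forward direction, where positivity issues can make the conditional distributions $\condprobat{\catvariable}{\headvariableof{\sstat}}$ ill-defined on headstates $\headindex$ that lie outside the image of $\sstat$ or have vanishing marginal $\margprobat{\headvariableof{\sstat}=\headindex}$. To handle this rigorously, I would restrict both $\basemeasure$ and $\acttensor$ to the support of $\margprobat{\headvariableof{\sstat}}$ and use the convention of Definition~\ref{def:normalization} for zero denominators, verifying that the additional terms contribute trivially after contraction against $\bencodingof{\sstat}$. A minor secondary obstacle is that Theorem~\ref{the:factorizationHammersleyClifford} requires positivity of the joint distribution; for the backward direction this can be circumvented by working on the support of $\probat{\catvariable,\thirdcatvariable,\headvariableof{\sstat}}$ or by a direct computational check of the factorization of $\condprobat{\catvariable,\thirdcatvariable}{\headvariableof{\sstat}}$ rather than invoking the Hammersley--Clifford theorem.
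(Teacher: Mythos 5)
Your plan is correct in outline and rests on the same underlying fact as the paper — that sufficiency is, by Definition~\ref{def:sufStatistic}, a conditional independence of the augmented distribution $\probat{\catvariable,\thirdcatvariable,\headvariableof{\sstat}}$, and that this is equivalent to a factorization through the $\headvariableof{\sstat}$ leg — but the route differs in both directions. The paper (Theorem~\ref{the:generalFactorizationFisherNeyman}) argues entirely coordinatewise: for \proofrightsymbol{} it uses the chain $\condprobat{\indexedthirdcatvariable}{\indexedcatvariable}=\condprobat{\indexedthirdcatvariable}{\indexedcatvariable,\headvariableof{\sstat}=\sstatat{\catindex}}=\condprobat{\indexedthirdcatvariable}{\headvariableof{\sstat}=\sstatat{\catindex}}$ (determinism of $\headvariableof{\sstat}$ given $\catvariable$, then conditional independence) and simply takes $\basemeasure=\probat{\catvariable}$ and $\acttensorat{\headvariableof{\sstat}=\sstatat{\catindex},\cdot}=\condprobat{\cdot}{\indexedcatvariable}$; your choices $\basemeasureat{\catindex}=\probat{\catindex}/\margprobat{\headvariableof{\sstat}=\sstatat{\catindex}}$ and $\acttensor=\probat{\headvariableof{\sstat},\thirdcatvariable}$ are an equally valid gauge of the same factorization, at the cost of the zero-denominator bookkeeping you flag. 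For \proofleftsymbol{}, be aware that the Hammersley--Clifford route is blocked not by a corner case but structurally: the augmented distribution vanishes at every index with $\headindex\neq\sstatat{\catindex}$, so it is never positive and Theorem~\ref{the:factorizationHammersleyClifford} as stated simply does not apply; "working on the support" does not rescue a theorem whose hypothesis is global positivity. Your proposed fallback — a direct computation showing $\condprobat{\thirdcatvariable}{\catvariable}$ depends on $\catvariable$ only through $\sstatat{\catvariable}$, hence $\condindependent{\thirdcatvariable}{\catvariable}{\headvariableof{\sstat}}$ — is therefore not optional but the actual proof, and it is precisely what the paper does. Since you identified both obstacles and the correct repairs yourself, the plan is workable; the paper's version is the leaner one because it never leaves the coordinate picture and never needs Corollary~\ref{cor:secCriterionCondIndepencence} or the separation machinery.
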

\begin{proof}
    Shown in the appendix, see \theref{the:generalFactorizationFisherNeyman}.
\end{proof}

Note that the definition of sufficient statistic does not make use of the marginal distribution $\probat{\thirdcatvariable}$.
We therefore can define sufficient statistics also for families of distributions $\condprobat{\catvariable}{\thirdcatvariable}$ with respect to arbitrary non-degenerate marginal distributions $\probat{\thirdcatvariable}$.
We then use \theref{the:factorizationFisherNeyman} to embed such families in \CompActNets{}.

\begin{corollary}
    Let $\condprobof{\shortcatvariables}{\thirdcatvariable}$ be an arbitrary family of distributions of $\shortcatvariables$ and $\sstat$ a sufficient statistic for $\thirdcatvariable$.
    Then, there is a tensor $\basemeasurewith$ and an activation tensor $\acttensorat{\headvariables,\thirdcatvariable}$ such that for any $\thirdcatindex\in\valof{\thirdcatvariable}$ we have
    \begin{align*}
        \condprobat{\shortcatvariables}{\thirdcatvariable=\thirdcatindex} \in\cansof{\sstat,\maxgraph, \acttensor} \, . 
    \end{align*}
\end{corollary}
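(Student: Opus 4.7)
The plan is to reduce the statement to the Fisher-Neyman Factorization Theorem (\theref{the:factorizationFisherNeyman}) by embedding the given family of conditionals into an auxiliary joint distribution, and then to read off the required \CompActNet{} representation by slicing.

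First, I would fix an arbitrary strictly positive marginal $\probat{\thirdcatvariable}$ on $\valof{\thirdcatvariable}$ and form the joint distribution $\probat{\shortcatvariables,\thirdcatvariable} \coloneqq \contractionof{\condprobat{\shortcatvariables}{\thirdcatvariable},\probat{\thirdcatvariable}}{\shortcatvariables,\thirdcatvariable}$. The remark preceding the corollary emphasizes that sufficiency of $\sstat$ for $\thirdcatvariable$ is a property of the family of conditionals, independent of the marginal; hence $\sstat$ is sufficient for this joint as well. Applying \theref{the:factorizationFisherNeyman} (treating $\shortcatvariables$ as a compound variable via the product of its state spaces) produces tensors $\basemeasurewith$ and $\acttensorat{\headvariables,\thirdcatvariable}$ satisfying
\[
\probat{\shortcatvariables,\thirdcatvariable} = \contractionof{\acttensorat{\headvariables,\thirdcatvariable},\bencodingofat{\sstat}{\headvariables,\shortcatvariables},\basemeasurewith}{\shortcatvariables,\thirdcatvariable}.
\]

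Next, for any fixed $\thirdcatindex\in\valof{\thirdcatvariable}$, I would slice at $\thirdcatvariable=\thirdcatindex$ by contracting against $\onehotmapofat{\thirdcatindex}{\thirdcatvariable}$. Since $\probat{\thirdcatvariable=\thirdcatindex}>0$ by choice of marginal, renormalization over $\shortcatvariables$ recovers precisely $\condprobat{\shortcatvariables}{\thirdcatvariable=\thirdcatindex}$. The sliced expression has exactly the structure of a \CompActNet{} on the maximal hypergraph: the computation network is $\bencodingofat{\sstat}{\headvariables,\shortcatvariables}$, the base measure is $\basemeasurewith$, and the activation tensor on the single hyperedge of $\maxgraph$ is $\acttensorat{\headvariables,\thirdcatvariable=\thirdcatindex}$, which is non-negative as a slice of the non-negative tensor supplied by Fisher-Neyman. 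This yields membership in $\cansof{\sstat,\maxgraph,\acttensor}$, independently of $\thirdcatindex$.

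The main subtlety is verifying that sufficiency of $\sstat$ genuinely does not depend on the auxiliary choice of marginal, so that the Fisher-Neyman Theorem may be applied to the constructed joint. This follows directly from \defref{def:sufStatistic}, which phrases sufficiency as the conditional independence $\condindependent{\shortcatvariables}{\thirdcatvariable}{\headvariableof{\sstat}}$, a statement about the conditional distributions only and thus stable under any non-degenerate choice of $\probat{\thirdcatvariable}$. Beyond this, the argument is essentially bookkeeping: the tensors $\basemeasure$ and $\acttensor$ produced by the theorem do not themselves depend on the slicing index $\thirdcatindex$, so a single pair $(\basemeasure,\acttensor)$ witnesses the inclusion for every value of $\thirdcatvariable$ simultaneously, as required.
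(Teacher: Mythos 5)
Your proof is correct and follows exactly the route the paper intends: the remark immediately preceding the corollary states that sufficiency is independent of the (non-degenerate) marginal on $\thirdcatvariable$, so one forms an auxiliary joint, applies \theref{the:factorizationFisherNeyman}, and slices. The paper leaves this argument implicit; your write-up simply makes the bookkeeping explicit and adds no new ideas beyond what the paper's own discussion supplies.
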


The Factorization Theorem of Fisher-Neyman provides the fundamental motivation for the \CompActNets{} architecture.
Any decomposition of $\bencodingofat{\sstat}{\headvariables,\shortcatvariables}$ is called a \emph{computation network} and common to all members of a family with sufficient statistic $\sstat$.
The activation tensor $\acttensorat{\headvariables,\indexedthirdcatvariable}$, whose decomposition is called the \emph{activation network}, is specific to each member of the family.
We now show in two examples how families of distributions can be represented in \CompActNets{} by sufficient statistics.

\begin{example}[Order statistic for boolean variables]
    \label{exa:coinTossFN}
    Let there be $\catorder$ boolean variables $\shortcatvariables$ and a family $\condprobat{\shortcatvariables}{\thirdcatvariable}$ of distributions.
    The order statistic assigns the ordered tuple to each tuple $\shortcatindices$. The ordered tuple effectively counts the number of $1$ coordinates in the tuple $\shortcatindices$, that is the statistic
    \begin{align*}
        \sstatof{+} \defcols \facstates \rightarrow [\seldim] \, , \quad
        \sstatof{+}(\shortcatindices) = \cardof{\{\catenumerator \wcols \catindexof{\catenumerator}=1\}} \, .
    \end{align*}
    When the order statistic is sufficient for $\thirdcatvariable$, the detailed order of the outcomes is uninformative about the member $\thirdcatindex$ from which the random variables have been drawn.
    By the Fisher-Neyman Factorization \theref{the:factorizationFisherNeyman}, $\sstatof{+}$ is a sufficient statistic if and only if there are tensors $\basemeasurewith$ and $\acttensorat{\headvariableof{+},\Theta}$ such that for each $\theta\in\Theta$
    \begin{align*}
        \condprobat{\shortcatvariables}{\indexedthirdcatvariable}
        = \contractionof{
            \acttensorat{\headvariableof{+},\indexedthirdcatvariable},\bencodingofat{\sstatof{+}}{\headvariableof{+},\shortcatvariables},\basemeasurewith}{\shortcatvariables} \, .
    \end{align*}
    For each member $\thirdcatindex$ of the family, the probability of each sequence $\shortcatindices$ is thus the product of a base measure factor $\basemeasureat{\indexedshortcatvariables}$ and a factor $\acttensorat{\headvariableof{+}=+(\shortcatindices),\indexedthirdcatvariable}$ depending only on the count $+(\shortcatindices)$ of $1$ coordinates in $\shortcatindices$.
    We later continue this example in \exaref{exa:coinTossExp}, where further interpretations to the case of i.i.d. variables are provided.
\end{example}

\begin{example}[Graphical models as a special case of \CompActNets{}]
    For graphical models we take the \emph{identity statistic}
    \begin{align*}
        \identity\big(\shortcatindices\big)
        = \shortcatindices
    \end{align*}
    so that the image coordinates coincide with the variables and there are no non-trivial computation cores.
    The associated basis encoding is just the identity tensor
    \begin{align*}
        \bencodingofat{\identity}{\headvariableof{[\catorder]},\shortcatvariables}
        = \identityat{\shortcatvariables,\headvariableof{[\catorder]}} \, .
    \end{align*}
    Therefore, for any activation tensor $\acttensorwith$ we obtain
    \begin{align*}
        \probwith
        = \normalizationof{\acttensorwith,\bencodingofat{\identity}{\headvariableof{[\catorder]},\shortcatvariables}}{\shortcatvariables}
        = \normalizationof{\acttensorat{\shortcatvariables}}{\shortcatvariables}\, .
    \end{align*}
    Put differently, in the graphical–model case the activation tensor coincides with the joint distribution tensor.
    In this setting, structural properties of the distribution such as (conditional) independences can be read off as algebraic factorization patterns of the activation (and hence joint) tensor.
\end{example}

\subsection{Exponential families}

We now show that exponential families are specific instances of \CompActNets{}, whose activation tensors have elementary decompositions.
The importance of exponential families in statistics stems from their universal properties.
A classical theorem by Pitman, Koopman and Darmois (see \cite{pitman_sufficient_1936}) states, that whenever a family exhibits constant support and a finite sufficient statistic for arbitrary large data sets, then it is in an exponential family.
For a discussion of further universal properties of exponential families such as the existence of priors and entropy maximizers, see \cite{murphy_probabilistic_2022}.

\begin{definition}[Exponential family]\label{def:expFamily}
    Given a base measure $\basemeasure$ and a statistic
    \begin{align*}
        \sstat:\facstates\rightarrow\parspace
    \end{align*}
    we enumerate for each coordinate $\selindexin$ the image $\imageof{\sstatcoordinateof{\selindex}}$ by an interpretation map
    \begin{align*}
        \indexinterpretationof{\selindex} \defcols
        [\cardof{\imageof{\sstatcoordinateof{\selindex}}}] \rightarrow \imageof{\sstatcoordinateof{\selindex}} \, .
    \end{align*}
    For any canonical parameter vector $\canparamwithin$, we build the activation cores $\softactlegwith$ for each coordinate $\headindexof{\selindex}\in[\cardof{\imageof{\sstatcoordinateof{\selindex}}}]$ by
    \begin{align*}
        \softactlegat{\indexedheadvariableof{\selindex}}
        = \expof{\canparamat{\indexedselvariable} \cdot \indexinterpretationofat{\selindex}{\headindexof{\selindex}}} \,
    \end{align*}
    and define the distribution 
    \begin{align*}
        \expdistwith =
        \normalizationof{\{\basemeasurewith\} \cup \{\bencodingofat{\sstatcoordinateof{\selindex}}{\headvariableof{\selindex},\shortcatvariables} \wcols \selindexin\}\cup\{\softactlegwith \wcols \selindexin\}}{\shortcatvariables} \, .
    \end{align*}
    We then call the tensor $\probfamilyofat{\sstat,\basemeasure}{\shortcatvariables}{\Theta}$ with $\valof{\Theta}=\parspace$ and slices for $\canparam\in\parspace$ given by
    \begin{align*}
        \probfamilyofat{\sstat,\basemeasure}{\shortcatvariables}{\Theta=\canparam}
        = \expdistwith
    \end{align*}
    the exponential family to the statistic $\sstat$ and the base measure $\basemeasure$.
\end{definition}

To see that \defref{def:expFamily} is consistent with the typical definition of exponential families (see \cite{brown_fundamentals_1987}), note that for each $\canparam\in\parspace$ the normalization amounts to the division by
\begin{align*}
    \partitionfunctionof{\canparam}
    = \contraction{\{\basemeasurewith\} \cup \{\bencodingofat{\sstatcoordinateof{\selindex}}{\headvariableof{\selindex},\shortcatvariables} \wcols \selindexin\}\cup\{\softactlegwith \wcols \selindexin\}} \, ,
\end{align*}
a quantity which is referred to as the partition function.
Then, we have for each coordinate $\shortcatindicesin$ that
\begin{align*}
    &\expdistat{\indexedshortcatvariables}\\
    &= \frac{1}{\partitionfunctionof{\canparam}}
    \contractionof{\{\basemeasurewith\} \cup \{\bencodingofat{\sstatcoordinateof{\selindex}}{\headvariableof{\selindex},\shortcatvariables} \wcols \selindexin\}\cup\{\softactlegwith \wcols \selindexin\}}{\indexedshortcatvariables} \\
    &= \frac{1}{\partitionfunctionof{\canparam}} \cdot \basemeasureat{\indexedshortcatvariables} \cdot \expof{\sum_{\selindexin} \canparamat{\indexedselvariable} \cdot \sstatcoordinateofat{\selindex}{\indexedshortcatvariables}} \, .
\end{align*}

Note that by construction each member of an exponential family is an element in a \CompActNet{} with elementary activation cores, that is
\begin{align*}
    \probfamilyofat{\sstat,\basemeasure}{\shortcatvariables}{\Theta=\canparam}
    \in \cansof{\sstat,\elgraph,\basemeasure} \, .
\end{align*}

\begin{figure}
    \begin{center}
        \begin{tikzpicture}[scale=0.35,thick,xscale=1.2] 

    \begin{scope}
    [shift={(-13,0)}]
        \draw (-2,-1) rectangle (6,-3);
        \node[anchor=center] (text) at (2,-2) {\corelabelsize $ \expdist$}; 
        \draw[-<-] (0,-3)--(0,-5) node[midway,left] {\colorlabelsize $\catvariableof{0}$};
        \draw[-<-] (1.5,-3)--(1.5,-5) node[midway,left] {\colorlabelsize $\catvariableof{1}$};
        \node[anchor=center] (text) at (3,-4) {$\cdots$};
        \draw[-<-] (4,-3)--(4,-5) node[midway,right] {\colorlabelsize $\catvariableof{\atomorder\shortminus1}$};

        \node[anchor=center] (text) at (9,-2) {$= \,\,\frac{1}{\partitionfunctionof{\canparam}}\,\cdot$};
    \end{scope}

    \draw (-1.25,1) rectangle (1.25,3);
    \node[anchor=center] (text) at (0,2) {\corelabelsize $\softactsymbolof{0,\canparam}$};

    \draw (2.75,1) rectangle (5.25,3);
    \node[anchor=center] (text) at (4,2) {\corelabelsize $\softactsymbolof{\seldim\shortminus 1,\canparam}$};

    \draw[->-] (0,-1)--(0,0);
    \node[left] (text) at (0,0) {\colorlabelsize $\headvariableof{0}$};
    \draw[] (0,0)--(0,1);
    \drawvariabledot{0}{0}
    \node[anchor=center] (text) at (2,0) {$\cdots$};

    \draw[->-] (4,-1)--(4,0);
    \node[right] (text) at (4,0) {\colorlabelsize $\headvariableof{\seccatorder\shortminus1}$};
    \draw[] (4,0)--(4,1);
    \drawvariabledot{4}{0}

    \draw (-1,-1) rectangle (5,-3);
    \node[anchor=center] (text) at (2,-2) {\corelabelsize $\bencodingof{\sstat}$};
    \draw[-<-] (0,-3)--(0,-5) node[midway,left] {\colorlabelsize $\catvariableof{0}$};
    \draw[-<-] (1.5,-3)--(1.5,-5) node[midway,left] {\colorlabelsize $\catvariableof{1}$};
    \node[anchor=center] (text) at (3,-4) {$\cdots$};
    \draw[-<-] (4,-3)--(4,-5) node[midway,right] {\colorlabelsize $\catvariableof{\atomorder\shortminus1}$};

    \begin{scope}
    [shift={(0,-4)}]
        \draw[] (0,1)--(0,-6);
        \node[below] (text) at (0,-6) {\colorlabelsize $\catvariableof{0}$};
        \drawvariabledot{0}{-5}
        \draw[] (1.5,1)--(1.5,-6);
        \node[below] (text) at (1.5,-6) {\colorlabelsize $\catvariableof{1}$};
        \drawvariabledot{1.5}{-3}
        \node[anchor=center] (text) at (3,0) {$\cdots$};
        \node[anchor=center] (text) at (3,-5.5) {$\cdots$};
        \draw[] (4,1)--(4,-6);
        \node[below] (text) at (4,-6) {\colorlabelsize $\catvariableof{\atomorder\shortminus1}$};
        \drawvariabledot{4}{-2}

        \draw[] (0,-5) -- (6,-5);
        \draw[] (1.5,-3) -- (6,-3);
        \node[anchor=center] (text) at (5,-3.75) {$\vdots$};
        \draw[] (4,-2) -- (6,-2);
        \draw (6,-1) rectangle (9, -6);
        \node[anchor=center] (text) at (7.5,-3.5) {\corelabelsize $\basemeasure$};

        \node[anchor=center] (text) at (10,-4.5) {$.$};

    \end{scope}

%
%
%

\end{tikzpicture}
    \end{center}
    \caption{
        Tensor Network diagram of a member of an exponential family $\probfamilyofat{\sstat,\basemeasure}{\shortcatvariables}{\Theta=\canparam}$ before normalization as an \CompActNet{} with elementary activation, that is an element in $\cansof{\sstat,\elgraph,\basemeasure}$.}\label{fig:expdistElementary}
\end{figure}

\begin{example}[Exponential family of coin tosses]
    \label{exa:coinTossExp}
    Recall the family of distributions of boolean $\shortcatvariables$ from \exaref{exa:coinTossFN}, which has the order statistic $\sstatof{+}$ as a sufficient statistic.
    We now in addition assume that the variables $\shortcatvariables$ are i.i.d. with respect to any member of the family (see \exaref{exa:coinTossFN}).
    For the variables to be i.i.d., we need $\basemeasurewith=\oneswith$ and can thus choose a representation such that for $\thirdcatindex\in\valof{\thirdcatvariable}$
    \begin{align*}
        \condprobat{\shortcatvariables}{\indexedthirdcatvariable}
        = \contractionof{\bencodingofat{\sstatof{+}}{\headvariableof{+},\shortcatvariables},\acttensorat{\headvariableof{+},\indexedthirdcatvariable}}{\shortcatvariables}
    \end{align*}
    where for each $\catenumerator\in[\catorder+1]$
    \begin{align*}
        \acttensorat{\headvariableof{+}=\catenumerator,\indexedthirdcatvariable}
        = (1-\thirdcatindex)^{\catorder-\catenumerator} \cdot \thirdcatindex^{\catenumerator} \, .
    \end{align*}
    The marginal distribution $\condprobof{\headvariableof{+}}{\indexedthirdcatvariable}$ is then the binominal distribution $B(\catorder,\thirdcatindex)$.
    When excluding the case of $\thirdcatindex\in\{0,1\}$, the family is a subset of the exponential family with the head count statistic, where each member is reparametrized by
    \begin{align*}
        \canparam \coloneqq \lnof{\frac{\thirdcatindex}{1-\thirdcatindex}} \, .
    \end{align*}
    To see that this is true, we observe that the coordinate $\headindexof{+}\in[\catorder+1]$ of the activation tensor of $\expdistof{(\sstatof{+},\canparam,\trivbm)}$ is
    \begin{align*}
        \softacttensorat{\indexedheadvariableof{+}}
        = \expof{\headindexof{+}\cdot\canparam}
        = \frac{\thirdcatindex^{\headindexof{+}}}{(1-\thirdcatindex)^{\headindexof{+}}} \, .
    \end{align*}
    Now, with $\partitionfunctionof{\canparam}=\frac{1}{(1-\thirdcatindex)^{\catorder}}$ we have for any $\shortcatindices$ with $\sum_{\catenumeratorin}\catindexof{\catenumerator}=\headindexof{+}$ that
    \begin{align*}
        \frac{1}{\partitionfunctionof{\canparam}} \cdot \contraction{\bencodingofat{\sstatof{+}}{\headvariableof{+},\indexedshortcatvariables},\softacttensorat{\headvariableof{+}}}
        = (1-\thirdcatindex)^{\catorder} \cdot \frac{\thirdcatindex^{\headindexof{+}}}{(1-\thirdcatindex)^{\headindexof{+}}}
        =  \thirdcatindex^{\headindexof{+}} \cdot (1-\thirdcatindex)^{\catorder-\headindexof{+}}  \, .
    \end{align*}
    Comparing with the activation tensor $\acttensorat{\headvariableof{+}}$ above, we note that $\partitionfunctionof{\canparam}$ is the partition function of the exponential family and $\expdistofat{(\sstatof{+},\canparam,\trivbm)}{\shortcatvariables}$ coincides with the member $\condprobat{\shortcatvariables}{\indexedthirdcatvariable}$.
    We further observe that since the statistic $\sstatof{+}$ decomposes as a sum of terms depending on single variables only, we have a decomposition of the corresponding \CompActNet{} by
    \begin{center}
        \begin{tikzpicture}[scale=0.35,thick]
            \begin{scope}[shift={(-5,0)}]
                \draw (0.5,2.5) rectangle (-12.5,4.5);
                \node[anchor=center] at (-6,3.5) {\corelabelsize $
                \begin{bmatrix}
                    1 & \expof{\canparam} & \cdots & \expof{\catorder\cdot\canparam}
                \end{bmatrix}$};
                \draw[->-] (-6,1)--(-6,2.5) node[midway,right] {\colorlabelsize $\headvariableof{+}$};
                \draw (-3,-1) rectangle (-9,1);
                \node[anchor=center] at (-6,0) {\corelabelsize $\bencodingof{\sstatof{+}}$};
                \draw[-<-] (-4,-1)--(-4,-2.5) node[midway,right] {\colorlabelsize $\catvariableof{\catorder\shortminus1}$};
                \node[anchor=center] at (-5.5,-2.25) {$\cdots$};
                \draw[-<-] (-7,-1)--(-7,-2.5) node[midway,right] {\colorlabelsize $\catvariableof{1}$};
                \draw[-<-] (-8,-1)--(-8,-2.5) node[midway,left] {\colorlabelsize $\catvariableof{0}$};
            \end{scope}

            \node[anchor=center] (A) at (-3.5,0) {${=}$};

            \draw (-1.5,-1) rectangle (3.5,1);
            \node[anchor=center] (A) at (1,0) {\corelabelsize $
            \begin{bmatrix}
                1 & \expof{\canparam}
            \end{bmatrix}$};
            \draw (1,-1)--(1,-2.5) node[midway,right] {\colorlabelsize $\catvariableof{0}$};

            \draw (4.5,-1) rectangle (9.5,1);
            \node[anchor=center] (A) at (7,0) {\corelabelsize $
            \begin{bmatrix}
                1 & \expof{\canparam}
            \end{bmatrix}$};
            \draw (7,-1)--(7,-2.5) node[midway,right] {\colorlabelsize $\catvariableof{1}$};

            \node[anchor=center] (A) at (11,0) {${\cdots}$};

            \draw (12.5,-1) rectangle (17.5,1);
            \node[anchor=center] (A) at (15,0) {\corelabelsize $
            \begin{bmatrix}
                1 & \expof{\canparam}
            \end{bmatrix}$};
            \draw (15,-1)--(15,-2.5) node[midway,right] {\colorlabelsize $\catvariableof{\catorder\shortminus1}$};

        \end{tikzpicture}
    \end{center}
    This reproduces the fact that distributions of independent variables are representable by elementary tensors (see \exaref{exa:coinTossHC}).
\end{example}


\subsection{Efficient contractions by message passing}

Contractions of tensor networks are generally hard to solve.
Here, we investigate message passing algorithms, which decompose global contractions into a sequence of local contractions, whose results are passed as messages through the tensor network.
The resulting algorithm is called the Tree Belief Propagation (see \algoref{alg:treeBeliefPropagation}).
While various scheduling strategies for the message passing exist, we focus on the case of tree hypergraphs, for which exactness and efficiency can be shown.
We denote $\dirovedges$ to be all tuples $(\sedge,\redge)$ of hyperedges $\sedge,\redge\in\edges$ such that $\sedge\neq\redge$ and $\sedge\cap\redge\neq\varnothing$.
For our purposes we call a hypergraph $\graph$ a tree when the graph with nodes by the hyperedges $\edges$ and edges by $\dirovedges$ is a tree (for an example see \figref{fig:studentMessagePassingDirections}).

\begin{algorithm}[hbt!]
    \caption{Tree Belief Propagation}\label{alg:treeBeliefPropagation}
    \begin{algorithmic}
        \Require Tensor network $\extnet$ on a hypergraph $\graph$
        \Ensure Messages $\{\messagewith\wcols(\sedge,\redge)\in\dirovedges\}$
        \iosepline
        \State Initialize a message scheduler $\scheduler=\{(\sedge,\redge) \in \dirovedges \wcols \sedge \text{ a leaf in the tree } (\edges,\dirovedges)\}]$
        \While{$\scheduler$ not empty}
            \State Pop a $(\sedge,\redge)$ pair from $\scheduler$
            \State Compute the message
            \begin{align*}
                \messagewith
                = \contractionof{\{\hypercoreofat{\sedge}{\catvariableof{\sedge}}\}
                    \cup \{\mesfromtoat{\secsedge}{\sedge}{\catvariableof{\secsedge\cap\sedge}} \wcols (\secsedge,\sedge)\in\dirovedges \ncond \secsedge\neq \redge\}
                }{\catvariableof{\sedge\cap \redge}}
            \end{align*}
            \State Update $\scheduler$ by all messages $(\redge,\thirdsedge)$ which have not yet been sent, if all messages $(\secsedge,\redge)$ with $\secsedge\neq\thirdsedge$ have been sent.
        \EndWhile
        \State \Return Messages $\{\messagewith\wcols(\sedge,\redge)\in\dirovedges\}$
    \end{algorithmic}
\end{algorithm}

For an implementation of \algoref{alg:treeBeliefPropagation} in the \python{} package \tnreason{}, see \secref{sec:propAlgsImplementation}.

The following theorem states that the contraction of a whole tensor network can be replaced by local contractions with messages.
Since contracting the whole network can be infeasible, this shows that calculating the messages with the \algoref{alg:treeBeliefPropagation} can be advantageous.

\begin{theorem}
    \label{the:treeBeliefPropagationExactness}
    Let $\extnet$ be a tensor network on a tree hypergraph $\graph$ (i.e. the graph $(\edges,\dirovedges)$ is a tree).
    The messages in the tree belief propagation \algoref{alg:treeBeliefPropagation} are contracted to local marginals, meaning that for each $\sedge\in\edges$ we have
    \begin{align*}
        \contractionof{\extnet}{\catvariableof{\sedge}}
        =\contractionof{\{\hypercoreofat{\sedge}{\catvariableof{\sedge}}\}\cup
            \{\mesfromtowith{\secsedge}{\sedge} \wcols (\secsedge,\sedge)\in\dirovedges\}}{\catvariableof{\sedge}} \, .
    \end{align*}
\end{theorem}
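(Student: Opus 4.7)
The plan is to prove a stronger statement by induction, from which the theorem follows by one more contraction. Concretely, for each directed pair $(\secsedge,\sedge)\in\dirovedges$, let $\edges_{\secsedge\to\sedge}\subset\edges$ denote the set of hyperedges that lie in the connected component of $\secsedge$ in the overlap tree $(\edges,\dirovedges)$ after deleting the pair $(\secsedge,\sedge)$; equivalently, the "subtree of edges rooted at $\secsedge$ when looking from $\sedge$". I will prove the following claim by induction on the size of this subtree:
\begin{align*}
\mesfromtowith{\secsedge}{\sedge}
= \contractionof{\{\hypercoreofat{\thirdedge}{\catvariableof{\thirdedge}} \wcols \thirdedge\in\edges_{\secsedge\to\sedge}\}}{\catvariableof{\secsedge\cap\sedge}}.
\end{align*}

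For the base case, $\secsedge$ is a leaf of $(\edges,\dirovedges)$, i.e.\ there is no $(\thirdsedge,\secsedge)\in\dirovedges$ with $\thirdsedge\neq\sedge$. Then the defining formula in \algoref{alg:treeBeliefPropagation} collapses to $\mesfromtowith{\secsedge}{\sedge} = \contractionof{\hypercoreofat{\secsedge}{\catvariableof{\secsedge}}}{\catvariableof{\secsedge\cap\sedge}}$, as required. For the inductive step, I expand the message definition and apply the induction hypothesis to each incoming message $\mesfromtowith{\thirdsedge}{\secsedge}$ with $\thirdsedge\neq\sedge$. The key combinatorial input is that, since $(\edges,\dirovedges)$ is a tree, the subtrees $\edges_{\thirdsedge\to\secsedge}$ for different in-neighbors $\thirdsedge$ of $\secsedge$ are pairwise disjoint, and together with $\{\secsedge\}$ they partition $\edges_{\secsedge\to\sedge}$. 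I also need to check that the variable sets appearing in different subtrees can only overlap within $\catvariableof{\secsedge}$; this is again a consequence of the tree property, since any shared node would yield an additional cycle in $(\edges,\dirovedges)$. Granting this disjointness, the product of messages factorizes and contracting over $\catvariableof{\secsedge}\setminus\catvariableof{\secsedge\cap\sedge}$ reassembles exactly the contraction on the right-hand side of the claim.

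Once the claim is established, the theorem follows by applying the same disjoint decomposition one final time at $\sedge$: the tree property partitions $\edges$ as $\{\sedge\}\cup\bigcup_{(\secsedge,\sedge)\in\dirovedges}\edges_{\secsedge\to\sedge}$ with variable overlaps confined to $\catvariableof{\sedge}$. Substituting the subtree contractions by the corresponding messages via the claim produces
\begin{align*}
\contractionof{\extnet}{\catvariableof{\sedge}}
= \contractionof{\{\hypercoreofat{\sedge}{\catvariableof{\sedge}}\} \cup \{\mesfromtowith{\secsedge}{\sedge} \wcols (\secsedge,\sedge)\in\dirovedges\}}{\catvariableof{\sedge}},
\end{align*}
which is precisely the statement of the theorem.

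The main obstacle I anticipate is not the inductive step itself but the bookkeeping required to justify the factorization of the contraction along the overlap tree. Specifically, I need the lemma that in a tree hypergraph, whenever $(\edges,\dirovedges)$ is a tree and one removes a hyperedge $\sedge$, the variables $\catvariableof{\node}$ that appear in two different resulting components must all lie in $\catvariableof{\sedge}$. This is essentially a hypergraph running-intersection property and should be derivable directly from acyclicity of $(\edges,\dirovedges)$: any node shared between two components would induce, via the hyperedges containing it, a cycle in the overlap graph that bypasses $\sedge$. With this lemma in hand, the contraction splits into a tensor product of contractions over the subtrees, each of which can be evaluated independently and identified with the corresponding message by the induction hypothesis.
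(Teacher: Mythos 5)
Your proposal is correct and follows essentially the same route as the paper: the paper's Lemma~\ref{lem:messageAsContraction} is exactly your inductive claim (with $\preedgeset$ playing the role of your $\edges_{\secsedge\to\sedge}$), proved by the same induction on subtree size, and the theorem is then assembled by the same partition of $\edges$ at the root edge. The only difference is that you make explicit the running-intersection-type justification for why the contraction factorizes across subtrees, which the paper compresses into the phrase ``commutation of contraction\dots justified by the assumed tree property.''
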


We show \theref{the:treeBeliefPropagationExactness} based on the following lemma.
We denote for each pair $(\sedge,\redge)$ the subset $\preedgeset\subset\edges$ as the subset of edges $\edgein$, for which each path in $(\edges,\dirovedges)$ to $\redge$ passes through $\sedge$. The tree hypergraph property makes this definition equivalent to an existing path through $\sedge$, which is used in the proof of the following lemma.
Note that by construction $\sedge\in\preedgeset$.

\begin{lemma}
    \label{lem:messageAsContraction}
    For any tensor network on a tree hypergraph, \algoref{alg:treeBeliefPropagation} terminates in the tree-based implementation and returns final messages
    \begin{align*}
        \messagewith
        = \contractionof{\{\edgehypercorewith\wcols\edge\in\preedgeset\}}{\catvariableof{\sedge\cap\redge}}\,.
    \end{align*}
\end{lemma}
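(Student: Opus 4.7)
The plan is a structural induction along the tree $(\edges,\dirovedges)$. For a fixed ordered pair $(\sedge,\redge)\in\dirovedges$, the tree property ensures that removing the edge between $\sedge$ and $\redge$ from $(\edges,\dirovedges)$ disconnects the tree into two components, and $\preedgeset$ is exactly the component containing $\sedge$. I will induct on $\cardof{\preedgeset}$, which is a well-defined nonnegative integer because the hypergraph has finitely many hyperedges.

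First I would handle termination and well-definedness of the scheduler. Because $(\edges,\dirovedges)$ is a tree, for any $\sedge\in\edges$ there is exactly one neighbor on the unique path towards any other chosen $\redge$, and each pair $(\sedge,\redge)\in\dirovedges$ is scheduled at most once. The initial scheduler covers all leaf-to-neighbor pairs, and the update rule releases $(\redge,\thirdsedge)$ once all incoming pairs $(\secsedge,\redge)$ with $\secsedge\neq\thirdsedge$ are processed. Since every non-leaf edge has finitely many incoming neighbors, and since the tree is finite, a straightforward induction on the distance from the set of leaves shows that every pair in $\dirovedges$ is eventually popped, so the algorithm terminates.

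For the base case of the formula, suppose $\sedge$ is a leaf in $(\edges,\dirovedges)$. Then $\preedgeset=\{\sedge\}$, and the assignment in \algoref{alg:treeBeliefPropagation} reduces to
\begin{align*}
\messagewith = \contractionof{\{\hypercoreofat{\sedge}{\catvariableof{\sedge}}\}}{\catvariableof{\sedge\cap\redge}},
\end{align*}
which matches the claim. For the inductive step, let $\sedge$ be an internal edge and let $\secsedge_1,\ldots,\secsedge_k$ be the neighbors of $\sedge$ in $(\edges,\dirovedges)$ different from $\redge$. Using the tree property, the sets $\preedgesetwrt{\secsedge_i}{\sedge}$ are pairwise disjoint, disjoint from $\{\sedge\}$, and satisfy
\begin{align*}
\preedgeset = \{\sedge\} \cup \bigcup_{i=1}^{k} \preedgesetwrt{\secsedge_i}{\sedge}.
\end{align*}
By the scheduling rule, $(\sedge,\redge)$ is only processed after each $(\secsedge_i,\sedge)$ has been processed, so the induction hypothesis applies and yields
\begin{align*}
\mesfromtowith{\secsedge_i}{\sedge} = \contractionof{\{\edgehypercorewith \wcols \edge\in\preedgesetwrt{\secsedge_i}{\sedge}\}}{\catvariableof{\secsedge_i\cap\sedge}}.
\end{align*}

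Finally I would assemble the result by substituting these expressions into the update rule
\begin{align*}
\messagewith = \contractionof{\{\hypercoreofat{\sedge}{\catvariableof{\sedge}}\}\cup\{\mesfromtowith{\secsedge_i}{\sedge} \wcols i\in[k]\}}{\catvariableof{\sedge\cap\redge}}
\end{align*}
and invoking associativity of tensor network contraction (\defref{def:contraction}) over disjoint index groups. The variables summed out inside each $\mesfromtowith{\secsedge_i}{\sedge}$ are exactly $\catvariableof{\bigcup_{\edge\in\preedgesetwrt{\secsedge_i}{\sedge}}\edge}\setminus\catvariableof{\secsedge_i\cap\sedge}$, and the tree property guarantees these variable groups are disjoint across distinct $i$. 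Hence the nested contractions combine into a single contraction over $\preedgeset$ with open variables $\catvariableof{\sedge\cap\redge}$, which is exactly the claimed identity. The main subtle point I expect is precisely this disjointness argument: one must use that the subgraph induced by $\preedgesetwrt{\secsedge_i}{\sedge}$ meets the rest of the tree only through $\secsedge_i\cap\sedge$, a property that fails in a general hypergraph but is guaranteed here by the tree assumption on $(\edges,\dirovedges)$.
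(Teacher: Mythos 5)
Your proposal is correct and follows essentially the same route as the paper's proof: induction on $\cardof{\preedgeset}$, with the leaf case as base, the decomposition $\preedgeset=\{\sedge\}\cup\bigcup_{i}\preedgesetwrt{\secsedge_i}{\sedge}$ in the inductive step, and the tree property invoked to merge the nested contractions into a single one. Your explicit treatment of termination and of the disjointness of the summed-out variable groups is slightly more detailed than the paper's appeal to a ``commutation of contraction property,'' but it is the same argument.
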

\begin{proof}
    We show this property by induction over the size of the edge sets $\preedgeset$ for pairs $(\sedge,\redge)\in\dirovedges$, such that $\cardof{\preedgeset}\leq n$.
    Note that since always $\sedge\in\preedgeset$ we have that $n\geq1$.

    $n=1$: In this case we have $\preedgeset=\{\sedge\}$ and $\sedge$ is a leaf of the tree-hypergraph $\graph$.
    The claimed message property holds thus by definition.

    $n\rightarrow n+1$: Assume that the message obeys the claimed property for edge sets with cardinality up to $n$.
    If there is no edge set with cardinality $n+1$, the property holds also for those with cardinality up to $n+1$.
    If there is an edge set $\preedgeset$ with size $n+1$, we have
    \begin{align*}
        \preedgeset
        = \{\sedge\} \cup \left(\bigcup_{\secsedge \in\dirovedges} \preedgesetwrt{\secsedge}{\sedge}\right) \, .
    \end{align*}
    The message $\mesfromto{\sedge}{\redge}$ is sent once all messages $\mesfromto{\secsedge}{\sedge}$ to $(\secsedge,\sedge)\in\preedgeset$ arrived.
    By definition we have that
    \begin{align*}
        \mesfromtowith{\sedge}{\redge}
        = \contractionof{\{\hypercoreofat{\sedge}{\catvariableof{\sedge}}\}
            \cup \{\mesfromtowith{\secsedge}{\sedge} \wcols (\secsedge,\sedge) \in\dirovedges \ncond \secsedge \neq \redge \}
        }{\catvariableof{\sedge\cap \redge}}\,.
    \end{align*}
    Now we use the induction assumption on $\preedgesetwrt{\secsedge}{\sedge}$ (since its cardinality is at most $n$) and get
    \begin{align*}
        \mesfromtoat{\sedge}{\redge}{\catvariableof{\sedge\cap \redge}}
        &= \contractionof{
            \{\hypercoreofat{\sedge}{\catvariableof{\sedge}}\} \cup
            \left(\bigcup_{(\secsedge,\sedge)\in\dirovedges \ncond \secsedge\neq \redge}
                \contractionof{\{\hypercoreofat{\thirdsedge}{\catvariableof{\thirdsedge}} \wcols  \thirdsedge \in \preedgesetwrt{\secsedge}{\sedge}\}}{\catvariableof{\secsedge\cap \sedge}} \right)
        }{\catvariableof{\sedge\cap \redge}} \\
        &= \contractionof{
            \{\hypercoreofat{\sedge}{\catvariableof{\sedge}}\} \cup
            \left(\bigcup_{(\secsedge,\sedge)\in\dirovedges \ncond \secsedge\neq \redge} \{\hypercoreofat{\thirdsedge}{\catvariableof{\thirdsedge}} \wcols  \thirdsedge \in \preedgesetwrt{\secsedge}{\sedge}\} \right)
        }{\catvariableof{\sedge\cap \redge}} \\
        &= \contractionof{\{\edgehypercorewith\wcols\edge\in\preedgeset\}}{\catvariableof{\sedge\cap\redge}}\,.
    \end{align*}
    Here, we used the commutation of contraction property in the second equation, which is justified by the assumed tree property of the hypergraph.
    Thus, the message property holds also for any edge sets of size $n+1$.

    By induction, the claimed message property therefore holds for all final messages.
\end{proof}

\begin{proof}[Proof of \theref{the:treeBeliefPropagationExactness}]
    Since the hypergraph is by assumption a tree, we can partition $\edges$ into disjoint subsets $\{\sedge\}$ and $\preedgesetwrt{\secsedge}{\sedge}$ for $(\secsedge,\sedge)\in\dirovedges$.
    We then have
    \begin{align*}
        \contractionof{\extnet}{\catvariableof{\sedge}}
        =& \contractionof{\{\hypercoreofat{\sedge}{\catvariableof{\sedge}}\}
            \cup \left\{\contractionof{\hypercoreofat{\edge}{\catvariableof{\edge}}\wcols\edge\in\preedgesetwrt{\secsedge}{\sedge}}{\catvariableof{\edge\cap\secsedge}} \wcols (\secsedge,\sedge)\in\dirovedges \right\}
        }{\catvariableof{\sedge}} \\
        =& \contractionof{\{\hypercoreofat{\sedge}{\catvariableof{\sedge}}\} \cup \{\mesfromtowith{\secsedge}{\sedge} \wcols (\secsedge,\sedge)\in\dirovedges\}}{\catvariableof{\sedge}},
    \end{align*}
    where we used \lemref{lem:messageAsContraction} in the second equation.
\end{proof}

We illustrate the usage of \algoref{alg:treeBeliefPropagation} on the Markov network of \exaref{exa:studentHC}.

\begin{example}[Continuation of \exaref{exa:studentHC}]
    \label{exa:studentBP}
    We exemplify the Belief Propagation \algoref{alg:treeBeliefPropagation} on the Markov network in the student example (see \exaref{exa:studentHC}).
    The directions of the messages result from the hyperedge overlaps (see \figref{fig:studentMessagePassingDirections} a) and the resulting directions $\dirovedges$ are sketched in \figref{fig:studentMessagePassingDirections} b).
    The messages to $\{(\edgeof{2},\edgeof{0}),(\edgeof{0},\edgeof{2})\}$ are vectors of $\catvariableof{G}$ and the messages $\{(\edgeof{0},\edgeof{1}),(\edgeof{1},\edgeof{0})\}$ are vectors of $\catvariableof{I}$.

    Since the hyperedges are minimally connected, we can implement \algoref{alg:treeBeliefPropagation} by a tree scheduler $\scheduler$:
    \begin{itemize}
        \item The scheduler is initialized with messages from leafs, in our example $\{(\edgeof{2},\edgeof{0}),(\edgeof{1},\edgeof{0})\}$.
        \item Each message is placed exactly once on $\scheduler$, when at a hyperedge all but the reverse message have been received.
        In our example, after execution of $(\edgeof{2},\edgeof{0})$ the message $(\edgeof{0},\edgeof{1})$ is placed on $\scheduler$ and after execution of $(\edgeof{1},\edgeof{0})$ the message $(\edgeof{0},\edgeof{2})$.
    \end{itemize}
    In this implementation, \algoref{alg:treeBeliefPropagation} terminates after $\cardof{\dirovedges}=4$ iterations of the \whileSymbol{} loop.
    The exact marginals of the edge variables are then
    \begin{align*}
        \probat{\catvariableof{L},\catvariableof{G}}
        &= \normalizationof{
            \hypercoreofat{\edgeof{2}}{\catvariableof{L},\catvariableof{G}},
            \mesfromto{\edgeof{0}}{\edgeof{2}}\left[\catvariableof{G}\right]
        }{\catvariableof{L},\catvariableof{G}}, \\
        \probat{\catvariableof{G},\catvariableof{D},\catvariableof{I}}
        &= \normalizationof{\hypercoreofat{\edgeof{0}}{\catvariableof{G},\catvariableof{D},\catvariableof{I}},
            \mesfromto{\edgeof{2}}{\edgeof{0}}\left[\catvariableof{G}\right],
            \mesfromto{\edgeof{1}}{\edgeof{0}}\left[\catvariableof{I}\right]
        }{\catvariableof{G},\catvariableof{D},\catvariableof{I}}, \\
        \probat{\catvariableof{I},\catvariableof{S}}
        &= \normalizationof{
            \hypercoreofat{\edgeof{1}}{\catvariableof{I},\catvariableof{S}},
            \mesfromto{\edgeof{0}}{\edgeof{1}}\left[\catvariableof{I}\right]
        }{\catvariableof{I},\catvariableof{S}} \, .
    \end{align*}

\end{example}

\begin{figure}[t]
    \begin{center}
        \begin{tikzpicture}
            \node[anchor=center] (A) at (-0.5,3) {\corelabelsize $a)$};

            \node[circle, draw, thick, fill=\nodegrayscale, minimum size = \nodeminsize] (A) at (0,2) {};
            \node[anchor=center] (A) at (0,2) {\corelabelsize $\catvariableof{L}$};
            \node[circle, draw, thick, fill=\nodegrayscale, minimum size = \nodeminsize] (A) at (1,2) {};
            \node[anchor=center] (A) at (1,2) {\corelabelsize $\catvariableof{G}$};
            \node[circle, draw, thick, fill=\nodegrayscale, minimum size = \nodeminsize] (A) at (2,2) {};
            \node[anchor=center] (A) at (2,2) {\corelabelsize $\catvariableof{D}$};
            \node[circle, draw, thick, fill=\nodegrayscale, minimum size = \nodeminsize] (A) at (3,2) {};
            \node[anchor=center] (A) at (3,2) {\corelabelsize $\catvariableof{I}$};
            \node[circle, draw, thick, fill=\nodegrayscale, minimum size = \nodeminsize] (A) at (4,2) {};
            \node[anchor=center] (A) at (4,2) {\corelabelsize $\catvariableof{S}$};

            \node (of) at (0,0.5) {};
            \draw[thick, dashed, rounded corners=10pt]  ($(1.5,2)+(of)$) -- ($(1.5,2)-(of)$)  -- ($(-0.5,2)-(of)$) -- ($(-0.5,2)+(of)$) -- cycle;
            \node[anchor=center] (A) at (-1,2) {\corelabelsize $\edgeof{2}$};

            \draw[thick, dashed, rounded corners=10pt]  ($(2.5,2)+(of)$) -- ($(2.5,2)-(of)$)  -- ($(4.5,2)-(of)$) -- ($(4.5,2)+(of)$) -- cycle;
            \node[anchor=center] (A) at (5,2) {\corelabelsize $\edgeof{1}$};
            \node (of) at (0,0.75) {};
            \draw[thick, dashed, rounded corners=10pt]  ($(0.5,2)+(of)$) -- ($(0.5,2)-(of)$)  -- ($(3.5,2)-(of)$) -- ($(3.5,2)+(of)$) -- cycle;
            \node[anchor=center] (A) at (2,1) {\corelabelsize $\edgeof{0}$};

            \begin{scope}[shift={(7,0)}]
                \node[anchor=center] (A) at (-0.5,3) {\corelabelsize $b)$};

                \node[circle, draw, thick, minimum size = \nodeminsize] (A) at (0,2) {};
                \node[anchor=center] (E0) at (0,2) {\corelabelsize $\edgeof{2}$};

                \node[circle, draw, thick, minimum size = \nodeminsize] (A) at (2,2) {};
                \node[anchor=center] (E1) at (2,2) {\corelabelsize $\edgeof{0}$};

                \node[circle, draw, thick, fill=white, minimum size = \nodeminsize] (A) at (4,2) {};
                \node[anchor=center] (E2) at (4,2) {\corelabelsize $\edgeof{1}$};

                \draw[->-] (E0) to[bend right = 40] (E1);
                \node[anchor=center] at (1,2.75) {\corelabelsize $(\edgeof{2},\edgeof{0})$};
                \draw[->-] (E1) to[bend right = 40] (E0);
                \node[anchor=center] at (1,1.25) {\corelabelsize $(\edgeof{2},\edgeof{0})$};
                \draw[->-] (E1) to[bend right = 40] (E2);
                \node[anchor=center] at (3,2.75) {\corelabelsize $(\edgeof{0},\edgeof{1})$};
                \draw[->-] (E2) to[bend right = 40] (E1);
                \node[anchor=center] at (3,1.25) {\corelabelsize $(\edgeof{1},\edgeof{0})$};
            \end{scope}
        \end{tikzpicture}
    \end{center}
    \caption{a) Sketch of the overlap of the edges, resulting in the message directions b) $\dirovedges=\{(\edgeof{2},\edgeof{0}),(\edgeof{0},\edgeof{2}),(\edgeof{0},\edgeof{1}),(\edgeof{1},\edgeof{0})\}$.}\label{fig:studentMessagePassingDirections}
\end{figure}
    \section{The neural paradigm}\label{sec:neurPar}

The neural paradigm of artificial intelligence exploits the decomposition of functions into neurons, which are aligned in a directed acyclic graph.
We show in this section how functions decomposable into neurons can be represented by tensor networks.
To this end, we formalize discrete neural models by decomposition graphs and formally prove the corresponding decomposition of their basis encodings.

\subsection{Function decomposition}

As a main principle of tensor decompositions, we now show that basis encodings of composition functions are contractions of the basis encodings of their components.

\begin{lemma}
    \label{lem:formulaDecomp}
    Let $\formulaat{\shortcatvariables}$ be a composition of a $\seldim$-ary connective function $\chainingfunction$ and functions $\formulaofat{\selindex}{\shortcatvariables}$, where $\selindexin$, that is for $\shortcatindices\in\atomstates$ we have
    \begin{align*}
        \formula(\shortcatindices)
        = \chainingfunction\left(\formulaofat{0}{\shortcatindices}, \dots, \formulaofat{\seldim-1}{\shortcatindices}\right) \, .
    \end{align*}
    Then, we have (see \figref{fig:functionDecomposition})
    \begin{align*}
        \bencodingofat{\formula}{\headvariableof{\formula},\shortcatvariables}
        = \contractionof{
            \{\bencodingofat{\chainingfunction}{\headvariableof{\formula},\headvariables}\}
            \cup \{\bencodingofat{\formulaof{\selindex}}{\headvariableof{\selindex},\shortcatvariables} \wcols \selindexin\}
        }{\headvariableof{\formula},\shortcatvariables} \, .
    \end{align*}
\end{lemma}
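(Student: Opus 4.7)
The plan is to verify the identity coordinatewise using the explicit formula for the basis encoding from \defref{def:functionRepresentation} together with the definition of contraction from \defref{def:contraction}. Since both sides are tensors with open variables $\headvariableof{\formula}$ and $\shortcatvariables$, it suffices to show that for every fixed index tuple $(\headindexof{\formula}, \shortcatindices)$ the two coordinate values coincide. This reduces the lemma to a finite combinatorial identity involving one-hot vectors, which should collapse cleanly because all relevant encodings are $\{0,1\}$-valued.

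First I would unfold the right-hand side. By \defref{def:contraction}, evaluating the contraction at $(\headvariableof{\formula}=\headindexof{\formula}, \shortcatvariables=\shortcatindices)$ gives
\begin{align*}
\sum_{\shortheadindices} \bencodingofat{\chainingfunction}{\headvariableof{\formula}=\headindexof{\formula},\headvariables=\shortheadindices}
\cdot \prod_{\selindexin} \bencodingofat{\formulaof{\selindex}}{\headvariableof{\selindex}=\headindexof{\selindex},\shortcatvariables=\shortcatindices}\,.
\end{align*}
Using \defref{def:functionRepresentation} and the one-hot property from \defref{def:onehotenc}, each factor in the product equals $1$ exactly when $\headindexof{\selindex} = \formulaofat{\selindex}{\shortcatindices}$ and vanishes otherwise. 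Hence the only surviving summand is the one with $\shortheadindices = (\formulaofat{0}{\shortcatindices},\ldots,\formulaofat{\seldim-1}{\shortcatindices})$, and the sum collapses to $\bencodingofat{\chainingfunction}{\headvariableof{\formula}=\headindexof{\formula},\headvariables=(\formulaofat{0}{\shortcatindices},\ldots,\formulaofat{\seldim-1}{\shortcatindices})}$.

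Applying \defref{def:functionRepresentation} once more to $\chainingfunction$, this value is $1$ precisely when $\headindexof{\formula} = \chainingfunctionof{\formulaofat{0}{\shortcatindices},\ldots,\formulaofat{\seldim-1}{\shortcatindices}} = \formulaat{\shortcatindices}$ and vanishes otherwise. But this is exactly the coordinate $\bencodingofat{\formula}{\headvariableof{\formula}=\headindexof{\formula},\shortcatvariables=\shortcatindices}$ by direct application of \defref{def:functionRepresentation} to $\formula$, yielding equality of both sides at every coordinate.

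The argument is essentially bookkeeping, so I do not anticipate a genuine obstacle. The only subtlety is to keep the three families of variables $\shortcatvariables$, $\headvariables$, and $\headvariableof{\formula}$ notationally distinct, and to be explicit that the auxiliary variables $\headvariables$ enumerate the image tuples of the intermediate maps $\formulaof{\selindex}$ so that $\bencodingof{\chainingfunction}$ is evaluated at the correct joint index. A brief remark should also clarify that the contraction with $\headvariables$ closed produces a tensor with exactly the open indices $(\headvariableof{\formula},\shortcatvariables)$, matching the left-hand side.
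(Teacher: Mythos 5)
Your proposal is correct and follows essentially the same route as the paper's proof: fix the input index, observe that each sliced basis encoding $\bencodingofat{\formulaof{\selindex}}{\headvariableof{\selindex},\indexedshortcatvariables}$ reduces to the one-hot vector $\onehotmapofat{\formulaof{\selindex}(\shortcatindices)}{\headvariableof{\selindex}}$, so the sum over the intermediate head indices collapses to the single term selecting $\chainingfunctionof{\formulaofat{0}{\shortcatindices},\dots,\formulaofat{\seldim-1}{\shortcatindices}}=\formula(\shortcatindices)$. The only cosmetic difference is that the paper slices on $\shortcatvariables$ and keeps $\headvariableof{\formula}$ open as a vector, whereas you go fully coordinatewise; the content is identical.
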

\begin{proof}
    For any $\shortcatindicesin$ we have
    \begin{align*}
        &\contractionof{
            \{\bencodingofat{\chainingfunction}{\headvariableof{\formula},\headvariables}\}
            \cup \{\bencodingofat{\formulaof{\selindex}}{\headvariableof{\selindex},\shortcatvariables} \wcols \selindexin\}
        }{\headvariableof{\formula},\indexedshortcatvariables} \\
        &\quad\quad= \contractionof{
            \{\bencodingofat{\chainingfunction}{\headvariableof{\formula},\headvariables}\}
            \cup \{\bencodingofat{\formulaof{\selindex}}{\headvariableof{\selindex},\indexedshortcatvariables} \wcols \selindexin\}
        }{\headvariableof{\formula}} \\
        &\quad\quad= \contractionof{
            \{\bencodingofat{\chainingfunction}{\headvariableof{\formula},\headvariables}\}
            \cup \{\onehotmapofat{\formulaof{\selindex}(\shortcatindices)}{\headvariableof{\selindex}} \wcols \selindexin\}
        }{\headvariableof{\formula}} \\
        &\quad\quad= \onehotmapofat{\formula(\shortcatindices)}{\headvariableof{\formula}}\\
        &\quad\quad= \bencodingofat{\formula}{\headvariableof{\formula},\indexedshortcatvariables} \, .
    \end{align*}
    Thus, the tensors on both sides of the equation coincide in all slices to $\shortcatvariables$ and are thus equal.
\end{proof}

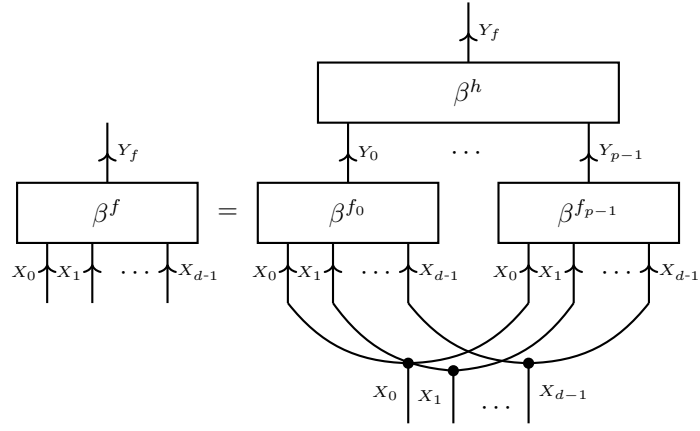
\begin{figure}[t]
    \begin{center}
        \begin{tikzpicture}[scale=0.4, thick]
    \begin{scope}


        \begin{scope}
        [shift={(-8,0)}]
            \draw[->-] (5.5,-9)--(5.5,-7) node[midway,right] {\colorlabelsize $\headvariableof{\exformula}$};
            \drawatomcore{3.5}{-8}{$\bencodingof{\exformula}$}
            \drawatomindices{3.5}{-12}
        \end{scope}

        \node[anchor=center] (text) at (1.5,-10) {${=}$};

        \draw[->-] (9.5,-5)--(9.5,-3) node[midway,right] {\colorlabelsize $\headvariableof{\exformula}$};

        \node[anchor=center] (text) at (9.5,-6) {$\bencodingof{\chainingfunction}$};
        \draw (4.5,-7) rectangle (14.5,-5);

        \draw[->-] (5.5,-9)--(5.5,-7) node[midway,right] {\colorlabelsize $\headvariableof{0}$};

        \node[anchor=center] at (9.5,-8) {$\cdots$};

        \drawatomcore{3.5}{-8}{$\bencodingof{\formulaof{0}}$}
        \drawatomindices{3.5}{-12}

        \begin{scope}
        [shift={(8,0)}]

            \draw[->-] (5.5,-9)--(5.5,-7) node[midway,right] {\colorlabelsize $\headvariableof{\seldim-1}$};

            \drawatomcore{3.5}{-8}{$\bencodingof{\formulaof{\seldim-1}}$}
            \drawatomindices{3.5}{-12}

        \end{scope}

        \draw[fill] (7.5,-15) circle (\dotsize);
        \draw[] (7.5,-15) to[bend left=25] (3.5,-13);
        \draw[] (7.5,-15) to[bend right=25] (11.5,-13);

        \draw[fill] (9,-15.25) circle (\dotsize);
        \draw[] (9,-15.25) to[bend left=25] (5,-13);
        \draw[] (9,-15.25) to[bend right=25] (13,-13);

        \draw[fill] (11.5,-15) circle (\dotsize);
        \draw[] (11.5,-15) to[bend left=25] (7.5,-13);
        \draw[] (11.5,-15) to[bend right=25] (15.5,-13);

        \draw[] (7.5,-15)--(7.5,-17) node[midway,left] {\colorlabelsize $\catvariableof{0}$};
        \draw[] (9,-15.25)--(9,-17) node[midway,left] {\colorlabelsize $\catvariableof{1}$};
        \node[anchor=center] (text) at (10.5,-16.5) {$\cdots$};
        \draw[] (11.5,-15)--(11.5,-17) node[midway,right] {\colorlabelsize $\catvariableof{\atomorder-1}$};

    \end{scope}
\end{tikzpicture}
    \end{center}
    \caption{Tensor network decomposition of the basis encoding of a function $\exformula$, which is the composition of the functions $\formulaof{0},\ldots,\formulaof{\seldim-1}$ with a function $\chainingfunction$.}
    \label{fig:functionDecomposition}
\end{figure}

\noindent We now define a more generic decomposition of discrete functions.

\begin{definition}
    \label{def:decompositionHypergraph}
    A \emph{decomposition hypergraph} is a directed acyclic hypergraph $\graph=(\nodes,\edges)$ such that the following holds.
    \begin{itemize}
        \item Each node $\nodein$ is decorated by a set $\arbsetof{\node}$ of finite cardinality $\catdimof{\node}$, a variable $\catvariableof{\node}$, and an index interpretation function
        \begin{align*}
            \indexinterpretationof{\node} \defcols [\catdimof{\node}] \rightarrow \arbsetof{\node} \, .
        \end{align*}
        \item Each directed hyperedge $(\incomingnodes,\outgoingnodes)$ has at least one outgoing node, that is $\outgoingnodes\neq\varnothing$, and is decorated by an activation function 
        \begin{align*}
            \secexfunctionof{\edge} \defcols
            \bigtimes_{\node\in\incomingnodes} \arbsetof{\node}
            \rightarrow \bigtimes_{\node\in\outgoingnodes} \arbsetof{\node} \, .
        \end{align*}
        \item Each node $\nodein$ appears at most once as an outgoing node. 
        \item The nodes not appearing as an outgoing node are enumerated by $\node^{\insymbol}_{[\atomorder]}$.
        We abbreviate the corresponding variables by $\catvariableof{\node^{\insymbol}_{[\atomorder]}}=\shortcatvariables$. 
        \item The nodes not appearing as an incoming node are enumerated by $\node^{\outsymbol}_{[\seldim]}$.
        We abbreviate the variables by $\catvariableof{\node^{\outsymbol}_{[\selindex]}}=\headvariables$. 
    \end{itemize}
    We assign for each $\catenumeratorin$ restriction functions
    \begin{align*}
        \exfunctionof{\node^{\insymbol}_\catenumerator}
        \defcols \bigtimes_{\seccatenumerator\in[\catorder]} \arbsetof{\node^{\insymbol}_{\seccatenumerator}} \rightarrow \arbsetof{\node^{\insymbol}_\catenumerator}
        \quad,\quad  \restrictionofto{\catindexof{[\catorder]}}{\catenumerator} = \catindexof{\catenumerator}
    \end{align*}
    to the nodes $\node^{\insymbol}_{[\atomorder]}$ and recursively assign to each further node $\node$ a node function 
    \begin{align*}
        \exfunctionof{\node} \wcols \bigtimes_{\catenumeratorin} \arbsetof{\node^{\insymbol}_\catenumerator} \rightarrow \arbsetof{\node}
        \quad,\quad
        \exfunctionof{\node}(\catindexof{[\catorder]})
        = \restrictionofto{\secexfunctionof{\edge_\node}
        \left(
            \bigtimes_{\secnode\in\incomingnodes}\exfunctionof{\secnode}(\catindexof{[\catorder]})
        \right)}{\node} \, ,
    \end{align*}
    where $\edgeof{\node}$ is to each $\node\in\incomingnodes$ the unique hyperedge with outgoing nodes $\{\node\}$.
    We then call the function
    \begin{align*}
        \exfunctionof{\graph} \wcols \bigtimes_{\catenumeratorin} \arbsetof{\node^{\insymbol}_\catenumerator} \rightarrow \bigtimes_{\selindexin} \arbsetof{\node^{\outsymbol}_\selindex}
        \quad,\quad
        \exfunctionof{\graph} = \bigtimes_{\selindexin} \exfunctionof{\node^{\outsymbol}_\selindex}
    \end{align*}
    the composition function to the decomposition hypergraph $\graph$.
\end{definition}

The neural paradigm in AI can be modeled by the existence of decomposition hypergraphs for functions on large sets.
We now show how decomposition hypergraphs enable the sparse representation of composition functions by tensor networks.

\begin{theorem}
    \label{the:functionDecompositionRep}
    For any decomposition hypergraph $\graph$ with composition formula $\exfunctionof{\graph}$, we have
    \begin{align*}
        \bencodingofat{\exfunctionof{\graph}}{\headvariables,\shortcatvariables}
        = \contractionof{\left\{\bencodingofat{\secexfunctionof{\edge}}{\catvariableof{\outgoingnodes},\catvariableof{\incomingnodes}} \wcols \edge=(\incomingnodes,\outgoingnodes)\in\edges\right\}
        }{\headvariables,\shortcatvariables} \, .
    \end{align*}
\end{theorem}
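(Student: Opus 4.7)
The plan is to proceed by structural induction along a topological ordering of the decomposition hypergraph. For a node $\nodein$, let $\edges^{(\node)}$ denote the set of hyperedges $\edge=(\incomingnodes,\outgoingnodes)$ whose unique outgoing decorations lie in the ancestral subgraph of $\node$ (including $\node$ itself). I will first establish the following intermediate claim for every $\nodein$:
\begin{align*}
\bencodingofat{\exfunctionof{\node}}{\catvariableof{\node},\shortcatvariables}
= \contractionof{\left\{\bencodingofat{\secexfunctionof{\edge}}{\catvariableof{\outgoingnodes},\catvariableof{\incomingnodes}} \wcols \edge=(\incomingnodes,\outgoingnodes) \in \edges^{(\node)}\right\}}{\catvariableof{\node},\shortcatvariables}\, .
\end{align*}

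For the base case, the input nodes $\node^{\insymbol}_{\catenumerator}$ have empty ancestral edge sets, and $\exfunctionof{\node^{\insymbol}_\catenumerator}$ is the $\catenumerator$-th coordinate projection. Its basis encoding factorizes as the identity $\identityat{\catvariableof{\node^{\insymbol}_\catenumerator},\catvariableof{\node^{\insymbol}_\catenumerator}}$ on the $\catenumerator$-th input tensored with trivial vectors $\onesat{\catvariableof{\node^{\insymbol}_\seccatenumerator}}$ for $\seccatenumerator \neq \catenumerator$, which matches the empty contraction on the right-hand side by the convention that unreferenced open variables contract against $\ones$.

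For the inductive step at a non-input node $\node$, let $\edgeof{\node}=(\incomingnodes_\node,\outgoingnodes_\node)$ be the unique edge with $\node\in\outgoingnodes_\node$. Writing $\secexfunctionof{\edgeof{\node},\node}$ for the scalar restriction of $\secexfunctionof{\edgeof{\node}}$ to the $\node$-component, the recursion in Definition~\ref{def:decompositionHypergraph} expresses $\exfunctionof{\node}$ as the composition of $\secexfunctionof{\edgeof{\node},\node}$ with the functions $\exfunctionof{\secnode}$ for $\secnode\in\incomingnodes_\node$. Applying \lemref{lem:formulaDecomp} to this scalar composition yields
\begin{align*}
\bencodingofat{\exfunctionof{\node}}{\catvariableof{\node},\shortcatvariables}
= \contractionof{\{\bencodingofat{\secexfunctionof{\edgeof{\node},\node}}{\catvariableof{\node},\catvariableof{\incomingnodes_\node}}\}\cup\{\bencodingofat{\exfunctionof{\secnode}}{\catvariableof{\secnode},\shortcatvariables}\wcols \secnode\in\incomingnodes_\node\}}{\catvariableof{\node},\shortcatvariables}\, .
\end{align*}
I then reconcile the scalar restriction with the full edge encoding by checking that marginalizing the non-$\node$ outgoing variables of $\edgeof{\node}$ against trivial vectors yields the restricted encoding, i.e.
\begin{align*}
\bencodingofat{\secexfunctionof{\edgeof{\node},\node}}{\catvariableof{\node},\catvariableof{\incomingnodes_\node}}
= \contractionof{\bencodingofat{\secexfunctionof{\edgeof{\node}}}{\catvariableof{\outgoingnodes_\node},\catvariableof{\incomingnodes_\node}}}{\catvariableof{\node},\catvariableof{\incomingnodes_\node}}\, ,
\end{align*}
which follows directly from the one-hot tensor-product factorization. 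Substituting the induction hypothesis for each $\exfunctionof{\secnode}$ and invoking associativity of contractions, one recovers the claim with $\edges^{(\node)}=\{\edgeof{\node}\}\cup\bigcup_{\secnode\in\incomingnodes_\node}\edges^{(\secnode)}$, where the union's components are disjoint by the uniqueness of outgoing-node assignments. The theorem then follows by taking the tensor product over the output nodes $\node^{\outsymbol}_{[\seldim]}$: basis encodings of a tuple-valued function factor into a tensor product of the scalar basis encodings, and $\bigcup_{\selindexin}\edges^{(\node^{\outsymbol}_\selindex)}=\edges$ because every edge reaches some terminal node in the DAG.

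\textbf{Main obstacle.} The key technical difficulty is that \lemref{lem:formulaDecomp} handles only scalar-valued connectives, whereas a decomposition hypergraph permits edges with multiple outgoing nodes. The bridge is the restriction identity above, which translates the set-theoretic restriction \emph{on functions} into the tensorial operation of \emph{contracting out} the unused outgoing variables. A secondary bookkeeping concern is ensuring that when the induction is assembled, each edge of $\graph$ appears exactly once in the final contraction; this is guaranteed by the DAG structure together with the uniqueness clause in \defref{def:decompositionHypergraph} that each node appears at most once as an outgoing node.
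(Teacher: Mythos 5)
Your overall strategy—induction along the DAG, reducing each node to \lemref{lem:formulaDecomp} via a restriction of the multi-output edge functions—is close in spirit to the paper's proof, but there is a genuine gap in how you assemble the local statements into the global contraction. The claim that the ancestral edge sets are disjoint ``by the uniqueness of outgoing-node assignments'' is false: that uniqueness only guarantees each node is \emph{computed} once, not that its value is \emph{consumed} once. In any decomposition hypergraph with fan-out (a node appearing as an incoming node of two different hyperedges), the sets $\edges^{(\secnode_1)}$ and $\edges^{(\secnode_2)}$ for two incoming nodes of $\node$ overlap, and likewise the sets $\edges^{(\node^{\outsymbol}_\selindex)}$ overlap across output nodes. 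Consequently your ``associativity of contractions'' step—flattening $\contractionof{\{\bencodingof{\secexfunctionof{\edgeof{\node},\node}}\}\cup\{\contractionof{\edges^{(\secnode)}}{\catvariableof{\secnode},\shortcatvariables}\}}{\cdot}$ into a single contraction over $\{\edgeof{\node}\}\cup\bigcup_{\secnode}\edges^{(\secnode)}$—is not a valid multilinear identity: the nested form sums the shared bound variables independently in each copy, while the flattened form sums them jointly and counts each shared edge once. The same problem recurs in the final ``tensor product over output nodes,'' which is not even a tensor product in the sense of \exaref{exa:tensorProduct} since all factors carry the variables $\shortcatvariables$. The equalities you need do hold here, but only because each shared sub-network, sliced at a fixed input, contracts to a one-hot (hence rank-one, deterministic) tensor, so independent versus joint summation of its bound variables gives the same result; that is an extra argument your proof does not supply, and it fails for generic tensors.

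The paper avoids this entirely by running the induction over $\cardof{\edges}$ while keeping \emph{all} non-input variables open: for each fixed input index the full network contracts to $\bigotimes_{\node}\onehotmapofat{\exfunctionof{\node}(\shortcatindices)}{\catvariableof{\node}}$, and the auxiliary variables are only summed out at the very end, where each one-hot factor trivially contracts to $1$. To repair your argument, either adopt that device (state your intermediate claim with all ancestral intermediate variables left open), or explicitly prove and invoke a lemma that duplicating a deterministic (basis-encoded) sub-network and contracting its copies independently agrees with contracting a single shared copy. As a minor additional point, your base case is muddled: for an input node the head variable coincides with one of the $\shortcatvariables$, so the empty contraction yields $\onesat{\shortcatvariables}$ rather than the identity-times-trivial-vectors tensor you describe; the induction is cleaner if input nodes are excluded from the intermediate claim altogether.
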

\begin{proof}
    We show by induction over the number of edges in $\graph$ that, for any $\shortcatindicesin$ and $\node\in\nodes$, we have
    \begin{align}
        \label{eq:indHypNeurThm}
        \contractionof{\left\{\bencodingofat{\secexfunctionof{\edge}}{\catvariableof{\outgoingnodes},\catvariableof{\incomingnodes}} \wcols \edge=(\incomingnodes,\outgoingnodes)\in\edges\right\}
        }{\catvariableof{\nodes\setexcept{\node^{\insymbol}_{[\atomorder]}}},\indexedshortcatvariables}
        = \bigotimes_{\node\in\nodes\setexcept{\node^{\insymbol}_{[\atomorder]}}} \onehotmapofat{\exfunctionof{\node}(\shortcatindices)}{\catvariableof{\node}} \, .
    \end{align}

    $\cardof{\edges}=1$: If there is a single edge $\edge=(\incomingnodes,\outgoingnodes)$ in $\edges$, we have $\shortcatvariables=\catvariableof{\incomingnodes}$ and $\nodes\setexcept{\node^{\insymbol}_{[\atomorder]}}=\outgoingnodes$.
    In this case \eqref{eq:indHypNeurThm} holds since
    \begin{align*}
        \bencodingofat{\secexfunctionof{\edge}}{\catvariableof{\outgoingnodes},\indexedcatvariableof{\incomingnodes}}
        = \bigotimes_{\node\in\nodes\setexcept{\node^{\insymbol}_{[\atomorder]}}} \onehotmapofat{\exfunctionof{\node}(\shortcatindices)}{\catvariableof{\node}} \, .
    \end{align*}

    $(\cardof{\edges}=n)\Rightarrow(\cardof{\edges}=n+1)$:
    Let us now assume that \eqref{eq:indHypNeurThm} holds for all graphs with $\cardof{\edges}\leq n$ and let $\graph$ be a graph with $\cardof{\edges}=n+1$.
    We find an edge $\secedge=(\incomingnodes,\outgoingnodes)$ such that all nodes in $\incomingnodes$ are only appearing as outgoing nodes in other edges.
    \begin{align*}
        &\contractionof{\left\{\bencodingofat{\secexfunctionof{\edge}}{\catvariableof{\edge}} \wcols \edge\in\edges\right\}
        }{\catvariableof{\nodes\setexcept{\node^{\insymbol}_{[\atomorder]}}},\indexedshortcatvariables} \\
        &\quad=
        \contractionof{
            \bencodingofat{\secedge}{\catvariableof{\outgoingnodes},\catvariableof{\incomingnodes}},
            \contractionof{\left\{\bencodingofat{\secexfunctionof{\edge}}{\catvariableof{\edge}} \wcols \edge\in\edges\setexcept{\{\secedge\}}\right\}
            }{\catvariableof{\nodes\setexcept{\{\node^{\insymbol}_{[\atomorder]}\cup\outgoingnodes\}}},\indexedshortcatvariables}
        }{\catvariableof{\nodes\setexcept{\node^{\insymbol}_{[\atomorder]}}},\indexedshortcatvariables} \\
        &\quad=
        \contractionof{
            \bencodingofat{\secedge}{\catvariableof{\outgoingnodes},\catvariableof{\incomingnodes}},
            \bigotimes_{\node\in\nodes\setexcept{\{\node^{\insymbol}_{[\atomorder]}\cup\outgoingnodes\}}}
            \onehotmapofat{\exfunctionof{\node}(\shortcatindices)}{\catvariableof{\node}}
        }{\catvariableof{\nodes\setexcept{\node^{\insymbol}_{[\atomorder]}}},\indexedshortcatvariables} \\
        &\quad=
        \bigotimes_{\node\in\nodes\setexcept{\node^{\insymbol}_{[\atomorder]}}}
        \onehotmapofat{\exfunctionof{\node}(\shortcatindices)}{\catvariableof{\node}} \, .
    \end{align*}
    Here, in the second equation we used the induction hypothesis on the subgraph $(\nodes,\edges\setexcept{\{\secedge\}})$ with $\cardof{\edges\setexcept{\{\secedge\}}}=n$ and a generic contraction property of basis encodings in the third equation.

    Thus, \eqref{eq:indHypNeurThm} holds always and we have for any $\shortcatindicesin$ that
    \begin{align*}
        \contractionof{\left\{\bencodingofat{\secexfunctionof{\edge}}{\catvariableof{\outgoingnodes},\catvariableof{\incomingnodes}} \wcols \edge=(\incomingnodes,\outgoingnodes)\in\edges\right\}
        }{\headvariables,\indexedshortcatvariables}
        &= \contractionof{
            \bigotimes_{\node\in\nodes\setexcept{\node^{\insymbol}_{[\atomorder]}}} \onehotmapofat{\exfunctionof{\node}(\shortcatindices)}{\catvariableof{\node}}
        }{\headvariables,\shortcatvariables} \\
        & = \bigotimes_{\node\in\node^{\outsymbol}_{[\seldim]}}
        \onehotmapofat{\exfunctionof{\node}(\shortcatindices)}{\catvariableof{\node}} \\
        & = \onehotmapofat{\exfunctionof{\graph}}{\headvariables} \\
        & = \bencodingofat{\exfunctionof{\graph}}{\headvariables,\indexedshortcatvariables} \, .
    \end{align*}
    Keeping $\shortcatvariables$ open, the claim is established.
\end{proof}

When neurons have tunable parameters, we can discretize those by sets $\arbsetof{\catenumerator}$ and understand them as additional input variables.

\begin{example}[Sum of integers in $\catdim$-adic representation]
    \label{exa:madicRepresentation}
    We develop a tensor network representation of integer summations on the set $[\catdim^{\catorder}]=\{0,\ldots,\catdim^{\catorder}-1\}$, where $\catdim,\catorder\in\nn$,
    \begin{align*}
        + \defcols [\catdim^\catorder] \times [\catdim^\catorder] \rightarrow [\catdim^{\catorder+1}]
        \quad,\quad
        +(i,\tilde{i}) = i+\tilde{i} \,,
    \end{align*}
    which have a $\catdim$-adic representation of length $\catorder$.
    We define an index interpretation map
    \begin{align*}
        \indexinterpretation \defcols \bigtimes_{\catenumeratorin}[\catdim] \rightarrow [\catdim^{\catorder}]
        \quad,\quad
        \indexinterpretationat{\shortcatindices}
        = \sum_{\catenumeratorin} \catindexof{\catenumerator} \cdot \catdim^{\catenumerator} \, ,
    \end{align*}
    which enables the parameterization of $[\catdim^{\catorder}]$ as the states of $\catorder$ categorical variables $\shortcatvariables$ of dimension $\catdim$.
    We analogously represent a second set $[\catdim^{\catorder}]$ by variables $\tildecatvariableof{[\catorder]}$ and the set $[\catdim^{\catorder+1}]$ of possible sums by $\headvariableof{[\catorder+1]}$.
    The basis encoding of the sum is then
    \begin{align*}
        \bencodingofat{+}{\headvariableof{[\catorder+1]},\shortcatvariables,\seccatvariableof{[\catorder]}}
        = \sum_{\shortcatindices,\tildecatindexof{[\catorder]}} \onehotmapofat{\invindexinterpretationat{
            \indexinterpretationat{\shortcatindices} + \indexinterpretationat{\tildecatindexof{[\catorder]}}
        }}{\headvariableof{[\catorder+1]}}
        \otimes \onehotmapofat{\shortcatindices}{\shortcatvariables}
        \otimes \onehotmapofat{\tildecatindexof{[\catorder]}}{\tildecatvariableof{[\catorder]}} \, .
    \end{align*}
    Note that the tensor space of $\bencodingof{+}$ is of dimension $\catdim^{3\cdot\catorder+1}$ increasing exponentially in $\catorder$.
    Feasible representation of this tensor for large $\catorder$ therefore requires tensor network decompositions, which we now provide based on a decomposition hypergraph.
    The targeted function to be decomposed is the representation of the integer sum by
    \begin{align*}
        \marysumsymbol \defcols \left(\bigtimes_{\catenumeratorin}[\catdim]\right) \times \left(\bigtimes_{\catenumeratorin}[\catdim]\right) \rightarrow \bigtimes_{\catenumerator\in[\catorder+1]}[\catdim]
        \quad,\quad
        \marysumsymbol(\shortcatindices,\tildecatindexof{[\catorder]}) =
        \invindexinterpretationat{\indexinterpretationat{\shortcatindices}+\indexinterpretationat{\tildecatindexof{[\catorder]}}} \, .
    \end{align*}
    We build a decomposition hypergraph $\graph=(\nodes,\edges)$ (see \defref{def:decompositionHypergraph}) consisting of $4\cdot \catorder$ nodes (see \figref{fig:decompositionMarySum}a) .
    The nodes carry the $(3\cdot\catorder +1)$ variables $\shortcatvariables,\seccatvariableof{[\catorder]},\headvariableof{[\catorder+1]}$ of dimension $\catdim$ constructed above and $\catorder-1$ auxiliary variables $\thirdcatvariableof{[\catorder-1]}$ of dimension $2$ representing carry bits.
    The directed hyperedges of $\graph$ are
    \begin{align*}
        \edges
        =&\left\{(\{\catvariableof{0},\tildecatvariableof{0}\},\{\headvariableof{0},\thirdcatvariableof{0}\})\right\}
        \cup \left\{(\{\thirdcatvariableof{\catenumerator-1},\catvariableof{\catenumerator},\tildecatvariableof{\catenumerator}\},\{\headvariableof{\catenumerator},\thirdcatvariableof{\catenumerator}\})
                 \wcols \catenumerator\in\{1,\ldots,\catorder-2\}\right\} \\
        &\cup \left\{(\{\thirdcatvariableof{\catorder-2},\catvariableof{\catorder-1},\tildecatvariableof{\catorder-1}\},\{\headvariableof{\catorder-1},\headvariableof{\catorder}\})\right\}
    \end{align*}
    and are decorated by local summation functions
    \begin{align*}
        \modsumsymbol \defcols [2] \times [\catdim] \times [\catdim] \rightarrow [\catdim] \times [2]
        \quad,\quad
        \modsumsymbol(\thirdcatindex,\catindex,\tildecatindex)
        = \left((\thirdcatindex + \catindex + \tildecatindex) \modspace\catdim,
              \left\lfloor\frac{\thirdcatindex + \catindex + \tildecatindex}{\catdim}\right\rfloor\right) \, .
    \end{align*}
    Since to the first hyperedge we do not have a carry bit, the decorating function is the restriction of the first argument to $0$.

    It is known that the composition of the local summations $\modsumsymbol$ is the global summation $\marysumsymbol$ of integers in $\catdim$-adic representation.
    Thus, the composition function $\exfunctionof{\graph}$ is $\marysumsymbol$.
    By \theref{the:functionDecompositionRep} we have a decomposition of the basis encoding to $\exfunctionof{\graph}$ (see \figref{fig:decompositionMarySum}b) as
    \begin{align*}
        \bencodingofat{\marysumsymbol}{\headvariableof{[\catorder+1]},\shortcatvariables,\tildecatvariableof{[\catorder]}}
        = \breakablecontractionof{
            &\{\bencodingofat{\modsumsymbol,0}{\headvariableof{0},\thirdcatvariableof{0},\catvariableof{0},\tildecatvariableof{0}}\} \cup \\
            &\{\bencodingofat{\modsumsymbol,\catenumerator}{
            \headvariableof{\catenumerator},\thirdcatvariableof{\catenumerator},\catvariableof{\catenumerator},\tildecatvariableof{\catenumerator},\thirdcatvariableof{\catenumerator-1}
            }\wcols \catenumerator\in\{1,\ldots,\catorder-2\}\} \cup \\
            &\{\bencodingofat{\modsumsymbol,\catorder-2}{\headvariableof{\catorder-1},\headvariableof{\catorder},\catvariableof{\catorder-1},\tildecatvariableof{\catorder-1},\thirdcatvariableof{\catorder-2}}\}
        }{\headvariableof{[\catorder+1]},\shortcatvariables,\tildecatvariableof{[\catorder]}} \, .
    \end{align*}

    \begin{figure}[t]
        \begin{center}
            \begin{tikzpicture}[scale=0.35, thick]

                \begin{scope}[shift={(-0.5,10)}]

                    \node[anchor=east]  at (-9,4) {$a)$};

                    \node (of) at (0,1.35) {};

                    \draw[thick, dashed, rounded corners=10pt]  ($(-5,3)+(of)$) -- ($(-5,3)-(of)$)  -- ($(28.5,3)-(of)$) -- ($(28.5,3)+(of)$) -- cycle;
                    \node[anchor=center] (A) at (-3,3) {\corelabelsize $\node^{\outsymbol}$};

                    \draw[thick, dashed, rounded corners=10pt]  ($(-5,-3)+(of)$) -- ($(-5,-3)-(of)$)  -- ($(28.5,-3)-(of)$) -- ($(28.5,-3)+(of)$) -- cycle;
                    \node[anchor=center] (A) at (-3,-3) {\corelabelsize $\node^{\insymbol}$};

                    \node[circle, draw, thick, fill=\nodegrayscale, minimum size = \nodeminsize] (Y0) at (1.5,3) {};
                    \node[] (text) at (1.5,3) {\corelabelsize $\headvariableof{0}$};
                    \node[circle, draw, thick, fill=\nodegrayscale, minimum size = \nodeminsize] (X00) at (0,-3) {};
                    \node[] (text) at (0,-3) {\corelabelsize $\catvariableof{0}$};
                    \node[circle, draw, thick, fill=\nodegrayscale, minimum size = \nodeminsize] (X10) at (3,-3) {};
                    \node[] (text) at (3,-3) {\corelabelsize $\tildecatvariableof{0}$};
                    \node[circle, draw, thick, fill=\nodegrayscale, minimum size = \nodeminsize] (Z0) at (5,0) {};
                    \node[] (text) at (5,0) {\corelabelsize $\thirdcatvariableof{0}$};

                    \coordinate (m0) at (1.5,0);
                    \node[anchor=east]  at (1.5,0) {$\edgeof{0}$};
                    \draw[->-] (X00) -- (m0);
                    \draw[->-] (X10) -- (m0);
                    \draw[->-] (m0) -- (Y0);
                    \draw[->-] (m0) -- (Z0);

                    \node[circle, draw, thick, fill=\nodegrayscale, minimum size = \nodeminsize] (Y1) at (8.5,3) {};
                    \node[] (text) at (8.5,3) {\corelabelsize $\headvariableof{1}$};
                    \node[circle, draw, thick, fill=\nodegrayscale, minimum size = \nodeminsize] (X01) at (7,-3) {};
                    \node[] (text) at (7,-3) {\corelabelsize $\catvariableof{1}$};
                    \node[circle, draw, thick, fill=\nodegrayscale, minimum size = \nodeminsize] (X11) at (10,-3) {};
                    \node[] (text) at (10,-3) {\corelabelsize $\tildecatvariableof{1}$};
                    \node[circle, draw, thick, fill=\nodegrayscale, minimum size = \nodeminsize] (Z1) at (12,0) {};
                    \node[] (text) at (12,0) {\corelabelsize $\thirdcatvariableof{1}$};

                    \coordinate (m1) at (8.5,0);
                    \node[anchor=east]  at (8.5,0.75) {$\edgeof{1}$};
                    \draw[->-] (Z0) -- (m1);
                    \draw[->-] (X01) -- (m1);
                    \draw[->-] (X11) -- (m1);
                    \draw[->-] (m1) -- (Y1);
                    \draw[->-] (m1) -- (Z1);

                    \draw[->-] (Z1) -- (15,0) node[anchor=west]{$\cdots$};

                    \begin{scope}[shift={(15,0)}]
                        \node[circle, draw, thick, fill=\nodegrayscale, minimum size = \nodeminsize] (Z0) at (5,0) {};
                        \node[] (text) at (5,0) {\corelabelsize $\thirdcatvariableof{\catorder\shortminus2}$};

                        \node[circle, draw, thick, fill=\nodegrayscale, minimum size = \nodeminsize] (Y1) at (8.5,3) {};
                        \node[] (text) at (8.5,3) {\corelabelsize $\headvariableof{\catorder\shortminus 1}$};
                        \node[circle, draw, thick, fill=\nodegrayscale, minimum size = \nodeminsize] (X01) at (7,-3) {};
                        \node[] (text) at (7,-3) {\corelabelsize $\catvariableof{\catorder\shortminus1}$};
                        \node[circle, draw, thick, fill=\nodegrayscale, minimum size = \nodeminsize] (X11) at (10,-3) {};
                        \node[] (text) at (10,-3) {\corelabelsize $\tildecatvariableof{\catorder\shortminus1}$};
                        \node[circle, draw, thick, fill=\nodegrayscale, minimum size = \nodeminsize] (Z1) at (12,3) {};
                        \node[] (text) at (12,3) {\corelabelsize $\headvariableof{\catorder}$};

                        \coordinate (me) at (8.5,0);
                        \node[anchor=east]  at (8.5,1) {$\edgeof{\catorder\shortminus 1}$};
                        \draw[->-] (2,0)  -- (Z0);
                        \draw[->-] (Z0) -- (me);
                        \draw[->-] (X01) -- (me);
                        \draw[->-] (X11) -- (me);
                        \draw[->-] (me) -- (Y1);
                        \draw[->-] (me) -- (Z1);
                    \end{scope}

                \end{scope}

                \node[anchor=east]  at (-9,4) {$b)$};

                \begin{scope}[shift={(-8,0)}]
                    \draw (-2,-1) rectangle (4,1);
                    \draw[->-] (-1.25,1)--(-1.25,2.5) node[midway,left] {\colorlabelsize $\headvariableof{0}$};
                    \draw[->-] (3.25,1)--(3.25,2.5) node[midway,right] {\colorlabelsize $\headvariableof{\catorder\shortminus1}$};
                    \node[anchor=center] (A) at (1,2.5) {\corelabelsize $\cdots$};
                    \node[anchor=center] (A) at (1,0) {\corelabelsize $\bencodingof{\exfunctionof{\graph}}$};
                    \draw[-<-] (-1.5,-1)--(-1.5,-2.5) node[midway,left] {\colorlabelsize $\catvariableof{0}$};
                    \draw[-<-] (-1,-1)--(-1,-2.5) node[midway,right] {\colorlabelsize $\tildecatvariableof{0}$};
                    \draw[-<-] (3,-1)--(3,-2.5) node[midway,left] {\colorlabelsize $\catvariableof{\catorder\shortminus1}$};
                    \draw[-<-] (3.5,-1)--(3.5,-2.5) node[midway,right] {\colorlabelsize $\tildecatvariableof{\catorder\shortminus1}$};
                    \node[anchor=center] (A) at (1,-2.5) {\corelabelsize $\cdots$};
                \end{scope}

                \node[anchor=east]  at (-1.5,0) {$=$};

                \draw (-1,-1) rectangle (3,1);
                \node[anchor=center] (A) at (1,0) {\corelabelsize $\bencodingof{\modsumsymbol,0}$};
                \draw[->-] (1,1)--(1,3) node[midway,left] {\colorlabelsize $\headvariableof{0}$};
                \draw[-<-] (0,-1)--(0,-2.5) node[midway,left] {\colorlabelsize $\catvariableof{0}$};
                \draw[-<-] (2,-1)--(2,-2.5) node[midway,right] {\colorlabelsize $\tildecatvariableof{0}$};
                \draw[->-] (3,0)--(6,0) node[midway,above] {\colorlabelsize $\thirdcatvariableof{0}$};

                \begin{scope}[shift={(7,0)}]
                    \draw (-1,-1) rectangle (3,1);
                    \node[anchor=center] (A) at (1,0) {\corelabelsize $\bencodingof{\modsumsymbol,1}$};
                    \draw[->-] (1,1)--(1,3) node[midway,left] {\colorlabelsize $\headvariableof{1}$};
                    \draw[-<-] (0,-1)--(0,-2.5) node[midway,left] {\colorlabelsize $\catvariableof{1}$};
                    \draw[-<-] (2,-1)--(2,-2.5) node[midway,right] {\colorlabelsize $\tildecatvariableof{1}$};
                    \draw[->-] (3,0)--(6,0) node[midway,above] {\colorlabelsize $\thirdcatvariableof{1}$};
                \end{scope}

                \node[anchor=center] at (15.5,0) {$\cdots$};

                \begin{scope}[shift={(22,0)}]
                    \draw[->-] (-4,0)--(-1,0) node[midway,above] {\colorlabelsize $\thirdcatvariableof{\catorder\shortminus 2}$};
                    \draw (-1,-1) rectangle (3,1);
                    \node[anchor=center] (A) at (1,0) {\corelabelsize $\bencodingof{\modsumsymbol,\catorder-1}$};
                    \draw[->-] (1,1)--(1,3) node[midway,left] {\colorlabelsize $\headvariableof{\catorder\shortminus1}$};
                    \draw[-<-] (0,-1)--(0,-2.5) node[midway,left] {\colorlabelsize $\catvariableof{\catorder\shortminus1}$};
                    \draw[-<-] (2,-1)--(2,-2.5) node[midway,right] {\colorlabelsize $\tildecatvariableof{\catorder\shortminus1}$};
                    \draw (3,0)--(4.5,0) -- (4.5,1);
                    \draw[->-] (4.5,1)--(4.5,3) node[midway,left] {\colorlabelsize $\headvariableof{\catorder}$};
                \end{scope}

            \end{tikzpicture}
        \end{center}
        \caption{Example of a decomposition hypergraph to the sum of integers (see \exaref{exa:madicRepresentation}).
        a) Hypergraph of directed edges $\edgeof{\catenumerator}$ for $\catenumeratorin$, each decorated by an integer summation $+$ preparing an index $\headvariableof{\catenumerator}$ of the resulting sum.
        b) Corresponding tensor network decomposition of the basis encoded composition function, which is the sum of integers in $\catdim$-adic representation.}
        \label{fig:decompositionMarySum}
    \end{figure}

\end{example}

\subsection{Function evaluation by message passing}

We are now concerned with an efficient inference algorithm based on tensor network contractions. To evaluate a function given as a tensor network decomposition of its basis encoding, the whole network has to be contracted. As this can be infeasible for large networks, a message passing algorithm based on local contractions can be applied, compare \algoref{alg:treeBeliefPropagation} for a message passing algorithm for tensor networks on a on tree hypergraph.

\begin{algorithm}[hbt!]
    \caption{Directed Belief Propagation}\label{alg:directedBeliefPropagation}
    \begin{algorithmic}
        \Require Tensor network $\extnet$ on a directed hypergraph $\graph$
        \Ensure Messages $\{\messagewith\wcols(\sedge,\redge)\in\dirovedges\}$
        \iosepline
        \State Prepare directed message directions
        \begin{align*}
            \dirovedges = \left\{
                              \big((\incomingnodes_{0},\outgoingnodes_{0}),(\incomingnodes_{1},\outgoingnodes_{1})\big) \wcols
                              \incomingnodes_{0} \cap (\incomingnodes_{1},\outgoingnodes_{1}) = \varnothing
                              \ncond
                              \outgoingnodes_{1} \cap (\incomingnodes_{0},\outgoingnodes_{0}) = \varnothing
                              \ncond
                              \outgoingnodes_{0} \cap \incomingnodes_{1} \neq \varnothing
            \right\}
        \end{align*}
        \State Initialize a message queue $\scheduler=\{(\secsedge,\sedge) \wcols \secsedge \text{ has empty incoming nodes} \}$ 
        \While{$\scheduler$ not empty}
            \State Pop a $(\sedge,\redge)$ pair from $\scheduler$
            \State Update the message
            \begin{align*}
                \messagewith
                = \contractionof{\{\hypercoreofat{\sedge}{\catvariableof{\sedge}}\}
                    \cup \{\mesfromtoat{\secsedge}{\sedge}{\catvariableof{\secsedge\cap\sedge}} \wcols (\secsedge,\sedge)\in\dirovedges \ncond \secsedge\neq \redge\}
                }{\catvariableof{\sedge\cap \redge}}
            \end{align*}
            \State Update $\scheduler$ by all messages $(\redge,\thirdsedge)$ which have not yet been sent, if all messages $(\secsedge,\redge)$ have been sent.
        \EndWhile
        \State \Return Messages $\{\messagewith\wcols(\secsedge,\sedge)\in\dirovedges\}$
    \end{algorithmic}
\end{algorithm}

We apply the Directed Belief Propagation, Algorithm \algoref{alg:directedBeliefPropagation}, on a decomposition hypergraph, where we add hyperedges to each leaf node and assign one-hot encodings of input states.
We then show that the messages are the one-hot encodings to the evaluations of the node functions.

\begin{theorem}
    \label{the:directedBeliefPropagationExactness}
    Let $\graph$ be a decomposition graph and let us add hyperedges containing single input nodes, which are decorated by one-hot encodings.
    Then the messages computed in \algoref{alg:directedBeliefPropagation} are characterized by
    \begin{align*}
        \mesfromtowith{\sedge}{\redge}
        = \bigotimes_{\node\in\sedge\cap\redge} \onehotmapofat{\exfunctionof{\node}(\shortcatindices)}{\catvariableof{\node}} \, .
    \end{align*}
\end{theorem}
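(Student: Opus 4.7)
The plan is to proceed by induction along a topological ordering of the directed hyperedges of $\graph$, which exists since $\graph$ is a directed acyclic hypergraph. The invariant to maintain is exactly the claimed formula: once a pair $(\sedge,\redge)\in\dirovedges$ has been popped from $\scheduler$, the message $\mesfromtowith{\sedge}{\redge}$ equals $\bigotimes_{\node\in\sedge\cap\redge}\onehotmapofat{\exfunctionof{\node}(\shortcatindices)}{\catvariableof{\node}}$. I will first verify termination and the base case, and then carry out the inductive step using the basis encoding identity from \theref{the:functionDecompositionRep}.

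For the base case I consider the leaf hyperedges added to each input node $\node^{\insymbol}_\catenumerator$: these are decorated by $\onehotmapofat{\catindexof{\catenumerator}}{\catvariableof{\node^{\insymbol}_\catenumerator}}$ and their outgoing messages are computed with no incoming messages required. Hence such a message equals $\onehotmapofat{\catindexof{\catenumerator}}{\catvariableof{\node^{\insymbol}_\catenumerator}}$, which is precisely $\onehotmapofat{\exfunctionof{\node^{\insymbol}_\catenumerator}(\shortcatindices)}{\catvariableof{\node^{\insymbol}_\catenumerator}}$ by the defining restriction property of $\exfunctionof{\node^{\insymbol}_\catenumerator}$ in \defref{def:decompositionHypergraph}. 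Termination of the \whileSymbol{} loop follows from the DAG ordering: each hyperedge can fire its outgoing messages after it has received the messages along the ancestor chain.

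For the inductive step take $\sedge=(\incomingnodes,\outgoingnodes)$ being popped with target $\redge$. By the scheduling rule of \algoref{alg:directedBeliefPropagation} and the DAG property, every message $\mesfromtowith{\secsedge}{\sedge}$ with $(\secsedge,\sedge)\in\dirovedges$ has already been produced, and by the induction hypothesis it equals $\bigotimes_{\node\in\secsedge\cap\sedge}\onehotmapofat{\exfunctionof{\node}(\shortcatindices)}{\catvariableof{\node}}$. Crucially, by construction of the decomposition hypergraph, every node in $\incomingnodes$ is either an input node (covered by a leaf hyperedge) or the unique outgoing node of some earlier hyperedge, so $\incomingnodes\subset\bigcup_{(\secsedge,\sedge)\in\dirovedges}(\secsedge\cap\sedge)$. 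Substituting the messages into the definition of $\mesfromtowith{\sedge}{\redge}$ and collecting the one-hot factors at each $\node\in\incomingnodes$ yields
\begin{align*}
\mesfromtowith{\sedge}{\redge}
=\contractionof{\bencodingofat{\secexfunctionof{\sedge}}{\catvariableof{\outgoingnodes},\catvariableof{\incomingnodes}},\bigotimes_{\node\in\incomingnodes}\onehotmapofat{\exfunctionof{\node}(\shortcatindices)}{\catvariableof{\node}}}{\catvariableof{\sedge\cap\redge}}.
\end{align*}
Applying the defining property of the basis encoding $\bencodingof{\secexfunctionof{\sedge}}$ (each $\onehotmapofat{\exfunctionof{\node}(\shortcatindices)}{\catvariableof{\node}}$ selects the slice at $\catvariableof{\node}=\exfunctionof{\node}(\shortcatindices)$) and the recursive definition $\exfunctionof{\node}(\shortcatindices)=\restrictionofto{\secexfunctionof{\sedge}(\bigtimes_{\node'\in\incomingnodes}\exfunctionof{\node'}(\shortcatindices))}{\node}$ for $\node\in\outgoingnodes$, the contracted tensor collapses to $\bigotimes_{\node\in\outgoingnodes}\onehotmapofat{\exfunctionof{\node}(\shortcatindices)}{\catvariableof{\node}}$. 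Contracting the variables in $\outgoingnodes\setexcept{(\sedge\cap\redge)}$ with trivial delta vectors (a one-hot encoding summed over its index equals $1$) delivers the claimed product over $\sedge\cap\redge$.

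The main obstacle is the combinatorial bookkeeping: one must check that the union of incoming overlap sets exactly covers $\incomingnodes$ (so no open input variable remains after substitution) and that the scheduling rule allows exactly those messages to be available at the firing time of $(\sedge,\redge)$. Once this is settled by the DAG property and the uniqueness of a node as outgoing node (from \defref{def:decompositionHypergraph}), the algebraic collapse of the basis-encoding contraction is a direct instance of the identity already exploited in \lemref{lem:formulaDecomp} and the inductive step of \theref{the:functionDecompositionRep}, so no new calculation is required beyond that pattern.
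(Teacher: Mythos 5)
Your proof is correct and follows essentially the same route as the paper: induction over the sequence of computed messages, with the base case given by the one-hot encodings on the added input hyperedges and the inductive step collapsing the contraction of the basis encoding $\bencodingof{\secexfunctionof{\sedge}}$ against the incoming one-hot messages into the one-hot encodings of the evaluated node functions. The extra bookkeeping you supply (that the incoming overlap sets cover $\incomingnodes$, and that summing out the outgoing variables not in $\sedge\cap\redge$ contributes only trivial factors) is detail the paper leaves implicit, not a deviation in approach.
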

\begin{proof}
    We show the theorem inductively over the messages computed in \algoref{alg:directedBeliefPropagation}.
    The first message is sent from an input edge $\{[\catenumerator]\}$ to an edge $\edge$ of the decomposition graph and is by assumption the one-hot encoding of an input state $\onehotmapofat{\catindexof{\catenumerator}}{\catvariableof{\catenumerator}}$.

    We now assume that all previous messages satisfy the claimed equation at an arbitrary stage of the algorithm.
    The message computed in the \whileSymbol{} loop is then a contraction of one-hot encodings with basis encodings and
    \begin{align*}
        \mesfromtowith{\sedge}{\redge}
        &=\contractionof{\{\bencodingofat{\secexfunctionof{\sedge}}{\catvariableof{\outgoingnodes},\catvariableof{\incomingnodes}}\} \cup
            \{\mesfromtowith{\secsedge}{\sedge} \wcols (\secsedge,\sedge)\in\dirovedges\}
        }{\catvariableof{\sedge\cap\redge}} \\
        &=\contractionof{\{\bencodingofat{\secexfunctionof{\sedge}}{\catvariableof{\outgoingnodes},\catvariableof{\incomingnodes}}\} \cup
            \{\onehotmapofat{\exfunctionof{\node}(\shortcatindices)}{\catvariableof{\node}} \wcols \node\in\incomingnodes\}
        }{\catvariableof{\sedge\cap\redge}} \\
        &= \bigotimes_{\node\in\sedge\cap\redge} \onehotmapofat{\exfunctionof{\node}(\shortcatindices)}{\catvariableof{\node}} \, .
    \end{align*}
    Thus, the new message is also the tensor product of the one-hot encodings of the evaluated node functions.
    By induction, the property is therefore true for all messages.
\end{proof}

We notice that we can interpret any directed acyclic hypergraph for which each node appears exactly once as an outgoing node and which is decorated by boolean and directed tensors $\extnet$.
Edges with empty incoming sets are carrying one-hot encodings of input states and all further edges carry functions.

\begin{example}[Continuation of \exaref{exa:madicRepresentation}]
    \label{exa:madicPropagation}
    We now show how \algoref{alg:directedBeliefPropagation} can be exploited to compute an efficient message passing algorithm for the digits of the $\catdim$-adic sum.
    Given two numbers in $\catdim$-adic representation by the tuples $\shortcatindices$ and $\tildecatindexof{[\catorder]}$, we add the hyperedges with empty incoming nodes and single outgoing node
    \begin{align*}
        \bigcup_{\catenumeratorin}\left\{(\varnothing,\{\catvariableof{\catenumerator}\}),(\varnothing,\{\tildecatvariableof{\catenumerator}\})\right\}
    \end{align*}
    to the hypergraph of \exaref{exa:madicRepresentation} and decorate them by the digit one-hot encodings $\onehotmapofat{\catindexof{\catenumerator}}{\catvariableof{\catenumerator}}$ and $\onehotmapofat{\tildecatindexof{\catenumerator}}{\tildecatvariableof{\catenumerator}}$ (see \figref{fig:propagationMary}).
    We then apply the Directed Belief Propagation \algoref{alg:directedBeliefPropagation}.
    The initial messages queue then consists of the messages from the digit encoding.
    As sketched in \figref{fig:propagationMary}, to each digit there are three messages (with the exception of the first being two), which can be scheduled in consecutive epochs $\messagesymbol^{(\catenumerator,[3])}$.
    We then have by \theref{the:directedBeliefPropagationExactness} for $\catenumerator\in[\catorder-1]$ that
    \begin{align*}
        \contractionof{
            \bencodingofat{\modsumsymbol,\catenumerator}{\headvariableof{\catenumerator},\thirdcatvariableof{\catenumerator},\catvariableof{\catenumerator},\tildecatvariableof{\catenumerator},\thirdcatvariableof{\catenumerator-1}},
            \messagesymbol^{(\catenumerator-1,2)}[\thirdcatvariableof{\catenumerator-1}],
            \messagesymbol^{(\catenumerator,0)}[\catvariableof{\catenumerator}],
            \messagesymbol^{(\catenumerator,1)}[\tildecatvariableof{\catenumerator}]
        }{\thirdcatvariableof{\catenumerator}}
        =
        \onehotmapofat{\thirdcatindexof{\catenumerator}}{\thirdcatvariableof{\catenumerator}}\,,
    \end{align*}
    where $\thirdcatindexof{\catenumerator}$ is the value of the $\catenumerator$-th carry bit.
    The $\catenumerator$-th digit of the sum $\headindexof{\catenumerator}$ can further be obtained by the contraction
    \begin{align*}
        \contractionof{
            \bencodingofat{\modsumsymbol,\catenumerator}{\headvariableof{\catenumerator},\thirdcatvariableof{\catenumerator},\catvariableof{\catenumerator},\tildecatvariableof{\catenumerator},\thirdcatvariableof{\catenumerator-1}},
            \messagesymbol^{(\catenumerator-1,2)}[\thirdcatvariableof{\catenumerator-1}],
            \messagesymbol^{(\catenumerator,0)}[\catvariableof{\catenumerator}],
            \messagesymbol^{(\catenumerator,1)}[\tildecatvariableof{\catenumerator}]
        }{\headvariableof{\catenumerator}}
        =
        \onehotmapofat{\headindexof{\catenumerator}}{\headvariableof{\catenumerator}} \, .
    \end{align*}
    Note that the hypergraph representing this instance is a tree and by \theref{the:treeBeliefPropagationExactness} also the message passing scheme of \algoref{alg:treeBeliefPropagation} is guaranteed to produce the exact contractions.
    We can exploit this fact for example in the efficient computation of averages of the summation digits, when we have an elementary distribution of input digits.
    We emphasize that the directed belief propagation \algoref{alg:treeBeliefPropagation} is exact even if the hypergraph fails to be a tree, provided that we have directed and boolean tensors..

    \begin{figure}[t]
        \begin{center}
            \begin{tikzpicture}[scale=0.5, thick]

                \draw (-1,-1) rectangle (3,1);
                \node[anchor=center] (A) at (1,0) {\corelabelsize $\bencodingof{\modsumsymbol,0}$};
                \draw[->-] (1,1)--(1,3) node[midway,left] {\colorlabelsize $\headvariableof{0}$};
                \draw[-<-] (0,-1)--(0,-2.5) node[midway,left] {\colorlabelsize $\catvariableof{0}$};
                \draw (-0.75,-2.5) rectangle (0.75,-4);
                \node[anchor=center] (A) at (0,-3.25) {\corelabelsize $\onehotmapof{\catindexof{0}}$};
                \draw[\newmessagecolor,dashed, ->] (-1.25,-3.25) to [bend right = -30] (-1.25,-1);
                \node[\newmessagecolor,anchor=center] (A) at (-1.75,-3.6) {\colorlabelsize $\messagesymbol^{(0,0)}$};
                \draw[-<-] (2,-1)--(2,-2.5) node[midway,right] {\colorlabelsize $\tildecatvariableof{0}$};
                \draw (1.25,-2.5) rectangle (2.75,-4);
                \node[anchor=center] (A) at (2,-3.25) {\corelabelsize $\onehotmapof{\tildecatindexof{0}}$};
                \draw[\newmessagecolor,dashed, ->] (3.25,-3.25) to [bend right = 30] (3.25,-1);
                \node[\newmessagecolor,anchor=center] (A) at (3.8,-3.6) {\colorlabelsize $\messagesymbol^{(0,1)}$};
                \draw[->-] (3,0)--(6,0) node[midway,above] {\colorlabelsize $\thirdcatvariableof{0}$};
                \draw[\newmessagecolor,dashed, ->] (3,1.25) to [bend right = -30] (6,1.25);
                \node[\newmessagecolor,anchor=center] (A) at (4.5,2.25) {\colorlabelsize $\messagesymbol^{(0,2)}$};

                \begin{scope}[shift={(7,0)}]
                    \draw (-1,-1) rectangle (3,1);
                    \node[anchor=center] (A) at (1,0) {\corelabelsize $\bencodingof{\modsumsymbol,1}$};
                    \draw[->-] (1,1)--(1,3) node[midway,left] {\colorlabelsize $\headvariableof{1}$};
                    \draw[-<-] (0,-1)--(0,-2.5) node[midway,left] {\colorlabelsize $\catvariableof{1}$};
                    \draw (-0.75,-2.5) rectangle (0.75,-4);
                    \node[anchor=center] (A) at (0,-3.25) {\corelabelsize $\onehotmapof{\catindexof{1}}$};
                    \draw[\newmessagecolor,dashed, ->] (-1.25,-3.25) to [bend right = -30] (-1.25,-1);
                    \node[\newmessagecolor,anchor=center] (A) at (-1.75,-3.6) {\colorlabelsize $\messagesymbol^{(1,0)}$};
                    \draw[-<-] (2,-1)--(2,-2.5) node[midway,right] {\colorlabelsize $\tildecatvariableof{1}$};
                    \draw (1.25,-2.5) rectangle (2.75,-4);
                    \node[anchor=center] (A) at (2,-3.25) {\corelabelsize $\onehotmapof{\tildecatindexof{1}}$};
                    \draw[\newmessagecolor,dashed, ->] (3.25,-3.25) to [bend right = 30] (3.25,-1);
                    \node[\newmessagecolor,anchor=center] (A) at (3.8,-3.6) {\colorlabelsize $\messagesymbol^{(1,1)}$};
                    \draw[->-] (3,0)--(6,0) node[midway,above] {\colorlabelsize $\thirdcatvariableof{1}$};
                    \draw[\newmessagecolor,dashed, ->] (3,1.25) to [bend right = -30] (6,1.25);
                    \node[\newmessagecolor,anchor=center] (A) at (4.5,2.25) {\colorlabelsize $\messagesymbol^{(1,2)}$};
                \end{scope}

                \node[anchor=center] at (15.5,0) {$\cdots$};

                \begin{scope}[shift={(22,0)}]
                    \draw[\newmessagecolor,dashed, ->] (-4,1.25) to [bend right = -30] (-1,1.25);
                    \node[\newmessagecolor,anchor=center] (A) at (-2.5,2.25) {\colorlabelsize $\messagesymbol^{(\catorder\shortminus2,2)}$};

                    \draw[->-] (-4,0)--(-1,0) node[midway,above] {\colorlabelsize $\thirdcatvariableof{\catorder\shortminus 2}$};
                    \draw (-1,-1) rectangle (3,1);
                    \node[anchor=center] (A) at (1,0) {\corelabelsize $\bencodingof{\modsumsymbol,\catorder\shortminus1}$};
                    \draw[->-] (1,1)--(1,3) node[midway,left] {\colorlabelsize $\headvariableof{\catorder\shortminus1}$};
                    \draw[-<-] (0,-1)--(0,-2.5) node[midway,left] {\colorlabelsize $\catvariableof{\catorder\shortminus1}$};
                    \draw (-0.75,-2.5) rectangle (0.75,-4);
                    \node[anchor=center] (A) at (0,-3.25) {\corelabelsize $\onehotmapof{\catindexof{\catorder\shortminus1}}$};
                    \draw[\newmessagecolor,dashed, ->] (-1.5,-3.25) to [bend right = -45] (-1.5,-0.75);
                    \node[\newmessagecolor,anchor=center] (A) at (-2.3,-3.8) {\colorlabelsize $\messagesymbol^{(\catorder\shortminus1,0)}$};
                    \draw[-<-] (2,-1)--(2,-2.5) node[midway,right] {\colorlabelsize $\tildecatvariableof{\catorder\shortminus1}$};
                    \draw (1.25,-2.5) rectangle (2.75,-4);
                    \draw[\newmessagecolor,dashed, ->] (3.5,-3.25) to [bend right = 45] (3.5,-0.75);
                    \node[\newmessagecolor,anchor=center] (A) at (4.3,-3.8) {\colorlabelsize $\messagesymbol^{(\catorder\shortminus1,1)}$};
                    \node[anchor=center] (A) at (2,-3.25) {\corelabelsize $\onehotmapof{\tildecatindexof{\catorder\shortminus1}}$};
                    \draw (3,0)--(4.5,0) -- (4.5,1);
                    \draw[->-] (4.5,1)--(4.5,3) node[midway,left] {\colorlabelsize $\headvariableof{\catorder}$};
                \end{scope}

            \end{tikzpicture}
        \end{center}
        \caption{Computation of the integer sum in $\catdim$-adic representation by the Directed Belief Propagation \algoref{alg:directedBeliefPropagation} (see \exaref{exa:madicPropagation}).
        The summands are represented by one-hot encodings of the digits $\shortcatindices$ and $\tildecatindexof{[\catorder]}$, from which the messages start.
        The $\catenumerator$-th digit (for $\catenumerator\in\{0,\ldots,\catorder-1\}$) of the sum is computed based on the first messages of the epoch labeled by $\messagesymbol^{(\catenumerator,[2])}$,
            The third message $\messagesymbol^{(\catenumerator,2)}$ in each epoch communicates the carry bit to the next digit summation core.
            In the last message epoch the digit $\catorder-1$ and $\catorder$ are computed based.
        }
        \label{fig:propagationMary}
    \end{figure}

\end{example}

%
\section{The logical paradigm}\label{sec:logPar}

A tensor-based representation of propositional logic is developed by defining formulas as boolean valued tensors, and showing how logical connectives and normal forms can be expressed as tensor contractions.

\subsection{Propositional semantics by boolean tensors}

Starting with the introduction of propositional formulas as boolean tensors their decomposition is discussed with respect to a basis encoding.

\begin{definition}
    \label{def:formulas}
    A \emph{propositional formula} $\formulaat{\shortcatvariables}$ depending on $\atomorder$ boolean variables $\catvariableof{\atomenumerator}$ is a tensor
    \begin{align*}
        \formulaat{\shortcatvariables} \in \bigotimes_{\atomenumeratorin} \rr^2
    \end{align*}
    with coordinates in $\ozset$.
    We call a state $\shortcatindices \in \atomstates$ a \emph{model} of a propositional formula $\formula$, if
    \begin{align*}
        \formulaat{\indexedshortcatvariables}=1 \, ,
    \end{align*}
    where we understand $1$ as a representation of $\mathrm{True}$ and $0$ of $\mathrm{False}$.
    If there is a model of a propositional formula, we say the formula is \emph{satisfiable}.
\end{definition}

\begin{example}
    \label{exa:propFormulaCoordinatewise}
    Let there be $\catorder=3$ boolean variables $\catvariableof{[3]}$ and a propositional formula
    \begin{align*}
        \formulaat{\catvariableof{[3]}}
        = (\catvariableof{0} \lor \catvariableof{1}) \land \lnot \catvariableof{2} \, .
    \end{align*}
    In a graphical depiction and in the coordinatewise representation this formula can be represented as
    \begin{center}
        \begin{tikzpicture}[scale=1]

            \begin{scope}[shift={(-4,-0.2)}]
                \node[anchor=east] (A) at (-0.25,0.2) {$\formulaat{\catvariableof{[3]}}\,=$};
                \draw (0,0) rectangle (1.6,0.8);
                \node[anchor=center] (A) at (0.8,0.4) {$\formula$};
                \draw (0.2,0) -- (0.2,-0.6) node[midway,left] {\tiny $\catvariableof{0}$};
                \draw (0.8,0) -- (0.8,-0.6) node[midway,left] {\tiny $\catvariableof{1}$};
                \draw (1.4,0) -- (1.4,-0.6) node[midway,left] {\tiny $\catvariableof{2}$};
            \end{scope}

            \node[anchor=east] (A) at (-1.5,0) {$=$};
            \node (A) at (0,0) {
                $\begin{bmatrix}
                     0 & 1 \\
                     1 & 1
                \end{bmatrix}$
            };
            \node (A) at (1.25,0.3) {
                $\begin{bmatrix}
                     0 & 0 \\
                     0 & 0
                \end{bmatrix}$
            };
            \draw[<-,dashed] (-0.9,-0.275) node[right] {\tiny $1$} -- (-0.9,0.275) node [midway, left] {\tiny $\catvariableof{0}$} node[right] {\tiny $0$};
            \draw[->,dashed] (-0.3,0.85) node[below] {\tiny $0$} -- (0.3,0.85) node [midway, above] {\tiny $\catvariableof{1}$} node[below] {\tiny $1$};
            \draw[->,dashed] (0,-0.85) node[above] {\tiny $0$} -- (1.25,-0.55) node [midway, below] {\tiny $\catvariableof{2}$} node[above] {\tiny $1$};

        \end{tikzpicture}
    \end{center}
    In the state set $\atomstates = \{0,1\}\times \{0,1\} \times \{0,1\}$ we have three models of the formula by the positions of the non-zero entries in the tensor, that is $\formulaat{\indexedcatvariableof{[3]}}=1$ if and only if
    \begin{align*}
        \catindexof{[3]}\in\{(1,0,0),(0,1,0),(1,1,0)\} \, .
    \end{align*}
    The formula $\formula$ is therefore satisfiable.
\end{example}

\paragraph{Model counts by contraction}
Each coordinate of a propositional formula is either $1$ or $0$, indicating whether the indexed state is a model of the formula or not.
In this way, the contraction $\contraction{\formula}$ counts the number of models of the propositional formula $\formula$.
One can therefore decide the satisfiability of a formula by testing for $\contraction{\formula}>0$.

\paragraph{CP decomposition}
We can decompose a formula into the sum of the one-hot encodings of its models:
\begin{center}
    \begin{tikzpicture}[scale=0.35, thick]

    \draw (-1,-1) rectangle (5,-3);
    \node[anchor=center] (text) at (2,-2) {\corelabelsize ${\exformula}$};
    \draw[] (0,-3)--(0,-5) node[midway,left] {\colorlabelsize $\catvariableof{0}$};
    \draw[] (1.5,-3)--(1.5,-5) node[midway,left] {\colorlabelsize $\catvariableof{1}$};
    \node[anchor=center] (text) at (3,-4) {$\cdots$};
    \draw[] (4,-3)--(4,-5) node[midway,right] {\colorlabelsize $\catvariableof{\atomorder\shortminus1}$};

    \node[anchor=center] (text) at (7,-2) {${=}$};

    \node[anchor=center] (text) at (12,-2.5) {$\sum\limits_{\shortcatindicesin}$};
    \node[anchor=center] (text) at (12,-4.25) {\colorlabelsize $\formulaat{\indexedshortcatvariables}=1$};

    \begin{scope}
    [shift={(19.5,1)}]

        \draw (-3,-2) rectangle (-1,-4);
        \node[anchor=center] (text) at (-2,-3) {\corelabelsize $\onehotmapof{\atomlegindexof{0}}$};
        \draw[->-] (-2,-4)--(-2,-6) node[midway,right] {\colorlabelsize $\catvariableof{0}$};

        \node[anchor=center] (text) at (1,-3) {\corelabelsize $\cdots$};

        \draw (3,-2) rectangle (5,-4);
        \node[anchor=center] (text) at (4,-3) {\corelabelsize $\onehotmapof{\atomlegindexof{\atomorder\shortminus1}}$};
        \draw[->-] (4,-4)--(4,-6) node[midway,right] {\colorlabelsize $\catvariableof{\atomorder\shortminus1}$};

    \end{scope}

    \node[anchor=center] (text) at (26,-2) {${=}$};

    \begin{scope}[shift={(29,-2.5)}]

        \draw (-1,-1) rectangle (1,1);
        \node[anchor=center] (A) at (0,0) {\corelabelsize $\hypercoreof{0}$};
        \draw (0,-1)--(0,-2.5) node[midway,right] {\colorlabelsize $\catvariableof{0}$};

        \draw (3,-1) rectangle (5,1);
        \node[anchor=center] (A) at (4,0) {\corelabelsize $\hypercoreof{1}$};
        \draw (4,-1)--(4,-2.5) node[midway,right] {\colorlabelsize $\catvariableof{1}$};

        \node[anchor=center] (text) at (8,0) {$\hdots$};

        \draw (11,-1) rectangle (13,1);
        \node[anchor=center] (A) at (12,0) {\corelabelsize $\hypercoreof{\catorder\shortminus1}$};
        \draw (12,-1)--(12,-2.5) node[midway,right] {\colorlabelsize $\catvariableof{\catorder\shortminus1}$};

        \drawvariabledot{6}{4}
        \node[anchor=south] (text) at (6,4) {\colorlabelsize $\decvariable$};

        \draw (6,4) to[bend right= 20] (0,1);
        \draw (6,4) to[bend right= 10] (4,1);
        \draw (6,4) to[bend right= -20] (12,1);

    \end{scope}

\end{tikzpicture}
\end{center}
As already depicted, one can exploit this summation to find a $\cpformat$ decomposition of the formula.
To this end, we enumerate the models $\shortcatindices^{\decindex}$ of $\formula$ by a decomposition variable $\decvariable$ with values $\decindex\in[\contraction{\formula}]$ and define, for $\catenumeratorin$, cores with slices
\begin{align*}
    \hypercoreofat{\catenumerator}{\catvariableof{\catenumerator},\indexeddecvariable}
    = \onehotmapofat{\catindexof{\catenumerator}^{\decindex}}{\catvariableof{\catenumerator}} \, .
\end{align*}

\begin{example}\label{exa:propFormulaBasCP}
    For the formula described in \exaref{exa:propFormulaCoordinatewise}, we have
    \begin{align*}
        \formulaat{\catvariableof{[3]}}
        &= \left(\tbasisat{\catvariableof{0}} \otimes \fbasisat{\catvariableof{1}} \otimes \fbasisat{\catvariableof{2}}\right)
        + (\fbasisat{\catvariableof{0}} \otimes \tbasisat{\catvariableof{1}} \otimes \fbasisat{\catvariableof{2}}) \\
        &\quad+ (\tbasisat{\catvariableof{0}} \otimes \tbasisat{\catvariableof{1}} \otimes \fbasisat{\catvariableof{2}}) \, .
    \end{align*}
    Note that we have $\contraction{\formulawith}=3$ and we can interpret this sum as a $\cpformat$ decomposition of $\formula$ with rank $3$.
    We use the decomposition to evaluate the formula $\formula$ at $\catindexof{[3]} = (1,1,0)$ and get
    \begin{align*}
        \formulaat{\indexedcatvariableof{[3]}}
        &= \left(\tbasisat{\catvariableof{0}=1} \otimes \fbasisat{\catvariableof{1}=1} \otimes \fbasisat{\catvariableof{2}=0}\right) \\
       &\quad + (\fbasisat{\catvariableof{0}=1} \otimes \tbasisat{\catvariableof{1}=1} \otimes \fbasisat{\catvariableof{2}=0}) \\
       &\quad + (\tbasisat{\catvariableof{0}=1} \otimes \tbasisat{\catvariableof{1}=1} \otimes \fbasisat{\catvariableof{2}=0}) \\
        &=  1\cdot 0 \cdot 1 + 0\cdot 1 \cdot 1 + 1 \cdot 1 \cdot 1 = 1 \, ,
    \end{align*}
    which verifies that $\catindexof{[3]} = (1,1,0)$ is a model of the formula $\formula$.
\end{example}

\paragraph{Basis encoding}
Representing booleans by elements in $\{0,1\}$ leads to the problem that the negation is an affine transformation and cannot be represented by multilinear tensors. 
To be able to express different kinds of connectives by contractions, booleans are encoded by one-hot encodings as defined in \defref{def:onehotenc}.
Propositional formulas $\formula$ can then be expressed by their basis encodings
\begin{align*}
    \bencodingofat{\formula}{\indexedheadvariableof{\formula},\indexedshortcatvariables}
    = \begin{cases}
          1 & \ifspace \formulaat{\indexedshortcatvariables} = \headindexof{\formula}\\
          0 & \text{else}
    \end{cases} \, .
\end{align*}
This basis encoding $\bencodingofat{\formula}{\headvariableof{\formula},\shortcatvariables} \in \{0,1\}^{2\times 2^d}$ encodes the formula itself and its negation in its slices, since
\begin{align*}
    \bencodingofat{\formula}{\headvariableof{\formula},\shortcatvariables}
    = \tbasisat{\headvariableof{\formula}} \otimes \formulaat{\shortcatvariables}
    + \fbasisat{\headvariableof{\formula}} \otimes \lnot\formulaat{\shortcatvariables} \, .
\end{align*}
In our graphical notation this property is visualized by
\begin{center}
    \begin{tikzpicture}[scale=0.35, thick] 

    \draw[->-] (2,-1)--(2,1) node[midway,right] {\colorlabelsize $\formulavar$};
    \draw (-1,-1) rectangle (5,-3);
    \node[anchor=center] (text) at (2,-2) {\corelabelsize $\bencodingof{\exformula}$};
    \draw[-<-] (0,-3)--(0,-5) node[midway,left] {\colorlabelsize $\catvariableof{0}$};
    \draw[-<-] (1.5,-3)--(1.5,-5) node[midway,left] {\colorlabelsize $\catvariableof{1}$};
    \node[anchor=center] (text) at (3,-4) {$\cdots$};
    \draw[-<-] (4,-3)--(4,-5) node[midway,right] {\colorlabelsize $\catvariableof{\atomorder\shortminus1}$};

    \node[anchor=center] (text) at (7,-2) {${=}$};

    \node[anchor=center] (text) at (10,-2.5) {${\sum\limits_{\shortcatindices\in\atomstates}}$};

    \begin{scope}
    [shift={(15.5,-0.5)}]

        \draw (-2,1) rectangle (4,-1);
        \node[anchor=center] (text) at (1,0) {\corelabelsize $\onehotmapof{\exformulaat{\indexedshortcatvariables}}$};
        \draw[->-] (1,1)--(1,2.5) node[midway,right] {\colorlabelsize $\formulavar$};

        \draw (-2,-2) rectangle (4,-4);
        \node[anchor=center] (text) at (1,-3) {\corelabelsize $\onehotmapof{\shortcatindices}$};

        \draw[->-] (-1.5,-4)--(-1.5,-5.5) node[midway,left] {\colorlabelsize $\catvariableof{0}$};
        \draw[->-] (0.5,-4)--(0.5,-5.5) node[midway,left] {\colorlabelsize $\catvariableof{1}$};
        \node[anchor=center] (text) at (2,-5) {$\cdots$};
        \draw[->-] (3.5,-4)--(3.5,-5.5) node[midway,right] {\colorlabelsize $\catvariableof{\atomorder\shortminus1}$};

    \end{scope}

    \node[anchor=center] (text) at (21.25,-2) {${=}$};

    \begin{scope} [shift={(25.5,-0.5)}]

        \draw (-2,1) rectangle (4,-1);
        \node[anchor=center] (text) at (1,0) {\corelabelsize $\onehotmapof{0}$};
        \draw[->-] (1,1)--(1,2.5) node[midway,right] {\colorlabelsize $\formulavar$};

        \draw (-2,-2) rectangle (4,-4);
        \node[anchor=center] (text) at (1,-3) {\corelabelsize $\lnot\formula$};

        \draw[] (-1.5,-4)--(-1.5,-5.5) node[midway,left] {\colorlabelsize $\catvariableof{0}$};
        \draw[] (0.5,-4)--(0.5,-5.5) node[midway,left] {\colorlabelsize $\catvariableof{1}$};
        \node[anchor=center] (text) at (2,-5) {$\cdots$};
        \draw[] (3.5,-4)--(3.5,-5.5) node[midway,right] {\colorlabelsize $\catvariableof{\atomorder\shortminus1}$};

    \end{scope}

    \node[anchor=center] (text) at (31.25,-2) {${+}$};

    \begin{scope} [shift={(35.5,-0.5)}]

        \draw (-2,1) rectangle (4,-1);
        \node[anchor=center] (text) at (1,0) {\corelabelsize $\onehotmapof{1}$};
        \draw[->-] (1,1)--(1,2.5) node[midway,right] {\colorlabelsize $\formulavar$};

        \draw (-2,-2) rectangle (4,-4);
        \node[anchor=center] (text) at (1,-3) {\corelabelsize $\formula$};

        \draw[] (-1.5,-4)--(-1.5,-5.5) node[midway,left] {\colorlabelsize $\catvariableof{0}$};
        \draw[] (0.5,-4)--(0.5,-5.5) node[midway,left] {\colorlabelsize $\catvariableof{1}$};
        \node[anchor=center] (text) at (2,-5) {$\cdots$};
        \draw[] (3.5,-4)--(3.5,-5.5) node[midway,right] {\colorlabelsize $\catvariableof{\atomorder\shortminus1}$};

    \end{scope}

\end{tikzpicture}
\end{center}
We further provide a more detailed example in coordinate sensitive notation in the following.
\begin{example}[Logical negation and conjunction]
    \label{exa:bencodingNegCon} 
    The basis encodings of the negation $\notucon: [2]\rightarrow [2]$ is the matrix
    \begin{center}
        \begin{tikzpicture}[scale=1]
            \node (A) at (-2.5,0) {$\bencodingofat{\lnot}{\headvariableof{\lnot},\catvariable}$=};
            \node (A) at (0,0) {
                $\begin{bmatrix}
                     0 & 1 \\
                     1 & 0
                \end{bmatrix}$
            };
            \draw[<-,dashed] (-0.9,-0.275) node[right] {\tiny $1$} -- (-0.9,0.275) node [midway, left] {\tiny $\catvariableof{0}$} node[right] {\tiny $0$};
            \draw[->,dashed] (-0.3,0.85) node[below] {\tiny $0$} -- (0.3,0.85) node [midway, above] {\tiny $\headvariableof{\lnot}$} node[below] {\tiny $1$};
        \end{tikzpicture}
    \end{center}
    The $2$-ary conjunctions $\land:  [2]\times[2] \rightarrow[2]$ is encoded by the order-$3$ tensor
    \begin{center}
        \begin{tikzpicture}[scale=1]
            \node (A) at (-4.5,0) {$\bencodingofat{\land}{\headvariableof{\land},\catvariableof{0},\catvariableof{1}}$=};

            \begin{scope}[shift={(0,0)}]
                \node(B) at (-2,0){
                    $\begin{bmatrix}
                         1 \\
                         0
                    \end{bmatrix}$
                };
                \draw[<-,dashed] (-2.5,-0.275) node[right] {\tiny $1$} -- (-2.5,0.275) node [midway, left] {\tiny $\headvariableof{\land}$} node[right] {\tiny $0$};
                \node (A) at (-1.6,0) {$\otimes$};
                \node (A) at (0,0) {
                    $\begin{bmatrix}
                         1 & 1 \\
                         1 & 0
                    \end{bmatrix}$
                };
                \draw[<-,dashed] (-0.9,-0.275) node[right] {\tiny $1$} -- (-0.9,0.275) node [midway, left] {\tiny $\catvariableof{0}$} node[right] {\tiny $0$};
                \draw[->,dashed] (-0.3,0.85) node[below] {\tiny $0$} -- (0.3,0.85) node [midway, above] {\tiny $\catvariableof{1}$} node[below] {\tiny $1$};
            \end{scope}

            \begin{scope}[shift={(4.25,0)}]

                \node[anchor=center] (A) at (-3.25,0) {$+$};

                \node(B) at (-2,0){
                    $\begin{bmatrix}
                         0 \\
                         1
                    \end{bmatrix}$
                };
                \draw[<-,dashed] (-2.5,-0.275) node[right] {\tiny $1$} -- (-2.5,0.275) node [midway, left] {\tiny $\headvariableof{\land}$} node[right] {\tiny $0$};
                \node (A) at (-1.6,0) {$\otimes$};
                \node (A) at (0,0) {
                    $\begin{bmatrix}
                         0 & 0 \\
                         0 & 1
                    \end{bmatrix}$
                };
                \draw[<-,dashed] (-0.9,-0.275) node[right] {\tiny $1$} -- (-0.9,0.275) node [midway, left] {\tiny $\catvariableof{0}$} node[right] {\tiny $0$};
                \draw[->,dashed] (-0.3,0.85) node[below] {\tiny $0$} -- (0.3,0.85) node [midway, above] {\tiny $\catvariableof{1}$} node[below] {\tiny $1$};
            \end{scope}

            \begin{scope}[shift={(7,0)}]
                \node[anchor=center] (A) at (-1.75,0) {$=$};

                \node (A) at (0,0) {
                    $\begin{bmatrix}
                         1 & 1 \\
                         1 & 0
                    \end{bmatrix}$
                };
                \node (A) at (1.25,0.3) {
                    $\begin{bmatrix}
                         0 & 0 \\
                         0 & 1
                    \end{bmatrix}$
                };
                \draw[<-,dashed] (-0.9,-0.275) node[right] {\tiny $1$} -- (-0.9,0.275) node [midway, left] {\tiny $\catvariableof{0}$} node[right] {\tiny $0$};
                \draw[->,dashed] (-0.3,0.85) node[below] {\tiny $0$} -- (0.3,0.85) node [midway, above] {\tiny $\catvariableof{1}$} node[below] {\tiny $1$};
                \draw[->,dashed] (0,-0.85) node[above] {\tiny $0$} -- (1.25,-0.55) node [midway, below] {\tiny $\headvariableof{\land}$} node[above] {\tiny $1$};
            \end{scope}
        \end{tikzpicture}
    \end{center}
    Furthermore, the $2$-ary disjunction $\lor:  [2]\times[2] \rightarrow[2]$ is encoded by the order-$3$ tensor
    \begin{center}
        \begin{tikzpicture}[scale=1]
            \node (A) at (-3,0) {$\bencodingofat{\lor}{\headvariableof{\lor},\catvariableof{0},\catvariableof{1}}$=};
            \node (A) at (0,0) {
                $\begin{bmatrix}
                     1 & 0 \\
                     0 & 0
                \end{bmatrix}$
            };
            \node (A) at (1.25,0.3) {
                $\begin{bmatrix}
                     0 & 1 \\
                     1 & 1
                \end{bmatrix}$
            };
            \draw[<-,dashed] (-0.9,-0.275) node[right] {\tiny $1$} -- (-0.9,0.275) node [midway, left] {\tiny $\catvariableof{0}$} node[right] {\tiny $0$};
            \draw[->,dashed] (-0.3,0.85) node[below] {\tiny $0$} -- (0.3,0.85) node [midway, above] {\tiny $\catvariableof{1}$} node[below] {\tiny $1$};
            \draw[->,dashed] (0,-0.85) node[above] {\tiny $0$} -- (1.25,-0.55) node [midway, below] {\tiny $\headvariableof{\lor}$} node[above] {\tiny $1$};
        \end{tikzpicture}
    \end{center}
\end{example}

\paragraph{Interpretation as \CompActNets{}}
The propositional formula and its negation can be represented by this tensor via
\begin{align*}
    \formulaat{\shortcatvariables}
    = \contractionof{\tbasisat{\headvariableof{\formula}},\bencodingofat{\formula}{\headvariableof{\formula},\shortcatvariables}}{\shortcatvariables}
    \andspace
    \lnot\formulaat{\shortcatvariables}
    = \contractionof{\fbasisat{\headvariableof{\formula}},\bencodingofat{\formula}{\headvariableof{\formula},\shortcatvariables}}{\shortcatvariables} \, .
\end{align*}
Both $\formula$ and $\lnot\formula$ are thus \ComputationActivationNetworks{} to the statistic $\{\formula\}$ and the hard activation tensor $\tbasisat{\headvariableof{\formula}}$, respectively $\fbasisat{\headvariableof{\formula}}$.

\subsection{Syntactic decomposition of propositional formulas}

Propositional formulas of concern often have a syntactic specification as composed functions.
We can therefore apply the neural paradigm to find efficient representations of them.

\begin{definition}[Syntactic decompositions]
    \label{def:syntacticalDecomposition}
    A syntactic decomposition of a propositional formula $\exformula$ is a decomposition hypergraph (see \defref{def:decompositionHypergraph}) such that all nodes are decorated with the dimension $\catdimof{\node}=2$ and the composition function $\exformula$.
\end{definition}

We thus have a tensor network representation of any propositional formula based on its syntactic decomposition, where the hypergraph of the syntactic decomposition equals the hypergraph of the representing tensor network.

\begin{example}
    \label{exa:propFormulaSyntax}
    For the formula $\formulaat{\catvariableof{[3]}} = (\catvariableof{0} \lor \catvariableof{1}) \land \lnot \catvariableof{2}$ from \exaref{exa:propFormulaCoordinatewise}, we have the following syntactical decomposition of its basis encoding:
    \begin{center}
        \begin{tikzpicture}[scale=0.4, yscale=-1, thick] 

            \draw[->-] (-2,1)--(-2,-1) node[midway,left] {\colorlabelsize $\catvariableof{0}$};
            \draw[->-] (0.5,1)--(0.5,-1) node[midway,right] {\colorlabelsize $\catvariableof{1}$};
            \draw[->-] (3,1)--(3,-1) node[midway,right] {\colorlabelsize $\catvariableof{2}$};
            \draw (-3,-1) rectangle (4, -3);
            \node[anchor=center] (text) at (0.5,-2) {\corelabelsize $\bencodingof{\exformula}$};
            \draw[->-] (0.5,-3)--(0.5,-5) node[midway,right] {\colorlabelsize $\headvariableof{\exformula}$};

            \node[anchor=center] (text) at (5,-2) {${=}$};

            \begin{scope}
            [shift={(7,0)}]

                \draw[->-] (0,1)--(0,-1) node[midway,left] {\colorlabelsize $\catvariableof{0}$};
                \draw[->-] (3,1)--(3,-1) node[midway,right] {\colorlabelsize $\catvariableof{1}$};
                \draw[->-] (6,1)--(6,-1) node[midway,right] {\colorlabelsize $\catvariableof{2}$};

                \draw (-1,-1) rectangle (4, -3);
                \node[anchor=center] (text) at (1.5,-2) {\corelabelsize $\bencodingof{\lor}$};

                \draw[->-] (1.5,-3) --(1.5,-5) node[midway,right]{\colorlabelsize $\headvariableof{0 \lor 1}$};

                \draw (5,-1) rectangle (7, -3);
                \node[anchor=center] (text) at (6,-2) {\corelabelsize $\bencodingof{\lnot}$};

                \draw[->-] (6,-3) --(6,-5) node[midway,right]{\colorlabelsize $\headvariableof{\lnot 2}$};

                \draw (0.5,-5) rectangle (6.5,-7);
                \node[anchor=center] (text) at (3.5,-6) {\corelabelsize $\bencodingof{\land}$};

                \draw[->-] (4,-7) -- (4,-8.5) node[right] {\colorlabelsize $\headvariableof{(0 \lor 1) \land \lnot 2}$};
                \drawvariabledot{4}{-8}
                \draw[] (4,-8) -- (4,-9);
                \draw (3,-9) rectangle (5,-11);
                \node[anchor=center] (text) at (4,-10) {$\tbasis$};

            \end{scope}

        \end{tikzpicture}
    \end{center}
\end{example}

\subsection{Entailment decision by contractions}

We have already seen that the contraction of a propositional formula counts its models.
This allows to define entailment between two propositional formulas as defined in the following.
To generalize the treatment, we no longer demand that the variables of a formula are of dimension $2$.
We further use $\lnot\formulawith=\oneswith-\formulawith$.

\begin{definition}[Entailment of propositional formulas]
    \label{def:logicalEntailment}
    Given two propositional formulas $\kb$ and $\formula$, we say that $\kb$ entails $\formula$, denoted by $\kb\models\formula$, if any model of $\kb$ is also a model of $\formula$, that is
    \begin{align*}
        \contraction{\kbwith,\lnot\formulawith}=0 \, .
    \end{align*}
    If $\kb\models\lnot\formula$ holds (i.e. $\contraction{\kbwith,\formulawith}$=0), we say that $\kb$ contradicts $\formula$.
\end{definition}

Classically (see e.g. \cite{russell_artificial_2021}) entailment in propositional logics is defined as the unsatisfiability of $\kb\land\lnot\exformula$.
This is equivalent to \defref{def:logicalEntailment} due to the equivalence of $\contraction{\kbwith,\lnot\formulawith}=0$ and $\contraction{(\kb \land (\lnot\exformula))[\shortcatvariables]}=0$, which is the unsatisfiability of $\kb\land\lnot\exformula$.


\begin{example}[$\sudokunum^2\,\times \,\sudokunum^2$ Sudoku]
    \label{exa:sudokuEntailment}
    We index the rows and the columns by tuples $(r0,r1)$ and $(co,c1)$, where $r0,r1,c0,c1\in[\sudokunum]$. The first index indicates the block and the second counts the row or column inside that block.
    For each $r0,r1,c0,c1\in[\sudokunum]$ and $i\in[\sudokunum^2]$ we then define an atomic variable $\catvariableof{r0,r1,c0,c1,i}\in\{0,1\}$ indicating whether in the row $(r0,r1)$ and column $(co,c1)$ the number $i$ is written.
    The Sudoku rules then amount to the formula
    \begin{align*}
        \sudokukbof{\sudokunum}  \coloneqq
        &\left( \bigwedge_{r0,r1,c0,c1\in[\sudokunum]} \left( \woneoplus_{i\in[\sudokunum^2]} \catvariableof{r0,r1,c0,c1,i} \right) \right) \land
        \left( \bigwedge_{r0,r1\in[\sudokunum], i\in[\sudokunum^2]} \left( \woneoplus_{c0,c1\in[\sudokunum]} \catvariableof{r0,r1,c0,c1,i} \right) \right) \land \\
        &\left( \bigwedge_{c0,c1\in[\sudokunum], i\in[\sudokunum^2]} \left( \woneoplus_{c0,c1\in[\sudokunum]} \catvariableof{r0,r1,c0,c1,i} \right) \right) \land
        \left( \bigwedge_{r0,c0\in[\sudokunum], i\in[\sudokunum^2]} \left( \woneoplus_{r1,c1\in[\sudokunum]} \catvariableof{r0,r1,c0,c1,i} \right) \right) \, ,
    \end{align*}
    where $\woneoplus$ is the $\sudokunum^2$-ary exclusive or connective (that is $1$ if and only if exactly one of the arguments is $1$).
    The four outer brackets in $\kb$ mark the constraints that at each position exactly one number is assigned, further that in each row each number is assigned once, and similar for the columns and the squares of the board.
    When solving a specific Sudoku instance, one typically knows from an initial board assignment $\sudokustartevidence$ a collection of atomic variables, which hold, and needs to find further atomic variables, which are entailed.
    This means, we need to decide for each $(r_0,r_1,c_0,c_1,i)\notin \sudokustartevidence$ whether the Sudoku rules and the initial board imply that the atomic variable $\catvariableof{r0,r1,c0,c1,i}$ (i.e. assignment to the board) is true
    \begin{align*}
        \sudokukbof{\sudokunum} \land \left(\bigwedge_{(r_0,r_1,c_0,c_1,i)\in \sudokustartevidence} \catvariableof{r0,r1,c0,c1,i} \right) \models \catvariableof{r0,r1,c0,c1,i}
    \end{align*}
    or false
    \begin{align*}
        \kb \land \left(\bigwedge_{(r_0,r_1,c_0,c_1,i)\in \sudokustartevidence} \catvariableof{r0,r1,c0,c1,i} \right) \models \lnot\catvariableof{r0,r1,c0,c1,i} \, .
    \end{align*}
    If and only if the Sudoku has a unique solution given the initial board assignment $\sudokustartevidence$, exactly one of these entailment statements holds for each $(r_0,r_1,c_0,c_1,i)\notin \sudokustartevidence$.
    Deciding which is equivalent to solving the Sudoku.

    For a more concrete example, let $n=2$ and
    \begin{align*}
        \sudokustartevidence = \{&(0,0,0,0,0),(0,0,1,0,2),(0,0,1,1,1), 
        (0,1,0,1,1), \\ 
        &(1,0,1,0,3), 
        (1,1,0,0,3),(1,1,0,1,2) 
        \} \, .
    \end{align*}
    We visualize this evidence by writing $i+1$ in a grid cell $(r0,r1,c0,c1)$ to indicate that $(r0,r1,c0,c1,i)\in \sudokustartevidence$:
    \begin{center}
        \begin{sudoku4x4}
            \matrix[sudokumatrix] (M) at (0,0) {
                1 & \ & 3 & 2 \\
                \ & 2 & \  & \  \\
                \ & \ & 4 & \ \\
                4 & 3 &  \ & \  \\
            };
            \draw[thick]([yshift=9.5pt,xshift=-0.6pt]M-1-2.east) -- ([yshift=-9.5pt,xshift=-0.6pt]M-4-2.east);
            \draw[thick]([xshift=-9.5pt,yshift=0.6pt]M-2-1.south) -- ([xshift=9.5pt,yshift=0.6pt]M-2-4.south);
        \end{sudoku4x4}
    \end{center}
    After deriving a sparse tensor network representations in \exaref{exa:sudokuDecomposition}, we demonstrate a solution algorithm to solve this instance in \exaref{exa:sudokuEntailment}.
\end{example}

\subsection{Efficient representation of knowledge bases}

We now investigate the representation of propositional knowledge bases $\kb=\{\formulaof{\selindex}\wcols\selindexin\}$, which are sets of propositional formulas $\formulaof{\selindex}$.
The conjunction of these formulas is the knowledge base formula
\begin{align*}
    \kbwith
    = \bigwedge_{\selindexin} \formulaofat{\selindex}{\shortcatvariables} \, .
\end{align*}
To show efficient representations, we use the following identities.

\begin{lemma}[Computation Network Symmetries]
    \label{lem:comNetSymmetries}
    For the $\catorder$-ary $\land$-connective (where $\catorder\in\nn$) and the unary $\lnot$-connective it holds that
    \begin{align*}
        \contractionof{\tbasisat{\headvariable},\bencodingofat{\land}{\headvariable,\shortcatvariables}}{\shortcatvariables}
        = \bigotimes_{\catenumeratorin} \tbasisat{\catvariableof{\catenumerator}}
        \andspace
        \contractionof{\tbasisat{\headvariable},\bencodingofat{\lnot}{\headvariable,\catvariable}}{\catvariable}
        = \fbasisat{\catvariable} \, .
    \end{align*}
\end{lemma}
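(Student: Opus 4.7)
The plan is to unfold both identities directly from the definition of the basis encoding (Definition \ref{def:functionRepresentation}) and the definition of contraction (Definition \ref{def:contraction}), observing that projection onto the $\tbasis$ basis vector selects exactly those preimages in the input state space that evaluate to $1$ under the connective.

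For the first identity, I would start by writing out
\begin{align*}
    \bencodingofat{\land}{\headvariable,\shortcatvariables}
    = \sum_{\shortcatindices\in\atomstates}
    \onehotmapofat{\land(\shortcatindices)}{\headvariable}
    \otimes \onehotmapofat{\shortcatindices}{\shortcatvariables}
\end{align*}
and then contract with $\tbasisat{\headvariable}=\onehotmapofat{1}{\headvariable}$. Using that $\contraction{\onehotmapofat{1}{\headvariable},\onehotmapofat{\headindex}{\headvariable}} = \delta_{1,\headindex}$, only the summand with $\land(\shortcatindices)=1$ survives. Since the $\catorder$-ary conjunction satisfies $\land(\shortcatindices)=1$ iff $\catindexof{\catenumerator}=1$ for all $\catenumeratorin$, the remaining summand is the single tensor $\bigotimes_{\catenumeratorin}\onehotmapofat{1}{\catvariableof{\catenumerator}} = \bigotimes_{\catenumeratorin}\tbasisat{\catvariableof{\catenumerator}}$, which is the claimed right-hand side.

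The second identity follows in exactly the same way: expanding
\begin{align*}
    \bencodingofat{\lnot}{\headvariable,\catvariable}
    = \sum_{\catindex\in[2]}\onehotmapofat{\lnot\catindex}{\headvariable}\otimes\onehotmapofat{\catindex}{\catvariable}\,,
\end{align*}
contracting with $\tbasisat{\headvariable}$ keeps only the summand where $\lnot\catindex=1$, i.e.\ $\catindex=0$, leaving $\onehotmapofat{0}{\catvariable}=\fbasisat{\catvariable}$.

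There is no real obstacle here — the statement is essentially a direct consequence of the fact that the basis encoding of a function is by construction a sum of one-hot outer products indexed by inputs, and contracting with a single output one-hot vector selects the fibre of preimages of that output value. The only thing worth stating explicitly is the pointwise evaluation of the delta arising from the inner product of the two one-hot vectors on $\headvariable$. One could alternatively package the proof as an application of \lemref{lem:formulaDecomp} viewing each identity as composing the connective with the constant ``1'' function at the head leg, but the direct unfolding above is shorter and avoids introducing auxiliary machinery.
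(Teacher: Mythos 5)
Your proposal is correct and matches the paper's approach: the paper simply states that the lemma ``follows directly from the definitions of the basis encodings and the connectives,'' and your unfolding of the one-hot sum and the selection of preimages of the output value $1$ is precisely the omitted computation. No gap.
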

\begin{proof}
    Follows directly from the definitions of the basis encodings and the connectives.
\end{proof}

\begin{example}
    \label{exa:propFormulaHeadSym}
    For the propositional formula from \exaref{exa:propFormulaCoordinatewise}
    \begin{align*}
        \formulaat{\catvariableof{[3]}}={(\catvariableof{0} \lor \catvariableof{1}) \land \lnot \catvariableof{2}} \, ,
    \end{align*}
    we can write the formula in terms of a \ComputationActivationNetwork{} with activation tensor $\tbasis$ and computation network decomposed by the basis encodings as depicted below. First, it is written with one activation vector. Second, we see that it can also be interpreted with multiple features.
    \begin{center}
        \begin{tikzpicture}[scale=0.4, yscale=-1, thick] 

            \draw[] (-2,1)--(-2,-1) node[midway,left] {\colorlabelsize $\catvariableof{0}$};
            \draw[] (0.5,1)--(0.5,-1) node[midway,right] {\colorlabelsize $\catvariableof{1}$};
            \draw[] (3,1)--(3,-1) node[midway,right] {\colorlabelsize $\catvariableof{2}$};
            \draw (-3,-1) rectangle (4, -3);
            \node[anchor=center] (text) at (0.5,-2) {\corelabelsize $(\catvariableof{0} \lor \catvariableof{1}) \land \lnot \catvariableof{2}$};

            \node[anchor=center] (text) at (5,-2) {${=}$};

            \begin{scope}
            [shift={(7,0)}]

                \draw[->-] (0,1)--(0,-1) node[midway,left] {\colorlabelsize $\catvariableof{0}$};
                \draw[->-] (3,1)--(3,-1) node[midway,right] {\colorlabelsize $\catvariableof{1}$};
                \draw[->-] (6,1)--(6,-1) node[midway,right] {\colorlabelsize $\catvariableof{2}$};

                \draw (-1,-1) rectangle (4, -3);
                \node[anchor=center] (text) at (1.5,-2) {\corelabelsize $\bencodingof{\lor}$};

                \draw[->-] (1.5,-3) --(1.5,-5) node[midway,right]{\colorlabelsize $\headvariableof{0 \lor 1}$};

                \draw (5,-1) rectangle (7, -3);
                \node[anchor=center] (text) at (6,-2) {\corelabelsize $\bencodingof{\lnot}$};

                \draw[->-] (6,-3) --(6,-5) node[midway,right]{\colorlabelsize $\headvariableof{\lnot 2}$};

                \draw (0.5,-5) rectangle (6.5,-7);
                \node[anchor=center] (text) at (3.5,-6) {\corelabelsize $\bencodingof{\land}$};

                \draw[->-] (4,-7) -- (4,-8.5) node[right] {\colorlabelsize $\headvariableof{(0 \lor 1) \land \lnot 2}$};
                \drawvariabledot{4}{-8}
                \draw[] (4,-8) -- (4,-9);
                \draw (3,-9) rectangle (5,-11);
                \node[anchor=center] (text) at (4,-10) {$\tbasis$};

            \end{scope}

            \node[anchor=center] (text) at (15,-2) {${=}$};

            \begin{scope}
            [shift={(17,0)}]

                \draw[->-] (0,1)--(0,-1) node[midway,left] {\colorlabelsize $\catvariableof{0}$};
                \draw[->-] (3,1)--(3,-1) node[midway,right] {\colorlabelsize $\catvariableof{1}$};
                \draw[] (7,1)--(7,-1) node[midway,right] {\colorlabelsize $\catvariableof{2}$};

                \draw (-1,-1) rectangle (4, -3);
                \node[anchor=center] (text) at (1.5,-2) {\corelabelsize $\bencodingof{\lor}$};

                \draw (1.5,-4.5) -- (1.5,-5);
                \draw[->-] (1.5,-3) --(1.5,-4.5) node[midway,right]{\colorlabelsize $\headvariableof{0 \lor 1}$};

                \drawvariabledot{1.5}{-4}
                \draw (0.5,-5) rectangle (2.5,-7);
                \node[anchor=center] (text) at (1.5,-6) {\corelabelsize $\tbasis$};

                \node[anchor=center] (text) at (5,-2) {$\otimes$};

                \draw (6,-1) rectangle (8, -3);
                \node[anchor=center] (text) at (7,-2) {\corelabelsize $\fbasis$};

            \end{scope}

        \end{tikzpicture}
    \end{center}
\end{example}

We use this to decompose knowledge bases into their individual formulas as follows.

\begin{theorem}
    \label{the:kbDecomposition}
    For any knowledge base $\kbwith = \bigwedge_{\selindexin} \formulaofat{\selindex}{\shortcatvariables}$ it holds that
    \begin{align*}
        \kbwith
        = \contractionof{\{\formulaofat{\selindex}{\shortcatvariables} \wcols \selindexin\}}{\shortcatvariables} \, .
    \end{align*}
\end{theorem}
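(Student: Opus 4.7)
The plan is to verify the equality coordinate-wise, exploiting that each $\formulaofat{\selindex}{\shortcatvariables}$ is a boolean tensor (Def.~\ref{def:formulas}) and that the contraction keeps all variables $\shortcatvariables$ open, so no summation occurs.

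First I would fix an arbitrary index tuple $\shortcatindicesin$ and evaluate the right-hand side at $\indexedshortcatvariables$. Since the open-variable set of the contraction coincides with the full variable set $\shortcatvariables$ carried by each tensor in the network, Def.~\ref{def:contraction} collapses to a product with an empty summation:
\begin{align*}
\contractionof{\{\formulaofat{\selindex}{\shortcatvariables} \wcols \selindexin\}}{\indexedshortcatvariables}
= \prod_{\selindexin} \formulaofat{\selindex}{\indexedshortcatvariables}\,.
\end{align*}
Next I would evaluate the left-hand side at the same index. By definition $\kbwith=\bigwedge_{\selindexin} \formulaofat{\selindex}{\shortcatvariables}$, and since each coordinate $\formulaofat{\selindex}{\indexedshortcatvariables}\in\{0,1\}$, the logical conjunction agrees with the arithmetic product:
\begin{align*}
\kbat{\indexedshortcatvariables}
= \bigwedge_{\selindexin} \formulaofat{\selindex}{\indexedshortcatvariables}
= \prod_{\selindexin} \formulaofat{\selindex}{\indexedshortcatvariables}\,.
\end{align*}
Comparing the two expressions gives equality at every coordinate $\shortcatindicesin$, hence equality of tensors.

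There is no real obstacle here: the statement is essentially a syntactic rewriting of the definition of contraction when no variables are summed out. If one wanted a more structural proof, one could instead iterate Lem.~\ref{lem:formulaDecomp} for the binary $\land$ together with Lem.~\ref{lem:comNetSymmetries} to rewrite $\bigwedge_{\selindexin}\formulaof{\selindex}$ as a contraction of the basis encodings $\bencodingof{\land}$ with activation $\tbasis$, and then absorb the basis encodings into tensor products using $\contractionof{\tbasisat{\headvariable},\bencodingofat{\land}{\headvariable,\shortcatvariables}}{\shortcatvariables}=\bigotimes_{\catenumeratorin}\tbasisat{\catvariableof{\catenumerator}}$; but the coordinate-wise argument above is the most direct.
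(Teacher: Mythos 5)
Your coordinate-wise argument is correct. Fixing an index tuple, the contraction with all of $\shortcatvariables$ left open has an empty summation by Def.~\ref{def:contraction} and reduces to the pointwise product $\prod_{\selindexin}\formulaofat{\selindex}{\indexedshortcatvariables}$, and since each factor lies in $\{0,1\}$ this product coincides with the conjunction; that settles the claim. The paper takes a different route: it first writes $\kb$ as a \ComputationActivationNetwork{}, i.e.\ as the contraction of the basis encodings $\bencodingofat{\formulaof{\selindex}}{\headvariableof{\selindex},\shortcatvariables}$ with $\bencodingofat{\land}{\headvariableof{\land},\headvariables}$ and the activation $\tbasisat{\headvariableof{\land}}$ (via \lemref{lem:formulaDecomp}), then uses \lemref{lem:comNetSymmetries} to split the single $\tbasis$ on the conjunction head into one $\tbasisat{\headvariableof{\selindex}}$ per formula, and finally contracts each $\tbasisat{\headvariableof{\selindex}}$ against $\bencodingof{\formulaof{\selindex}}$ to recover $\formulaof{\selindex}$. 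This is precisely the alternative you sketch in your closing remark. What the paper's route buys is a demonstration of the encoding machinery and the $\land$-symmetry in action, which is thematically important for the \CompActNets{} narrative; what your route buys is brevity and self-containedness, since it needs nothing beyond the definitions of contraction and of boolean tensors. Both are sound.
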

\begin{proof}
    With \lemref{lem:comNetSymmetries} we have
    \begin{align*}
        \kbwith
        &= \contractionof{\{\tbasisat{\headvariableof{\land}},\bencodingofat{\land}{\headvariableof{\land},\headvariables}\}
            \cup \{\bencodingofat{\formulaof{\selindex}}{\headvariableof{\selindex},\shortcatvariables}\wcols \selindexin\}}{\shortcatvariables} \\
        &= \contractionof{
            \bigcup_{\selindexin} \{\tbasisat{\headvariableof{\selindex}},\bencodingofat{\formulaof{\selindex}}{\headvariableof{\selindex},\shortcatvariables}\wcols \selindexin\}}{\shortcatvariables} \\
        &= \contractionof{\{\formulaofat{\selindex}{\shortcatvariables} \wcols \selindexin\}}{\shortcatvariables} \, .
    \end{align*}
\end{proof}

\begin{example}[Sparse representation of Sudoku rule knowledge base]
    \label{exa:sudokuDecomposition}
    We now exploit \theref{the:kbDecomposition} to find efficient tensor network representation of the Sudoku knowledge base from \exaref{exa:sudokuEntailment}.
    We directly get, that the knowledge base $\sudokukbof{\sudokunum}$ of Sudoku rules is a tensor network of the $4\cdot \sudokunum^4$ constraint formulas using the $\sudokunum^2$-ary connective $\woneoplus$, and the evidence $\sudokustartevidence$ can be encoded by vectors $\tbasisat{\catvariableof{(r_0,r_1,c_0,c_1,i)}}$.
    To get a representation by matrices instead of tensors of order $\sudokunum^2$, we introduce a hidden variable $\decvariable$ taking values in $[\sudokunum^2]$ for each of the constraints.
    With the usage of matrices
    \begin{align*}
        \hypercoreofat{\catenumerator}{\catvariableof{\catenumerator},\decvariable}
        = \fbasisat{\catvariableof{\catenumerator}} \otimes \onesat{\decvariable} + (\tbasisat{\catvariableof{\catenumerator}}-\fbasisat{\catvariableof{\catenumerator}}) \otimes \onehotmapofat{\catenumerator}{\decvariable}
    \end{align*}
    we have the decomposition
    \begin{align*}
        \woneoplus[\catvariableof{[\sudokunum^2]}]
        = \contractionof{\{\hypercoreofat{\catenumerator}{\catvariableof{\catenumerator},\decvariable} \wcols \catenumerator\in[\sudokunum^2]\}}{\catvariableof{[\sudokunum^2]}} \, ,
    \end{align*}
    which is a $\cpformat$ decomposition (see \exaref{exa:cpFormat}) depicted in \figref{fig:sudokuDecomposition} a).

    Alternatively there is a $\ttformat$ decomposition (see \exaref{exa:ttFormat}) of the constraint $\woneoplus$, which we depict in \figref{fig:sudokuDecomposition} b).
    We introduce for $\catenumerator\in[\catorder-1]$ hidden variables $\decvariable^{\catenumerator}$ of dimension $2$, which are interpreted as the indicator, whether one of the variables $\catvariableof{[\catenumerator]}$ is true.
    Following this interpretation we introduce $\ttformat$ cores
    \begin{align*}
        \sechypercoreofat{0}{\catvariableof{0},\secdecvariable^{0}}
        &= \tbasisat{\catvariableof{0}}\otimes\tbasisat{\secdecvariable^{0}}
        + \fbasisat{\catvariableof{0}}\otimes\fbasisat{\secdecvariable^{0}} \\
        \sechypercoreofat{\catorder-1}{\secdecvariable^{\catorder-2},\catvariableof{\catorder-1}}
        &= \fbasisat{\secdecvariable^{\catorder-2}} \otimes \tbasisat{\catvariableof{\catorder-1}}
        + \tbasisat{\secdecvariable^{\catorder-2}} \otimes \fbasisat{\catvariableof{\catorder-1}}
    \end{align*}
    and for $\catenumerator\in\{1,\ldots,\catorder-2\}$
    \begin{align*}
        \sechypercoreofat{\catenumerator}{\secdecvariable^{\catenumerator-1},\catvariableof{\catenumerator},\secdecvariable^{\catenumerator}}
        = \identityat{\secdecvariable^{\catenumerator-1},\secdecvariable^{\catenumerator}}  \otimes \fbasisat{\catvariableof{\catenumerator}}
        + \fbasisat{\secdecvariable^{\catenumerator-1}} \otimes \tbasisat{\catvariableof{\catenumerator}} \otimes \tbasisat{\secdecvariable^{\catenumerator-1}} \, .
    \end{align*}
    We notice, that the $\ttformat$ decomposition of the constraint $\woneoplus$ introduces $\catorder-1$ many hidden variables of dimension $2$, whereas the $\cpformat$ decomposition introduces a single hidden variable of dimension $\catorder$.
    However, in the following we will further apply the $\cpformat$ decomposition. 

    \begin{figure}[t]

        \begin{center}
            \begin{tikzpicture}[scale=0.35,thick]

                \draw (-6,-1) rectangle (-12,1);
                \node[anchor=center] (A) at (-9,0) {\corelabelsize $\woneoplus$};
                \draw (-11.5,-1)--(-11.5,-2.5) node[midway,left] {\colorlabelsize $\catvariableof{0,0,0,0,0}$};
                \draw (-11,-1)--(-11,-2.5) node[midway,right] {\colorlabelsize $\catvariableof{0,0,0,0,1}$};
                \node[anchor=center] (A) at (-9,-2.5) {$\cdots$};
                \draw (-7,-1)--(-7,-2.5) node[midway,right] {\colorlabelsize $\catvariableof{0,0,0,0,\sudokunum^2\shortminus 1}$};

                \node[anchor=center] (A) at (-3.5,1) {$a)$};
                \node[anchor=center] (A) at (-3.5,0) {$=$};

                \draw (-1,-1) rectangle (1,1);
                \node[anchor=center] (A) at (0,0) {\corelabelsize $\hypercoreof{0}$};
                \draw (0,-1)--(0,-2.5) node[midway,right] {\colorlabelsize $\catvariableof{0,0,0,0,0}$};

                \draw (3,-1) rectangle (5,1);
                \node[anchor=center] (A) at (4,0) {\corelabelsize $\hypercoreof{1}$};
                \draw (4,-1)--(4,-2.5) node[midway,right] {\colorlabelsize $\catvariableof{0,0,0,0,1}$};

                \node[anchor=center] (text) at (8,0) {$\hdots$};

                \draw (10.75,-1) rectangle (13.25,1);
                \node[anchor=center] (A) at (12,0) {\corelabelsize $\hypercoreof{\sudokunum^2\shortminus1}$};
                \draw (12,-1)--(12,-2.5) node[midway,right] {\colorlabelsize $\catvariableof{0,0,0,0,\sudokunum^2\shortminus1}$};

                \drawvariabledot{6}{4}
                \node[anchor=south] (text) at (6,4) {\colorlabelsize $\decvariableof{0,0,0,0,:}$};

                \draw (6,4) to[bend right= 20] (0,1);
                \draw (6,4) to[bend right= 10] (4,1);
                \draw (6,4) to[bend right= -20] (12,1);

                \begin{scope}[shift={(0,-7)}]
                    \node[anchor=center] (A) at (-3.5,1) {$b)$};
                    \node[anchor=center] (A) at (-3.5,0) {$=$};

                    \draw (-2,-1) rectangle (0,1);
                    \node[anchor=center] (A) at (-1,0) {\corelabelsize $\sechypercoreof{0}$};
                    \draw (-1,-1)--(-1,-2.5) node[midway,right] {\colorlabelsize $\catvariableof{0,0,0,0,0}$};

                    \draw (0,0) -- (3,0) node [midway,above] {\colorlabelsize $\secdecvariableof{0,0,0,0,:}^0$};

                    \draw (3,-1) rectangle (5,1);
                    \node[anchor=center] (A) at (4,0) {\corelabelsize $\sechypercoreof{1}$};
                    \draw (4,-1)--(4,-2.5) node[midway,right] {\colorlabelsize $\catvariableof{0,0,0,0,1}$};

                    \node[anchor=center] (text) at (6.5,1) {\colorlabelsize $\secdecvariableof{0,0,0,0,:}^{1}$};
                    \draw (5,0) -- (7,0); 

                    \node[anchor=center] (text) at (8,0) {$\hdots$};

                    \node[anchor=center] (text) at (10.25,1) {\colorlabelsize $\secdecvariableof{0,0,0,0,:}^{\sudokunum^2\shortminus2}$};
                    \draw (9.75,0) -- (11.75,0);

                    \draw (11.75,-1) rectangle (14.25,1);
                    \node[anchor=center] (A) at (13,0) {\corelabelsize $\sechypercoreof{\sudokunum^2\shortminus1}$};
                    \draw (13,-1)--(13,-2.5) node[midway,right] {\colorlabelsize $\catvariableof{0,0,0,0,\sudokunum^2\shortminus1}$};

                \end{scope}

            \end{tikzpicture}
        \end{center}
        \caption{Decomposition of the position constraint $\woneoplus$ at position $(r0,r1,c0,c1)=(0,0,0,0)$ into a) a $\cpformat$ decomposition with hidden variable $\decvariableof{0,0,0,0,:}$ and b) a $\ttformat$ decomposition with $d-1$ hidden variables $\decvariableof{0,0,0,0,:}^k, k\in[d-1]$.}
        \label{fig:sudokuDecomposition}
    \end{figure}

    Given evidence $\sudokustartevidence$ we denote the Sudoku knowledge base $\sudokukbof{\sudokunum,\sudokustartevidence}$.
    We model the Sudoku knowledge base $\sudokukbof{\sudokunum,\sudokustartevidence}$ as a tensor network on a hypergraph $\graphof{\mathrm{Sudoku},n}$ consistent in
    \begin{itemize}
        \item $\sudokunum^6+4\cdot \sudokunum^4$ nodes by $\sudokunum^6$ categorical variables $\catvariableof{(r0,r1,c0,c1,i)}$ and by $4\cdot \sudokunum^4$ decomposition variables to the constraints
        \item $5\cdot \sudokunum^6$ edges
        \begin{align*}
            \edges=
            \bigcup_{r0,r1,c0,c1\in[\sudokunum]}
            \big\{
            &\{\catvariableof{(r0,r1,c0,c1,i)}\},\{\catvariableof{(r0,r1,c0,c1,i)},\decvariableof{r0,r1,c0,c1,:}\},\{\catvariableof{(r0,r1,c0,c1,i)},\decvariableof{r0,r1,:,:,i}\},\\
            &\{\catvariableof{(r0,r1,c0,c1,i)},\decvariableof{:,:,c0,c1,i}\},
            \{\catvariableof{(r0,r1,c0,c1,i)},\decvariableof{r0,:,c0,:,i}\}\big\}
        \end{align*}
        We denote the decomposition variables to the position, row, column and square constraints by $\decvariableof{r0,r1,c0,c1,:}$, $\decvariableof{r0,r1,:,:,i}$, $\decvariableof{:,:,c0,c1,i}$ and $\decvariableof{r0,:,c0,:,i}$.
    \end{itemize}
    Each edge containing a decomposition variable is decorated by a matrix $\hypercoreofat{\catenumerator}{\catvariable,\decvariable}$ corresponding to a core in the $\cpformat$ decomposition of a constraint.
    Here $\catenumerator$ is determined by the tuple $(r0,r1,c0,c1,i)$ and the type of the constraint (for example, for the variable $\catvariableof{(0,1,1,2,1)}$ and the row constraint $\decvariableof{(0,1,:,:,1)}$ we have $\catenumerator=1\cdot n + 2$.
    We further assign to each edge containing a single variable $\{\catvariableof{(r0,r1,c0,c1,i)}\}$ either the vector $\tbasisat{\catvariableof{(r0,r1,c0,c1,i)}}$ if $(r0,r1,c0,c1,i)\in \sudokustartevidence$ or the trivial vector $\onesat{\catvariableof{(r0,r1,c0,c1,i)}}$.
\end{example}

\subsection{Entailment decision by message passing}

Since contracting the whole tensor network is often infeasible, local contractions can be considered to decide entailment in some cases.
Here, a local contraction describes the calculation of contractions along few closely connected tensors in the network.
Before presenting the resulting Constraint Propagation algorithm, we first show two important properties of local entailment motivating the procedure.

\begin{theorem}[Monotonicity of propositional logics]
    \label{the:monotonicityPL}
    If $\seckb\subset\kb$ and $\seckb\models\formula$ then also $\kb\models\formula$.
\end{theorem}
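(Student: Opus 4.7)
The plan is to translate the set-theoretic hypothesis $\seckb \subset \kb$ into a pointwise inequality of the corresponding boolean tensors $\kbwith$ and $\seckbwith$, and then combine this with non-negativity of $\lnot\formulawith$ to transport the vanishing contraction from $\seckb$ to $\kb$.

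First I would unpack the hypothesis via \defref{def:logicalEntailment}: $\seckb\models\formula$ means $\contraction{\seckbwith,\lnot\formulawith}=0$. Since both $\seckbwith$ and $\lnot\formulawith$ have coordinates in $\ozset$, this contraction is a sum of non-negative integers, so every summand must already vanish, i.e.\ $\seckbat{\indexedshortcatvariables}\cdot\lnot\formulaat{\indexedshortcatvariables}=0$ for every $\shortcatindices\in\atomstates$. Next, I apply \theref{the:kbDecomposition} to both $\kb$ and $\seckb$ to obtain
\begin{align*}
    \kbwith = \contractionof{\{\formulaofat{\selindex}{\shortcatvariables}\wcols\formulaof{\selindex}\in\kb\}}{\shortcatvariables}
    \andspace
    \seckbwith = \contractionof{\{\formulaofat{\selindex}{\shortcatvariables}\wcols\formulaof{\selindex}\in\seckb\}}{\shortcatvariables} \, .
\end{align*}
Because the formulas in $\kb$ and $\seckb$ all share the same open variables $\shortcatvariables$, these contractions reduce to coordinate-wise products, and the inclusion $\seckb\subset\kb$ gives
\begin{align*}
    \kbat{\indexedshortcatvariables}
    = \seckbat{\indexedshortcatvariables}\cdot\prod_{\formulain\kb\setexcept{\seckb}}\formulaat{\indexedshortcatvariables} \, .
\end{align*}

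Each remaining factor $\formulaat{\indexedshortcatvariables}$ lies in $\ozset$ by \defref{def:formulas}, so the product is bounded above by $\seckbat{\indexedshortcatvariables}$, yielding the pointwise inequality $\kbat{\indexedshortcatvariables}\leq\seckbat{\indexedshortcatvariables}$. Multiplying by the non-negative value $\lnot\formulaat{\indexedshortcatvariables}\in\ozset$ and summing gives
\begin{align*}
    0 \leq \contraction{\kbwith,\lnot\formulawith}
    \leq \contraction{\seckbwith,\lnot\formulawith} = 0 \, ,
\end{align*}
so $\contraction{\kbwith,\lnot\formulawith}=0$, which is precisely $\kb\models\formula$ by \defref{def:logicalEntailment}.

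There is no real obstacle here; the whole content of the theorem is the fact that enlarging a conjunction can only shrink the set of models, which in the tensor formalism is just the observation that multiplication by boolean tensors is a pointwise decreasing operation. The only care needed is to invoke \theref{the:kbDecomposition} so that the contraction of the whole knowledge base factorizes cleanly into $\seckb$ times the new formulas, making the monotonicity visible at the level of individual coordinates.
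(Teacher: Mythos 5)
Your proof is correct and follows essentially the same route as the paper's: both factor $\kbwith$ into $\seckbwith$ times the remaining formulas and use non-negativity to transport the vanishing contraction. The only presentational difference is that you phrase it as a pointwise inequality $\kbat{\indexedshortcatvariables}\leq\seckbat{\indexedshortcatvariables}$ followed by a sandwich argument, whereas the paper contracts $\seckbwith$ with $\lnot\formulawith$ first (obtaining the zero tensor with open variables) and then contracts the remaining factors against zero.
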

\begin{proof}
    Since $\seckb\models\formula$ it holds that $\contraction{\seckb[\shortcatvariables],\lnot\formula[\shortcatvariables]}=0$ and thus
    \begin{align*}
        \contractionof{\seckb[\shortcatvariables],\lnot\formula[\shortcatvariables]}{\shortcatvariables}=\zerosat{\shortcatvariables} \, .
    \end{align*}
    Denoting by $\kb/\seckb$ the conjunctions of formulas in $\kb$ not in $\seckb$, we have
    \begin{align*}
        \contraction{\kbwith,\lnot\formulawith}
        &= \contraction{(\kb/\seckb)[\shortcatvariables],\seckb[\shortcatvariables],\lnot\formulawith} \\
        &= \contraction{(\kb/\seckb)[\shortcatvariables],\contractionof{\seckb[\shortcatvariables],\lnot\formulawith}{\shortcatvariables}} \\
        &= \contraction{(\kb/\seckb)[\shortcatvariables],\zerosat{\shortcatvariables}} \\
        &= 0 \, .
    \end{align*}
\end{proof}
To decide entailment, we can therefore investigate entailment on smaller parts of the knowledge base.
This is sound by the above theorem but not complete since it can happen that no smaller part of the knowledge base entails the formula while the whole knowledge base does.
We can furthermore add entailed formulas to the knowledge base without changing it as is shown next.

\begin{theorem}[Invariance of adding entailed formulas]
    \label{the:addingEntailed}
    If and only if $\kb\models\formula$ we have that
    \begin{align*}
        \kbwith
        = \contractionof{\kbwith,\formulawith}{\shortcatvariables} \, .
    \end{align*}
\end{theorem}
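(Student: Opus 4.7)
The plan is to reduce both sides of the claimed identity to coordinatewise statements about boolean tensors and then exploit the definition of entailment (\defref{def:logicalEntailment}) directly. Since both $\kbwith$ and $\formulawith$ live in $\bigotimes_{\atomenumeratorin}\rr^2$ and share exactly the variable tuple $\shortcatvariables$ (which we keep open), the contraction on the right-hand side is, by \defref{def:contraction}, the coordinatewise product
\begin{align*}
    \contractionof{\kbwith,\formulawith}{\indexedshortcatvariables}
    = \kbat{\indexedshortcatvariables} \cdot \formulaat{\indexedshortcatvariables} \, .
\end{align*}
The whole statement therefore amounts to: $\kb\models\formula$ iff $\kbat{\indexedshortcatvariables}=\kbat{\indexedshortcatvariables}\cdot\formulaat{\indexedshortcatvariables}$ holds for every $\shortcatindices\in\atomstates$.

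For \proofrightsymbol, I would fix an arbitrary $\shortcatindicesin$ and distinguish two cases using that $\kbat{\indexedshortcatvariables}\in\{0,1\}$. If $\kbat{\indexedshortcatvariables}=0$, both sides vanish. If $\kbat{\indexedshortcatvariables}=1$, then $\shortcatindices$ is a model of $\kb$ by \defref{def:formulas}, hence by the entailment hypothesis it is also a model of $\formula$, so $\formulaat{\indexedshortcatvariables}=1$ and the product $\kbat{\indexedshortcatvariables}\cdot\formulaat{\indexedshortcatvariables}$ equals $1=\kbat{\indexedshortcatvariables}$. Since this holds at every coordinate, the two tensors coincide.

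For \proofleftsymbol, I would use $\lnot\formulawith = \oneswith - \formulawith$ to expand
\begin{align*}
    \contraction{\kbwith,\lnot\formulawith}
    = \sum_{\shortcatindicesin}\kbat{\indexedshortcatvariables}
    - \sum_{\shortcatindicesin}\kbat{\indexedshortcatvariables}\cdot\formulaat{\indexedshortcatvariables} \, .
\end{align*}
The hypothesis $\kbwith=\contractionof{\kbwith,\formulawith}{\shortcatvariables}$ implies that the two sums cancel coordinatewise, so $\contraction{\kbwith,\lnot\formulawith}=0$, which by \defref{def:logicalEntailment} is precisely $\kb\models\formula$.

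There is no genuine obstacle here; the only subtlety is recognizing that the contraction, with all atomic variables kept open, collapses to Hadamard multiplication. Once that is noted, the two implications are a coordinatewise check over $\atomstates$, made possible by the boolean nature of $\kb$ and $\formula$.
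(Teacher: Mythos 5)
Your proof is correct and follows essentially the same route as the paper's: both hinge on the decomposition $\formulawith+\lnot\formulawith=\oneswith$ (so that $\kbwith=\contractionof{\kbwith,\formulawith}{\shortcatvariables}+\contractionof{\kbwith,\lnot\formulawith}{\shortcatvariables}$) and on the non-negativity of the boolean tensors to pass between coordinatewise vanishing of $\contractionof{\kbwith,\lnot\formulawith}{\shortcatvariables}$ and vanishing of the scalar contraction $\contraction{\kbwith,\lnot\formulawith}$. Your explicit coordinatewise case analysis in the forward direction is just a more pedestrian unwinding of the same observation, so no substantive difference.
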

\begin{proof}
    We use that $\formulawith+\lnot\formulawith=\onesat{\shortcatvariables}$ and thus
    \begin{align*}
        \kbwith
        &= \contractionof{\kbwith,(\formulawith+\lnot\formulawith)}{\shortcatvariables} \\
        &= \contractionof{\kbwith,\formulawith}{\shortcatvariables}  + \contractionof{\kbwith,\lnot\formulawith}{\shortcatvariables} \,. \\
    \end{align*}
    Since $\contractionof{\kbwith,\lnot\formulawith}{\shortcatvariables}$ is boolean, we have that
    \begin{align*}
        \kbwith=\contractionof{\kbwith,\formulawith}{\shortcatvariables}
    \end{align*}
    if and only if $\contraction{\kbwith,\lnot\formulawith}=0$, that is $\kb\models\formula$.
\end{proof}

The mechanism of \theref{the:addingEntailed} provides us with a means to store entailment information in small-order auxiliary tensors.
One way to exploit this accessibility of local entailment information are message passing schemes similar to \algoref{alg:treeBeliefPropagation} propagating the information.
This approach decides local entailment by iteratively adding entailed formulas to the knowledge base and checking further entailment on neighboring tensors of the knowledge base.
Since for entailment decisions the support of the contractions is sufficient, we can apply non-zero indicators before sending contraction messages.
We then schedule new messages in the direction $(\sedge,\redge)$ once the support of a message received at $\sedge$ has been changed.
Note that such a scheduling system is guaranteed to converge since there can only be a finite number of message changes.
We further directly reduce the computation of messages to their support and call the resulting Constraint Propagation (\algoref{alg:constraintPropagation}).

\begin{algorithm}[hbt!]
    \caption{Constraint Propagation}\label{alg:constraintPropagation}
    \begin{algorithmic}
        \Require Tensor network $\extnet$ on a hypergraph $\graph$
        \Ensure Messages $\{\messagewith\wcols(\sedge,\redge)\in\dirovedges\}$ containing entailment statements
        \iosepline
        \State Initialize a queue $\scheduler = \dirovedges$ of message directions
        \State Initialize messages $\messagewith = \onesat{\catvariableof{\sedge\cap\redge}}$ for $(\sedge,\redge)\in\dirovedges$
        \While{$\scheduler$ not empty}
            \State Pop a $(\sedge,\redge)$ pair from $\scheduler$
            \State Update the message
            \begin{align*}
                \messagewith
                = \nonzeroof{\contractionof{\{\hypercoreofat{\sedge}{\catvariableof{\sedge}}\}
                    \cup \{\mesfromtoat{\secsedge}{\sedge}{\catvariableof{\secsedge\cap\sedge}} \wcols (\secsedge,\sedge)\in\dirovedges \ncond \secsedge\neq \redge\}
                }{\catvariableof{\sedge\cap \redge}}}
            \end{align*}
            \If{$\hypercoreat{\catvariableof{\sedge\cap \redge}}\neq\messagewith$}
                \State Update the message: $\messagewith\coloneqq\hypercoreat{\catvariableof{\sedge\cap \redge}}$
                \State Add $\scheduler = \scheduler \cup \{(\redge,\secsedge) \wcols (\redge,\secsedge)\in\dirovedges\}$ 
            \EndIf
        \EndWhile
        \State \Return Messages $\{\messagewith\wcols(\sedge,\redge)\in\dirovedges\}$
    \end{algorithmic}
\end{algorithm}

\begin{theorem}
    \label{the:constraintPropagationSoundness}
    All messages during constraint propagation are sound, meaning that for all $(\sedge,\redge)\in\dirovedges$ it holds that
    \begin{align*}
        \nonzeroof{\contractionof{\extnet}{\catvariableof{\sedge\cap\redge}}} \prec \messagewith \, .
    \end{align*}
\end{theorem}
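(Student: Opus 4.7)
The plan is to argue by induction on the iterations of the while loop in \algoref{alg:constraintPropagation}, maintaining the invariant that for every $(\sedge,\redge)\in\dirovedges$ the current message $\messagewith$ satisfies $\nonzeroof{\contractionof{\extnet}{\catvariableof{\sedge\cap\redge}}} \prec \messagewith$ pointwise. The initialization satisfies this trivially, since $\onesat{\catvariableof{\sedge\cap\redge}}$ bounds every boolean non-zero indicator from above. For the inductive step I would focus on a single update of a message $\mesfromto{\sedge}{\redge}$ and let $\tilde\chi$ denote the newly computed candidate value from the body of the while loop, which replaces the previous message and is $\{0,1\}$-valued through the outer $\nonzerofunction$.

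The core of the proof is a coordinate-wise witness argument. Fix an index $\catindex$ of $\catvariableof{\sedge\cap\redge}$ at which the left-hand side is $1$, so that $\contractionof{\extnet}{\catvariableof{\sedge\cap\redge}=\catindex}>0$. Since the tensors in $\extnet$ are non-negative, this positivity yields a global configuration $\tildecatindex$ of $\catvariableof{\nodes}$ extending $\catindex$ at which every factor $\hypercoreofat{\edge}{\cdot}$ is strictly positive. In particular $\hypercoreofat{\sedge}{\restrictionofto{\tildecatindex}{\sedge}}>0$, and for each $(\secsedge,\sedge)\in\dirovedges$ with $\secsedge\neq\redge$ the marginal $\contractionof{\extnet}{\catvariableof{\secsedge\cap\sedge}=\restrictionofto{\tildecatindex}{\secsedge\cap\sedge}}$ is also positive, witnessed by the very same $\tildecatindex$. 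The induction hypothesis applied to the incoming messages then forces $\mesfromtoat{\secsedge}{\sedge}{\catvariableof{\secsedge\cap\sedge}=\restrictionofto{\tildecatindex}{\secsedge\cap\sedge}}=1$. Hence the summand in the contraction defining $\tilde\chi$ indexed by $\restrictionofto{\tildecatindex}{\sedge}$ is strictly positive, and since the remaining summands are non-negative, $\tilde\chi$ evaluates to $1$ at $\catindex$, which is exactly the desired bound.

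The main obstacle is that for a general (non-tree) hypergraph one cannot invoke \theref{the:treeBeliefPropagationExactness} to rewrite $\contractionof{\extnet}{\catvariableof{\sedge\cap\redge}}$ as a clean contraction of subsystem marginals using the update formula, so the induction cannot proceed by an algebraic identity of the kind used in \lemref{lem:messageAsContraction}. The witness argument circumvents this difficulty by exploiting only the monotone direction (morally the same as the \theref{the:monotonicityPL}) that positivity of the whole non-negative network entails positivity of each of its factors and each of its local marginals, combined with the boolean structure enforced by the $\nonzerofunction$ wrapping in the update rule. Termination of the scheduler itself is independent of the soundness claim, following from the observation that message supports can only shrink and live in the finite boolean lattice.
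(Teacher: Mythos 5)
Your proof is correct, and it has the same overall skeleton as the paper's: induction over the iterations of the \whileSymbol{} loop of \algoref{alg:constraintPropagation}, with the soundness bound for all currently stored messages as the induction invariant and the trivial initialization $\messagewith=\onesat{\catvariableof{\sedge\cap\redge}}$ as the base case. Where you diverge is in how the inductive step is discharged. The paper maintains the auxiliary invariant that contracting all current messages into $\extnet$ leaves the network unchanged (justified by \theref{the:addingEntailed}, since by the induction hypothesis each message is entailed), and then invokes \theref{the:monotonicityPL} to conclude that the support of the local update contraction dominates the support of the full marginal. You instead give a direct coordinate-wise witness argument: positivity of $\contractionof{\extnet}{\catvariableof{\sedge\cap\redge}=\catindex}$ produces, by non-negativity, a global configuration at which every factor $\hypercoreofat{\edge}{\cdot}$ is positive; the same configuration witnesses positivity of the marginals over $\catvariableof{\secsedge\cap\sedge}$, so the induction hypothesis forces every incoming message to equal one on the corresponding restrictions, and the matching summand of the update is therefore strictly positive. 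This is, in effect, a self-contained proof of exactly the instance of monotonicity the paper cites, so the two arguments buy the same conclusion. Your version is more elementary and has the virtue of making explicit the non-negativity hypothesis on $\extnet$ that the soundness claim actually rests on (the paper leaves it implicit, the surrounding section dealing with boolean knowledge-base tensors); the paper's version is more modular, reusing \theref{the:monotonicityPL} and \theref{the:addingEntailed} and thereby keeping the logical reading (entailment is preserved under adding entailed formulas) in the foreground. Your closing remarks on why the tree-exactness route of \lemref{lem:messageAsContraction} is unavailable here, and on termination via the finite boolean lattice of message supports, are both accurate, though the latter is not needed for the soundness statement itself.
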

\begin{proof}
    We show this theorem by induction over the \whileSymbol{} loop of \algoref{alg:constraintPropagation}.
    At the first iteration, we have for all messages $\messagewith=\onesat{\catvariableof{\sedge\cap\redge}}$ and thus
    \begin{align}
        \label{eq:nzMessageAddingEquivalence}
        \extnet = \contractionof{\{\extnet\}\cup\{\messagewith\wcols(\sedge,\redge)\in\dirovedges\}}{\nodevariables} \, .
    \end{align}
    By \theref{the:monotonicityPL} we then have for the first message send along the pair $(\sedge,\redge)$ that
    \begin{align*}
        \nonzeroof{\contractionof{\extnet}{\catvariableof{\sedge\cap\redge}}} \prec
        &\nonzeroof{\contractionof{\{\hypercoreofat{\sedge}{\catvariableof{\sedge}}\}
            \cup \{\mesfromtoat{\secsedge}{\sedge}{\catvariableof{\secsedge\cap\sedge}} \wcols (\secsedge,\sedge)\in\dirovedges \ncond \secsedge\neq \redge\}
        }{\catvariableof{\sedge\cap \redge}}} \\
        &= \messagewith \, .
    \end{align*}

    We now assume that at an arbitrary state of the algorithm the inequality holds for all previously sent messages.
    By \theref{the:addingEntailed} we can contract the messages with the tensor network without changing it and \eqref{eq:nzMessageAddingEquivalence} thus still holds.
    We then conclude with \theref{the:monotonicityPL} that the claimed property also holds for the new message.
\end{proof}

\begin{example}[Message passing for the Sudoku instance of \exaref{exa:sudokuEntailment}]
    \label{exa:sudokuMessagePassing}
    We iteratively solve a Sudoku puzzle by determining a possible value based on neighboring cells, rows and squares (using \theref{the:monotonicityPL}) and adding to our knowledge (using \theref{the:addingEntailed}).
    For example, consider the following $\sudokunum=2$ Sudoku puzzle, where a first entailment step uses only the knowledge of the rules and the \textcolor{\concolor}{blue} cells to determine the value $3$ in the first square:
    \begin{center}
        \begin{sudoku4x4}
            \matrix[sudokumatrix] (M) at (0,0) {
                1 & \ & \textcolor{\concolor}{3} & 2 \\
                \ & \textcolor{\concolor}{2} & \  & \  \\
                \ & \ & 4 & \ \\
                4 & 3 &  \ & \  \\
            };
            \draw[thick]([yshift=9.5pt,xshift=-0.6pt]M-1-2.east) -- ([yshift=-9.5pt,xshift=-0.6pt]M-4-2.east);
            \draw[thick]([xshift=-9.5pt,yshift=0.6pt]M-2-1.south) -- ([xshift=9.5pt,yshift=0.6pt]M-2-4.south);

            \node[anchor=center] (ist) at (1.75,0) {$=$};

            \matrix[sudokumatrix] (M) at (3.5,0) {
                1 & \ & 3 & 2 \\
                \textcolor{\probcolor}{3} & 2 & \  & \  \\
                \ & \ & 4 & \ \\
                4 & 3 &  \ & \  \\
            };
            \draw[thick]([yshift=9.5pt,xshift=-0.6pt]M-1-2.east) -- ([yshift=-9.5pt,xshift=-0.6pt]M-4-2.east);
            \draw[thick]([xshift=-9.5pt,yshift=0.6pt]M-2-1.south) -- ([xshift=9.5pt,yshift=0.6pt]M-2-4.south);

            \node[anchor=center] (ist) at (6.25,0) {$= \quad \ldots \quad =$};

            \matrix[sudokumatrix] (M) at (9,0) {
                1 & \textcolor{\probcolor}{4} & 3 & 2 \\
                \textcolor{\probcolor}{3} & 2 & \textcolor{\probcolor}{1} & \textcolor{\probcolor}{4}  \\
                \textcolor{\probcolor}{2} & \textcolor{\probcolor}{1} & 4 & \textcolor{\probcolor}{3} \\
                4 & 3 & \textcolor{\probcolor}{2} & \textcolor{\probcolor}{1}  \\
            };
            \draw[thick]([yshift=9.5pt,xshift=-0.6pt]M-1-2.east) -- ([yshift=-9.5pt,xshift=-0.6pt]M-4-2.east);
            \draw[thick]([xshift=-9.5pt,yshift=0.6pt]M-2-1.south) -- ([xshift=9.5pt,yshift=0.6pt]M-2-4.south);
        \end{sudoku4x4}
    \end{center}

    To illustrate the first reasoning step of assigning $\textcolor{\probcolor}{\catvariableof{0,1,0,0,2}}$ we make the following entailment steps applying \theref{the:monotonicityPL}.
    We also depict in \figref{fig:contractionPropagationSudoku} the corresponding messages in the Constraint Propagation Algorithm on the hypergraph $\graph^{\mathrm{Sudoku},n}$.
    \begin{itemize}
        \item From $\textcolor{\concolor}{\catvariableof{0,1,0,1,1}}$ (i.e. the $2$ in the cell $(0,1,0,1)$) and the Sudoku rule that at the cell $(0,1,0,1)$ exactly one number is assigned, we get
        \begin{align*}
            \left( \woneoplus_{i\in[\sudokunum^2]} \catvariableof{0,1,0,1,i} \right) \land \textcolor{\concolor}{\catvariableof{0,1,0,1,1}} \models \lnot\catvariableof{0,1,0,1,2} \, ,
        \end{align*}
        That is, that the number $3$ is not in the cell $(0,1,0,1)$.
        This entailment step is performed by three consecutive messages (see $\messagesymbol^{(0,[3])}$ in \figref{fig:contractionPropagationSudoku}) along the directions 
        \begin{align*}
        (\sedge,\redge)
            \in \big[&(\{\catvariableof{0,1,0,1,1}\},\{\catvariableof{0,1,0,1,1},\decvariableof{0,1,0,1,:}\}),
                (\{\catvariableof{0,1,0,1,1},\decvariableof{0,1,0,1,:}\},\{\catvariableof{0,1,0,1,2},\decvariableof{0,1,0,1,:}\}), \\
                &(\{\catvariableof{0,1,0,1,2},\decvariableof{0,1,0,1,:}\},\{\catvariableof{0,1,0,1,2},\decvariableof{0,:,0,:,2}\})\big] \, .
        \end{align*}
        Intuitively, the messages commmunicate to the square constraint $\decvariableof{0,:,0,:,2}$, that by the position constraint $\decvariableof{0,1,0,1,:}$ the variable $3$ cannot be assigned at $(0,1,0,1)$.
        \item From $\textcolor{\concolor}{\catvariableof{0,0,1,0,2}}$ (i.e. the $3$ in the cell $(0,0,1,0)$) and the Sudoku rule that at the row $(0,0)$ exactly one number is assigned, we get
        \begin{align*}
            \left( \woneoplus_{c0,c1\in[\sudokunum]} \catvariableof{0,0,c0,c1,2} \right) \land \textcolor{\concolor}{\catvariableof{0,0,1,0,2}} \models \lnot\catvariableof{0,0,0,0,2}\land \lnot\catvariableof{0,0,0,1,2} \, ,
        \end{align*}
        That is, that the number $3$ is neither in the cell $(0,0,0,0)$ nor in $(0,0,0,1)$.
        This entailment step is performed by five consecutive messages (see $\messagesymbol^{(1,[5])}$ in \figref{fig:contractionPropagationSudoku}) along the directions
        \begin{align*}
        (\sedge,\redge)
            \in \big[
            &(\{\catvariableof{0,0,1,0,2}\},\{\catvariableof{0,0,1,0,2},\decvariableof{0,0,:,:,2}\}),
                (\{\catvariableof{0,0,1,0,2},\decvariableof{0,0,:,:,2}\},\{\catvariableof{0,0,0,0,2},\decvariableof{0,0,:,:,2}\}), \\
                &(\{\catvariableof{0,0,1,0,2},\decvariableof{0,0,:,:,2}\},\{\catvariableof{0,0,0,1,2},\decvariableof{0,0,:,:,2}\}),
                (\{\catvariableof{0,0,0,0,2},\decvariableof{0,0,:,:,2}\},\{\catvariableof{0,0,0,0,2},\decvariableof{0,:,0,:,2}\}) \\
                &(\{\catvariableof{0,0,0,1,2},\decvariableof{0,0,:,:,2}\},\{\catvariableof{0,0,0,1,2},\decvariableof{0,:,0,:,2}\})
                \big] \, .
        \end{align*}
        The messages communicate that based on the decomposition cores of the constraint to the number $i=3$ in the first row $(r_0,r_1)=(0,0)$, that the number $3$ cannot be assigned at $(0,0,0,0)$ and $(0,0,0,1)$.
    \end{itemize}
    We add these formulas to our knowledge base (justified by \theref{the:addingEntailed}) and use the rule, that $3$ appears exactly once in the first square
    \begin{align*}
        &\left( \woneoplus_{r1,c1\in[\sudokunum]} \catvariableof{0,r1,0,c1,2} \right)
        \land (\lnot\catvariableof{0,1,0,1,2})
        \land (\lnot\catvariableof{0,0,0,0,2}\land \lnot\catvariableof{0,0,0,1,2})
        \models \textcolor{\probcolor}{\catvariableof{0,1,0,0,2}} \, .
    \end{align*}
    That is, we conclude that the number $3$ must be in the cell $(0,1,0,0)$, which information is also included in the updated knowledge base for further reasoning steps.
    This last entailment step is performed by four consecutive messages (see $\messagesymbol^{(2,[4])}$ in \figref{fig:contractionPropagationSudoku}) along the directions
    \begin{align*}
    (\sedge,\redge)
        \in \big[
            &(\{\catvariableof{0,1,0,1,2},\decvariableof{0,:,0,:,2}\},\{\catvariableof{0,1,0,0,2},\decvariableof{0,:,0,:,2}\}),
            (\{\catvariableof{0,0,0,1,2},\decvariableof{0,:,0,:,2}\},\{\catvariableof{0,1,0,0,2},\decvariableof{0,:,0,:,2}\}), \\
            &(\{\catvariableof{0,0,1,0,2},\decvariableof{0,:,0,:,2}\},\{\catvariableof{0,1,0,0,2},\decvariableof{0,:,0,:,2}\}),
            (\{\catvariableof{0,1,0,0,2},\decvariableof{0,:,0,:,2}\},\{\catvariableof{0,1,0,0,2}\})\big]
    \end{align*}
    The first three messages communicate, that the $3$ is not possible the positions $(0,1,0,1),(0,0,0,1)$ and $(0,0,1,0)$ and the fourth message concludes that the $3$ then has to be at position $(0,1,0,0)$.

    We now iteratively apply similar reasoning steps and store the entailed variables in \textcolor{\probcolor}{$E^{\mathrm{entailed}}$}, until we arrive at the right side of the above sketch.
    We conclude that
    \begin{align*}
        \sudokukbof{2} \land \left(\bigwedge_{(r_0,r_1,c_0,c_1,i)\in \sudokustartevidence} \catvariableof{r0,r1,c0,c1,i} \right)
        \models \textcolor{\probcolor}{\left(\bigwedge_{(r_0,r_1,c_0,c_1,i)\in E^{\mathrm{entailed}}} \catvariableof{r0,r1,c0,c1,i} \right)} \, .
    \end{align*}
    Since all Sudoku rules are satisfied in the final assignment and to each cell $(r_0,r_1,c_0,c_1)$ we found exactly one $i\in[\sudokunum^2]$ such that $(r_0,r_1,c_0,c_1,i)\in \sudokustartevidence\cup\textcolor{\probcolor}{E^{\mathrm{entailed}}}$, there is a unique solution of the puzzle and we conclude
    \begin{align*}
        &\sudokukbof{2} \land \left(\bigwedge_{(r_0,r_1,c_0,c_1,i)\in \sudokustartevidence} \catvariableof{r0,r1,c0,c1,i} \right) \\
        &\quad= \left(\bigwedge_{(r_0,r_1,c_0,c_1,i)\in \sudokustartevidence} \catvariableof{r0,r1,c0,c1,i} \right)
        \land \textcolor{\probcolor}{\left(\bigwedge_{(r_0,r_1,c_0,c_1,i)\in E^{\mathrm{entailed}}} \catvariableof{r0,r1,c0,c1,i} \right)} \, .
    \end{align*}
\end{example}

\begin{figure}[t]
    \begin{center}
        \begin{tikzpicture}[scale=0.35,thick]

            \draw (-1,-1) rectangle (1,1);
            \node[anchor=center] (A) at (0,0) {\corelabelsize $\hypercoreof{0}$};
            \draw (0,-1)--(0,-3) node[midway,right] {\colorlabelsize $\catvariableof{0,0,0,0,2}$};

            \draw (3,-1) rectangle (5,1);
            \node[anchor=center] (A) at (4,0) {\corelabelsize $\hypercoreof{1}$};
            \draw (4,-1)--(4,-3) node[midway,right] {\colorlabelsize $\catvariableof{0,0,0,1,2}$};

            \draw (7,-1) rectangle (9,1);
            \node[anchor=center] (A) at (8,0) {\corelabelsize $\hypercoreof{2}$};
            \draw (8,-1)--(8,-3); 

            \drawvariabledot{8}{-3}
            \draw (8,-3) -- (7.25,-3);
            \draw[\probcolor] (7,-5) rectangle (9,-7);
            \node[anchor=center, \probcolor] (A) at (8,-6) {\corelabelsize $\tbasis$};
            \draw[\probcolor] (8,-5)--(8,-3) node[midway,right] {\colorlabelsize $\catvariableof{0,1,0,0,2}$};

            \draw[\newmessagecolor,dashed, ->] (6.5,-1) to [bend right = 30] (6.5,-5);
            \node[\newmessagecolor,anchor=center] (A) at (5.25,-4) {\colorlabelsize $\messagesymbol^{(2,3)}$};

            \draw[\newmessagecolor,dashed, ->] (11,1.25) to [bend right = 30] (9,1.25);
            \node[\newmessagecolor,anchor=center] (A) at (9.75,2) {\colorlabelsize $\messagesymbol^{(2,0)}$};

            \draw[\newmessagecolor,dashed, ->] (5,1.5) to [bend right = -40] (6.8,1.1);
            \node[\newmessagecolor,anchor=center] (A) at (6,0.75) {\colorlabelsize $\messagesymbol^{(2,1)}$};

            \draw[\newmessagecolor,dashed, ->] (1,1.5) to [bend right = -40] (7,1.75);
            \node[\newmessagecolor,anchor=center] (A) at (3.5,2) {\colorlabelsize $\messagesymbol^{(2,2)}$};

            \draw (11,-1) rectangle (13,1);
            \node[anchor=center] (A) at (12,0) {\corelabelsize $\hypercoreof{3}$};
            \draw (12,-1)--(12,-2.5) node[midway,right] {\colorlabelsize $\catvariableof{0,1,0,1,2}$};

            \drawvariabledot{6}{4}
            \node[anchor=south] (text) at (6,4) {\colorlabelsize $\decvariableof{0,:,0,:,2}$};

            \draw (6,4) to[bend right= 20] (0,1);
            \draw (6,4) to[bend right= 10] (4,1);
            \draw (6,4) to[bend right= -10] (8,1);
            \draw (6,4) to[bend right= -20] (12,1);

            \draw (3,-7) rectangle (5,-5);
            \node[anchor=center] (A) at (4,-6) {\corelabelsize $\hypercoreof{1}$};
            \drawvariabledot{4}{-3}
            \draw (4,-3) -- (3.25,-3);
            \draw (4,-5)--(4,-1);

            \draw (-1,-7) rectangle (1,-5);
            \node[anchor=center] (A) at (0,-6) {\corelabelsize $\hypercoreof{0}$};
            \drawvariabledot{0}{-3}
            \draw (0,-3) -- (-0.75,-3);
            \draw (0,-5)--(0,-1);

            \draw (-5,-7) rectangle (-3,-5);
            \node[anchor=center] (A) at (-4,-6) {\corelabelsize $\hypercoreof{2}$};
            \draw (-4,-5)--(-4,-3);

            \drawvariabledot{-4}{-3}
            \draw (-4,-3) -- (-4.75,-3);

            \draw[\concolor] (-5,-1) rectangle (-3,1);
            \node[anchor=center, \concolor] (A) at (-4,0) {\corelabelsize $\tbasis$};
            \draw[\concolor] (-4,-1)--(-4,-3) node[midway,right] {\colorlabelsize $\catvariableof{0,0,1,0,2}$};

            \draw[\newmessagecolor,dashed, ->] (-5.5,-1) to [bend right = 30] (-5.5,-5);
            \node[\newmessagecolor,anchor=center] (A) at (-6.75,-2) {\colorlabelsize $\messagesymbol^{(1,0)}$};

            \draw[\newmessagecolor,dashed, ->] (-3,-7.25) to [bend right = 30] (-1,-7.25);
            \node[\newmessagecolor,anchor=center] (A) at (-2,-6.75) {\colorlabelsize $\messagesymbol^{(1,1)}$};

            \draw[\newmessagecolor,dashed, ->] (-3,-7.5) to [bend right = 40] (3,-7.25);
            \node[\newmessagecolor,anchor=center] (A) at (0,-9) {\colorlabelsize $\messagesymbol^{(1,2)}$};

            \draw[\newmessagecolor,dashed, <-] (-1.5,-1) to [bend right = 30] (-1.5,-5);
            \node[\newmessagecolor,anchor=center] (A) at (-2.75,-4) {\colorlabelsize $\messagesymbol^{(1,3)}$};

            \draw[\newmessagecolor,dashed, <-] (2.5,-1) to [bend right = 30] (2.5,-5);
            \node[\newmessagecolor,anchor=center] (A) at (1.25,-4) {\colorlabelsize $\messagesymbol^{(1,4)}$};

            \draw (-9,-7) rectangle (-7,-5);
            \node[anchor=center] (A) at (-8,-6) {\corelabelsize $\hypercoreof{3}$};
            \draw (-8,-5)--(-8,-3) node[midway,left] {\colorlabelsize $\catvariableof{0,0,1,1,2}$};

            \drawvariabledot{-2}{-10}
            \node[anchor=north] (text) at (-2,-10) {\colorlabelsize $\decvariableof{0,0,:,:,2}$};

            \draw (-2,-10) to[bend right= -20] (-8,-7);
            \draw (-2,-10) to[bend right= -10] (-4,-7);
            \draw (-2,-10) to[bend right= 10] (0,-7);
            \draw (-2,-10) to[bend right= 20] (4,-7);

            \draw (11,-7) rectangle (13,-5);
            \node[anchor=center] (A) at (12,-6) {\corelabelsize $\hypercoreof{2}$};
            \draw (12,-5)--(12,-2.5);

            \drawvariabledot{12}{-3}
            \draw (12,-3) -- (12.5,-3);

            \draw (15,-7) rectangle (17,-5);
            \node[anchor=center] (A) at (16,-6) {\corelabelsize $\hypercoreof{0}$};
            \draw (16,-5)--(16,-3) node[midway,right] {\colorlabelsize $\catvariableof{0,1,0,1,0}$};

            \draw (19,-7) rectangle (21,-5);
            \node[anchor=center] (A) at (20,-6) {\corelabelsize $\hypercoreof{1}$};
            \draw (20,-5)--(20,-3) node[midway,right] {\colorlabelsize $\catvariableof{0,1,0,1,1}$};

            \draw[\concolor] (19,-1) rectangle (21,1);
            \node[anchor=center, \concolor] (A) at (20,0) {\corelabelsize $\tbasis$};
            \draw[\concolor] (20,-1)--(20,-3) node[midway,right] {\colorlabelsize $\catvariableof{0,1,0,1,1}$};

            \drawvariabledot{20}{-3}
            \draw (20,-3) -- (20.5,-3);

            \draw[\newmessagecolor,dashed, ->] (18.5,-1) to [bend right = 30] (18.5,-5);
            \node[\newmessagecolor,anchor=center] (A) at (17.25,-2) {\colorlabelsize $\messagesymbol^{(0,0)}$};

            \draw[\newmessagecolor,dashed, ->] (19,-7.5) to [bend right = -40] (13,-7.25);
            \node[\newmessagecolor,anchor=center] (A) at (16,-9) {\colorlabelsize $\messagesymbol^{(0,1)}$};

            \draw[\newmessagecolor,dashed, <-] (13.5,-1) to [bend right = -30] (13.5,-5);
            \node[\newmessagecolor,anchor=center] (A) at (14.9,-4) {\colorlabelsize $\messagesymbol^{(0,2)}$};

            \draw (23,-7) rectangle (25,-5);
            \node[anchor=center] (A) at (24,-6) {\corelabelsize $\hypercoreof{3}$};
            \draw (24,-5)--(24,-3) node[midway,right] {\colorlabelsize $\catvariableof{0,1,0,1,3}$};

            \drawvariabledot{18}{-10}
            \node[anchor=north] (text) at (18,-10) {\colorlabelsize $\decvariableof{0,1,0,1,:}$};

            \draw (18,-10) to[bend right= -20] (12,-7);
            \draw (18,-10) to[bend right= -10] (16,-7);
            \draw (18,-10) to[bend right= 10] (20,-7);
            \draw (18,-10) to[bend right= 20] (24,-7);

        \end{tikzpicture}
    \end{center}
    \caption{
        The tensor network decomposition of $3$ out of $4\cdot2^2=64$ rules in the $2^2\times2^2$ Sudoku knowledge base (see \exaref{exa:sudokuDecomposition}),  namely to the number $3$ appearing once in the $(0,0)$-square (top), the number $3$ appearing once in the $(0,0)$-row (bottom left) and a unique number appearing at the $(0,1,0,1)$-position (bottom right).
        The evidence of the number $3$ already being assigned at the position $(0,0,1,0)$ is sketched by a basis vector $\textcolor{\concolor}{\tbasis}$ on the left side, and the number $2$ assigned at position $(0,1,0,1)$ analogously on the right side.
        During Constraint Propagation \algoref{alg:constraintPropagation} on the hypergraph of Sudoku rules and evidence (see \exaref{exa:sudokuMessagePassing}), this evidence is in three epochs of messages propagated to the constraints by partial entailment steps and imply that $\textcolor{\probcolor}{\catvariableof{0,1,0,0,2}}$ is true, that is that at the position $(0,1,0,0)$ the number $3$ needs to be assigned.
        We depict the messages between the cores by dashed lines labeled by $\messagesymbol^{(0,[3])},\messagesymbol^{(1,[5])}$ and $\messagesymbol^{(2,[4])}$ and provide further interpretation in \exaref{exa:sudokuMessagePassing}.
    }\label{fig:contractionPropagationSudoku}
\end{figure}
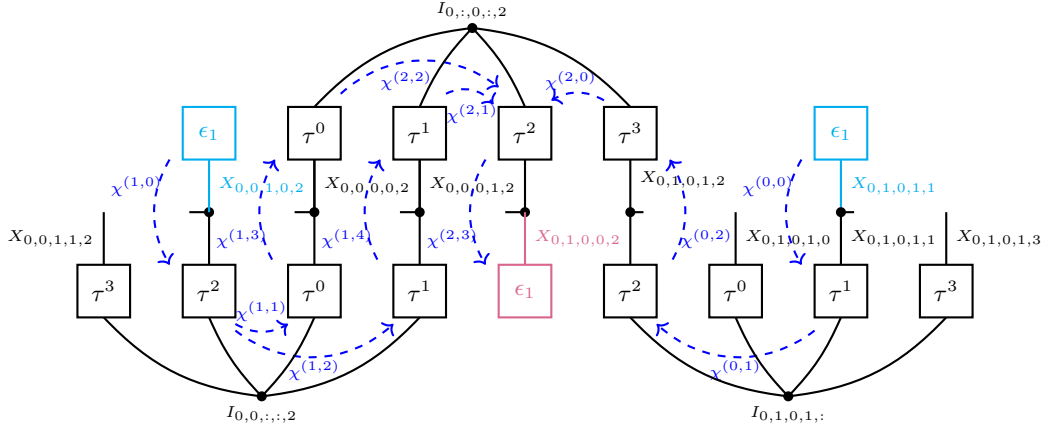

    \section{\HybridLogicNetworks{}}\label{sec:hln}

Let us now exploit the common formulation of logical formulas and probabilistic models in \CompActNets{} to define hybrid models that combine both aspects.
We call \CompActNets{} \HybridLogicNetworks{} in the special case of Boolean statistics $\sstat$ and elementary activations.

\subsection{Parametrization}

We first introduce \HybridLogicNetworks{}, which can be regarded as a unification of logical and probabilistic models.

\begin{definition}[\HybridLogicNetwork{} (HLN)]
    \label{def:hybridLogicNetwork}
    Given a Boolean statistic $\hlnstat$, we call any element of $\elrealizabledistsof{\hlnstat}$ a \HybridLogicNetwork{}.
    The extended canonical parameter set for $\hlnstat$ is the set
    \begin{align*}
        \hybridparamset\coloneqq
        \{\hardparam\wcols \hardlegset\subset[\seldim]\ncond \headindexof{\hardlegset}\in\bigtimes_{\selindex\in\hardlegset}[2]\} \times \parspace \, .
    \end{align*}
    For each \HybridLogicNetwork{} $\hlnwith$, we can associate a tuple $\hybridparam$ consisting of a subset $\hardlegset\subset[\seldim]$, a tuple $\headindexof{\hardlegset}\in\bigtimes_{\selindex\in\hardlegset}[2]$, and $\canparamwithin$ such that
    \begin{align*}
        \hlnwith
        = \normalizationof{\hlnstatccwith,\paracttensorwith}{\shortcatvariables}
    \end{align*}
    where the activation core is
    \begin{align*}
        \paracttensorwith = \contractionof{\softacttensorwith,\hardacttensorwith}{\headvariables} \, .
    \end{align*}
\end{definition}

We notice that the parametrization by $\hybridparamset$ is one-to-one for any non-vanishing elementary activation tensor up to a scalar factor.
Given an arbitrary elementary activation tensor $\bigotimes_{\selindexin}\acttensorlegwith$, we can always find a corresponding tuple in $\hybridparamset$ by choosing\footnote{Here $\nonzeroof{\cdot}$ is the indicator of non-zero entries acting coordinatewise and $\onesat{\headvariableof{\selindex}}$ is the vector $[1,1]^T$.}
\begin{align*}
    \hardlegset = \{\selindex \wcols \nonzeroof{\acttensorlegwith}\neq\onesat{\headvariableof{\selindex}}\} \, ,
\end{align*}
further for all $\selindex\in\hardlegset$
\begin{align*}
    \headindexof{\selindex}
    = \begin{cases}
          0 & \ifspace \nonzeroof{\acttensorlegwith} = \onehotmapofat{0}{\headvariableof{\selindex}} \\
          1 & \ifspace \nonzeroof{\acttensorlegwith} = \onehotmapofat{1}{\headvariableof{\selindex}} \,
    \end{cases}
\end{align*}
and a parameter vector $\canparamwithin$ defined for all $\selindexin$ as
\begin{align*}
    \canparamat{\indexedselvariable} =
    \begin{cases}
        0 & \ifspace \selindex\in\hardlegset \\
        \lnof{\frac{\acttensorlegat{\headvariableof{\selindex}=1}}{\acttensorlegat{\headvariableof{\selindex}=0}}}
        & \ifspace \selindex\notin\hardlegset \, .
    \end{cases}
\end{align*}
Then we have by construction that there is $\lambda>0$ with
\begin{align*}
    \bigotimes_{\selindexin}\acttensorlegwith
    = \lambda \cdot \paracttensorwith \, .
\end{align*}

Let us demonstrate the utility of \HybridLogicNetworks{} with an example from accounting.

\begin{example}[\HybridLogicNetwork{} for a toy accounting model]
    \label{exa:hlnAccountingRep}

    We consider a system of three variables $A1$ Account 1 is booked, $A2$ Account 2 is booked, $F$ a feature on an invoice.
    Assume the following two rules have to be respected:
    \begin{itemize}
        \item \textcolor{\concolor}{Exactly one account must be booked.}
        \item \textcolor{\probcolor}{If feature $\mathrm{F}$ is present on the invoice, the account $\mathrm{A1}$ is typically booked.}
    \end{itemize}
    We formalize this with the statistic
    \begin{align*}
        \hlnstat = (\catvariableof{A1} \oplus \catvariableof{A2}, \catvariableof{F}\Rightarrow \catvariableof{A1})\, .
    \end{align*}
    While the first formula is a hard feature, the second is soft since prone to exceptions.
    We parameterize the first output of the statistic with the hard parameters by setting the set of indices to be initialized with hard logic $A = \{0\}$ and the corresponding initialization $y_0 = 1$, meaning that the first output of the statistic has to be true for the input to have positive probability.
    The ``hard logic activation tensor'' should be indifferent to the second part of the statistic and only impose rules on the first part, leading to
    \begin{align*}
        \textcolor{\concolor}{\kappa^{(A,y_A)}[Y_0,Y_1]} =
        \onehotmapofat{\headindexof{0}}{\headvariableof{0}} \otimes
        \onesat{\headvariableof{1}}
        =\begin{bmatrix}
             0 \\
             1
        \end{bmatrix} \otimes \begin{bmatrix}
                                  1\\1
        \end{bmatrix}.
    \end{align*}
    Since the first feature is hard, the ``soft logic activation tensor'' should be invariant under the first coordinate of the canonical parameter and we set $\canparamat{\selvariable=0}=0$. The soft parameters are chosen as $\theta[L] = [0,\theta[L=1]]^\intercal$ to achieve
    \begin{align*}
        \textcolor{\probcolor}{\alpha^\theta[Y_0,Y_1]} &= \alpha^{0,0}\left[Y_0\right]\otimes \alpha^{1,\theta[L=1]}\left[Y_1\right] = \begin{bmatrix}
                                                                                                                                           1\\1
        \end{bmatrix}\otimes \begin{bmatrix}
                                 1\\ \expof{\canparamat{\selvariable=1}}
        \end{bmatrix}.
    \end{align*}
    The activation tensor of the hybrid network then has the form
    \begin{align*}
        \paracttensor[{\headvariableof{0},\headvariableof{1}}]
        = \begin{bmatrix}
              0 \\ 1
        \end{bmatrix}
        \otimes
        \begin{bmatrix}
            1 \\ \expof{\canparamat{\selvariable=1}}
        \end{bmatrix}.
    \end{align*}
    We get the following tensor network representation of the \HybridLogicNetwork{} representing the toy accounting example before normalization to a distribution
    \begin{center}
        \begin{tikzpicture}[scale=0.35, thick, yscale=-1, xscale=-1] 

            \begin{scope}[shift={(14,-3)}]
                \draw (-4.5,0) rectangle (10.5,2);
                \node[anchor=center] (A) at (3,1) {\corelabelsize $\probof{(\catvariableof{A1}\oplus \catvariableof{A2}, X_F\Rightarrow \catvariableof{A1}),(\{0\},(1),(0,\theta))}$};
                \draw[-<-]  (-2,2)--(-2,4) node[midway,left] {\colorlabelsize $\catvariableof{A1}$};
                \draw[-<-]  (3,2)--(3,4) node[midway,left] {\colorlabelsize $\catvariableof{A2}$};
                \draw[-<-]  (8,2)--(8,4) node[midway,left] {\colorlabelsize $\catvariableof{F}$};

                \node[anchor=center] (A) at (-7.1,1) {$= \,\,\frac{1}{\partitionfunctionof{\canparam}}\,\cdot$};
            \end{scope}

            \draw[->-]  (3,1.75)--(3,-1) node[midway,left] {\colorlabelsize $\catvariableof{A2}$};

            \draw[->-] (-1.5,1) to[bend right=20] (-3,-1);
            \draw[->-] (-1.5,1) to[bend right=-20] (0,-1);
            \drawvariabledot{-1.5}{1}
            \draw (-1.5,1)--(-1.5,1.75)  node[midway,left] {\colorlabelsize $\catvariableof{A1}$};

            \draw[->-] (-6,1.75)--(-6,-1) node[midway,left] {\colorlabelsize $\catvariableof{F}$};

            \draw (-7,-1) rectangle (-2, -3);
            \node[anchor=center] (text) at (-4.5,-2) {$\bencodingof{\Rightarrow}$};
            \draw[->-] (-4.5,-3) --(-4.5,-5.5) node[midway,left]{\colorlabelsize $\headvariableof{F \Rightarrow A1}$};

            \draw[fill, \probcolor] (-4.5,-5.5) circle (\dotsize);
            \draw[\probcolor] (-4.5,-5.5) -- (-5,-5.5);
            \draw[\probcolor]  (-13, -3.5) rectangle (-5, -6.5);
            \node[anchor=center,\probcolor] (text) at (-9,-5) {$\begin{bmatrix}
                                                                    1 \\
                                                                    \expof{\canparamat{\selvariable=1}}
            \end{bmatrix}$};

            \draw (-1,-1) rectangle (4, -3);
            \node[anchor=center] (text) at (1.5,-2) {$\bencodingof{\oplus}$};
            \draw[->-] (1.5,-3) --(1.5,-5.5) node[midway,left]{\colorlabelsize $\headvariableof{A1 \oplus A2}$};

            \draw[fill, \concolor] (1.5,-5.5) circle (\dotsize);
            \draw[\concolor] (1.5,-5.5) -- (1,-5.5);
            \draw[\concolor]  (-1, -3.5) rectangle (1, -6.5);
            \node[anchor=center,\concolor] (text) at (0,-5) {$\begin{bmatrix}
                                                                  0 \\
                                                                  1
            \end{bmatrix}$};

        \end{tikzpicture}
    \end{center}
    The resulting \HybridLogicNetwork{} is a tensor $\probofat{\hlnstat,\hybridparam}{\catvariableof{A1},\catvariableof{A2},\catvariableof{F}}$ of order $3$.
    With $Y_{F\Rightarrow A_1}=1$ for $F=0$ and any $A_1$ it has the coordinates
    \begin{center} 
        \begin{tikzpicture}[scale=1.75]
            \node (A) at (-4,0) {\corelabelsize $\probofat{(\catvariableof{A1}\oplus\catvariableof{A2},\catvariableof{F}\Rightarrow\catvariableof{A1}),\big(\{0\},(1),(0,\theta)\big)}{\catvariableof{A1},\catvariableof{A2},\catvariableof{F}} \,\, = \,\, \frac{1}{1+3 \cdot \expof{\canparam}} \, \cdot $};
            \node (A) at (0,0) {
                $\begin{bmatrix}
                     0 & \exp[\canparam] \\
                     \exp[\canparam] & 0
                \end{bmatrix}$
            };
            \node (A) at (1.25,0.3) {
                $\begin{bmatrix}
                     0 & 1 \\
                     \exp[\canparam] & 0
                \end{bmatrix}$
            };
            \draw[->,dashed] (-0.325,0.5) node[below] {\tiny $0$} -- (0.325,0.5) node [midway, above] {\tiny $\catvariableof{A2}$} node[below] {\tiny $1$};
            \draw[<-,dashed] (-0.9,-0.2) node[right] {\tiny $1$} -- (-0.9,0.2) node [midway, left] {\tiny $\catvariableof{A1}$} node[right] {\tiny $0$};
            \draw[->,dashed] (0,-0.5) node[above] {\tiny $0$} -- (1.25,-0.2) node [midway, below] {\tiny $X_F$} node[above] {\tiny $1$};
        \end{tikzpicture}
    \end{center}
\end{example}

\subsection{Parameter estimation in \HybridLogicNetworks{}}\label{sec:paramEst}

Let us now briefly discuss how \HybridLogicNetworks{} can be trained on data based on likelihood maximization.
Given a dataset $\dataset$ consisting of $\datanum$ independent and identically distributed samples from an unknown distribution, we want to find a \HybridLogicNetwork{} $\hlnwith$ with a statistic $\sstat=(\formulaof{0},\dots,\formulaof{\seldim-1})$ that minimizes the negative log likelihood
\begin{align*}
    \lossof{\hybridparam} \coloneqq -\frac{1}{\datanum} \sum_{\datindexin} \lnof{\probofat{\hlnparameters}{\shortcatvariables=\shortcatindices^{\datindex}}}
    \, .
\end{align*}
We can rewrite the loss using the empirical mean vector $\datameanwith\in\parspace$, which is defined for $\selindexin$ as
\begin{align*}
    \datameanat{\indexedselvariable}
    = \frac{1}{\datanum} \sum_{\datindexin} \formulaofat{\selindex}{\shortcatvariables=\shortcatindices^{\datindex}} \, ,
\end{align*}
by
\begin{align*}
    \lossof{\hybridparam} =
    \contraction{\datameanwith,\canparamwith} - \lnof{\contraction{\paracttensorwith,\bencsstatwith}} \, .
\end{align*}
Since $\hardparam$ influences only the second term, the best hard parameters can be found by
\begin{align*}
    \hardlegset = \{\selindex \wcols \datameanat{\indexedselvariable}\in\ozset\} \andspace
    \headindexof{\selindex} = \datameanat{\indexedselvariable} \quad \text{for} \quad \selindex\in\variableset \, .
\end{align*}
We further optimize the coordinates $\selindex\in[\seldim]\setexcept{\hardlegset}$ of $\canparamwithin$ alternately by the coordinate descent steps
\begin{align*}
    \difofwrt{\lossof{\hybridparam}}{\canparamat{\indexedselvariable}} = 0
    \Leftrightarrow
    \canparamat{\indexedselvariable}
    = \lnof{
        \frac{\meanparamat{\indexedselvariable}}{(1-\meanparamat{\indexedselvariable})}
        \cdot \frac{\hypercoreat{\headvariableof{\selindex}=0}}{\hypercoreat{\headvariableof{\selindex}=1}}
    } \, .
\end{align*}
where
\begin{align*}
    \hypercoreat{\headvariableof{\selindex}}
    = \contractionof{\{\bencodingof{\formulaof{\secselindex}} \wcols \secselindex\in[\seldim]\}
        \cup\{\softactsymbolof{\secselindex,\canparam} \wcols \secselindex\in[\seldim]\ncond\secselindex\neq\selindex\}
        \cup\{\basemeasure\}}{\headvariableof{\selindex}} \, .
\end{align*}
Based on an interpretation of the coordinate descent steps as matching steps for the mean parameters or moments to $\formulaof{\selindex}$, we call this method \emph{alternating moment matching} for \HybridLogicNetworks{} and provide pseudocode for it it in \algoref{alg:AMM_HLN}.
We notice that, during the coordinate descent steps, computing the marginal probability of the variable $\headvariableof{\selindex}$ with respect to the current network parameters is required.
This is the computational bottleneck of the algorithm and can be approached by various approximate inference methods, e.g., variational inference (see for example the CAMEL method \cite{ganapathi_constrained_2008}).

\begin{algorithm}[hbt!]
    \caption{Alternating Moment Matching for \HybridLogicNetworks{}}\label{alg:AMM_HLN}
    \begin{algorithmic}
        \Require Mean parameter $\datameanwith$
        \Ensure Parameters $\hybridparam$ for the approximating HLN $\expdist$ 
        \iosepline
        \State Set
        \begin{align*}
            \hardlegset = \Big\{ \selindex \wcols \selindexin \ncond \meanparamat{\indexedselvariable}\in\{0,1\} \Big\}
        \end{align*}
        and a tuple $\headindexof{\hardlegset}$ with $\headindexof{\selindex}=\meanparamat{\indexedselvariable}$ for $\selindex\in\hardlegset$.
        \State Set $\canparamwith=\zerosat{\selvariable}$
        \While{Convergence criterion is not met}
            \ForAll{$\selindex\in[\seldim]\setexcept{\hardlegset}$}
                \State Compute
                \begin{align*}
                    \hypercoreat{\headvariableof{\selindex}}
                    = \contractionof{\{\bencodingof{\formulaof{\secselindex}} \wcols \secselindex\in[\seldim]\}
                        \cup\{\softactsymbolof{\secselindex,\canparam} \wcols \secselindex\in[\seldim]\ncond\secselindex\neq\selindex\}
                        \cup\{\basemeasure\}}{\headvariableof{\selindex}}
                \end{align*}
                \State Set
                \begin{align*}
                    \canparamat{\indexedselvariable} = \lnof{
                        \frac{\meanparamat{\indexedselvariable}}{(1-\meanparamat{\indexedselvariable})}
                        \cdot \frac{\hypercoreat{\headvariableof{\selindex}=0}}{\hypercoreat{\headvariableof{\selindex}=1}}
                    }
                \end{align*}
            \EndFor
        \EndWhile
        \State \Return $(\hardlegset,\headindexof{\hardlegset},\canparamwith)$
    \end{algorithmic}
\end{algorithm}

It can be shown that the algorithm converges if and only if there is a \HybridLogicNetwork{} matching the empirical moments of the data.
For more details we refer to~\cite[Chapter~9]{goessmann_tensor-network_2025}.

\begin{example}[Continuation of \exaref{exa:hlnAccountingRep}]\label{exa:hlnAccountingAMM}
    Recall the statistic of \exaref{exa:hlnAccountingRep} and consider a dataset of $\datanum=20$ states summarized in the frequency table:
    \begin{center}
        \begin{tabular}{|c|ccc|}
            \hline
            \textbf{Frequency in Dataset} & $\catindexof{A1}$ & $\catindexof{A2}$ & $\catindexof{F}$ \\
            \hline
            0                             & 0                    & 0                    & 0                   \\
            0                             & 0                    & 0                    & 1                   \\
            7                             & 0                    & 1                    & 0                   \\
            2                             & 0                    & 1                    & 1                   \\
            1                             & 1                    & 0                    & 0                   \\
            10                            & 1                    & 0                    & 1                   \\
            0                             & 1                    & 1                    & 0                   \\
            0                             & 1                    & 1                    & 1                   \\
            \hline
        \end{tabular}
    \end{center}
    We then have for the satisfaction rates of $\formulaof{0}=\catvariableof{A1}\oplus\catvariableof{A2}$ and $\formulaof{1}=\catvariableof{F}\Rightarrow\catvariableof{A1}$ that
    \begin{align*}
        \datameanat{\selvariable=0} = \frac{20}{20} = 1 \andspace
        \datameanat{\selvariable=1} = \frac{7+1+10}{20} = 0.9 \, .
    \end{align*}
    \algoref{alg:AMM_HLN} yields a reasonable convergence criterion choice (such as finite iterations or convergence of $\canparamat{\selvariable}$)
    \begin{align*}
        \hardlegset = \{0\} \quad, \quad \headindexof{\hardlegset} = 1 \andspace
        \canparamat{\selvariable} =
        \begin{bmatrix}
            0 \\
            \lnof{(\frac{0.9}{0.1})\cdot(\frac{1}{3})}
        \end{bmatrix}
        =
        \begin{bmatrix}
            0 \\
            \lnof{3}
        \end{bmatrix}
        \approx
        \begin{bmatrix}
            0 \\
            1.098612
        \end{bmatrix} \, .
    \end{align*}
    To derive this, we notice that \algoref{alg:AMM_HLN} treats formula $\formulaof{0}$ as a hard constraint and assigns $\hardlegset = \{0\}$ and $\headindexof{\hardlegset} = 1$.
    In the \whileSymbol{} loop we then have for the formula $\formulaof{1}$
    \begin{align*}
        \hypercoreat{\headvariableof{1}}
        = \contractionof{\tbasisat{\headvariableof{0}},\bencodingofat{\formulaof{0}}{\headvariableof{0},\catvariableof{F},\catvariableof{A1},\catvariableof{A2}},
            \bencodingofat{\formulaof{1}}{\headvariableof{1},\catvariableof{F},\catvariableof{A1},\catvariableof{A2}}}{\headvariableof{1}}
        = \begin{bmatrix}
              1 \\
              3
        \end{bmatrix}
    \end{align*}
    since $\formulaof{0}$ has $4$ models, of which $3$ are also models of $\formulaof{1}$ and $1$ is instead a model of $\lnot\formulaof{1}$.
    Notice, that the tensor $\hypercoreat{\headvariableof{1}}$ will not change in any further iteration of the \whileSymbol{} and the parameter $\canparamat{\selvariable=1}$ will therefore stay constant until the termination of the algorithm.
\end{example}

\subsection{Entailment by \HybridLogicNetworks{}} 

Let us now demonstrate a further use of our unified treatment of probabilistic and logical models by investigating a generalized concept of entailment.
Entailment can be generalized to probabilistic models by deciding whether a propositional formula is always satisfied given a probabilistic model.

\begin{theorem}\label{the:hybridEntailment}
    Let $\probofat{\hlnparameters}{\shortcatvariables}$ be a \HybridLogicNetwork{} and $\secexformulaat{\shortcatvariables}$ a propositional formula.
    Then $\probof{\hlnparameters}$ probabilistically entails $\secexformula$, that is,
    \begin{align*}
        \contraction{\probofat{\hlnparameters}{\shortcatvariables},\secexformulaat{\shortcatvariables}} = 1 \, ,
    \end{align*}
    if and only if
    \begin{align*}
        \hlnformula \models \secexformula \, ,
    \end{align*}
    where
    \begin{align*}
        \hlnformulawith =
        \left(\bigwedge_{\selindex\in\hardlegset\wcols\headindexof{\selindex}=1} \enumformulaat{\shortcatvariables}\right)
        \land
        \left(\bigwedge_{\selindex\in\hardlegset\wcols\headindexof{\selindex}=0} \lnot\enumformulaat{\shortcatvariables}\right).
    \end{align*} 
\end{theorem}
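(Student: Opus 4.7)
The plan is to reduce the probabilistic entailment condition to a statement about the support of the HLN and then match that support with the set of models of $\hlnformula$. First I would unwind the definition of the \HybridLogicNetwork{}: at any state $\shortcatindices\in\atomstates$, the (unnormalized) value of $\probofat{\hlnparameters}{\shortcatvariables=\shortcatindices}$ is obtained by contracting the computation network $\hlnstatccwith$ with the elementary activation $\paracttensor$. Because each basis encoding $\bencodingof{\formulaof{\selindex}}$ selects $\headvariableof{\selindex}=\formulaofat{\selindex}{\shortcatindices}$, the head sum collapses, giving the numerator
\begin{align*}
    \prod_{\selindex\in[\seldim]}\hardactlegat{\headvariableof{\selindex}=\formulaofat{\selindex}{\shortcatindices}}\cdot\expof{\canparamat{\indexedselvariable}\cdot\formulaofat{\selindex}{\shortcatindices}}.
\end{align*}
The exponential factors are strictly positive, while the hard factor $\hardactlegwith$ equals $\onehotmapofat{\headindexof{\selindex}}{\headvariableof{\selindex}}$ for $\selindex\in\hardlegset$ and the trivial vector $\onesat{\headvariableof{\selindex}}$ otherwise.

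Next I would conclude from this local calculation that $\probofat{\hlnparameters}{\shortcatvariables=\shortcatindices}>0$ holds precisely when $\formulaofat{\selindex}{\shortcatindices}=\headindexof{\selindex}$ for every $\selindex\in\hardlegset$, which by construction of $\hlnformula$ is the same as saying that $\shortcatindices$ is a model of $\hlnformula$. In other words, the support of the distribution equals the set of models of $\hlnformula$. (Implicit here is that $\hlnformula$ must be satisfiable for the partition function to be positive and the HLN to be well-defined; this is a mild assumption since otherwise neither side of the equivalence has meaning.)

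Having this identification, the probabilistic entailment statement $\contraction{\probofat{\hlnparameters}{\shortcatvariables},\secexformulawith}=1$ becomes, after subtracting the total mass $\contraction{\probofat{\hlnparameters}{\shortcatvariables}}=1$, the condition $\contraction{\probofat{\hlnparameters}{\shortcatvariables},\lnot\secexformulawith}=0$. Since $\lnot\secexformula$ is boolean and the probability is non-negative, this vanishes iff $\probofat{\hlnparameters}{\shortcatvariables=\shortcatindices}=0$ for every $\shortcatindices$ with $\secexformulaat{\shortcatvariables=\shortcatindices}=0$, i.e.\ every model of $\hlnformula$ is a model of $\secexformula$. That is exactly $\hlnformula\models\secexformula$ by \defref{def:logicalEntailment}, closing the equivalence in both directions.

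The main obstacle will be bookkeeping of the hard versus soft factors in the elementary activation $\paracttensor$, in particular making precise the convention that $\hardactsymbol^{\selindex,\hardparam}$ is the trivial vector for $\selindex\notin\hardlegset$, so that the support computation cleanly isolates the hard constraints encoded by $\hlnformula$. Once this is set up, the rest is a direct translation between "zero coordinates of the probability tensor" and "non-models of the hard-part conjunction".
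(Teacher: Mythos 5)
Your proposal is correct and follows essentially the same route as the paper's proof: reduce probabilistic entailment to the vanishing of $\contraction{\probofat{\hlnparameters}{\shortcatvariables},\lnot\secexformulawith}$, identify the support of the HLN with the models of $\hlnformula$, and invoke \defref{def:logicalEntailment}. In fact you supply the explicit support computation (including the satisfiability caveat for the partition function) that the paper's proof only asserts via $\nonzeroof{\probofat{\hlnparameters}{\shortcatvariables}}=\hlnformulawith$, so your write-up is, if anything, more complete.
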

\begin{proof}
    We have
    \begin{align*}
        \contraction{\probofat{\hlnparameters}{\shortcatvariables},\secexformulaat{\shortcatvariables}} = 1
    \end{align*}
    if and only if 
    \begin{align*}
        \contraction{\probofat{\hlnparameters}{\shortcatvariables},\secexformulaat{\shortcatvariables}-\onesat{\shortcatvariables}} = 0
    \end{align*}
    which is equal to 
    \begin{align*}
        \contraction{\probofat{\hlnparameters}{\shortcatvariables},\lnot\secexformulaat{\shortcatvariables}} = 0 \, .
    \end{align*}
    Since $\probofat{\hlnparameters}{\shortcatvariables}$ is non-negative this is equivalent to
    \begin{align*}
        \contraction{\nonzeroof{\probofat{\hlnparameters}{\shortcatvariables}},\lnot\secexformulaat{\shortcatvariables}} = 0 \, .
    \end{align*}
    We use that
    \begin{align*}
        \nonzeroof{\probofat{\hlnparameters}{\shortcatvariables}} 
        = \hlnformulawith 
    \end{align*}
    and get that this is further equivalent to
    \begin{align*}
        \contraction{\hlnformulawith ,\lnot\secexformulaat{\shortcatvariables}} = 0 \, ,
    \end{align*}
    which is by \defref{def:logicalEntailment} $\hlnformula \models \secexformula$.
\end{proof}

\begin{example}[Continuation of \exaref{exa:hlnAccountingAMM}]
    Consider again the \HybridLogicNetwork{}
    \begin{align*}
        \probofat{(\catvariableof{A1}\oplus\catvariableof{A2},\catvariableof{F}\Rightarrow\catvariableof{A1}),\big(\{0\},(1),(0,\lnof{3})\big)}{\catvariableof{A1},\catvariableof{A2},\catvariableof{F}}
    \end{align*}
    from \exaref{exa:hlnAccountingAMM} and assume we want to decide the probabilistic entailment of the formula
    \begin{align*}
        \secexformulaat{\catvariableof{A1},\catvariableof{A2},\catvariableof{F}}
        = \lnot\catvariableof{A1} \lor \lnot\catvariableof{A2} \lor \lnot\catvariableof{F} \, ,
    \end{align*}
    which has all states but $(1,1,1)$ as a model (and is therefore refered to as a maxterm).
    Using \theref{the:hybridEntailment} we have that
    \begin{align*}
        \contraction{\probofat{(\catvariableof{A1}\oplus\catvariableof{A2},\catvariableof{F}\Rightarrow\catvariableof{A1}),\big(\{0\},(1),(0,\lnof{3})\big)}{\catvariableof{A1},\catvariableof{A2},\catvariableof{F}},\secexformulaat{\catvariableof{A1},\catvariableof{A2},\catvariableof{F}}} =1
    \end{align*}
    if and only if $\catvariableof{A1}\oplus \catvariableof{A2} \models \lnot\catvariableof{A1} \lor \lnot\catvariableof{A2} \lor \lnot\catvariableof{F}$.
    By \defref{def:logicalEntailment}, this entailment holds since by the De-Morgan rule
    \begin{align*}
        \contraction{\catvariableof{A1}\oplus \catvariableof{A2},\lnot \left(\lnot\catvariableof{A1} \lor \lnot\catvariableof{A2} \lor \lnot\catvariableof{F}\right)}
        &= \contraction{\catvariableof{A1}\oplus \catvariableof{A2},\catvariableof{A1},\catvariableof{A2},\catvariableof{F}} \\
        &= \contraction{\catvariableof{F}} \cdot \contraction{\catvariableof{A1}\oplus \catvariableof{A2},\catvariableof{A1},\catvariableof{A2}} \\
        &= 0 \, .
    \end{align*}
    We thus conclude, that $\secexformula$ is probabilistically entailed by $\probof{(\catvariableof{A1}\oplus\catvariableof{A2},\catvariableof{F}\Rightarrow \catvariableof{A1}),\big(\{0\},(1),(0,\lnof{3})\big)}$.
\end{example}
    \section{Implementation in the \python{} library \tnreason{}}\label{sec:implementation}


The concepts presented in this paper have been implemented in the open source \python{} library \tnreason{}\footnote{\tnreason{} is available in version 2.0.0 at \url{pypi.org/tnreason} and maintained at \url{github.com/tnreason/tnreason-py}.}.
In this section, we explain the basic design and functionality of this library and draw close connections to the theoretical exposition in the previous sections. In particular, Appendix~\ref{sec:algExaImplementation} provides detailed implementations of the algorithms and examples in this work.

\subsection{Architecture}

The package consists of four subpackages and three layers of abstraction:
\begin{center}

\begin{tikzpicture}[scale=0.35]
    \draw[dashed] (-30,15) -- (12,15) -- (12,-3) -- (-30,-3) -- (-30,15);

    \draw (-10,10) rectangle (10,14);
    \node [anchor=center] at (0,12) {\spapplication{}};

    \node [anchor=center] at (-20,12) {\layerthreespec};
    \draw[dashed] (-30,9) -- (12,9);
    \node [anchor=center] at (-20,6) {\layertwospec};

    \draw[->-] (6,10) -- (6,8);
    \draw (2,4) rectangle (10,8);
    \node [anchor=center] at (6,6) {\spreasoning{}};
    \draw[->-] (6,4) -- (6,2);

    \draw[->-] (-6,10) -- (-6,8);
    \draw (-10,4) rectangle (-2,8);
    \node [anchor=center] at (-6,6) {\sprepresentation{}};
    \draw[->-] (-6,4) -- (-6,2);

    \draw[dashed] (-30,3) -- (12,3);
    \node [anchor=center] at (-20,0) {\layeronespec};

    \draw (-10,-2) rectangle (10,2);
    \node [anchor=center] at (0,0) {\spengine{}};
\end{tikzpicture}
\end{center}
In the subpackage \tnreason{}.\spengine{} we implement tensors, tensor networks, contractions, and normalizations.
In the subpackage \tnreason{}.\sprepresentation{} the basic tensor encoding schemes such as basis encodings are available.
In the subpackage \tnreason{}.\spreasoning{} we implement reasoning algorithms, such as generalizations of the message passing algorithms presented in \algoref{alg:treeBeliefPropagation}, \algoref{alg:constraintPropagation}, and \algoref{alg:AMM_HLN}.
In the subpackage \tnreason{}.\spapplication{} one can construct tensor network encodings of propositional formulas and datasets.

\subsection{Basic usage}

We demonstrate the basic usage of the \tnreason{} package with the implementation of Example~\ref{exa:propFormulaCoordinatewise}.
We first install the package and import it by
\begin{lstlisting}[language=Python]
from tnreason import engine, application
\end{lstlisting}
Keeping \defref{def:tensor} in mind, the tensor instances in \lstinline[language=Python]!shape! and \lstinline[language=Python]!colors! arguments are \lstinline[language=Python]!list! instances specifying the \lstinline[language=Python]!int! dimension $\catdimof{\catenumerator}$ and a \lstinline[language=Python]!str! identifier for $\catvariableof{\catenumerator}$.
The formula in \exaref{exa:propFormulaCoordinatewise} is a sum of the one-hot encodings of its three models (see \exaref{exa:propFormulaBasCP}) and is created by
\begin{lstlisting}[language=Python]
formula = engine.create_from_slice_iterator(shape=[2,2,2], colors=["X_0","X_1","X_2"], sliceIterator=[(1,{"X_0":0,"X_1":1,"X2":0}), (1,{"X_0":1,"X_1":0,"X2":0}), (1,{"X_0":1,"X_1":1,"X2":0})])
\end{lstlisting}
The slice iterator is an iterator over tuples \lstinline[language=Python]!(val,posDict)!, which specifies elementary tensors to be summed.
The \lstinline[language=Python]!posDict! are \lstinline[language=Python]!dict! instances, where the keys are the \lstinline[language=Python]!str! tensor colors and the values are \lstinline[language=Python]!int!.
Each \lstinline[language=Python]!posDict! collects leg vectors of the corresponding elementary tensor that are not trivial. These leg vectors are the basis vectors enumerated by the corresponding \lstinline[language=Python]!int! value.

Single tensor coordinates can be retrieved by indexing with a \lstinline[language=Python]!posDict!.
We can, for example, check whether \lstinline[language=Python]!{"X_0":0,"X_1":1,"X_2":0}! is a model:
\begin{lstlisting}[language=Python]
assert formula[{"X_0":0,"X_1":1,"X_2"}] == 1
\end{lstlisting}
By default the tensor is created as a \pythoninline{engine.NumpyCore} instance, where coordinates are stored as instances of \pythoninline{numpy.array}.
Further core types exploiting different sparsity principles can be chosen by the argument \pythoninline{coreType}, see \cite[Appendix~A]{goessmann_tensor-network_2025}.

Following \defref{def:tensorNetwork}, tensor networks are implemented as tensor valued \pythoninline{dict} instances with \pythoninline{str} keys.
For example a tensor network is created from the propositional syntax of the above formula (see \exaref{exa:propFormulaHeadSym}): 
\begin{lstlisting}[language=Python]
fDecomp = application.create_cores_to_expressionsDict({"f0": ["and",["or","X_0","X_1"],["not","X_2"]]})
\end{lstlisting}
Here we apply a nested-list description of decomposition hypergraphs (see \defref{def:decompositionHypergraph}) with a specification of the logical connectives in the first position of the list (by \pythoninline{"and","or","not"} we refer to the connectives $\land,\lor,\lnot$).
Equivalently, we can exploit the $\land$ symmetry and create it by multiple formulas:
\begin{lstlisting}[language=Python]
fDecomp = application.create_cores_to_expressionsDict({"f0": ["or","X_0","X_1"], "f1": ["not","X_2"]})
\end{lstlisting}
A depiction of the underlying hypergraph as a factor graph, which highlights edges as blue blocks and nodes as red blocks, can be created with \pythoninline{engine.draw_factor_graph(fDecomp)} (see \figref{fig:factorGraphPlt}).
Single tensors can be obtained by contracting a tensor network while specifying the open variables (for an explanation of the suffixes, see \figref{fig:factorGraphPlt}), for example:
\begin{lstlisting}[language=Python]
contracted = engine.contract(fDecomp, openColors=["X_0_dV","X_1_dV","X_2_dV"])
\end{lstlisting}
By default the contractions are performed using \pythoninline{numpy.einsum} and further execution schemes can be selected with the argument \pythoninline{contractionMethod}, see \cite[Appendix~A]{goessmann_tensor-network_2025}.

\begin{figure}
    \begin{center}
        \includegraphics[width=15cm]{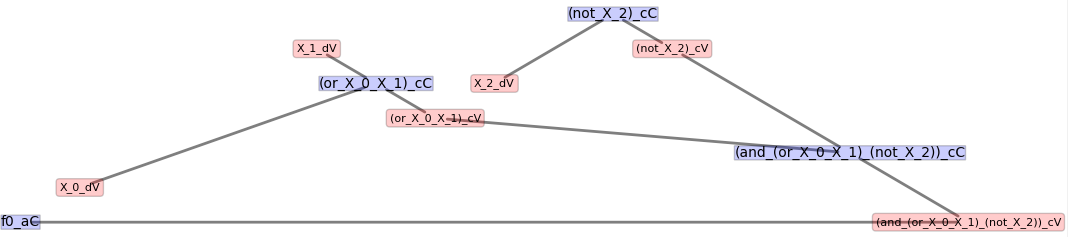}
    \end{center}
    \caption{
        Factor graph highlighting a tensor network decomposition of the syntactic decomposition of the propositional formula of \exaref{exa:propFormulaHeadSym}.
        Blue blocks highlight hyperedges carrying tensors and red blocks highlight variables.
        The tensor label suffices \pythoninline{"_cC"} and \pythoninline{"_aC"} indicate whether the tensor is part of the computation network or the activation network.
        The variable label suffices \pythoninline{"_dV"} and \pythoninline{"_cV"} indicate whether the variable is distributed or computed and therefore auxiliary.
        This graph has been generated with the method \pythoninline{tnreason.engine.draw_factor_graph} of \tnreason{}.
    }\label{fig:factorGraphPlt}
\end{figure}

    \section{Conclusion \& outlook}\label{sec:conclusion}

This work developed a tensor network formalism to capture in a unifying way the main concepts of AI, which build the core of the probabilistic, neural and logical approaches.
We introduced \ComputationActivationNetworks{} (\CompActNets{}) as a generic architecture to represent classes of propositional knowledge bases, graphical models and more generic exponential families.
Moreover, we demonstrate the representation and training of hybrid models combining logical and probabilistic aspects, illustrating that \CompActNets{} represent a powerful, versatile and mathematically grounded framework for Neuro-Symbolic AI.

We have shown that model inference such as the calculating marginal distributions and deciding entailment correspond with tensor network contractions.
To efficiently perform these inferences, we presented message passing schemes, which have been shown to be exact in specific cases.
In general, however, the efficient computation of contractions is not possible, since they are related to the $\mathrm{NP}$-hardness of probabilistic inferences in graphical models (see \cite{koller_probabilistic_2009}) and of logical reasoning (see \cite{russell_artificial_2021}).
In cases where exact inference is not feasible, the derivation of error bounds for approximate inference schemes on \CompActNets{} is an interesting direction for future research.

Further approximation schemes to overcome this bottleneck are summarized under the umbrella of variational inference (see \cite{wainwright_graphical_2008}), such as generic expectation-propagation methods or mean field methods.
While these schemes are developed either for graphical models or more general exponential families, we plan to derive similar methods for more general \CompActNets{}, such as \HybridLogicNetworks{}.
Further frequently applied schemes are particle-based inference schemes such as Gibbs sampling.

The integration of symbolic and neural methods is an active research area (see \cite{colelough_neuro-symbolic_2024} for a systematic review).
The \CompActNets{} framework enables both, symbolic logical as well as probabilistic models, but enables also the representation of generic functions.
\CompActNets{} based on architectures combining symbolically verbalizable and more generic neural parts are thus a promising direction for Neuro-Symbolic AI.

The \CompActNets{} framework offers an immediate practical application as a verifiable reasoning engine for AI agents in high-stakes domains such as regulatory compliance, clinical decision support, accounting, process planning and security.
By leveraging the framework's inherent flexibility, Large Language Models (see \cite{vaswani_attention_2017}) can be adapted to function as semantic translators that dynamically construct problem-specific tensor networks in the form of \CompActNets{} from natural language descriptions, effectively treating the reasoning engine as an external tool.
This approach mitigates the hallucination risks of probabilistic models by delegating complex logical execution to the exact linear algebra of the tensor network, ensuring that the inference process is both rigorous and reproducible.
Consequently, this synergy enables the deployment of reliable AI systems where the intuitive power of the Large Language Model is grounded by the explainable, instance-adaptive topology of the \CompActNets{}.

    \section*{Acknowledgements}
    AG and ME acknowledge funding from the German Federal Ministry of Education and Research (BMBF), grant number FKZ 13N17160, "Verbundprojekt: Quantum Read-Once-Memory - Verwandlung von klassischen Daten zu Quantenzuständen - Teilvorhaben: Quanten{\allowbreak}schaltkreis-Optimierung von Quantenzuständen durch Tensor-Netzwerke (QOQ-tn)".
JS and ME acknowledge funding from the Deutsche Forschungsgemeinschaft (DFG, German Research Foundation) under Germany´s Excellence Strategy – The Berlin Mathematics Research Center MATH+ (EXC-2046/2, project ID: 390685689, Project PaA-7).
MF acknowledges funding from Einstein Research Unit on Quantum Devices.

    \vskip 0.2in
    \bibliographystyle{plainnat}
    \bibliography{references.bib}

    \appendix

    \section{Proof of the Factorization Theorems}\label{sec:proofFactorizationTheorems}

We now provide proofs for the factorization theorems stated in \secref{sec:probPar}.
These proofs are classically known (see e.g. \cite{koller_probabilistic_2009} for Hammersley-Clifford and \cite{casella_statistical_2001} for Fisher-Neyman).
They are here provided in our tensor networks notation and for hypergraphs for completeness. 

\subsection{Hammersley-Clifford}

Different to the original statement (see \cite{clifford_markov_1971}), we here proof the analogous statement for hypergraphs, where we have to demand the property of clique-capturing defined in \defref{def:ccHypergraph}.
We start with showing the following Lemmata to be exploited in the proof.

\begin{lemma}
    \label{the:contractionFactorization}
    Let $\hypercoreat{\catvariableof{\nodes}}$ be a positive tensor. 
    Then we have for any index $\seccatindexof{\nodes}$
    \begin{align*}
        \hypercoreat{\catvariableof{\nodes}}
        = \contractionof{
            \big(\contractionof{\hypercore}{\catvariableof{\nodes\setexcept{\thirdnodes}},\catvariableof{\thirdnodes}=\seccatindexof{\thirdnodes}}\big)^{(-1)^{\cardof{\secnodes}-\cardof{\thirdnodes}}} \wcols \thirdnodes \subset \secnodes \subset \nodes
        }{\catvariableof{\nodes}} \, ,
    \end{align*}
    where the exponentiation is performed coordinatewise and positivity of $\hypercore$ ensures the well-definedness.
\end{lemma}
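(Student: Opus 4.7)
The plan is to reduce this identity to a Möbius inversion on the Boolean lattice $2^{\nodes}$ of subsets of $\nodes$, which is a standard technique for producing clique-style decompositions of positive functions. First, I would use the positivity hypothesis to take coordinatewise logarithms so the multiplicative claim becomes additive: for each fixed assignment $\catvariableof{\nodes}$ it suffices to show the scalar identity
\begin{align*}
\log \hypercoreat{\catvariableof{\nodes}} \;=\; \sum_{\thirdnodes \subset \secnodes \subset \nodes} (-1)^{\cardof{\secnodes}-\cardof{\thirdnodes}}\,H_{\thirdnodes}(\catvariableof{\nodes})\,,
\end{align*}
where $H_{\thirdnodes}(\catvariableof{\nodes}) := \log\big(\contractionof{\hypercore}{\catvariableof{\nodes\setexcept{\thirdnodes}},\catvariableof{\thirdnodes}=\seccatindexof{\thirdnodes}}\big)$ collects the logarithms of the appropriately clamped slices. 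Positivity is essential here to both define the logarithm and to justify coordinatewise real-valued exponents.

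Next, I would invoke the Möbius inversion identity on $2^{\nodes}$: for any scalar function $F$ on subsets,
\begin{align*}
F(\secnodes) = \sum_{\thirdnodes \subset \secnodes} G(\thirdnodes) \;\iff\; G(\secnodes) = \sum_{\thirdnodes \subset \secnodes} (-1)^{\cardof{\secnodes}-\cardof{\thirdnodes}} F(\thirdnodes)\,.
\end{align*}
Applying this with $F(\thirdnodes) := H_{\thirdnodes}(\catvariableof{\nodes})$ produces inverted coefficients $G(\secnodes)$, and summing $G$ over all $\secnodes \subset \nodes$ telescopes by the inversion back to $H_{\nodes}(\catvariableof{\nodes}) = \log\hypercoreat{\catvariableof{\nodes}}$ (the case with no clamping). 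Equivalently, the identity can be proved directly by swapping the order of summation and applying the binomial identity $\sum_{\secnodes:\thirdnodes\subset\secnodes\subset\nodes}(-1)^{\cardof{\secnodes}-\cardof{\thirdnodes}}$, which evaluates the inner sums to Kronecker deltas that pick out exactly the required term.

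Finally, I would re-exponentiate pointwise in $\catvariableof{\nodes}$ and recast the scalar product as a tensor contraction: each factor $\big(\contractionof{\hypercore}{\catvariableof{\nodes\setexcept{\thirdnodes}},\catvariableof{\thirdnodes}=\seccatindexof{\thirdnodes}}\big)^{(-1)^{\cardof{\secnodes}-\cardof{\thirdnodes}}}$ is a tensor with open variables in $\catvariableof{\nodes\setexcept{\thirdnodes}}$, and the contraction with open variables $\catvariableof{\nodes}$ multiplies these factors elementwise, trivialized on the unshared node variables (tensored with the $\onesat{\catvariableof{\thirdnodes}}$ of Example~\ref{exa:diracDelta}). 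I expect the main obstacle to be notational bookkeeping: carefully tracking which variables are clamped to $\seccatindexof{\thirdnodes}$ versus left open in each factor, verifying that the double indexing $\thirdnodes \subset \secnodes \subset \nodes$ matches the subset pairs produced by the Möbius argument, and confirming that no variable appears on the open slot of the overall contraction with a nontrivial clamp. Positivity intervenes throughout to ensure that all elementwise powers are well-defined tensors.
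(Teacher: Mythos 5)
Your proposal is correct and follows essentially the same route as the paper's proof: take coordinatewise logarithms (justified by positivity), regroup the double product over $\thirdnodes\subset\secnodes\subset\nodes$ by the inner set $\thirdnodes$, and observe that the alternating-sign coefficient $\sum_{\secnodes\,:\,\thirdnodes\subset\secnodes\subset\nodes}(-1)^{\cardof{\secnodes}-\cardof{\thirdnodes}}$ collapses to a Kronecker delta via the binomial theorem, so that only the fully unclamped term survives before re-exponentiating. Your Möbius-inversion framing is just a more abstract packaging of this same binomial cancellation, and your closing remarks on tracking which variables are clamped versus open are exactly the bookkeeping the paper's argument also relies on.
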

\begin{proof}
    It suffices to show, that for an arbitrary index $\catindexof{\nodes}$ we have
    \begin{align*}
        \hypercoreat{\indexedcatvariableof{\nodes}}
        = \prod_{\secnodes\subset\nodes} \prod_{\thirdnodes\subset\secnodes}
        \big(\contractionof{\hypercore}{\indexedcatvariableof{\nodes\setexcept{\thirdnodes}}, \catvariableof{\thirdnodes} = \seccatindexof{\thirdnodes}}\big)^{(-1)^{\cardof{\secnodes}-\cardof{\thirdnodes}}} \, .
    \end{align*}
    We do this by applying a logarithm on the right hand side and grouping the terms by $\thirdnodes$ as
    \begin{align*}
        & \lnof{\prod_{\secnodes\subset\nodes} \prod_{\thirdnodes\subset\secnodes}
            \contractionof{\hypercore}{\indexedcatvariableof{\nodes\setexcept{\thirdnodes}}, \catvariableof{\thirdnodes} = \seccatindexof{\thirdnodes}}\big)^{(-1)^{\cardof{\secnodes}-\cardof{\thirdnodes}}}} \\
        & = \sum_{\thirdnodes\subset\nodes} \lnof{\contractionof{\hypercore}{\indexedcatvariableof{\nodes\setexcept{\thirdnodes}}, \catvariableof{\thirdnodes} = \seccatindexof{\thirdnodes}}}
        \left( \sum_{\secnodes\subset\nodes \wcols \thirdnodes\subset \secnodes} (-1)^{\cardof{\secnodes}-\cardof{\thirdnodes}} \right) \\
        & =  \sum_{\thirdnodes\subset\nodes} \lnof{\contractionof{\hypercore}{\indexedcatvariableof{\nodes\setexcept{\thirdnodes}}, \catvariableof{\thirdnodes} = \seccatindexof{\thirdnodes}}}
        \left( \sum_{i \in [\cardof{\nodes}-\cardof{\thirdnodes}]}  (-1)^{i}  \binom{\cardof{\nodes}-\cardof{\thirdnodes}}{i}  \right)
    \end{align*}
    Now, by the generic binomial theorem we have that for $n\in\nn, n \neq 0$
    \[ 0 = (1-1)^n = \sum_{i \in [n]}  (-1)^{i}  \binom{n}{i}   \, . \]
    Therefore, the summands for $\thirdnodes\neq\nodes$ vanish and we have
    \begin{align*}
        & \lnof{ \prod_{\secnodes\subset\nodes} \prod_{\thirdnodes\subset\secnodes}
            \big(\contractionof{\hypercore}{\indexedcatvariableof{\nodes\setexcept{\thirdnodes}}, \catvariableof{\thirdnodes} = \seccatindexof{\thirdnodes}}\big)^{(-1)^{\cardof{\secnodes}-\cardof{\thirdnodes}}} } \\
        & = \lnof{\hypercoreat{\indexedcatvariableof{\nodes}}}
        \left( \sum_{i \in [0]}  (-1)^{i}  \binom{0}{i}  \right) \\
        & = \lnof{\hypercoreat{\indexedcatvariableof{\nodes}}} \, .
    \end{align*}
    Applying the exponential function on both sides establishes the claim.
\end{proof}

\begin{lemma}
    \label{lem:independentContractionFactorization}
    Let $\hypercore$ be a positive tensor and $\secnodes\subset\nodes$ an arbitrary subset.
    When there are $\nodea,\nodeb \in\secnodes$ such that
    \begin{align*}
        \normalizationofwrt{\hypercore}{\catvariableof{\nodea},\catvariableof{\nodeb}}{\catvariableof{\nodes\setexcept{\{\nodea,\nodeb\}}}}
        = \contractionof{
            \normalizationofwrt{\hypercore}{\catvariableof{\nodea}}{\catvariableof{\nodes\setexcept{\{\nodea,\nodeb\}}}},
            \normalizationofwrt{\hypercore}{\catvariableof{\nodeb}}{\catvariableof{\nodes\setexcept{\{\nodea,\nodeb\}}}}
        }{\catvariableof{\secnodes}}\,,
    \end{align*}
    then for any indices $\seccatindexof{\secnodes}$ and $\catindexof{\secnodes}$
    \begin{align*}
        \prod_{\thirdnodes\subset\secnodes}
        \left(\contractionof{\hypercore}{\indexedcatvariableof{\nodes\setexcept{\thirdnodes}},\catvariableof{\thirdnodes}=\seccatindexof{\thirdnodes}}\right)^{(-1)^{\cardof{\secnodes}-\cardof{\thirdnodes}}} = 1 \, .
    \end{align*}
\end{lemma}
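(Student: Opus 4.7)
The plan is to take logarithms, reduce the product to a signed sum over subsets of $\secnodes$, and exploit the fact that the conditional independence hypothesis collapses each quadruple of terms in this sum to zero. Write $\phi(\thirdnodes) \coloneqq \contractionof{\hypercore}{\indexedcatvariableof{\nodes\setexcept{\thirdnodes}},\catvariableof{\thirdnodes}=\seccatindexof{\thirdnodes}}$; since the contraction pins every variable, $\phi(\thirdnodes)$ is just the single entry of $\hypercore$ at the point where nodes in $\thirdnodes$ carry the index $\seccatindex$ and nodes outside $\thirdnodes$ carry $\catindex$. Positivity of $\hypercore$ makes all $\phi(\thirdnodes)$ strictly positive, so it suffices to verify
\begin{align*}
    \sum_{\thirdnodes\subset\secnodes}(-1)^{\cardof{\secnodes}-\cardof{\thirdnodes}}\ln\phi(\thirdnodes) = 0.
\end{align*}

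The combinatorial step is to partition each $\thirdnodes\subset\secnodes$ uniquely as $\thirdnodes=\thirdnodes_0\cup I$, where $\thirdnodes_0\subset\secnodes\setexcept{\{\nodea,\nodeb\}}$ and $I\subset\{\nodea,\nodeb\}$. Grouping the sum by $\thirdnodes_0$ rewrites it as
\begin{align*}
    (-1)^{\cardof{\secnodes}}\sum_{\thirdnodes_0} (-1)^{\cardof{\thirdnodes_0}}\,
    \ln\frac{\phi(\thirdnodes_0)\,\phi(\thirdnodes_0\cup\{\nodea,\nodeb\})}{\phi(\thirdnodes_0\cup\{\nodea\})\,\phi(\thirdnodes_0\cup\{\nodeb\})}.
\end{align*}
Hence it suffices to show that, for each fixed $\thirdnodes_0$, the inner cross-ratio equals $1$.

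The probabilistic step is to unpack the normalized factorization hypothesis into a pointwise identity. Evaluating the assumed tensor equality at an arbitrary joint index and clearing the common denominator $\contractionof{\hypercore}{\catvariableof{\nodes\setexcept{\{\nodea,\nodeb\}}}}$ yields
\begin{align*}
    \hypercoreat{a,b,c}\cdot\!\!\!\sum_{a',b'}\!\hypercoreat{a',b',c}
    \;=\;\Big(\sum_{b'}\hypercoreat{a,b',c}\Big)\cdot\Big(\sum_{a'}\hypercoreat{a',b,c}\Big)
\end{align*}
for every $a,b$ and every configuration $c$ of the remaining variables. This forces $\hypercoreat{a,b,c}=f(a,c)\,g(b,c)/Z(c)$ for some positive $f,g,Z$, from which the cross-ratio identity $\hypercoreat{a_1,b_1,c}\hypercoreat{a_2,b_2,c}=\hypercoreat{a_1,b_2,c}\hypercoreat{a_2,b_1,c}$ follows for any $a_1,a_2,b_1,b_2$.

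The final step applies this identity with $c$ chosen to be the configuration determined by $\thirdnodes_0$ (namely $\seccatindex$ on $\thirdnodes_0$ and $\catindex$ on $\nodes\setexcept{(\thirdnodes_0\cup\{\nodea,\nodeb\})}$), and with $(a_1,a_2,b_1,b_2)=(\catindexof{\nodea},\seccatindexof{\nodea},\catindexof{\nodeb},\seccatindexof{\nodeb})$. Each of the four factors then matches one of $\phi(\thirdnodes_0)$, $\phi(\thirdnodes_0\cup\{\nodea\})$, $\phi(\thirdnodes_0\cup\{\nodeb\})$, $\phi(\thirdnodes_0\cup\{\nodea,\nodeb\})$, so the cross-ratio is $1$, every inner bracket vanishes, the signed sum is $0$, and exponentiation recovers the claim. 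The only delicate point will be keeping the bookkeeping of ``which node is fixed at $\catindex$ versus $\seccatindex$'' synchronized across the four subsets in each bracket; once that is set up, the conditional independence collapses the expression immediately and no further estimate is required.
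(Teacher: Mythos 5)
Your proposal is correct and follows essentially the same route as the paper's proof: both regroup the alternating product over $\thirdnodes\subset\secnodes$ into cross-ratios indexed by $\thirdnodes_0\subset\secnodes\setexcept{\{\nodea,\nodeb\}}$ and then use the conditional-independence hypothesis to show each cross-ratio equals $1$. The only (cosmetic) difference is that you extract the cross-ratio identity by first deriving the pointwise product form $\hypercoreat{a,b,c}=f(a,c)g(b,c)/Z(c)$, whereas the paper cancels the conditional normalizations directly via $Z_{\thirdnodes}/Z_{\thirdnodes\cup\{\nodea\}}=Z_{\thirdnodes\cup\{\nodeb\}}/Z_{\thirdnodes\cup\{\nodea,\nodeb\}}$.
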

\begin{proof}
    We abbreviate
    \begin{align*}
        Z_{\thirdnodes}
        = \contractionof{\hypercore}{\indexedcatvariableof{\nodes\setexcept{\thirdnodes}},\catvariableof{\thirdnodes}=\seccatindexof{\thirdnodes}} \, .
    \end{align*}
    By reorganizing the sum over $\thirdnodes\subset\secnodes$ into  $\thirdnodes\subset\secnodes\setexcept{\{\nodea\cup\nodeb\}}$ we have
    \begin{align}
        \label{eq:indContFacProof}
        \prod_{\thirdnodes\subset\secnodes}
        \left(
            Z_{\thirdnodes}
        \right)^{(-1)^{\cardof{\secnodes}-\cardof{\thirdnodes}}} =
        \prod_{\thirdnodes\subset\secnodes\setexcept{\{\nodea,\nodeb\}}}
        \left(
            \frac{
                Z_{\thirdnodes} \cdot Z_{\thirdnodes\cup\{\nodea,\nodeb\}}
            }{
                Z_{\thirdnodes\cup\{\nodea\}} \cdot Z_{\thirdnodes\cup\{\nodeb\}}
            }
        \right)^{(-1)^{\cardof{\secnodes}-\cardof{\thirdnodes}}} \, .
    \end{align}
    From the independence assumption it follows that for any index $\catindex$
    \begin{align*}
        & \normalizationofwrt{\hypercore}{
            \indexedcatvariableof{\nodea}
        }{\indexedcatvariableof{\nodes\setexcept{\{\thirdnodes\cup\{\nodea,\nodeb\}\}}},\catvariableof{\thirdnodes}=\seccatindexof{\thirdnodes},  \indexedcatvariableof{\nodeb} }
        \\
        & \quad =
        \normalizationofwrt{\hypercore}{
            \indexedcatvariableof{\nodea}
        }{\indexedcatvariableof{\nodes\setexcept{\{\thirdnodes\cup\{\nodea,\nodeb\}\}}}, \catvariableof{\thirdnodes}=\seccatindexof{\thirdnodes}} \\
        & \quad  =
        \normalizationofwrt{\hypercore}{
            \indexedcatvariableof{\nodea}
        }{\indexedcatvariableof{\nodes\setexcept{\{\thirdnodes\cup\{\nodea,\nodeb\}\}}},\catvariableof{\thirdnodes}=\seccatindexof{\thirdnodes},  \catvariableof{\nodeb} = \seccatindexof{\nodeb}}\,.
    \end{align*}
    Applying this in each bracket term of \eqref{eq:indContFacProof} we get
    \begin{align*}
        \frac{
            Z_{\thirdnodes}
        }{
            Z_{\thirdnodes\cup\{\nodea\}}
        }
        & =
        \frac{
            \normalizationofwrt{\hypercore}{
                \indexedcatvariableof{\nodea}
            }{\indexedcatvariableof{\nodes\setexcept{\{\thirdnodes\cup\{\nodea,\nodeb\}\}}}, \catvariableof{\thirdnodes}=\seccatindexof{\thirdnodes}, \indexedcatvariableof{\nodeb} }
        }{
            \normalizationofwrt{\hypercore}{
                \catvariableof{\nodea} =\seccatindexof{\nodea}
            }{\indexedcatvariableof{\nodes\setexcept{\{\thirdnodes\cup\{\nodea,\nodeb\}\}}} , \catvariableof{\thirdnodes}=\seccatindexof{\thirdnodes}, \indexedcatvariableof{\nodeb}}
        } \\
        & =
        \frac{
            \normalizationofwrt{\hypercore}{
                \indexedcatvariableof{\nodea}
            }{\indexedcatvariableof{\nodes\setexcept{\{\thirdnodes\cup\{\nodea,\nodeb\}\}}}, \catvariableof{\thirdnodes}=\seccatindexof{\thirdnodes}, \catvariableof{\nodeb} = \seccatindexof{\nodeb}}
        }{
            \normalizationofwrt{\hypercore}{
                \catvariableof{\nodea} =\seccatindexof{\nodea}
            }{\indexedcatvariableof{\nodes\setexcept{\{\thirdnodes\cup\{\nodea,\nodeb\}\}}}, \catvariableof{\thirdnodes}=\seccatindexof{\thirdnodes},\catvariableof{\nodeb} = \seccatindexof{\nodeb}}
        } \\
        & =
        \frac{
            Z_{\thirdnodes\cup\{\nodeb\}}
        }{
            Z_{\thirdnodes\cup\{\nodea,\nodeb\}}
        } \, .
    \end{align*}
    Thus, each factor in \eqref{eq:indContFacProof} is trivial, which establishes the claim.
\end{proof}

We are finally ready to prove the Hammersley-Clifford \theref{the:factorizationHammersleyClifford} based on the Lemmata above.

\begin{proof}[Proof of \theref{the:factorizationHammersleyClifford}]
    $ii)\Rightarrow i)$
    By \lemref{the:contractionFactorization} we have for any indices $\catindexof{\nodes}$ and $\seccatindexof{\nodes}$
    \begin{align*}
        \probat{\indexedcatvariableof{\nodes}} =
        \prod_{\secnodes\subset\nodes} \prod_{\thirdnodes\subset\secnodes}
        \left(
            \probat{\indexedcatvariableof{\thirdnodes},\catvariableof{\nodes\setexcept{\thirdnodes}}=\seccatindexof{\nodes\setexcept{\thirdnodes}}}
        \right)^{(-1)^{\cardof{\secnodes}-\cardof{\thirdnodes}}} \, .
    \end{align*}
    Using the clique-capturing assumption of \theref{the:factorizationHammersleyClifford}, we find for any subset $\secnodes\subset\nodes$, which is not contained in a hyperedge $\nodea,\nodeb \in\secnodes$ such that $\catvariableof{\nodea}$ is independent of $\catvariableof{\nodeb}$ conditioned on $\catvariableof{\secnodes\setexcept{\{\nodea,\nodeb\}}}$.
    If no such nodes $\nodea,\nodeb \in\secnodes$ exists, $\secnodes$ would be contained in a hyperedge since the hypergraph is assumed to be clique-capturing.
    By \lemref{lem:independentContractionFactorization} we then have
    \begin{align*}
        \prod_{\thirdnodes\subset\secnodes}
        \left(
            \probat{\indexedcatvariableof{\thirdnodes},\catvariableof{\nodes\setexcept{\thirdnodes}}=\seccatindexof{\nodes\setexcept{\thirdnodes}}}
        \right)^{(-1)^{\cardof{\secnodes}-\cardof{\thirdnodes}}} = 1 \, .
    \end{align*}
    Using the function
    \begin{align*}
        \alpha: \{\secnodes : \exists\edge\in\edges: \secnodes \subset \edge \} \rightarrow \edges\,,
    \end{align*}
    we label the remaining node subsets by a hyperedge containing the subset.
    For each $\edge\in\edges$, we build the tensor
    \begin{align*}
        \hypercoreofat{\edge}{\catvariableof{\edge}} = \prod_{\secnodes \wcols \alpha(\secnodes) = \edge} \prod_{\thirdnodes\subset\secnodes}
        \left(
            \probat{\indexedcatvariableof{\thirdnodes},\catvariableof{\nodes\setexcept{\thirdnodes}}=\seccatindexof{\nodes\setexcept{\thirdnodes}}}
        \right)^{(-1)^{\cardof{\secnodes}-\cardof{\thirdnodes}}} \, .
    \end{align*}
    and get that
    \begin{align*}
        \probat{\catvariableof{\nodes}} & = \contractionof{\extnetasset}{\catvariableof{\nodes}} \\
        & = \normalizationof{\extnetasset}{\catvariableof{\nodes}} \, .
    \end{align*}
    We have thus constructed a Markov network with trivial partition function, whose contraction coincides with the probability distribution.

    $i)\Rightarrow ii)$:
    To show the converse statement, assume that there is a Markov network representing the distribution $\probwithnodes$ and choose subsets $\nodesa,\nodesb,\nodesc\subset\nodes$ such that $\nodesc$ separates $\nodesa$ from $\nodesb$.
    Denote by $\nodes_0$ the nodes with paths to $\nodesa$, which do not contain a node in $\nodesc$ and by $\nodes_1$ the nodes with paths to $\nodesb$, which do not contain a node in $\nodesc$.
    Furthermore, we denote by $\edges_0$ the hyperedges which contain a node in $\nodes_0$ and by $\edges_1$ the hyperedges which contain a node in $\nodes_1$.
    By assumption of separability, both sets $\edges_0$ and $\edges_1$ are disjoint and no node in $\nodesa$ is in a hyperedge in $\edges_1$ and respectively no node in $\nodesb$ is in a hyperedge in $\edges_0$.
    We then have
    \begin{align*}
        \normalizationofwrt{\extnetasset}{\catvariableof{\nodesa},\catvariableof{\nodesb}}{\indexedcatvariableof{\nodesc}}
        = & \normalizationof{\extnetasset\cup\{\onehotmapof{\catindexof{\nodesc}}\}}{\catvariableof{\nodesa},\catvariableof{\nodesb}} \\
        = &  \normalizationof{\{\hypercoreof{\edge}\wcols\edgein_0\}\cup\{\onehotmapof{\catindexof{\nodesc}}\}}{\catvariableof{\nodesa}} \\
        & \quad \otimes \normalizationof{\{\hypercoreof{\edge}\wcols\edgein_1\}\cup\{\onehotmapof{\catindexof{\nodesc}}\}}{\catvariableof{\nodesb}} \, .
    \end{align*}
    By \defref{def:condIndependence}, this is the independence of $\catvariableof{\nodesa}$ and $\catvariableof{\nodesb}$ conditioned on $\catvariableof{\nodesc}$.
\end{proof}

\subsection{Fisher-Neyman}

Since sufficient statistics are sometimes introduced based on the data processing inequality (see e.g. \cite{cover_elements_2006}), we also show that also that definition is equivalent to the factorization of the family. Here, $\mutinfof{X}{Y}$ denotes the mutual information of two random variables $X,Y$.

\begin{theorem}[Fisher-Neyman factorization theorem]
    \label{the:generalFactorizationFisherNeyman}
    Let $\probat{\catvariable,\thirdcatvariable}$ be a joint distribution of variables $\catvariable,\,\thirdcatvariable$ with values $\valof{\catvariable},\,\valof{\thirdcatvariable}$ and let $\sstat$ be a statistic, which maps $\valof{\catvariable}$ to $\valof{\headvariableof{\sstat}}$ .
    We introduce a variable $\headvariableof{\sstat}$ and define a joint distribution by
    \begin{align*}
        \probat{\catvariable,\headvariableof{\sstat},\thirdcatvariable}
        = \contractionof{
            \probat{\catvariable,\thirdcatvariable},\bencodingofat{\sstat}{\headvariableof{\sstat},\catvariable}
        }{\catvariable,\headvariableof{\sstat},\thirdcatvariable} \, .
    \end{align*}
    The following are equivalent:
    \begin{itemize}
        \item[i)] The Data Processing Inequality holds straight, that is
        \begin{align*}
            \mutinfof{\thirdcatvariable}{\catvariable}
            = \mutinfof{\thirdcatvariable}{\headvariableof{\sstat}}
        \end{align*}
        \item[ii)] $\thirdcatvariable\rightarrow\headvariableof{\sstat}\rightarrow\catvariable$ is a Markov Chain, that is
        \begin{align*}
            \condindependent{\thirdcatvariable}{\catvariable}{\headvariableof{\sstat}}
        \end{align*}
        \item[iii)]
        There are tensors $\acttensorat{\headvariableof{\sstat},\thirdcatvariable}$ and $\basemeasureat{\catvariable}$ such that for any $\catindex\in\valof{\catvariable}$ and $\thirdcatindex\in\valof{\thirdcatvariable}$
        \begin{align*}
            \probat{\indexedthirdcatvariable,\indexedcatvariable}
            = \acttensorat{\headvariableof{\sstat}=\sstatat{\catindex},\indexedthirdcatvariable} \cdot \basemeasureat{\indexedcatvariable} \, .
        \end{align*}
    \end{itemize}
\end{theorem}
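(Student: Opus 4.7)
The plan is to establish the three-way equivalence by proving the chain $(iii)\Rightarrow(ii)\Rightarrow(i)\Rightarrow(iii)$, with the observation that the entire argument hinges on the deterministic nature of $\headvariableof{\sstat}$ as a function of $\catvariable$: from the construction of the joint distribution via $\bencodingof{\sstat}$, the conditional $\condprobof{\headvariableof{\sstat}}{\catvariable}$ is a delta at $\sstatat{\catindex}$. I will use this systematically to pass between tensor-contraction statements about $\probat{\catvariable,\headvariableof{\sstat},\thirdcatvariable}$ and coordinate statements about $\probat{\catvariable,\thirdcatvariable}$.

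For $(iii)\Rightarrow(ii)$, I would start from the assumed factorization and insert $\bencodingof{\sstat}$ to compute
\[
\probat{\catvariable,\headvariableof{\sstat},\thirdcatvariable}
= \contractionof{\basemeasureat{\catvariable},\bencodingofat{\sstat}{\headvariableof{\sstat},\catvariable},\acttensorat{\headvariableof{\sstat},\thirdcatvariable}}{\catvariable,\headvariableof{\sstat},\thirdcatvariable}\,.
\]
Contracting over $\catvariable$ yields a marginal $\probat{\headvariableof{\sstat},\thirdcatvariable}$ of product form in the cores $\acttensor$ and a $\sstat$-pushforward of $\basemeasure$; normalizing then shows that both $\condprobof{\catvariable}{\headvariableof{\sstat}}$ and $\condprobof{\thirdcatvariable}{\headvariableof{\sstat}}$ decouple, giving $\condprobof{\catvariable,\thirdcatvariable}{\headvariableof{\sstat}} = \contractionof{\condprobof{\catvariable}{\headvariableof{\sstat}},\condprobof{\thirdcatvariable}{\headvariableof{\sstat}}}{\catvariable,\thirdcatvariable,\headvariableof{\sstat}}$, which is (ii) by \defref{def:condIndependence}.

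For $(ii)\Rightarrow(i)$, I would invoke the standard chain-rule identity $\mutinfof{\thirdcatvariable}{\catvariable,\headvariableof{\sstat}} = \mutinfof{\thirdcatvariable}{\catvariable} + \condmutinfof{\thirdcatvariable}{\headvariableof{\sstat}}{\catvariable} = \mutinfof{\thirdcatvariable}{\headvariableof{\sstat}} + \condmutinfof{\thirdcatvariable}{\catvariable}{\headvariableof{\sstat}}$. The middle term vanishes because $\headvariableof{\sstat}$ is a deterministic function of $\catvariable$, and assumption (ii) makes the last term vanish, leaving $\mutinfof{\thirdcatvariable}{\catvariable} = \mutinfof{\thirdcatvariable}{\headvariableof{\sstat}}$, which is (i). For $(i)\Rightarrow(iii)$, running the same chain rule in reverse gives $\condmutinfof{\thirdcatvariable}{\catvariable}{\headvariableof{\sstat}} = 0$, hence conditional independence holds; then I define
\[
\basemeasureat{\indexedcatvariable} \coloneqq \condprobat{\indexedcatvariable}{\headvariableof{\sstat}=\sstatat{\catindex}}\,,\qquad
\acttensorat{\headvariableof{\sstat}=\headindex,\indexedthirdcatvariable} \coloneqq \probat{\headvariableof{\sstat}=\headindex,\indexedthirdcatvariable}\,,
\]
and verify the factorization in (iii) by a direct coordinate computation using the delta property of $\condprobof{\headvariableof{\sstat}}{\catvariable}$.

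The main obstacle will be the careful bookkeeping of where $\condprobof{\cdot}{\headvariableof{\sstat}=\headindex}$ is well-defined, since the marginal $\probat{\headvariableof{\sstat}=\headindex}$ may vanish for $\headindex$ outside the image of $\sstat$. Exploiting the deterministic embedding $\headvariableof{\sstat}=\sstatat{\catvariable}$ means that in the factorization $(iii)$ we only ever need $\acttensor$ on the image of $\sstat$, and values off the image can be assigned arbitrarily (e.g.\ zero) without affecting either direction. Once this convention is fixed, the remaining steps are routine manipulations of tensor contractions and the chain rule for mutual information.
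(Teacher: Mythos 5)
Your proposal is correct and follows essentially the same route as the paper: the mutual-information chain rule together with the determinism of $\headvariableof{\sstat}$ given $\catvariable$ handles the equivalence with (i), and the factorization is built from conditional probabilities exactly as in the paper's argument. The only cosmetic differences are the cyclic ordering of the implications and your choice $\basemeasure=\condprobof{\catvariable}{\headvariableof{\sstat}}$, $\acttensor=\probat{\headvariableof{\sstat},\thirdcatvariable}$ versus the paper's $\basemeasure=\probat{\catvariable}$, $\acttensor=\condprobof{\thirdcatvariable}{\headvariableof{\sstat}}$, both of which work.
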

\begin{proof}
    $i) \Leftrightarrow ii)$:
    We always have
    \begin{align*} 
        \mutinfof{\thirdcatvariable}{\catvariable}
        = \mutinfof{\thirdcatvariable}{(\catvariable,\headvariableof{\sstat})}
        = \mutinfof{\thirdcatvariable}{\headvariableof{\sstat}} + \condmutinfof{\thirdcatvariable}{\catvariable}{\headvariableof{\sstat}}  
    \end{align*}
    and thus $i)$ is equivalent to
    \begin{align*}
        \condmutinfof{\thirdcatvariable}{\catvariable}{\headvariableof{\sstat}} = 0 \, .
    \end{align*}
    Using the KL-divergence characterization of the mutual information, this is equivalent to 
    \begin{align*}
        \condprobat{\thirdcatvariable,\catvariable}{\headvariableof{\sstat}}
        = \contractionof{\condprobat{\thirdcatvariable}{\headvariableof{\sstat}},\condprobat{\catvariable}{\headvariableof{\sstat}} }{\thirdcatvariable,\catvariable,\headvariableof{\sstat}} \, .
    \end{align*}
    This is equivalent to the conditional independence statement $ii)$. \\

    $ii) \Rightarrow iii)$:
    Let us assume $ii)$.
    For all $\thirdcatindex\in\valof{\thirdcatvariable}$ and $\catindex\in\valof{\catvariable}$ we then have
    \begin{align*}
        \condprobat{\indexedthirdcatvariable}{\indexedcatvariable}
        &= \condprobat{\indexedthirdcatvariable}{\indexedcatvariable,\headvariableof{\sstat}=\sstatat{\catindex}} \\
        &= \condprobat{\indexedthirdcatvariable}{\headvariableof{\sstat}=\sstatat{\catindex}}
    \end{align*}
    Here we used that $\headvariableof{\sstat}$ has a deterministic dependence on $\catvariable$.
    Therefore, there is a tensor $\acttensor$ such that for all $\thirdcatindex\in\valof{\thirdcatvariable}$ and $\catindex\in\valof{\catvariable}$
    \begin{align*}
        \acttensorat{\headvariableof{\sstat}=\sstatat{\catindex},\indexedthirdcatvariable} = \condprobat{\indexedthirdcatvariable}{\indexedcatvariable} \, .
    \end{align*}
    We further define a tensor $\basemeasureat{\catvariable}=\probat{\catvariable}$ and get
    \begin{align*}
        \probat{\indexedthirdcatvariable,\indexedcatvariable}
        &= \probat{\indexedcatvariable} \cdot \condprobat{\indexedthirdcatvariable}{\indexedcatvariable} \\
        &= \acttensorat{\headvariableof{\sstat}=\sstatat{\catindex},\indexedthirdcatvariable} \cdot \basemeasureat{\indexedcatvariable} \, .
    \end{align*}

    $iii) \Rightarrow ii)$:
    When assuming $iii)$ we have for all $(\catindex,\thirdcatindex)\in\valof{\thirdcatvariable}\times \valof{\catvariable}$
    \begin{align*}
        \condprobat{\indexedthirdcatvariable}{\indexedcatvariable}
        &= \normalizationofwrt{\acttensorat{\headvariableof{\sstat},\thirdcatvariable},\bencodingofat{\sstat}{\headvariableof{\sstat},\catvariable},\basemeasureat{\catvariable}}{\indexedthirdcatvariable}{\indexedcatvariable} \\
        &= \normalizationof{\acttensorat{\headvariableof{\sstat},\thirdcatvariable},\bencodingofat{\sstat}{\headvariableof{\sstat},\indexedcatvariable},\basemeasureat{\indexedcatvariable}}{\indexedthirdcatvariable} \\
        &= \normalizationof{\acttensorat{\headvariableof{\sstat},\thirdcatvariable},\onehotmapofat{\sstatat{\catindex}}{\headvariableof{\sstat}}}{\indexedthirdcatvariable} \\
        &= \condprobat{\indexedthirdcatvariable}{\headvariableof{\sstat}=\sstatat{\catindex}} \, .
    \end{align*}
    We further have for almost all $\headindexof{\sstat}\in\valof{\headvariableof{\sstat}}$, $\thirdcatindex\in\valof{\thirdcatvariable}$ and $\catindex\in\valof{\catvariable}$ that $\headindexof{\sstat}=\sstatat{\catindex}$ and
    \begin{align*}
        \condprobat{\indexedthirdcatvariable}{\indexedcatvariable,\indexedheadvariableof{\sstat}}
        = \condprobat{\indexedthirdcatvariable}{\indexedcatvariable}
    \end{align*}
    and with the above at thus at almost all such pairs
    \begin{align*}
        \condprobat{\indexedthirdcatvariable}{\indexedcatvariable,\indexedheadvariableof{\sstat}}
        = \condprobat{\indexedthirdcatvariable}{\indexedheadvariableof{\sstat}} \, .
    \end{align*}
    This is equivalent to $ii)$.
\end{proof}

\theref{the:factorizationFisherNeyman} follows from \theref{the:generalFactorizationFisherNeyman} by the equivalence of $ii)$ and $iii)$.
    \section{Implementation of the algorithms and examples}\label{sec:algExaImplementation}


The implementations of the algorithms and concepts are available at
\begin{center}
    \href{https://github.com/tnreason/nesy-demonstrations/}{\url{https://github.com/tnreason/nesy-demonstrations/}}
\end{center}
and implemented with \tnreason{} in the version \curvertnreason{}.

\subsection{Algorithm~\ref{alg:treeBeliefPropagation}, \ref{alg:directedBeliefPropagation} and \ref{alg:constraintPropagation} (Tree, Directed Belief and Constraint Propagation)}\label{sec:propAlgsImplementation}

The three message passing algorithms are implemented as functions in one class \pythoninline{ContractionPropagation}, since they share common structure.
\lstinputlisting[language=Python]{implementation/propagation.py}

\subsubsection{Example~\ref{exa:madicRepresentation} and \ref{exa:madicPropagation} (Integer Summation in $\catdim$-adic Representation)}

Following the decomposition of $\catdim$-adic summations into local summations, the function \pythoninline{get_sum_tn} produces a corresponding tensor network of basis encodings.
We test by coordinate retrieval operations, whether the summation is performed correctly.
\lstinputlisting[language=Python]{implementation/m_adic_sum.py}

\subsubsection{Example~\ref{exa:studentHC} and \ref{exa:studentBP} (Student Markov network)}

We here implement the Markov network on the hypergraph of \exaref{exa:studentHC}, with tensors having independent random coordinates drawn from the uniform distribution on $[0,1]$.
We test in a final \pythoninline{assert} statement, whether the messages resulting from \algoref{alg:treeBeliefPropagation} in a tree implementation contract to the marginal distribution, which we directly compute for comparison.
\lstinputlisting[language=Python]{implementation/student.py}

\subsubsection{Example~\ref{exa:sudokuEntailment}, \ref{exa:sudokuDecomposition} and \ref{exa:sudokuMessagePassing} (Sudoku Game)}

We implement the $\sudokunum^2\times\sudokunum^2$ Sudoku with the start assignment given in \exaref{exa:sudokuEntailment} and apply the Constraint Propagation \algoref{alg:constraintPropagation} to deduce the full assignment.
We then test whether the correct board assignment (given in \exaref{exa:sudokuMessagePassing}) has been found.
\lstinputlisting[language=Python]{implementation/sudoku.py}

\subsection{Algorithm~\ref{alg:AMM_HLN} (Alternating Moment Matching)}

We implement the Alternating Moment Matching algorithm, which estimates the parameters of \HybridLogicNetworks{}, as a class \pythoninline{MomentMatcher}.
\lstinputlisting[language=Python]{implementation/moment_matching.py}
Let us now show the usage of the algorithm on the toy accounting model presented in Example~\ref{exa:hlnAccountingRep}.
To this end we train the parameters based on a the dataset described in Example~\ref{exa:hlnAccountingAMM}, and assert that the learned parameters are close to the true parameters.
Note that a single iterations suffices for convergence in this simple example.
\lstinputlisting[language=Python]{implementation/accounting.py}

\end{document}